\newcommand\BibTeX{{\rmfamily B\kern-.05em \textsc{i\kern-.025em b}\kern-.08em
		T\kern-.1667em\lower.7ex\hbox{E}\kern-.125emX}}
\newcommand{%
	\def\svgwidth{1\columnwidth}
	\import{./figures/}{.pdf_tex}
}[2][1]{%
	\def\svgwidth{#1\columnwidth}
	\import{./figures/}{#2.pdf_tex}
}
\DeclareMathOperator{\interior}{int}
\DeclareMathOperator*{\argmin}{arg\,min}
\newcommand{\nnumbers}[1][]{\mathbb{N}}
\newcommand{\myreals}[1][]{\mathbb{R}^{#1}}
\newcommand{\mypreals}[1][]{\mathbb{R}_{\geq 0}^{#1}}
\newcommand{\myinputt}{\upsilon}
\newcommand{\mystatet}{\phi}
\newcommand{\hs}{\mathcal{H}}
\newcommand{\myball}{\mathbb{B}}
\newcommand{\pn}[1]{#1}
\newcounter{theorem}
\newtheorem{problem}[theorem]{Problem}
\newlist{steps}{enumerate}{1}
\setlist[steps, 1]{leftmargin=45pt, label = \textbf{Step \arabic*}:}
\newcommand{\EndRemark}{$\hfill \blacktriangle$}
\begin{document}
\ifbool{report}{
\ititle{Motion Planning for Hybrid Dynamical Systems: Framework, Algorithm Template, and a Sampling-based Approach}
\iauthor{
	Nan Wang\\
	{\normalsize nanwang@ucsc.edu} \\
	Ricardo G. Sanfelice\\
	{\normalsize ricardo@ucsc.edu}}
\idate{\today{}} 
\iyear{2023}
\irefnr{05}
\makeititle

\title{Motion Planning for Hybrid Dynamical Systems: Framework, Algorithm Template, and a Sampling-based Approach}

\author{Nan Wang and Ricardo G. Sanfelice}
\maketitle
}{
	\runninghead{Wang and Sanfelice}
	\title{Motion Planning for Hybrid Dynamical Systems: Framework, Algorithm Template, and a Sampling-based Approach}
	\author{Nan Wang and Ricardo G. Sanfelice\affilnum{1}}
	
	\affiliation{\affilnum{1}Hybrid Systems, Laboratory, Department of Electrical and Computer Engineering, University of California, Santa Cruz, USA}
	
	\corrauth{\affilnum{1}Nan Wang, Department of Electrical and Computer Engineering, University of California, Santa Cruz,
		USA.}
	\email{nanwang@ucsc.edu}
	
	\begin{abstract}
This paper focuses on the motion planning problem for the systems exhibiting both continuous and discrete behaviors, which we refer to as hybrid dynamical systems. Firstly, the motion planning problem for hybrid systems is formulated using the hybrid equation framework, which is general to capture most hybrid systems. Secondly, a propagation algorithm template is proposed that describes a general framework to solve the motion planning problem for hybrid systems. Thirdly, a rapidly-exploring random trees (RRT) implementation of the proposed algorithm template is designed to solve the motion planning problem for hybrid systems. At each iteration, the proposed algorithm, called HyRRT, randomly picks a state sample and extends the search tree by flow or jump, which is also chosen randomly when both regimes are possible. Through a definition of concatenation of functions defined on hybrid time domains, we show that HyRRT is probabilistically complete, namely, the probability of failing to find a motion plan approaches zero as the number of iterations of the algorithm increases. This property is guaranteed under mild conditions on the data defining the motion plan, which include a relaxation of the usual positive clearance assumption imposed in the literature of classical systems. The motion plan is computed through the solution of two optimization problems, one associated with the flow and the other with the jumps of the system. The proposed algorithm is applied to an actuated bouncing ball system and a walking robot system so as to highlight its generality and computational features.
\end{abstract}
	
	\keywords{Hybrid Systems, Motion Planning, RRT Algorithm, Probabilistic Completeness}
	\maketitle
}

\maketitle

\section{Introduction}
Motion planning consists of finding a state trajectory and associated inputs, connecting the initial and final state while satisfying the system dynamics and a given safety criterion. The motion planning technology has been widely used to help robotics applications, such as autonomous driving systems \cite{teng2023motion}, satellites systems \cite{rybus2020point}, biped robots \cite{huang2001planning}, and quadrotors \cite{liu2017search}, to complete complicated tasks. Particularly, the system dynamics considered in the motion planning not only depends on the mechanical design of the robot, but also depends on the internal logic/timer to resolve the task, and the interaction between the robot systems and the environment, et.al. The former usually leads to a continuous dynamics while the later usually leads to a discrete dynamics. For example, the \emph{continuous} evolution of the states of the wheeled vehicle can be modeled by either its kinematic model \cite{rajamani2011vehicle} or its dynamic model \cite{gillespie2021fundamentals}. On the other hand, if a motion planner is designed for a robot to move to the highest floor in a building, then a logic variable representing the floor number should be employed and updated in a \emph{discrete} manner \cite{branicky2003rrts}. In addition, a motion planner to plan the motion of consecutive steps of a biped robot should consider \emph{discrete} changes over the foot's speed when an impact between the foot and the ground occurs.

However, in certain motion planning tasks, the states can both evolve continuously and, from time to time, execute some discrete changes. For instance, in the context of a collision-resilient multicopter system in \cite{zha2021exploiting}, a RRT-type motion planner is devised. The multicopter's position and velocity states evolve continuously in open space, yet exhibit discrete changes upon collision with a wall. In such instances, neither a purely continuous-time nor discrete-time model suffices to accurately capture system behavior. Instead, a hybrid system model proves essential, encompassing not only purely continuous or discrete-time systems but also those manifesting both continuous and discrete behaviors.  Consequently, employing a general hybrid system model renders motion planning problems more comprehensive than those confined to purely continuous-time or discrete-time systems.

Research has been conducted to study motion planning for some specific classes of hybrid systems. Notably, in \cite{wu2020r3t} and \cite{branicky2003sampling}, two RRT-type motion planning algorithms are introduced to address motion planning problems within the realms of continuous-time hybrid systems. These systems feature continuous states evolving continuously alongside discrete states (modes) that transition among a finite set of feasible modes. However, they do not encompass hybrid systems where states evolve continuously and intermittently execute jumps. 
This paper focuses on motion planning problems for hybrid systems modeled as hybrid equations \cite{goebel2009hybrid}. In this modeling framework, differential and difference equations with constraints are used to describe the continuous and discrete behavior of the hybrid system, respectively. This general hybrid system framework can capture most hybrid systems emerging in robotic applications, not only the class of hybrid systems considered in \cite{wu2020r3t} and \cite{branicky2003sampling}, but also systems with memory states, timers, impulses, and constraints.  From the authors' best knowledge, it is the first time that the motion planning problem is formulated for this general hybrid system framework.

Motion planning problems for purely continuous-time systems (known as \emph{kinodynamic planning}) and purely discrete-time systems have been extensively explored in the literature; see, e.g., \cite{lavalle2006planning}. Most existing motion planning algorithms, such as RRT algorithm \cite{lavalle2001randomized}, incrementally construct a search tree in the state space and seek for a path in the search tree connecting the initial and final states. Operations like concatenation, well-defined for purely continuous-time and purely discrete-time systems, are crucial in these algorithms and are extensively documented in the literature; see, for instance, \cite[Chapter 14.3]{lavalle2006planning}. However, defining such operations for trajectories of hybrid systems poses a significant challenge because of their considerably more complicated domain structure. In fact, trajectories of hybrid systems may exhibit various behaviors: 1) evolves purely continuously, 2) exhibits jumps all the time, 3) evolves continuous and exhibits one or multiple jumps at times, or 4) exhibits Zeno behavior. To the authors' best knowledge, there is no existing work formulating such operations for hybrid systems, let alone their properties. Within this general framework, mathematical definitions of operations for hybrid systems are formulated, along with theoretical analysis of those operations for motion planning algorithms for hybrid systems. In addition, a high-level motion planning algorithm template that utilizes those operations is proposed to help design and validate the motion planning algorithms to solve the motion planning problems for hybrid systems.
 
 In recent years, various planning algorithms have been developed to solve motion planning problems, from  graph search algorithms \cite{likhachev2005anytime} to artificial potential field methods \cite{khatib1986real}. A main drawback of graph search algorithms is that the number of vertices grows exponentially as the dimension of states grows, which makes computing motion plans inefficient for high-dimensional systems. The artificial potential field method suffers from getting stuck at local minimum. Arguably, the most successful algorithm to solve  motion planning problems for purely continuous-time systems and purely discrete-time systems is the sampling-based RRT algorithm \cite{lavalle1998rapidly}. This algorithm incrementally constructs a tree of state trajectories toward random samples in the state space. Similar to graph search algorithms, RRT suffers from the curse of dimensionality, but, in practice, achieves rapid exploration in solving high-dimensional motion planning problems \cite{cheng2005sampling}. Compared with the artificial potential field method, RRT is probabilistically complete \cite{lavalle2001randomized}, which means that the probability of failing to find a motion plan converges to zero, as the number of samples approaches infinity.
While RRT algorithms have been used to solve motion planning problems for purely continuous-time systems \cite{lavalle2001randomized} and purely discrete-time systems \cite{branicky2003rrts}, fewer efforts have been devoted to applying RRT-type algorithms to solve motion planning problems for systems with combined continuous and discrete behavior except that in  \cite{branicky2003sampling}, a hybrid RRT algorithm is designed for a specific class of hybrid systems. Following the algorithm template proposed in this paper, for this broad class of hybrid systems, an RRT-type motion planning algorithm is designed. The proposed algorithm, called HyRRT, incrementally constructs search trees, rooted in the initial state set and toward the random samples. At first, HyRRT draws samples from the state space. Then, it selects the vertex such that the state associated with this vertex has minimal distance to the sample. Next, HyRRT propagates the state trajectory from the state associated with the selected vertex. It is established that, under mild assumptions, HyRRT is probabilistically complete. To the authors' best knowledge, HyRRT is the first RRT-type algorithm for systems with hybrid dynamics that is proved to be probabilistically complete. The proposed algorithm is applied to an actuated bouncing ball system and a walking robot system so as to assess its capabilities. \ifbool{report}{A conference version of this report was published in \cite{wang2022rapidly}.}{}

The remainder of the paper is structured as follows. Section~\ref{section:preliminary} presents notation and preliminaries. Section~\ref{section:problemstatement} presents the problem statement and introduction of  applications. Section \ref{section:bruteforcealgorithm} defines the operations heavily used in the algorithms and presents a general framework template for the motion planning algorithms for hybrid systems. Section \ref{section:hybridRRT} presents the HyRRT algorithm. Section \ref{section:pc} presents the analysis of the probabilistic completeness of HyRRT algorithm. Section \ref{section:illustration} presents the illustration of HyRRT in examples. 
\section{Notation and Preliminaries}\label{section:preliminary}
 
\subsection{Notation}
The real numbers are denoted as $\mathbb{R}$ and its nonnegative subset is denoted as $\mathbb{R}_{\geq 0}$. The set of nonnegative integers is denoted as $\mathbb{N}$. The notation $\interior I$ denotes the interior of the interval $I$. The notation $\overline{S}$ denotes the closure of the set $S$. The notation $\partial S$ denotes the boundary of the set $S$. Given sets $P\subset\mathbb{R}^n$ and $Q\subset\mathbb{R}^n$, the Minkowski sum of $P$ and $Q$, denoted as $P + Q$, is the set $\{p + q: p\in P, q\in Q\}$. The notation $|\cdot|$ denotes the Euclidean norm. 
The notation $\rge f$ denotes the range of the function $f$.
Given a point $x\in \mathbb{R}^{n}$ and a subset $S\subset \mathbb{R}^{n}$, the distance between $x$ and $S$ is denoted $|x|_{S} := \inf_{s\in S} |x - s|$. The notation $\mathbb{B}$ denotes the closed unit ball of appropriate dimension in the Euclidean norm. 
The probability of the probabilistic event\endnote{In this paper, we call the event in probability theory  \emph{probabilistic event} to distinguish it from the term \emph{event} in the hybrid system theory, which is a synonym for jump. In the probability theory, an event is a set of outcomes of an experiment (a subset of the sample space) to which a probability is assigned \cite{leon1994probability}.} $M$ is denoted as $\mbox{\rm Prob}(M)$. Given a set $S$, the notation $\mu(S)$ denotes its Lebesgue measure.
The Lebesgue measure of the $n$-th dimensional unit ball, denoted $\zeta_{n}$, is such that
\begin{equation}
	\label{equation:zetan}
		\zeta_{n} := \left\{\begin{aligned}
		&\frac{\pi^{k}}{k!} &\text{ if } n = 2k, k\in \mathbb{N}\\
		&\frac{2(k!)(4\pi)^{k}}{(2k + 1)!} &\text{ if } n = 2k + 1, k\in \mathbb{N};\\
		\end{aligned}\right.
\end{equation} see \cite{gipple2014volume}.

\subsection{Preliminaries}
Following \cite{goebel2009hybrid}, a hybrid system $\mathcal{H}$ with inputs is modeled as  
\begin{equation}
\mathcal{H}: \left\{              
\begin{aligned}               
\dot{x} & = f(x, u)     &(x, u)\in C\\                
x^{+} & =  g(x, u)      &(x, u)\in D\\                
\end{aligned}   \right. 
\label{model:generalhybridsystem}
\end{equation}
where $x\in \mathbb{R}^n$ is the state, $u\in \mathbb{R}^m$ is the input, $C\subset \mathbb{R}^{n}\times\mathbb{R}^{m}$ represents the flow set, $f: \mathbb{R}^{n}\times\mathbb{R}^{m} \to \mathbb{R}^{n}$ represents the flow map, $D\subset \mathbb{R}^{n}\times\mathbb{R}^{m}$ represents the jump set, and $g:\mathbb{R}^{n}\times\mathbb{R}^{m} \to \mathbb{R}^{n}$ represents the jump map. The continuous evolution of $x$ is captured by the flow map $f$. The discrete evolution of $x$ is captured by the jump map $g$. The flow set $C$ collects the points where the state can evolve continuously. The jump set $D$ collects the points where jumps can occur.

Given a flow set $C$, the set $U_{C} := \{u\in \mathbb{R}^{m}: \exists x\in \mathbb{R}^{n}\text{ such that } (x, u)\in C\}$ includes all possible input values that can be applied during flows. Similarly, given a jump set $D$, the set $U_{D} := \{u\in \mathbb{R}^ {m}: \exists x\in \mathbb{R}^{n}\text{ such that } (x, u)\in D\}$ includes all possible input values that can be applied at jumps. These sets satisfy $C\subset \mathbb{R}^{n}\times U_{C}$ and $D\subset \mathbb{R}^{n}\times U_{D}$. Given a set $K\subset \mathbb{R}^{n}\times U_{\star}$, where $\star$ is either $C$ or $D$, we define
	\begin{equation}
	\label{equation:pi}
			\Pi_{\star}(K) := \{x: \exists u\in U_{\star} \text{ s.t. } (x, u)\in K\}
	\end{equation}
	as the projection of $K$ onto $\mathbb{R}^{n}$, and define 
	\begin{equation}
	\label{equation:Cprime}
	C' := \Pi_{C}(C)
	\end{equation} and 
	\begin{equation}
	\label{equation:Dprime}
	D' := \Pi_{D}(D).
	\end{equation}

In addition to ordinary time $t\in \mathbb{R}_{\geq 0}$, we employ $j\in \mathbb{N}$ to denote the number of jumps of the evolution of $x$ and $u$ for $\mathcal{H}$ in (\ref{model:generalhybridsystem}), leading to hybrid time $(t, j)$ for the parameterization of its solutions and inputs. The domain of a solution to $\mathcal{H}$ is given by a hybrid time domain as is defined as follows. 
\begin{definition}[(Hybrid time domain)]\label{definition:hybridtimedomain}
	A set $E\subset \mypreals\times\nnumbers$ is a hybrid time domain if, for each $(T, J)\in E$, the set
	$$
		E\cap ([0, T]\times \{0, 1, ..., J\})
	$$ can be written in the form
	$$
		\bigcup_{j = 0}^{J} ([t_{j}, t_{j + 1}]\times \{j\})
	$$
	for some finite sequence of times $\{t_{j}\}_{j = 0}^{J + 1}$ satisfying $0=t_{0}\leq t_{1}\leq t_{2}\leq ... \leq t_{J+1} = T$.
\end{definition}
Given a compact hybrid time domain $E$, we denote the maximum of $t$ and $j$ coordinates of points in $E$ as 
	\begin{equation}\label{equation:maxt}
		\text{max}_{t} E := \max\{t\in \mypreals: \exists j\in \nnumbers \text{ such that } (t, j)\in E\}
	\end{equation} and
\begin{equation}\label{equation:maxj}
	\text{max}_{j} E := \max\{j\in \nnumbers: \exists t\in \mypreals \text{ such that } (t, j)\in E\}.
\end{equation}

The input to the hybrid systems is defined as a function on a hybrid time domain as follows.
\begin{definition}[(Hybrid input)]\label{definition:hybridinput}
	A function $\myinputt: \dom \myinputt \to \myreals[n]$ is a hybrid input if $\dom \myinputt$ is a hybrid time domain and if, for each $j\in \nnumbers$, $t\mapsto \myinputt(t,j)$ is Lebesgue measurable and locally essentially bounded on the interval $I^{j}_{\myinputt}:=\{t:(t, j)\in \dom \myinputt\}$. 
\end{definition}
A hybrid arc describes the state trajectory of the system as is defined as follows.
\begin{definition}[(Hybrid arc)]\label{definition:hybridarc}
	A function $\mystatet: \dom \mystatet \to \myreals[n]$ is a hybrid arc if $\dom \mystatet$ is a hybrid time domain and if, for each $j\in \nnumbers$, $t\mapsto \mystatet(t,j)$ is locally absolutely continuous on the interval $I^{j}_{\mystatet}:=\{t:(t, j)\in \dom \mystatet\}$. 
\end{definition}
\begin{definition}[(Types of hybrid arcs)] \label{definition:hybridarctypes}A hybrid arc $\mystatet$ is called
		\begin{itemize}
			\item compact if $\dom \mystatet$ is compact;
			\item nontrivial if $\dom \mystatet$ contains at least two points;
			\item purely discrete if nontrivial and $\dom \mystatet\subset \{0\}\times\nnumbers$;
			\item purely continuous if nontrivial and $\dom \mystatet\subset \mypreals\times \{0\}$.
		\end{itemize}
\end{definition}
The definition of solution pair to a hybrid system is given as follows.
\begin{definition}[(Solution pair to a hybrid system)]
	\label{definition:solution}
	A hybrid input $\myinputt$ and a hybrid arc $\mystatet$ define a solution pair $(\mystatet, \myinputt)$ to the hybrid system $\hs$ if
	\begin{enumerate}[label=\arabic*)]
		\item $(\mystatet(0,0), \myinputt(0,0))\in \overline{C} \cup D$ and $\dom \mystatet = \dom \myinputt (= \dom (\mystatet, \myinputt))$.
		\item For each $j\in \mathbb{N}$ such that $I^{j}_{\mystatet}$ has nonempty interior $\interior(I^{j}_{\mystatet})$, $(\mystatet, \myinputt)$ satisfies
		$$
		(\mystatet(t, j),\myinputt(t, j))\in C
		$$ for all $t\in \interior I^{j}_{\mystatet}$, and 
		$$
		\frac{\text{d}}{\text{d} t} {\mystatet}(t,j) = f(\mystatet(t,j), \myinputt(t,j))
		$$ for almost all $t\in I^{j}_{\mystatet}$.
		\item For all $(t,j)\in \dom (\mystatet, \myinputt)$ such that $(t,j + 1)\in \dom (\mystatet, \myinputt)$, 
		\begin{equation}
		\begin{aligned}
		(\mystatet(t, j), \myinputt(t, j))&\in D\\
		\mystatet(t,j+ 1) &= g(\mystatet(t,j), \myinputt(t, j)).
		\end{aligned}
		\end{equation}
	\end{enumerate}
\end{definition}

In the main result of this paper, the following definition of closeness between hybrid arcs is used.
\begin{definition}[($(\tau, \epsilon)$-closeness of hybrid arcs)]\label{definition:closeness}
	 Given $\tau, \epsilon>0$, two hybrid arcs $\mystatet_{1}$ and $\mystatet_{2}$ are $(\tau, \epsilon)$-close if
	\begin{enumerate}
		\item for all $(t, j)\in \dom \mystatet_{1}$ with $t + j \leq \tau$, there exists $s$ such that $(s, j)\in \dom \mystatet_{2}$, $|t - s|< \epsilon$, and $|\mystatet_{1}(t, j) - \mystatet_{2}(s, j)| < \epsilon$;
		\item for all $(t, j)\in \dom \mystatet_{2}$ with $t + j \leq \tau$, there exists $s$ such that $(s, j)\in \dom \mystatet_{1}$, $|t - s|< \epsilon$, and $|\mystatet_{2}(t, j) - \mystatet_{1}(s, j)| < \epsilon$.
	\end{enumerate}
\end{definition}

\section{Problem Statement}
The motion planning problem for hybrid systems studied in this paper is  formulated as follows.
\begin{problem}
	\label{problem:motionplanning}
	Given a hybrid system $\mathcal{H}$ as in (\ref{model:generalhybridsystem}) with input $u\in \myreals[m]$ and state $x\in \myreals[n]$, the initial state set $X_{0}\subset\myreals[n]$, the final state set $X_{f}\subset\myreals[n]$, and the unsafe set\endnote{The unsafe set $X_{u}$ can be used to constrain both the state and the input.} $X_{u}\subset\myreals[n]\times \myreals[m]$, find a pair $(\mystatet, \myinputt): \dom (\mystatet, \myinputt)\to \mathbb{R}^{n}\times \mathbb{R}^{m}$, namely, \emph{a motion plan}, such that for some $(T, J)\in \dom (\mystatet, \myinputt)$, the following hold:
	\begin{enumerate}[label=\arabic*)]
		\item $\mystatet(0,0) \in X_{0}$, namely, the initial state of the solution belongs to the given initial state set $X_{0}$;
		\item $(\mystatet, \myinputt)$ is a solution pair to $\mathcal{H}$ as defined in Definition \ref{definition:solution};
		\item $(T,J)$ is such that $\mystatet(T,J)\in X_{f}$, namely, the solution belongs to the final state set at hybrid time $(T, J)$;
		\item $(\mystatet(t,j), \myinputt(t, j))\notin X_{u}$ for each $(t,j)\in \dom (\mystatet, \myinputt)$ such that $t + j \leq T+ J$, namely, the solution pair does not intersect with the unsafe set before its state trajectory reaches the final state set.
	\end{enumerate}
Therefore, given sets $X_{0}$, $X_{f}$\pn{,} and $X_{u}$, and a hybrid system~$\mathcal{H}$ as in (\ref{model:generalhybridsystem}) with data $(C, f, D, g)$, a motion planning problem $\mathcal{P}$ is formulated as
$
	\mathcal{P} = (X_{0}, X_{f}, X_{u}, (C, f, D, g)).
$
\end{problem}
\begin{figure}[htbp]
	\centering
	\def\svgwidth{0.9\columnwidth}
\begingroup%
  \makeatletter%
  \providecommand\color[2][]{%
    \errmessage{(Inkscape) Color is used for the text in Inkscape, but the package 'color.sty' is not loaded}%
    \renewcommand\color[2][]{}%
  }%
  \providecommand\transparent[1]{%
    \errmessage{(Inkscape) Transparency is used (non-zero) for the text in Inkscape, but the package 'transparent.sty' is not loaded}%
    \renewcommand\transparent[1]{}%
  }%
  \providecommand\rotatebox[2]{#2}%
  \newcommand*\fsize{\dimexpr\f@size pt\relax}%
  \newcommand*\lineheight[1]{\fontsize{\fsize}{#1\fsize}\selectfont}%
  \ifx\svgwidth\undefined%
    \setlength{\unitlength}{460.76175145bp}%
    \ifx\svgscale\undefined%
      \relax%
    \else%
      \setlength{\unitlength}{\unitlength * \real{\svgscale}}%
    \fi%
  \else%
    \setlength{\unitlength}{\svgwidth}%
  \fi%
  \global\let\svgwidth\undefined%
  \global\let\svgscale\undefined%
  \makeatother%
  \begin{picture}(1,0.6529201)%
    \lineheight{1}%
    \setlength\tabcolsep{0pt}%
    \put(0,0){\includegraphics[width=\unitlength,page=1]{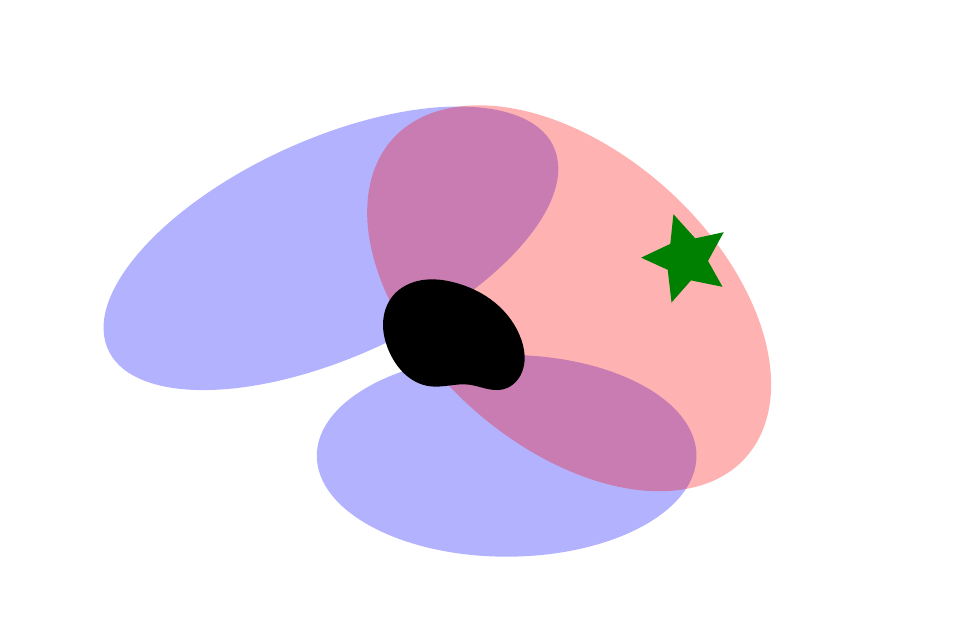}}%
    \put(0.05417655,0.20361801){\makebox(0,0)[lt]{\lineheight{1.25}\smash{\begin{tabular}[t]{l}initial state\end{tabular}}}}%
    \put(0.8141166,0.36989962){\makebox(0,0)[lt]{\lineheight{1.25}\smash{\begin{tabular}[t]{l}final state\end{tabular}}}}%
    \put(0.66638215,0.49017024){\makebox(0,0)[lt]{\lineheight{1.25}\smash{\begin{tabular}[t]{l}unsafe set\end{tabular}}}}%
    \put(0,0){\includegraphics[width=\unitlength,page=2]{hybridmotionplanning.pdf}}%
    \put(0.20653165,0.11709806){\makebox(0,0)[lt]{\lineheight{1.25}\smash{\begin{tabular}[t]{l}flow set\end{tabular}}}}%
    \put(0.82743469,0.26413127){\makebox(0,0)[lt]{\lineheight{1.25}\smash{\begin{tabular}[t]{l}jump set\end{tabular}}}}%
    \put(0,0){\includegraphics[width=\unitlength,page=3]{hybridmotionplanning.pdf}}%
  \end{picture}%
\endgroup%

	\caption{A motion plan to Problem \ref{problem:motionplanning}. The green square denote the initial state set. The green star denotes the final state set. The blue region denotes the flow set. The red region denotes the jump set. The solid blue lines denote flow and the dotted red lines denote jumps in the motion plan.}
\end{figure}
There are some interesting special  cases of Problem \ref{problem:motionplanning}. For example, when $D = \emptyset$ ($C = \emptyset$) and $C$ (respectively, $D$) is nonempty, $\mathcal{P}$ denotes the motion planning problem for purely continuous-time (respectively, discrete-time) systems with constraints. In this way, Problem \ref{problem:motionplanning} subsumes the motion planning problems for purely continuous-time and purely discrete-time system studied in \cite{lavalle2001randomized} and \cite{lavalle2006planning}.
\begin{example}[(Actuated bouncing ball system)]\label{example:bouncingball}
	Consider a ball bouncing on a fixed horizontal surface as is shown in Figure \ref{fig:bouncingball}. The surface is located at the origin and, through control actions, is capable of affecting the velocity of the ball after the impact.  The dynamics of the ball while in the air is given by
	\begin{equation}
	\label{model:bouncingballflow}
	\dot{x} = \left[ \begin{matrix}
	x_{2} \\
	-\gamma
	\end{matrix}\right] =: f(x, u)\qquad (x, u)\in C
	\end{equation}
	where $x :=(x_{1}, x_{2})\in \mathbb{R}^2$. The height of the ball is denoted by $x_{1}$. The velocity of the ball is denoted by $x_{2}$. The gravity constant is denoted by $\gamma$. 
	\begin{figure}[htbp] 
		\centering
		\parbox[h]{\columnwidth}{
			\centering
			\subfigure[The actuated bouncing ball system\label{fig:bouncingball} ]{%
	\def\svgwidth{0.7\columnwidth}
\begingroup%
  \makeatletter%
  \providecommand\color[2][]{%
    \errmessage{(Inkscape) Color is used for the text in Inkscape, but the package 'color.sty' is not loaded}%
    \renewcommand\color[2][]{}%
  }%
  \providecommand\transparent[1]{%
    \errmessage{(Inkscape) Transparency is used (non-zero) for the text in Inkscape, but the package 'transparent.sty' is not loaded}%
    \renewcommand\transparent[1]{}%
  }%
  \providecommand\rotatebox[2]{#2}%
  \newcommand*\fsize{\dimexpr\f@size pt\relax}%
  \newcommand*\lineheight[1]{\fontsize{\fsize}{#1\fsize}\selectfont}%
  \ifx\svgwidth\undefined%
    \setlength{\unitlength}{155.66442535bp}%
    \ifx\svgscale\undefined%
      \relax%
    \else%
      \setlength{\unitlength}{\unitlength * \real{\svgscale}}%
    \fi%
  \else%
    \setlength{\unitlength}{\svgwidth}%
  \fi%
  \global\let\svgwidth\undefined%
  \global\let\svgscale\undefined%
  \makeatother%
  \begin{picture}(1,0.86048663)%
    \lineheight{1}%
    \setlength\tabcolsep{0pt}%
    \put(0,0){\includegraphics[width=\unitlength,page=1]{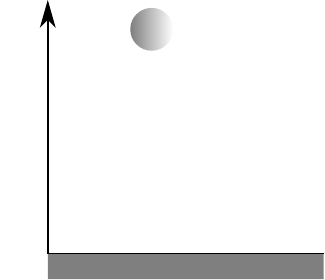}}%
    \put(0.03673765,0.65008529){\rotatebox{90}{\makebox(0,0)[rt]{\lineheight{1.25}\smash{\begin{tabular}[t]{r}height ($x_{1}$)\end{tabular}}}}}%
    \put(0,0){\includegraphics[width=\unitlength,page=2]{bouncingballfigure.pdf}}%
    \put(0.55137666,0.30449301){\makebox(0,0)[lt]{\lineheight{1.25}\smash{\begin{tabular}[t]{l}control\\input $(u)$\end{tabular}}}}%
  \end{picture}%
\endgroup%

}
		}
		\parbox[h]{\columnwidth}{
			\centering
			\subfigure[A motion plan to the sample motion planning problem for actuated  bouncing ball system.\label{fig:samplesolutionbb}]{\includegraphics[width=0.8\columnwidth]{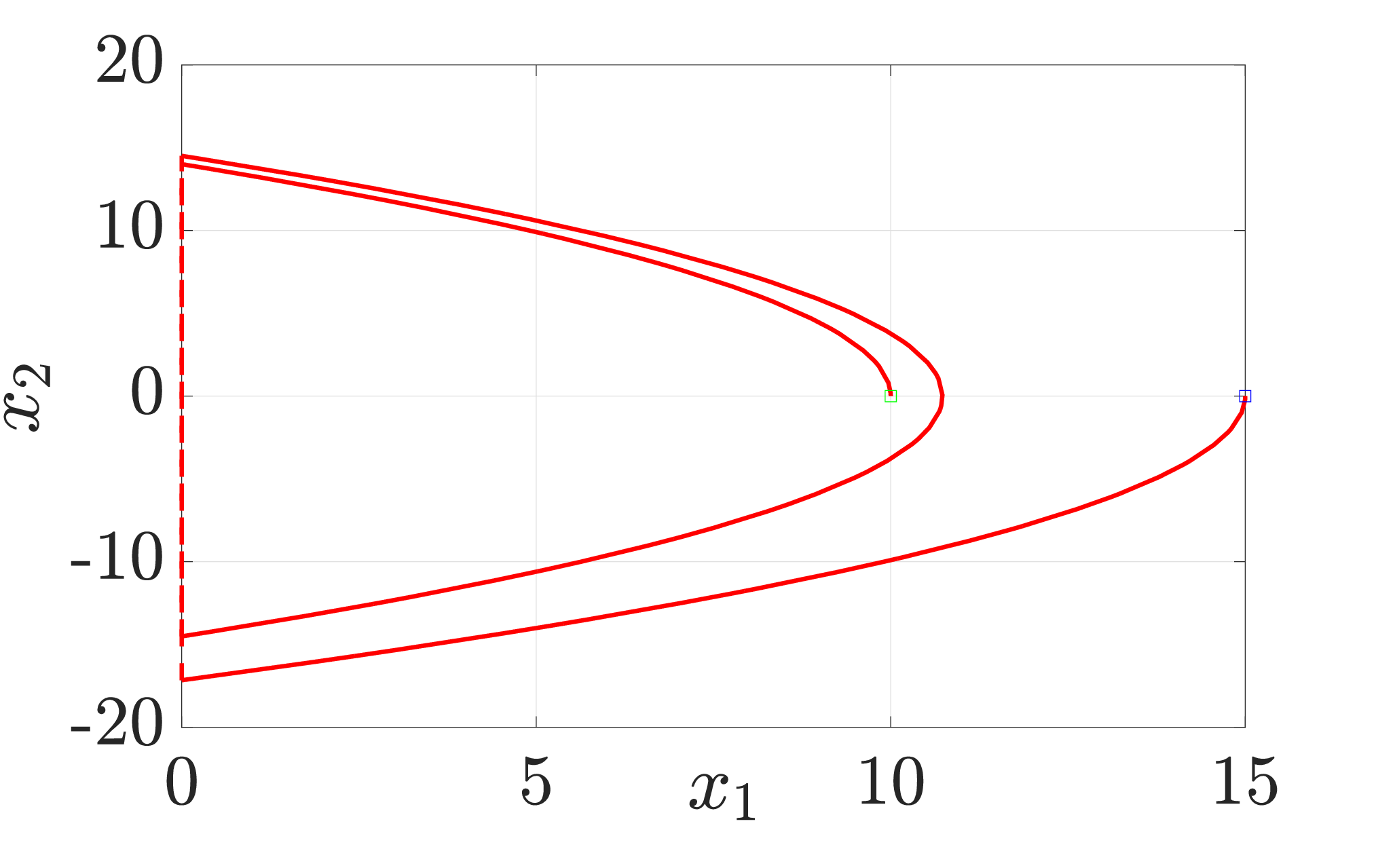}}
		}
		\caption{The actuated bouncing ball system in Example \ref{example:bouncingball}.}
	\end{figure} 
	Flow is allowed when the ball is above the surface. Hence, the flow set is
	\begin{equation}
	\label{model:bouncingballflowset}
	C := \{(x, u)\in \mathbb{R}^{2}\times \mathbb{R}: x_{1}\geq 0\}.
	\end{equation}
	At every impact, and with control input equal to zero, the velocity of the ball changes from negative to positive while the height remains the same. The dynamics at jumps of the actuated bouncing ball system is given as
	\begin{equation}
	x^{+} = \left[ \begin{matrix}
	x_{1} \\
	-\lambda x_{2}+u
	\end{matrix}\right] =: g(x, u)\qquad (x, u)\in D
	\label{conservationofmomentum}
	\end{equation}
	where $u\geq 0$ is the input and $\lambda \in (0,1)$ is the coefficient of restitution. 
	Jumps are allowed when the ball is on the surface with nonpositive velocity. Hence, the jump set is 
	\begin{equation}
	\label{model:bouncingballjumpset}
	D:= \{(x, u)\in \mathbb{R}^{2}\times \mathbb{R}: x_{1} = 0, x_{2} \leq 0, u\geq 0\}.
	\end{equation}
	
	The hybrid model of the actuated bouncing ball system is given by  (\ref{model:generalhybridsystem}), where the flow map $f$ is given in (\ref{model:bouncingballflow}), the flow set $C$ is given in (\ref{model:bouncingballflowset}), the jump map $g$ is given in (\ref{conservationofmomentum}), and the jump set $D$ is given in (\ref{model:bouncingballjumpset}).

	An example of a motion planning problem for the actuated bouncing ball system is as follows: using a bounded input signal, find a solution pair to (\ref{model:generalhybridsystem}) when the bouncing ball is released at a certain height with zero velocity and such that it reaches a given target height with zero velocity. To complete this task, not only the values of the input, but also the hybrid time domain of the input need to be planned properly such that the ball can reach the desired target. One such motion planning problem is given by defining the initial state set as $X_{0} = \{(15, 0)\}$, the final state set as $X_{f} = \{(10, 0)\}$, the unsafe set as $X_{u} =\{(x, u)\in \mathbb{R}^{2}\times \mathbb{R}: u\in (-\infty, 0]\cup[5, \infty)\}$.  The motion planning problem $\mathcal{P}$ is given as $\mathcal{P} = (X_{0}, X_{f}, X_{u}, (C, f, D, g))$. We solve this motion planning problem later in this paper.
\end{example}

\begin{example}[(Walking robot)]\label{example:biped}
	The state $x$ of the compass model of a walking robot is composed of the angle vector $\theta$ and the velocity vector $\omega$ \cite{grizzle2001asymptotically}. 
	The angle vector $\theta$ contains the planted leg angle $\theta_{p}$, the swing leg angle $\theta_{s}$, and the torso angle $\theta_{t}$. The velocity vector $\omega$ contains the planted leg angular velocity $\omega_{p}$, the swing leg angular velocity $\omega_{s}$, and the torso angular velocity $\omega_{t}$. The input $u$ is the input torque, where $u_{p}$ is the torque applied on the planted leg from the ankle, $u_{s}$ is the torque applied on the swing leg from the hip, and $u_{t}$ is the torque applied on the torso from the hip.
	The continuous dynamics of $x = (\theta, \omega)$ are obtained from the Lagrangian method and are given by $\dot{\theta} = \omega, \dot{\omega} = D_{f}(\theta)^{-1}( - C_{f}(\theta, \omega)\omega - G_{f}(\theta) + Bu) = : \alpha(x, u)$,
	where $D_{f}$ and $C_{f}$ are the inertial and Coriolis matrices, respectively, and $B$ is the actuator  relationship matrix.
		\begin{figure}[htbp] 
			\centering
	\def\svgwidth{0.5\columnwidth}
	\import{./figures/}{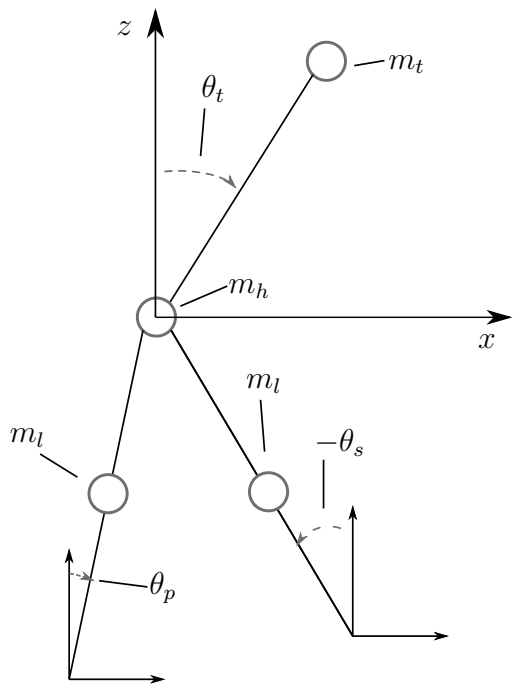}

			\caption{The biped system in Example \ref{example:biped}. The angle vector $\theta$ contains the planted leg angle $\theta_{p}$, the swing leg angle $\theta_{s}$, and the torso angle $\theta_{t}$. The velocity vector $\omega$ contains the planted leg angular velocity $\omega_{p}$, the swing leg angular velocity $\omega_{s}$, and the torso angular velocity $\omega_{t}$. The input $u$ is the input torque, where $u_{p}$ is the torque applied on the planted leg from the ankle, $u_{s}$ is the torque applied on the swing leg from the hip, and $u_{t}$ is the torque applied on the torso from the hip.} \label{fig:bipedsystem} 
		\end{figure} 
	In \cite{short2018hybrid}, the input torques that produce an acceleration  $a$ for a special state $x$ are determined by a function $\mu$, defined as $\mu(x, a) := B^{-1}(D_{f}(\theta)a + C_{f}(\theta, \omega)\omega + G_{f}(\theta)).$
	By applying $u = \mu(x, a)$ to $\dot{\omega} = \alpha(x, u)$, we obtain
	$
	\label{equation:omegaflowmap}
	\dot{\omega} = a.
	$
	Then, the flow map $f$ is defined as 
	$$
	\label{model:bipedflowmap}
	f(x, a) := \left[\begin{matrix}
	\omega\\
	a
	\end{matrix}\right] \quad (x, a)\in C.
	$$
	
	Flow is allowed when only one leg is in contact with the ground. To determine if the biped has reached the end of a step, we define
	$
	h(x) := \phi_{s} - \theta_{p}$ for all $x\in \mathbb{R}^{6}
	$
	where $\phi_{s}$ denotes the step angle.
	The condition $h(x) \geq 0$ indicates that only one leg is in contact with the ground. Thus, the flow set is defined as 
	$$
	\label{model:bipedflowset}
	C:= \{(x, a)\in \mathbb{R}^{6}\times \mathbb{R}^{3}: h(x)\geq 0\}.
	$$
	Furthermore, a step occurs when the change of $h$ is such that $\theta_{p}$  is approaching $\phi_{s}$, and $h$ equals zero. Thus, the jump set $D$ is defined as
	$$
	\label{model:bipedjumpset}
	D := \{(x, a)\in \mathbb{R}^{6}\times \mathbb{R}^{3}: h(x) = 0, \omega_{p} \geq 0\}.
	$$
	
	Following \cite{grizzle2001asymptotically}, when a step occurs, the swing leg becomes the planted leg, and the planted leg becomes the swing leg. The function $\Gamma$ is defined to swap angles and velocity variables as
	$
	\label{equation:thetajumpmap}
	\theta^{+} = \Gamma(\theta).
	$
	The angular velocities after a step are determined by a contact model denoted as
	$\Omega(x) := (\Omega_{p}(x), \Omega_{s}(x), \Omega_{t}(x))$, where $\Omega_{p}$, $\Omega_{s}$, and $\Omega_{t}$ are the angular velocity of the planted leg, swing leg, and torso, respectively. Then, the jump map $g$ is defined as
	\begin{equation}
	\label{equation:bipedjumpmap}
	g(x, a) := \left[\begin{matrix}
	\Gamma(\theta)\\
	\Omega(x)
	\end{matrix}\right]\quad \forall (x, a)\in D.
	\end{equation} For more information about the contact model, see \cite[Appendix A]{grizzle2001asymptotically}.
	
	An example of a motion planning problem for the walking robot system is as follows: using a bounded input signal, find a solution pair to (\ref{model:generalhybridsystem}) associated to the walking robot system completing a step of a walking circle. One way to characterize a walking cycle is to define the final state and the initial state as the states before and after a jump occurs, respectively.  One such motion planning problem is given by defining the final state set as $X_{f} = \{(\phi_{s}, -\phi_{s}, 0, 0.1, 0.1, 0)\}$, the initial state set as $X_{0} = \{x_{0}\in \mathbb{R}^{6}: x_{0} = g(x_{f}, 0), x_{f}\in X_{f}\}$ where the input argument of $g$ can be set arbitrarily because input does not affect the value of $g$; see (\ref{equation:bipedjumpmap}), and the unsafe set as $X_{u} = \{(x, a)\in \mathbb{R}^{6}\times \mathbb{R}^{3}: a_{1} \notin [a_{1}^{\min}, a_{1}^{\max}]\text{ or }  a_{2} \notin [a_{2}^{\min}, a_{2}^{\max}]\text{ or } a_{3} \notin [a_{3}^{\min}, a_{3}^{\max}]\text{ or } (x, a)\in D  \}$, where $a_{1}^{\min}$, $a_{2}^{\min}$, and $a_{3}^{\min}$ are the lower bounds of $a_{1}$, $a_{2}$, and  $a_{3}$, respectively, and $a_{1}^{\max}$, $a_{2}^{\max}$, and $a_{3}^{\max}$ are the upper bounds of $a_{1}$, $a_{2}$, and  $a_{3}$, respectively. We also solve this motion planning problem later in this paper.
	
\end{example}

\label{section:problemstatement}

\section{A Forward Propagation Algorithm Template}
\label{section:bruteforcealgorithm}
In this section, an algorithm template, which we refer to as \emph{forward propagation algorithm template}, is proposed for the design of motion planning algorithms to solve the motion planning problem $\mathcal{P} = (X_{0}, X_{f}, X_{u}, (C, f, D, g))$. The main steps of the forward propagation algorithm template are as follows.
\begin{steps}
	\item Propagate the state from the initial state set  $X_{0}$ forward in hybrid time without intersecting with the unsafe set $X_{u}$. 
	\item During the propagation, if a solution pair to the hybrid system $(C, f, D, g)$ ending in the final state set $X_{f}$ is found, then return it as a motion plan.
\end{steps}
In the remainder of this section, first, we formalize the propagation operation used in the algorithm template and show that the result constructed by the propagation operation satisfies the requirements in Problem \ref{problem:motionplanning}. Then, we present the algorithm template framework and analyze its soundness and exactness properties. A sampling-based motion planning algorithm, which we call HyRRT, that is designed using the proposed algorithm template is presented in the forthcoming Section \ref{section:hybridRRT}.

\subsection{Propagation Operation}
The proposed template requires an operation to collect all solution pairs that start from the initial state set, without reaching the unsafe set. The definition of the forward propagation operation is formalized as follows.
\begin{definition}
		\label{definition:propagation}
		(Forward propagation operation) Given a hybrid system $\mathcal{H} = (C, f, D, g)$, an initial state set $X_{0}$, and an unsafe set $X_{u}$, the forward propagation operation with parameters $(X_{0}, X_{u}, (C, f, D, g))$ constructs a set $\mathcal{S}$ that collects all the solution pairs $\psi = (\mystatet, u)$ that satisfy the following conditions:
		\begin{enumerate}
			\item $\psi = (\mystatet, u)$ is a solution pair to $\mathcal{H}$ (see Definition \ref{definition:solution});
			\item $\mystatet(0,0)\in X_{0}$;
			\item$\psi(t, j)\notin X_{u}$ for all $(t, j)\in \dom \psi$.
		\end{enumerate}
\end{definition}
\begin{remark}
	Note that Definition \ref{definition:propagation} describes an exact searching process since it constructs a set that includes all possible solution pairs. The implementation of an algorithm constructing such set is difficult in practice because there can be infinite many possible values for the input at each hybrid time instance. Additionally, the solution pairs collected in $\mathcal{S}$ are not limited to maximal solutions, which means that any truncation of a solution pair in $\mathcal{S}$ should also be collected in~$\mathcal{S}$. 
	
	Although it is practically difficult to implement the propagation operation exactly, however, this operation can be implemented incrementally and numerically. The search trees constructed in the classic motion planning algorithms, such as RRT, can be seen as an approximation of the set $\mathcal{S}$ because any path in the search trees starting from $X_{0}$ can be seen as an element in $\mathcal{S}$. An RRT-type implementation of the propagation operation is presented in the forthcoming Section \ref{section:hybridRRT}.\EndRemark
\end{remark}
\subsection{Forward Propagation Algorithm Template and Property Analysis}
The forward propagation algorithm is given in Algorithm \ref{algo:theoreticalalgo}. The inputs of the proposed algorithm are as follows:
\begin{itemize}
	\item The motion planning problem is given as $\mathcal{P} = (X_{0}, X_{f}, X_{u}, (C, f, D, g))$ (defined in Problem \ref{problem:motionplanning}). 
	\item  The set of solution pairs $\mathcal{S}$ to store forward propagation operation result.
\end{itemize}
\begin{algorithm}[htbp]
	\caption{Forward propagation algorithm template}
	\label{algo:theoreticalalgo}
	\begin{algorithmic}[1]\raggedright
		\State Forward propagate with parameters  $(X_{0}, X_{u}, (C, f, D, g))$ (see Definition \ref{definition:propagation}) and store results in  $\mathcal{S}$.
		\If{$\exists \psi = (\phi, u)\in \mathcal{S}$ such that $\phi(T, J)\in X_{f}$ where $(T, J) = \max \dom \psi$}
		\State \Return  $\psi$.
		\Else
		\State \Return Failure.
		\EndIf
	\end{algorithmic}
\end{algorithm}
The algorithm template collects all collision-free solution pairs to $\mathcal{H}$ starting from $X_{0}$ and stores them in the set $\mathcal{S}$ (Line 1). Then, the algorithm template searches for a solution pair in the set $\mathcal{S}$ that ends in $X_{f}$ (Line 2). If such a solution pair is found, then the algorithm returns this solution pair as a motion plan (Line 3). If such a solution pair is not found, then the algorithm template returns failure (Line 5).

Soundness guarantees the correctness of the returned solution, as defined next.
\begin{definition}[(Soundness \cite{fink1994prodigy})]
	\label{definition:soundness}
	  An algorithm is said to be sound for a problem if for any possible data defining the problem (e.g., $(X_{0}, X_{f}, X_{u}, (C, f, D, g))$ in Problem \ref{problem:motionplanning}), the result returned by the algorithm is always a correct solution to the problem. 
\end{definition}
The next result shows that the proposed forward propagation algorithm template is sound for the motion planning problem in Problem \ref{problem:motionplanning}. 
\begin{theorem}
	\label{theorem:soundness}
	The forward propagation algorithm in Algorithm \ref{algo:theoreticalalgo} is sound for the motion planning problem $\mathcal{P}$ in Problem \ref{problem:motionplanning}.
\end{theorem}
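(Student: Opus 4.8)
The plan is to show that whenever Algorithm~\ref{algo:theoreticalalgo} returns a pair $\psi = (\phi, u)$ (rather than failure), that pair is a genuine motion plan, i.e. it satisfies all four conditions of Problem~\ref{problem:motionplanning}. By Definition~\ref{definition:soundness}, soundness concerns only the case in which the algorithm actually returns a candidate solution; the ``return Failure'' branch carries no correctness obligation. So the argument reduces to inspecting the single branch where the algorithm reaches Line~3, and verifying that the four defining properties of a motion plan hold by construction.

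First I would unpack what it means for $\psi$ to be returned at Line~3. This happens precisely when $\psi \in \mathcal{S}$ and $\phi(T,J) \in X_f$, where $(T,J) = \max \dom \psi$. The key observation is that $\mathcal{S}$ is, by Definition~\ref{definition:propagation}, the output of the forward propagation operation with parameters $(X_0, X_u, (C,f,D,g))$, so every element of $\mathcal{S}$ already satisfies the three conditions listed there. I would then match these up one-to-one with the requirements of Problem~\ref{problem:motionplanning}. Condition~1 of Definition~\ref{definition:propagation} gives that $\psi$ is a solution pair to $\mathcal{H}$, which is exactly requirement~2) of the problem. Condition~2 of Definition~\ref{definition:propagation} gives $\phi(0,0)\in X_0$, which is requirement~1). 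The termination test at Line~2 supplies $\phi(T,J)\in X_f$ for $(T,J)=\max\dom\psi$, which is requirement~3).

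The one requirement needing slightly more care is the safety condition~4), which asks that $(\phi(t,j),u(t,j))\notin X_u$ for every $(t,j)\in\dom\psi$ with $t+j\le T+J$. Here I would invoke condition~3 of Definition~\ref{definition:propagation}, which guarantees $\psi(t,j)\notin X_u$ for \emph{all} $(t,j)\in\dom\psi$. Since the set of $(t,j)$ with $t+j\le T+J$ is a subset of $\dom\psi$, the stronger ``for all'' statement immediately implies the required ``for all before $(T,J)$'' statement, so requirement~4) follows. With all four conditions established, $\psi$ is a motion plan, and since this holds for arbitrary problem data $(X_0,X_f,X_u,(C,f,D,g))$, the algorithm is sound.

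I do not expect a genuine obstacle here: the theorem is essentially true by construction, because the propagation operation was \emph{defined} to enforce exactly the solution-pair, initial-set, and safety constraints, and the algorithm's acceptance test enforces the final-set constraint. The only subtlety worth flagging is the quantifier alignment in the safety condition (the problem restricts attention to hybrid times up to $(T,J)$ while the propagation operation enforces safety on the whole domain), and the fact that $(T,J)=\max\dom\psi$ is well defined, which relies on $\psi$ being a compact solution pair so that the maximum exists; I would note this but it requires no real work since a returned $\psi$ with a well-defined $\max\dom\psi$ is presumed by the algorithm's own Line~2 test.
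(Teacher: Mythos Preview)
Your argument for the non-failure branch is correct and matches the paper's reasoning essentially line for line: you invoke Definition~\ref{definition:propagation} to get conditions 1), 2), and 4) of Problem~\ref{problem:motionplanning}, and the acceptance test at Line~2 to get condition 3).

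The one divergence is your treatment of the failure branch. You dismiss it as carrying no correctness obligation, which is the standard reading of ``soundness'' in much of computer science. The paper, however, reads its own Definition~\ref{definition:soundness} (``the result returned by the algorithm is always a correct solution'') as also covering the \texttt{Failure} output, and therefore includes a short argument that if \texttt{Failure} is returned then no motion plan exists: since $\mathcal{S}$ collects \emph{all} collision-free solution pairs from $X_0$, the absence of one ending in $X_f$ means no motion plan can exist. This extra paragraph is effectively the contrapositive of the exactness argument in Theorem~\ref{theorem:exactness}, so the content is not new, but if you want to match the paper's proof you should add that case rather than wave it off.
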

\begin{proof}
	If non failure is returned (Line 3), from Definition \ref{definition:propagation}, the solution $\psi$ is a solution pair to $\hs$ that starts from $X_{0}$, and does not intersect with $X_{u}$. In addition, Line 2 in Algorithm \ref{algo:theoreticalalgo} guarantees that $\psi$ ends in $X_{f}$.  Therefore, the returned result is a motion plan to Problem \ref{problem:motionplanning}. 
	
	If the failure signal is returned (Line 5), then there does not exist an element in $\mathcal{S}$ that starts from $X_{0}$, ends in $X_{f}$, and does not intersect with $X_{u}$ (Line 2). This means that there does not exist a motion plan to Problem \ref{problem:motionplanning}. 
	
	In conclusion, the returned result is a correct solution to Problem \ref{problem:motionplanning}.
\end{proof}

Exactness guarantees that the algorithm will always return a solution when one exists, as defined next.
\begin{definition}[(Exactness \cite{goldberg1994completeness})]
	\label{definition:exactness}
	 An algorithm is said to be exact for a problem if for any possible data defining the problem (e.g., $(X_{0}, X_{f}, X_{u}, (C, f, D, g))$ in Problem \ref{problem:motionplanning}), it finds a solution when one exists.
\end{definition}
The next result shows that the proposed forward propagation algorithm template is exact for the motion planning problem in Problem \ref{problem:motionplanning}. 
\begin{theorem}
	\label{theorem:exactness}
	The forward propagation algorithm in Algorithm \ref{algo:theoreticalalgo} is an exact algorithm for the motion planning problem $\mathcal{P}$ in Problem \ref{problem:motionplanning}.
\end{theorem}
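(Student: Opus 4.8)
The plan is to prove the one direction required by Definition \ref{definition:exactness}: assuming a motion plan exists, I will exhibit an element of $\mathcal{S}$ that satisfies the test in Line 2 of Algorithm \ref{algo:theoreticalalgo}, so that the algorithm returns a motion plan (Line 3) rather than failure (Line 5). Accordingly, suppose $(\phi, \upsilon)$ is a motion plan to $\mathcal{P}$, and let $(T,J) \in \dom(\phi,\upsilon)$ be the hybrid time witnessing items 1)--4) of Problem \ref{problem:motionplanning}.

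First I would truncate $(\phi, \upsilon)$ at $(T,J)$. Writing $\dom(\phi,\upsilon)$ in the form $\bigcup_{j}([t_j, t_{j+1}]\times\{j\})$ guaranteed by Definition \ref{definition:hybridtimedomain}, the point $(T,J)$ lies in $[t_J, t_{J+1}]\times\{J\}$, so the prefix $E' := \left(\bigcup_{j=0}^{J-1}[t_j, t_{j+1}]\times\{j\}\right) \cup ([t_J, T]\times\{J\})$ is again a compact hybrid time domain. Let $\psi'$ denote the restriction of $(\phi, \upsilon)$ to $E'$. The key observation underpinning the whole argument is that $\max \dom \psi' = (T,J)$.

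Next I would verify that $\psi'$ meets the three requirements of the forward propagation operation in Definition \ref{definition:propagation}. Since restricting a solution pair to a prefix of its domain preserves the flow and jump conditions of Definition \ref{definition:solution}, $\psi'$ is a solution pair to $\mathcal{H}$, and $\phi(0,0)\in X_0$ by item 1). The only delicate point is the safety requirement: Definition \ref{definition:propagation} demands $\psi'(t,j)\notin X_u$ for \emph{all} $(t,j)\in E'$, whereas item 4) of Problem \ref{problem:motionplanning} supplies this only for $(t,j)$ with $t + j \leq T + J$. Here I would argue that every $(t,j)\in E'$ satisfies $j \leq J$ and $t \leq T$ (for $j=J$ by construction of $E'$, and for $j<J$ because $t \leq t_{j+1} \leq t_J \leq T$), hence $t + j \leq T + J$; thus item 4) covers all of $E'$ and $\psi'$ is collision-free on its entire domain. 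Consequently $\psi'\in\mathcal{S}$, and this is precisely where the remark following Definition \ref{definition:propagation} is essential, namely that $\mathcal{S}$ is not restricted to maximal solution pairs but contains their truncations as well.

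Finally, since $\max\dom\psi' = (T,J)$ and $\phi(T,J)\in X_f$ by item 3), the existential test in Line 2 is satisfied by $\psi'$, so the algorithm does not return failure; by Theorem \ref{theorem:soundness} whatever it returns is a valid motion plan, which establishes exactness. I expect the main obstacle to be the safety bookkeeping described above: the two definitions quantify the unsafe-set condition differently (over the full truncated domain versus over the sublevel set $t+j\leq T+J$), so the argument hinges on showing that truncating precisely at $(T,J)$ reconciles the two, together with invoking the closure of $\mathcal{S}$ under truncation.
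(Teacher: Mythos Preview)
Your argument is correct and follows essentially the same approach as the paper: show that a motion plan (or a suitable restriction of it) lies in $\mathcal{S}$ and witnesses the test in Line~2, so the algorithm returns a solution rather than failure. Your version is in fact more careful than the paper's brief proof, which simply asserts that the given motion plan $\psi$ is collected in $\mathcal{S}$ and satisfies Line~2; you correctly observe that Problem~\ref{problem:motionplanning} allows $\dom(\phi,\upsilon)$ to extend beyond $(T,J)$ with safety guaranteed only for $t+j\le T+J$, so the truncation step you introduce is exactly what is needed to reconcile Definition~\ref{definition:propagation} (safety on the full domain) with item~4) of Problem~\ref{problem:motionplanning} and to ensure $\max\dom\psi'=(T,J)$ as required by Line~2.
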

\begin{proof}
	If a solution $\psi$ to the given motion planning problem exists, then from Problem \ref{problem:motionplanning}, $\psi$ is a solution pair to $\hs$ that starts from $X_{0}$, ends at $X_{f}$, and does not have intersection with $X_{u}$. This means that $\psi$ should be collected by the forward propagation operation with parameters $(X_{0}, X_{u}, (C, f, D, g))$, be stored in $\mathcal{S}$, and the condition in Line 2 is $\texttt{true}$. Therefore, $\psi$ is returned by Algorithm \ref{algo:theoreticalalgo}.
\end{proof}
	\begin{remark}
		The exactness property guarantees that the proposed algorithm returns a solution when one exists. However, when no solution exists, the proposed algorithm is not guaranteed to return a failure signal in finite time because the forward propagation operations are not always guaranteed to be completed within finite time. This is not a unique problem of our algorithm template, but rather a general issue in motion planning algorithms; see \cite{karaman2011anytime}. 
		\EndRemark
	\end{remark}

\section{HyRRT: A Sampling-based Motion Planning Algorithm for Hybrid Systems}
\label{section:hybridRRT}
In this section, an RRT-type motion planning algorithm for hybrid systems, called HyRRT, is proposed. This algorithm is an implementation of the algorithm template in Algorithm~\ref{algo:theoreticalalgo} that approximates the propagation operation introduced in Definition \ref{definition:propagation} by incrementally constructing a search tree.
	\subsection{Overview}
	\label{section:algorithmoverview}
	HyRRT searches for a motion plan by incrementally constructing a search tree. 
	The search tree is modeled by a directed tree. A directed tree $\mathcal{T}$ is a pair $\mathcal{T} = (V, E)$, where $V$ is a set whose elements are called vertices and $E$ is a set of paired vertices whose elements are called edges. The edges in the directed tree are directed, which means the pairs of vertices that represent edges are ordered. The set of edges $E$ is defined as
	$
	E \subseteq \{(v_{1}, v_{2}): v_{1}\in V, v_{2}\in V, v_{1}\neq v_{2}\}.
	$
	The edge $e = (v_{1}, v_{2})\in E$ represents an edge from $v_{1}$ to $v_{2}$.  
	A path in $\mathcal{T} = (V, E)$ is a sequence of vertices
	$
	p = (v_{1}, v_{2}, ..., v_{k})
	$ such that $(v_{i}, v_{i + 1})\in E$ for all $i= 1, 2,..., k - 1$.
	 
	\begin{figure}[htbp]
		\centering
		\subfigure[States and solution pairs.\label{fig:searchgraph_statespace}]{%
	\def\svgwidth{0.7\columnwidth}
	\import{./figures/}{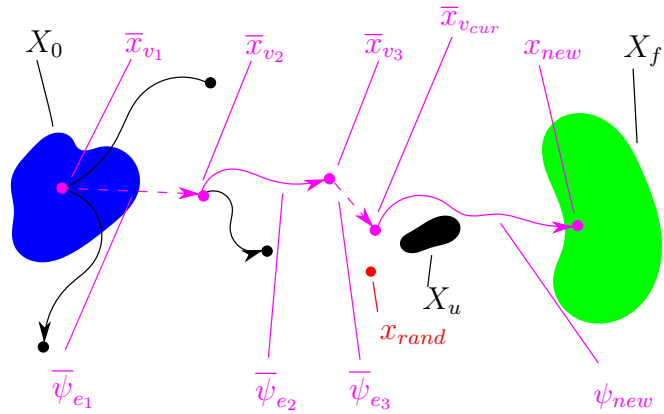}
}
		\subfigure[Search tree associated with the states and solution pairs in Figure \ref{fig:searchgraph_statespace}.\label{fig:searchgraph_searchgraph}]{%
	\def\svgwidth{0.7\columnwidth}
	\import{./figures/}{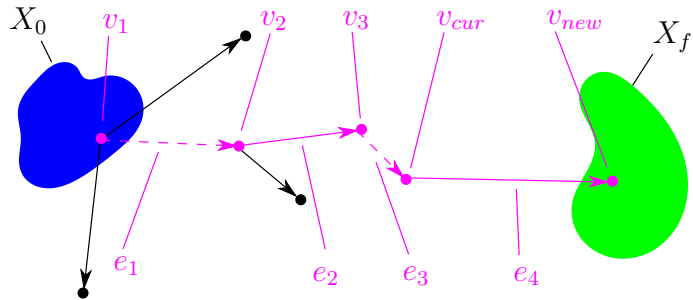}
}
		\caption{The association between states/solution pairs and the vertices/edges in the search tree. The blue region denotes $X_{0}$, the green region denotes $X_{f}$, and the black region denotes $X_{u}$. The dots and lines between dots in Figure \ref{fig:searchgraph_searchgraph} denote the vertices and edges associated with the states and solution pairs in Figure \ref{fig:searchgraph_statespace}. The path $p = (v_{1}, v_{2}, v_{3}, v_{cur}, v_{new})$ in the search graph in Figure \ref{fig:searchgraph_searchgraph} represents the solution pair $\tilde{\psi}_{p} = \overline{\psi}_{e_{1}}|\overline{\psi}_{e_{2}}|\overline{\psi}_{e_{3}}|\psi_{new}$ in Figure \ref{fig:searchgraph_statespace}.}
		\label{fig:searchgraph}
	\end{figure}
	
	Each vertex in the search tree $\mathcal{T}$ is associated with a state value of $\mathcal{H}$. Each edge in the search tree is associated with a solution pair to $\mathcal{H}$ that connects the state values associated with their endpoint vertices. The state value associated with vertex $v\in V$ is denoted as $\overline{x}_{v}$ and the solution pair associated with edge $e\in E$ is denoted as $\overline{\psi}_{e}$, as shown in Figure \ref{fig:searchgraph}. 
	
	The concatenation operation is formulated as follows and heavily used in constructing the search tree. 
	\begin{definition}[(Concatenation operation)]
		\label{definition:concatenation}
		 Given two functions $\mystatet_{1}: \dom \mystatet_{1} \to \mathbb{R}^{n}$ and $\mystatet_{2}:\dom \mystatet_{2} \to \mathbb{R}^{n}$, where $\dom \mystatet_{1}$ and $\dom \mystatet_{2}$ are hybrid time domains, $\mystatet_{2}$ can be concatenated to $\mystatet_{1}$ if $\mystatet_{1}$ is compact and $\mystatet: \dom \mystatet \to \mathbb{R}^n$ is the concatenation of $\mystatet_{2}$ to $\mystatet_{1}$, denoted $\mystatet = \mystatet_{1}|\mystatet_{2}$, namely,
		\begin{enumerate}[label = \arabic*)]
			\item $\dom \mystatet = \dom \mystatet_{1} \cup (\dom \mystatet_{2} + \{(T, J)\}) $, where $(T, J) = \max$ $\dom \mystatet_{1}$ and the plus sign denotes Minkowski addition;
			\item $\mystatet(t, j) = \mystatet_{1}(t, j)$ for all $(t, j)\in \dom \mystatet_{1}\backslash \{(T, J)\}$ and $\mystatet(t, j) = \mystatet_{2}(t - T, j - J)$ for all $(t, j)\in \dom \mystatet_{2} + \{(T, J)\}$.
		\end{enumerate}
	\end{definition}
	The following example in Figure \ref{fig:concatenation} illustrates the concatenation of the functions on hybrid time domains. Let $\mystatet_{1}$ and $\mystatet_{2}$ be two functions defined on hybrid time domains and $(T, J) = \max \dom \mystatet_{1}$. When $\mystatet_{2}$ is concatenated to $\mystatet_{1}$, the domain of the concatenation is constructed by translating the domain of $\mystatet_{2}$ by $(T, J)$ and concatenating the translated domain to the domain of $\mystatet_{1}$. Particularly, if $\mystatet_{1}$ and $\mystatet_{2}$ are hybrid arcs, their concatenation is not guaranteed to be a hybrid arc. The reason is that the concatenation result may not be continuous at $(T, J)$. The concatenation of two trajectories is illustrated in Figure \ref{fig:concatenation}. In the figure, the solid lines denote the flows and the dotted lines denote the jumps. In Figure \ref{fig:concatenation1},  the trajectories $\mystatet_{1}$ and $\mystatet_{2}$ denote the original trajectories and the trajectory $\mystatet$ in Figure \ref{fig:concatenation2} denotes the concatenation of $\mystatet_{2}$ to $\mystatet_{1}$.  
	\begin{figure}[htbp] 
	\centering
	\subfigure[The original trajectory $\mystatet_{1}$ and $\mystatet_{2}$.]{%
	\def\svgwidth{0.5\columnwidth}
\begingroup%
  \makeatletter%
  \providecommand\color[2][]{%
    \errmessage{(Inkscape) Color is used for the text in Inkscape, but the package 'color.sty' is not loaded}%
    \renewcommand\color[2][]{}%
  }%
  \providecommand\transparent[1]{%
    \errmessage{(Inkscape) Transparency is used (non-zero) for the text in Inkscape, but the package 'transparent.sty' is not loaded}%
    \renewcommand\transparent[1]{}%
  }%
  \providecommand\rotatebox[2]{#2}%
  \newcommand*\fsize{\dimexpr\f@size pt\relax}%
  \newcommand*\lineheight[1]{\fontsize{\fsize}{#1\fsize}\selectfont}%
  \ifx\svgwidth\undefined%
    \setlength{\unitlength}{466.81004247bp}%
    \ifx\svgscale\undefined%
      \relax%
    \else%
      \setlength{\unitlength}{\unitlength * \real{\svgscale}}%
    \fi%
  \else%
    \setlength{\unitlength}{\svgwidth}%
  \fi%
  \global\let\svgwidth\undefined%
  \global\let\svgscale\undefined%
  \makeatother%
  \begin{picture}(1,0.52674018)%
    \lineheight{1}%
    \setlength\tabcolsep{0pt}%
    \put(0,0){\includegraphics[width=\unitlength,page=1]{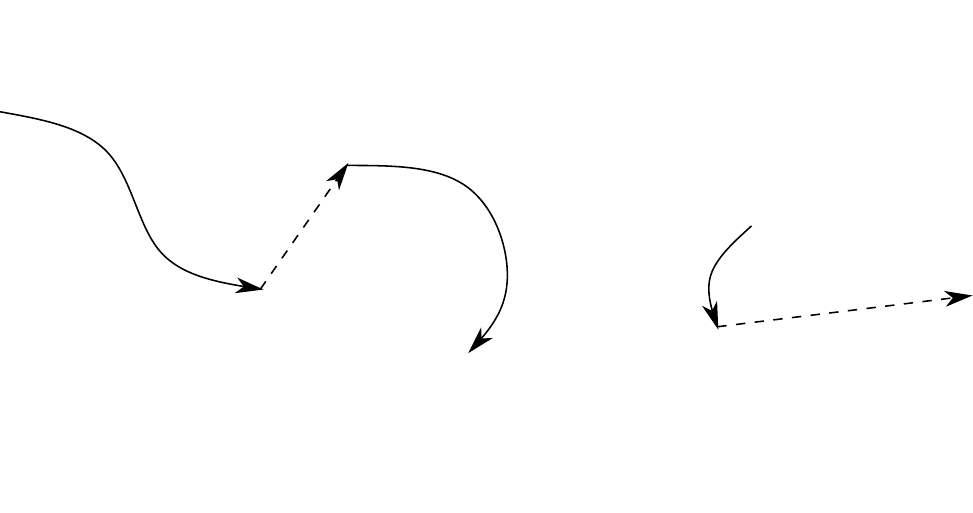}}%
    \put(0.04588105,0.28613208){\makebox(0,0)[lt]{\lineheight{1.25}\smash{\begin{tabular}[t]{l}$\phi_{1}$\end{tabular}}}}%
    \put(0.07199552,0.50643615){\makebox(0,0)[lt]{\lineheight{1.25}\smash{\begin{tabular}[t]{l}$\phi_{1}(0, 0)$\end{tabular}}}}%
    \put(0,0){\includegraphics[width=\unitlength,page=2]{concatenation1.pdf}}%
    \put(0.39445423,0.01610917){\makebox(0,0)[lt]{\lineheight{1.25}\smash{\begin{tabular}[t]{l}$\phi_{1}(T, J)$\end{tabular}}}}%
    \put(0.73851136,0.41649475){\makebox(0,0)[lt]{\lineheight{1.25}\smash{\begin{tabular}[t]{l}$\phi_{2}(0, 0)$\end{tabular}}}}%
    \put(0.76973638,0.13857947){\makebox(0,0)[lt]{\lineheight{1.25}\smash{\begin{tabular}[t]{l}$\phi_{2}$\end{tabular}}}}%
    \put(0,0){\includegraphics[width=\unitlength,page=3]{concatenation1.pdf}}%
  \end{picture}%
\endgroup%

 \label{fig:concatenation1} }
	\centering
	\subfigure[The trajectory $\phi$ resulting from concatenating $\mystatet_{2}$ to $\mystatet_{1}$.]{%
	\def\svgwidth{0.5\columnwidth}
\begingroup%
  \makeatletter%
  \providecommand\color[2][]{%
    \errmessage{(Inkscape) Color is used for the text in Inkscape, but the package 'color.sty' is not loaded}%
    \renewcommand\color[2][]{}%
  }%
  \providecommand\transparent[1]{%
    \errmessage{(Inkscape) Transparency is used (non-zero) for the text in Inkscape, but the package 'transparent.sty' is not loaded}%
    \renewcommand\transparent[1]{}%
  }%
  \providecommand\rotatebox[2]{#2}%
  \newcommand*\fsize{\dimexpr\f@size pt\relax}%
  \newcommand*\lineheight[1]{\fontsize{\fsize}{#1\fsize}\selectfont}%
  \ifx\svgwidth\undefined%
    \setlength{\unitlength}{369.04484598bp}%
    \ifx\svgscale\undefined%
      \relax%
    \else%
      \setlength{\unitlength}{\unitlength * \real{\svgscale}}%
    \fi%
  \else%
    \setlength{\unitlength}{\svgwidth}%
  \fi%
  \global\let\svgwidth\undefined%
  \global\let\svgscale\undefined%
  \makeatother%
  \begin{picture}(1,0.44695498)%
    \lineheight{1}%
    \setlength\tabcolsep{0pt}%
    \put(0,0){\includegraphics[width=\unitlength,page=1]{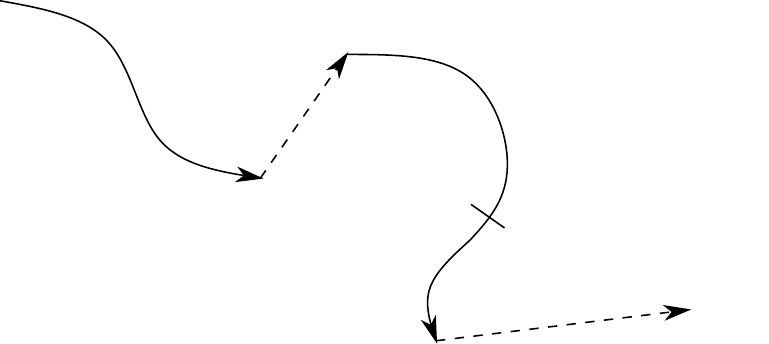}}%
    \put(0.67863118,0.16705473){\makebox(0,0)[lt]{\lineheight{1.25}\smash{\begin{tabular}[t]{l}concatenation point\end{tabular}}}}%
    \put(0.07023412,0.33480688){\makebox(0,0)[lt]{\lineheight{1.25}\smash{\begin{tabular}[t]{l}$\phi$\end{tabular}}}}%
  \end{picture}%
\endgroup%

 \label{fig:concatenation2} }
	\caption{The concatenation $\phi$ of trajectory $\mystatet_{2}$ to trajectory $\mystatet_{1}$.}\label{fig:concatenation} 
	\end{figure} 
	
	Proposition \ref{theorem:concatenationsolution}  below shows that the concatenation of solution pairs satisfies the definition of solution pair in Definition \ref{definition:solution} under mild conditions.
	\begin{proposition}
		\label{theorem:concatenationsolution}
		Given two solution pairs $\psi_{1} = (\mystatet_{1}, \myinputt_{1})$ and $\psi_{2} = (\mystatet_{2}, \myinputt_{2})$ to a hybrid system $\mathcal{H}$, their concatenation $\psi = (\mystatet, \myinputt) = (\mystatet_{1}|\mystatet_{2}, \myinputt_{1}|\myinputt_{2})$, denoted $\psi = \psi_{1}|\psi_{2}$,  is a solution pair to $\mathcal{H}$ if the following hold:
		\begin{enumerate}[label = \arabic*)]
			\item $\psi_{1} = (\mystatet_{1}, \myinputt_{1})$ is compact;
			\item $\mystatet_{1}(T, J) = \mystatet_{2}(0,0)$, where $(T, J) = \max \dom \psi_{1}$;
			\item If both $I_{\psi_{1}}^{J}$ and $I_{\psi_{2}}^{0}$ have nonempty interior, where, for each $j\in \{0, J\}$, $I_{\psi}^{j} = \{t: (t, j)\in \dom \psi\}$ and $(T, J) = \max \dom \psi_{1}$, then $\psi_{2}(0, 0)\in C$.
		\end{enumerate}
	\end{proposition}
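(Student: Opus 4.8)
The plan is to verify directly that $\psi = (\mystatet, \myinputt) = (\mystatet_{1}|\mystatet_{2}, \myinputt_{1}|\myinputt_{2})$ satisfies the three defining conditions of a solution pair in Definition \ref{definition:solution}. Write $(T, J) = \max \dom \psi_{1}$. Before checking those conditions, I would first record the structure of $\dom \psi$ dictated by Definition \ref{definition:concatenation}: since $\psi_{1}$ is compact (hypothesis 1), the shifted domain $\dom \psi_{2} + \{(T,J)\}$ is well defined, and the \emph{only} hybrid time level at which the two domains overlap is $j = J$, where $I_{\psi}^{J} = I_{\psi_{1}}^{J} \cup (I_{\psi_{2}}^{0} + T)$ is a single interval whose two pieces share only the endpoint $t = T$. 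For $j < J$ the level-$j$ interval of $\dom\psi$ coincides with $I_{\psi_{1}}^{j}$, and for $j > J$ it is a translate of an interval of $\psi_{2}$. I would also note that at the splice the concatenation assigns $\mystatet(T,J) = \mystatet_{2}(0,0)$ and $\myinputt(T,J) = \myinputt_{2}(0,0)$, so the splice point inherits its value from $\psi_{2}$; hypothesis 2, $\mystatet_{1}(T,J) = \mystatet_{2}(0,0)$, then guarantees that $\mystatet$ is continuous across $t = T$ at level $J$, hence locally absolutely continuous there, so $\mystatet$ is a hybrid arc and $\myinputt$ (unchanged except on a measure-zero set) is a hybrid input.

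For condition 1 of Definition \ref{definition:solution}, I would observe that $(\mystatet(0,0), \myinputt(0,0)) = (\mystatet_{1}(0,0), \myinputt_{1}(0,0)) \in \overline{C} \cup D$ because $\psi_{1}$ is a solution pair (the degenerate case $\dom\psi_{1} = \{(0,0)\}$ is covered since $\psi_{2}(0,0) \in \overline{C}\cup D$ as well), and $\dom \mystatet = \dom \myinputt$ since both concatenations are built on the same hybrid time domain.

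The main work is condition 2, the flow condition, where I would split according to the level $j$. For $j < J$ the interval equals $I_{\psi_{1}}^{j}$, and both the inclusion $(\mystatet, \myinputt) \in C$ on the interior and $\frac{\mathrm{d}}{\mathrm{d}t}\mystatet = f(\mystatet, \myinputt)$ almost everywhere are inherited verbatim from $\psi_{1}$; for $j > J$ they are inherited, after translation by $(T,J)$, from $\psi_{2}$. The delicate level is $j = J$, where $\interior I_{\psi}^{J}$ may contain the splice time $T$. Here I would run the case analysis on whether $I_{\psi_{1}}^{J}$ and $I_{\psi_{2}}^{0}$ have nonempty interior. If at most one of them does, then $T \notin \interior I_{\psi}^{J}$ and the flow inclusion on the interior follows entirely from whichever of $\psi_{1}, \psi_{2}$ supplies the nondegenerate interval. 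If both have nonempty interior, then $T \in \interior I_{\psi}^{J}$ and I must check $(\mystatet(T,J), \myinputt(T,J)) \in C$ separately: since these values equal $(\mystatet_{2}(0,0), \myinputt_{2}(0,0)) = \psi_{2}(0,0)$, this is exactly hypothesis 3. In every case the differential equation holds for almost every $t \in I_{\psi}^{J}$, since it holds off the single point $T$ (from $\psi_{1}$ to the left, from $\psi_{2}$ to the right) and $\{T\}$ has measure zero.

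Finally, for condition 3, the jump condition, I would enumerate the jumps of $\psi$. Jumps with target level $\leq J$ are jumps of $\psi_{1}$, and since $\mystatet, \myinputt$ agree with $\mystatet_{1}, \myinputt_{1}$ at all their source and target points, both $(\mystatet(t,j), \myinputt(t,j)) \in D$ and $\mystatet(t,j+1) = g(\mystatet(t,j), \myinputt(t,j))$ are inherited from $\psi_{1}$; jumps with source level $\geq J$ correspond to jumps of $\psi_{2}$ (the level-$J$ jump being the shift of the level-$0$ jump of $\psi_{2}$, evaluated at the splice values $\psi_{2}(0,0)$), so these are inherited from $\psi_{2}$ after translation. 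The expected main obstacle is the bookkeeping at the splice point $(T,J)$: one must track that the input there, and hence the evaluation of $f$, $C$, $D$, and $g$, is taken from $\psi_{2}$ rather than $\psi_{1}$, and recognize that hypothesis 3 is precisely what keeps the flow inclusion valid at $t = T$ when both adjacent intervals are nondegenerate.
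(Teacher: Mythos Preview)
Your proposal is correct and follows essentially the same route as the paper. The paper organizes the verification into a sequence of standalone lemmas (one each for the hybrid time domain, Lebesgue measurability and local boundedness of $\myinputt$, local absolute continuity of $\mystatet$, the flow-set inclusion, the flow-map ODE, and the jump conditions), whereas you verify the three items of Definition~\ref{definition:solution} directly with the same level-by-level case split ($j<J$, $j=J$, $j>J$); the key observation that hypothesis~3 is exactly what covers $(\mystatet(T,J),\myinputt(T,J))\in C$ when $T\in\interior I_{\psi}^{J}$ matches the paper's Lemma on the flow-set inclusion, and your handling of the jump landing at the splice via hypothesis~2 matches the paper's treatment as well.
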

\begin{proof}
	See Appendix \ref{section:concatenationsolution}.
\end{proof}
	\begin{remark}
		Item 1 in Proposition \ref{theorem:concatenationsolution} guarantees that $\psi_{2}$ can be concatenated to $\psi_{1}$. Definition \ref{definition:concatenation} suggests that if $\psi_{2}$ can be concatenated to $\psi_{1}$, $\psi_{1}$ is required to be compact. 
		Item 2 in Proposition \ref{theorem:concatenationsolution} guarantees that the $\phi$ is a hybrid arc or $\psi$ satisfies item 3 in Definition \ref{definition:solution} at hybrid time $(T, J)$, where $(T, J) = \max \dom \psi_{1}$.
		Item 3 in Proposition~\ref{theorem:concatenationsolution} guarantees that the concatenation result $\psi$ satisfies item 2 in Definition \ref{definition:solution} at hybrid time $(T, J)$. Note that item 2 therein does not require that $\psi_{1}(T, J)\in C$ and $\psi_{2}(0,0)\in C$ since $T\notin \interior I_{\psi_{1}}^{J}$ and $0\notin \interior I_{\psi_{2}}^{0}$. However, $T$ may belong to the interior of $I_{\psi}^{J}$ after concatenation. Hence, item 3 guarantees that if $T$ belongs to the interior of $I_{\psi}^{J}$ after concatenation, then $\psi(T, J)\in C$.
		$\hfill \blacktriangle$
	\end{remark}

	The solution pair that the path $p = (v_{1}, v_{2}, ..., v_{k})$ represents is the concatenation of all the solutions associated with the edges therein, namely,
	\begin{equation}
	\label{equation:concatenationpath}
	\tilde{\psi}_{p} := \overline{\psi}_{(v_{1}, v_{2})}|\overline{\psi}_{(v_{2}, v_{3})}|\ ...\  |\overline{\psi}_{(v_{k-1}, v_{k})}	
	\end{equation}
	where $\tilde{\psi}_{p}$ denotes the solution pair associated with the path~$p$. 
	An example of the path $p$ and its associated solution pair $\tilde{\psi}_{p}$ is shown in Figure \ref{fig:searchgraph}.
	
	The proposed HyRRT algorithm requires a library of possible inputs. The input library $(\mathcal{U}_{C}, \mathcal{U}_{D})$ includes the input signals that can be applied during flows (collected in $\mathcal{U}_{C}$) and the input values that can be applied at jumps (collected in $\mathcal{U}_{D}$). More details about the input library are presented in the forthcoming Section \ref{section:inputlibrary}.

	Next, we introduce the main steps executed by HyRRT. Given the motion planning problem $\mathcal{P} = (X_{0}, X_{f}, X_{u}, (C, f, D, g))$ and the input library~$(\mathcal{U}_{C}, \mathcal{U}_{D})$, HyRRT performs the following steps:
	\begin{steps}
		\item Sample a finite number of points from $X_{0}$ and initialize a search tree $\mathcal{T} = (V, E)$ by adding vertices associated with each sampling point. 
		\item Randomly select one regime among flow regime and jump regime for the evolution of $\mathcal{H}$.
		\item Randomly select a point $x_{rand}$ from $C'$ ($D'$) if the flow (jump, respectively) regime is selected in Step 2.
		\item Find the vertex associated with the state value that has minimal Euclidean distance to $x_{rand}$, denoted $v_{cur}$, as is shown in Figure \ref{fig:searchgraph_searchgraph}. 
		\item Randomly select an input signal (value) from $\mathcal{U}_{C}$ ($\mathcal{U}_{D}$) if the flow (jump, respectively) regime is selected. Then, compute a solution pair using continuous (discrete, respectively) dynamics simulator starting from $\overline{x}_{v_{cur}}$ with the selected input applied, denoted $\psi_{new} = (\mystatet_{new}, \myinputt_{new})$. Denote the final state of $\mystatet_{new}$ as $x_{new}$, as is shown in Figure \ref{fig:searchgraph_statespace}. If $\psi_{new}$ does not intersect with $X_{u}$, add a vertex $v_{new}$ associated with $x_{new}$ to $V$ and an edge $(v_{cur}, v_{new})$ associated with $\psi_{new}$ to $E$. Then, go to \textbf{Step 2}.
	\end{steps}
	In the remainder of this section, first, we formalize the input library for the motion planning problem for hybrid systems. Then, the continuous dynamics simulator and discrete dynamics simulator to compute a new solution pair from the given starting state and input are introduced. After those components being nicely introduced, the HyRRT algorithm to solve the motion planning problem for hybrid systems is presented.

\subsection{Hybrid Input Library}
\label{section:inputlibrary}
HyRRT requires a library of inputs to simulate solution pairs. Note that inputs are constrained by the flow set $C$ and the jump set $D$ of the hybrid system $\hs$. Specifically, given $C$ and $D$, the set of input signals allowed during flows, denoted  $\mathcal{U}_{C}$, and the set of input values at jumps, denoted $\mathcal{U}_{D}$, are described as follows. 
\begin{enumerate}
	\item The input signal applied during flows is a continuous-time signal, denoted $\tilde{\myinputt}$, that is specified by a function from an interval of time of the form $[0, t^{*}]$ to $U_{C}$, namely, for some $t^{*}\in \mathbb{R}_{\geq 0}$,
	$$
	\tilde{\myinputt}: [0, t^{*}]\to U_{C}.
	$$
	Definition \ref{definition:hybridinput} also requires that $\tilde{\myinputt}$ is Lebesgue measurable and locally essentially bounded.
	Then, the set $\mathcal{U}_{C}$ collects each such $\tilde{\myinputt}$. Given $\tilde{\myinputt}\in\mathcal{U}_{C}$, the functional $\overline{t}: \mathcal{U}_{C} \to [0, \infty)$ returns the time duration of $\tilde{\myinputt}$. Namely, given $\tilde{\myinputt}: [0, t^{*}]\to U_{C}$, $\overline{t}(\tilde{\myinputt}) = t^{*}$.
	\item The input applied at a jump is specified by the set $U_{D}$. The set $\mathcal{U}_{D}$ collects possible input values that can be applied at jumps, namely, 
	$$
		\mathcal{U}_{D}\subset U_{D}.
	$$
\end{enumerate}
The pair of sets $(\mathcal{U}_{C}, \mathcal{U}_{D})$ defines the input library, denoted $\mathcal{U}$, namely,
$$
	\mathcal{U} := (\mathcal{U}_{C}, \mathcal{U}_{D}).
$$

\label{section:samplingmotionprimitives}
{\bf Input Library Construction Procedure:} A procedure to construct $\mathcal{U}_{C}$ using constant inputs is given as follows:
\begin{steps}
	\item Set $T_{m}$ to a positive constant. Construct the safe input set during the flows from $U_{C}$ as $U^s_C := \{u\in U_{C}: \exists x\in C', (x, u)\notin X_{u}\}$. 
	\item For each point $u^{s}\in U_{C}^{s}$ and $t_{m}\in (0, T_{m}]$, construct an input signal $[0, t_{m}]\to \{u^{s}\}$ and add it to $\mathcal{U}_{C}$. 
\end{steps}
Set $\mathcal{U}_{D}$ can be constructed as $\mathcal{U}_{D} := \{u\in U_{D}: \exists x\in D', (x, u)\notin X_{u}\}$ from $U_{D}$. 

\subsection{Continuous Dynamics Simulator}
\label{section:flowsystemsimulator}
HyRRT requires a simulator to compute the solution pair starting from a given initial state $x_{0}\in C'$ with a given input signal $\tilde{\myinputt}\in \mathcal{U}_{C}$ applied, following continuous dynamics. The initial state $x_{0}$, the flow set $C$, and the flow map $f$ are used in the simulator. 

Note that when the simulated solution enters the intersection between the flow set $C$ and the jump set $D$, it can either keep flowing or jump. In \cite{sanfelice2013toolbox}, the hybrid system simulator HyEQ uses a scalar priority option flag $\texttt{rule}$ to show whether the simulator gives priority to jumps ($\texttt{rule}= 1$), priority to flows ($\texttt{rule}= 2$), or no priority ($\texttt{rule}= 3$) when both $x\in C$ and $x\in D$ hold. When no priority is selected, then the simulator randomly selects to flow or jump. In this paper, the case $\texttt{rule} = 3$ is not considered\endnote{In this paper, $\texttt{rule} = 3$ is not considered since the input signal $\tilde{\myinputt}$ is randomly selected from $\mathcal{U}_{C}$ which satisfies the forthcoming Assumption \ref{assumption:inputlibrary}. Therefore, the time duration of the simulation result has been randomized. However, $\texttt{rule} = 3$ requires an additional random process to determine whether to proceed with the flow or the jump when the simulation result enters $C\cap D$ leading to a redundant random procedure.}.

The proposed simulator should be able to solve the following problem.
\begin{problem}
	\label{problem:flowsimulator}
	Given the flow set $C$, the flow map $f$, and the jump set $D$ of a hybrid system $\mathcal{H}$ with input $u\in \mathbb{R}^{m}$ and state $x\in \mathbb{R}^{n}$, a priority option flag $\texttt{rule}\in \{1,  2\}$, an initial state $x_{0}\in \mathbb{R}^{n}$, and an input signal $\tilde{\myinputt}\in \mathcal{U}_{C}$ such that $(x_{0}, \tilde{\myinputt}(0))\in C$, find a pair $(\mystatet, \myinputt):  [0, t^{*}]\times \{0\} \to \mathbb{R}^{n} \times \mathbb{R}^{m}$, where $t^{*}\in [0, \overline{t}(\tilde{\myinputt})]$, such that the following hold:
	\begin{enumerate}
		\item $\phi(0, 0) = x_{0}$;
		\item For all $t\in [0, t^{*}]$, $\myinputt(t, 0) = \tilde{\myinputt}(t)$;
		\item If $[0, t^{*}]$ has nonempty interior,
		\begin{enumerate}
			\item the function $t\mapsto \phi(t, 0)$ is locally absolutely continuous,
			\item for all $t\in (0, t^{*})$,
				$$
					\begin{aligned}
					(\phi(t, 0),\myinputt(t, 0))&\in C\backslash D &\text{ if }\texttt{rule} = 1\\
						(\phi(t, 0),\myinputt(t, 0))&\in C & \text{ if }\texttt{rule} = 2,\\
					\end{aligned}
				$$
			\item for almost all $t\in [0, t^{*}]$,
			\begin{equation}
			\label{equation:differentialequation}
			\frac{\text{d}}{\text{d} t} {\phi}(t,0) = f(\phi(t,0), \myinputt(t,0)).
			\end{equation}
		\end{enumerate}
	\end{enumerate}
\end{problem}
\begin{remark}
	In general, a solution pair to $\hs$ that solves Problem \ref{problem:flowsimulator} may not be unique. Note that we can impose assumptions to get uniqueness as in \cite[Proposition 2.11]{goebel2009hybrid}, as follows:
		\begin{enumerate}
			\item for every $x_{0}\in \overline{C}\backslash D$, $T>0$ and $\tilde{\myinputt}\in \mathcal{U}_{C}$, if two absolutely continuous $z_{1},z_{2} : [0,T] \to \myreals[n]$ are such that  $z_{i}(t) = f(z_{i}(t), \tilde{\myinputt}(t))$ for almost all $t \in [0,T]$, $(z_{i}(t), \tilde{\myinputt}(t)) \in C$ for all $t \in (0,T]$, and $z_{i}(0) = x_{0}$, $i =1, 2$, then $z_{1}(t)=z_{2}(t)$ for all $t \in [0,T]$;
			\item for every $x_{0} \in C\cap D$, there does not exist $\epsilon>0$ and an absolutely continuous function $z : [0,\epsilon] \to \myreals[n]$ such that $z(0) = x_{0}$, $\dot{z}(t) = f(z(t), \myinputt(t))$ for almost all $t \in [0,\epsilon]$ and $(z_{i}(t), \myinputt(t)) \in C$ for all $t \in (0,\epsilon]$.
		\end{enumerate}
\end{remark}

In this paper, the simulator is designed to simulate a maximal solution pair $(\mystatet, \myinputt)$ solving Problem \ref{problem:flowsimulator}. The definition of maximal solution is given as follows; see \cite{goebel2009hybrid}. 
\begin{definition}[(Maximal solution)]
	 A solution pair $\psi$ to $\hs$ that solves Problem \ref{problem:flowsimulator} is said to be maximal if there does not exist another solution pair $\psi'$ to $\hs$ that solves Problem \ref{problem:flowsimulator} such that $\dom \psi$ is a proper subset of $\dom \psi'$ and $\psi(t, 0) = \psi'(t, 0)$ for all $t\in \dom_{t} \psi$.
\end{definition}

The module to simulate the maximal solution pair to $\hs$ that solves Problem \ref{problem:flowsimulator} is called the \emph{continuous dynamics simulator}. 
The inputs of this module are the flow set $C$, the flow map $f$, the jump set $D$, the priority option $\texttt{rule}$, the initial state $x_{0}$, and the input signal $\tilde{\myinputt}$. The output of this module is the maximal solution pair $(\phi, u)$ to $\hs$ that solves Problem \ref{problem:flowsimulator}. This module is denoted as 
\begin{equation}
\label{simulator:flow}
(\phi, u) \leftarrow \texttt{continuous\_simulator}(C, f, D, \texttt{rule},  x_{0}, \tilde{\myinputt}).
\end{equation}
%

The continuous dynamics simulator performs the following steps. Given the flow set $C$, the flow map $f$, the jump set $D$  the priority option $\texttt{rule}$, the initial state $x_{0}$, and the input signal $\tilde{\myinputt}$
\begin{steps}
	\item Solve for $\hat{\phi}: [0, \overline{t}(\tilde{\myinputt})]\to \mathbb{R}^{n}$ to satisfy
	\begin{equation}
	\label{equation:flowequationintegration}
			\hat{\phi}(t) = x_{0} + \int_{0}^{t}f(\hat{\phi}(\tau),\tilde{\myinputt}(\tau))d\tau \quad t\in [0, \overline{t}(\tilde{\myinputt})].
	\end{equation} 
	\item Calculate the largest time $t$ such that over $[0, t)$, $(\mystatet, \myinputt)$ is in $C\backslash D$ if $\texttt{rule} = 1$, or, if $\texttt{rule} = 2$, is in $C$, as follows:
	\begin{equation}
	\label{equation:integrationduration}
		\hspace{-1cm}\hat{t} := 
			\begin{cases}
			\max \{t\in [0, \overline{t}(\tilde{\myinputt})]: (\hat{\phi}(t'), \tilde{\myinputt}(t'))\in C\backslash D, \\
			\forall t'\in (0, t)\}\hfill\text{ if }\texttt{rule} = 1,\\
			\max \{t\in [0, \overline{t}(\tilde{\myinputt})]:  (\hat{\phi}(t'), \tilde{\myinputt}(t'))\in C,\\
			\forall t'\in (0, t)\}
			\hfill\text{ if } \texttt{rule} = 2.\\
			\end{cases}
	\end{equation}
	\item Construct the solution function pair $(\mystatet, \myinputt): [0, \hat{t}]\times \{0\}\to \mathbb{R}^{n}\times \mathbb{R}^{m}$ by 
	\begin{equation}
	\label{equation:flowsegment}
		\begin{aligned}
			\phi(t, 0) = \hat{\phi}(t),
			\myinputt(t, 0) = \tilde{\myinputt}(t)\  \forall t\in [0, \hat{t}].\\
		\end{aligned}
	\end{equation}
\end{steps}
The solution function pair $(\phi, u)$ is a maximal solution to $\hs$ that solves Problem \ref{problem:flowsimulator} and, hence, the output of module $\texttt{continuous\_simulator}$. The construction of $\hat{\phi}$ in Step 1 can be approximated by employing numerical integration methods. To determine $\hat{t}$ in Step 2, zero-crossing detection algorithms can be used to detect the largest time $t$ such that over $[0, t)$, $(\mystatet, \myinputt)$ is in $C\backslash D$ if $\texttt{rule} = 1$, or, if $\texttt{rule} = 2$, is in $C$. For the computation framework that implements the simulator of continuous dynamics, see Appendix \ref{section:computationsimulation}.

\subsection{Discrete Dynamics Simulator}\label{section:jumpsystemsimulator}
HyRRT algorithm also requires a simulator to compute the solution pair with a single jump, starting from an initial state $x_{0}\in D'$ with a given input value $u_{D}\in \mathcal{U}_{D}$ applied. Such a simulator only requires evaluating $g$ as the following problem states.
\begin{problem}
	\label{problem:jumpsimulator}
	Given the jump set $D$ and jump map $g$ of hybrid system $\mathcal{H}$ with input $u\in \mathbb{R}^{m}$, state $x\in \mathbb{R}^{n}$, an initial state $x_{0}\in \mathbb{R}^{n}$, and an input value $u_{D}\in \mathcal{U}_{D}$ such that $(x_{0}, u_{D})\in D$, find a pair $(\phi, \myinputt):  \{0\}\times \{0, 1\} \to \mathbb{R}^{n} \times \mathbb{R}^{m}$ such that the following hold:
	\begin{enumerate}
		\item $\phi(0, 0) = x_{0}$;
		\item $\myinputt(0, 0) = u_{D}$;
		\item $\phi(0, 1) = g(\phi(0, 0), \myinputt(0, 0))$.
	\end{enumerate}
\end{problem}
Problem \ref{problem:jumpsimulator} can be solved by constructing a function pair $(\phi, \myinputt):  \{0\}\times \{0, 1\} \to \mathbb{R}^{n} \times \mathbb{R}^{m}$ in the following way:
\begin{equation}\label{equation:jumpsegment}
\begin{aligned}
\phi(0, 0) \leftarrow x_{0},\quad
\phi(0, 1) \leftarrow g( x_{0}, u_{D})\\
\myinputt(0, 0) \leftarrow u_{D},\quad
\myinputt(0, 1) \leftarrow p\in \mathbb{R}^{m}.
\end{aligned}
\end{equation}
In (\ref{equation:jumpsegment}), we can implement $\myinputt(0, 1) \leftarrow p \in \mathbb{R}^{m}$ by selecting an arbitrary point $p$ in $ \mathbb{R}^{m}$ such that $(\phi(0, 1), p)\notin X_{u}$.

The function pair in (\ref{equation:jumpsegment}) can be constructed by a module called \emph{discrete dynamics simulator}. 
The inputs of this module are the jump set $D$, the jump map $g$, the initial state $x_{0}\in D'$, and the input $u_{D}\in \mathcal{U}_{D}$ such that $(x_{0}, u_{D})\in D$. The output of this module is the solution pair $(\phi, \myinputt)$ constructed in (\ref{equation:jumpsegment}). This module is denoted as
\begin{equation}
\label{simulator:jump}
(\phi, \myinputt) \leftarrow \texttt{discrete\_simulator}(D, g, x_{0}, u_{D}).
\end{equation}
\subsection{HyRRT Algorithm}\label{section:hybridRRTframewrok}
Following the overview in Section \ref{section:algorithmoverview}, the proposed algorithm is given in Algorithm \ref{algo:hybridRRT}. The inputs of Algorithm \ref{algo:hybridRRT} are the problem $\mathcal{P} = (X_{0}, X_{f}, X_{u}, (C, f, D, g))$, the input library  $ (\mathcal{U}_{C}, \mathcal{U}_{D})$, a parameter $p_{n}\in (0, 1)$, which tunes the probability of proceeding with the flow regime or the jump regime, an upper bound $K\in \mathbb{N}_{>0}$ for the number of iterations to execute, and two tunable sets $X_{c}\supset \overline{C'}$ and $X_{d}\supset D'$, which act as constraints in finding a closest vertex to $x_{rand}$. HyRRT is implemented in the following algorithm:
\begin{algorithm}[htbp]
	\caption{HyRRT algorithm}
	\label{algo:hybridRRT}
	\hspace*{\algorithmicindent} \textbf{Input: $X_{0}, X_{f}, X_{u}, \mathcal{H} = (C, f, D, g), (\mathcal{U}_{C}, \mathcal{U}_{D}), p_{n} \in (0, 1)$, $K\in \mathbb{N}_{>0}$}
	\begin{algorithmic}[1]
		\State $\mathcal{T}.\texttt{init}(X_{0})$.
		\For{$k = 1$ to $K$}
		\State randomly select a real number $r$ from $[0, 1]$.
		\If{$r\leq p_{n}$}
		\State $x_{rand}\leftarrow \texttt{random\_state}(\overline{C'})$.
		\State $\texttt{extend}(\mathcal{T}, x_{rand}, (\mathcal{U}_{C}, \mathcal{U}_{D}), \mathcal{H}, X_{u}, X_{c})$.
		\Else
		\State $x_{rand}\leftarrow \texttt{random\_state}(D')$.
		\State $\texttt{extend}(\mathcal{T}, x_{rand}, (\mathcal{U}_{C}, \mathcal{U}_{D}), \mathcal{H}, X_{u}, X_{d})$.
		\EndIf
		\EndFor
		\State \Return $\mathcal{T}$.
	\end{algorithmic}
\end{algorithm}
\begin{algorithm}[htbp]
	\caption{Extend function}
	\label{algo:extend}
	\begin{algorithmic}[1]
		\Function{extend}{$(\mathcal{T}, x, (\mathcal{U}_{C}, \mathcal{U}_{D}), \mathcal{H}, X_{u}, X_{*})$}
		\State $v_{cur}\leftarrow \texttt{nearest\_neighbor}(x, \mathcal{T}, \mathcal{H}, X_{*})$;
		\State $(\texttt{is\_a\_new\_vertex\_generated}, x_{new}, \psi_{new} ) \leftarrow \texttt{new\_state}(v_{cur}, (\mathcal{U}_{C},\newline \mathcal{U}_{D}), \mathcal{H}, X_{u})$
		\If {$\texttt{is\_a\_new\_vertex\_generated} = true$}
		\State $v_{new} \leftarrow \mathcal{T}.\texttt{add\_vertex}(x_{new})$;
		\State $\mathcal{T}.\texttt{add\_edge}(v_{cur}, v_{new}, \psi_{new})$;
		\State \Return $\texttt{Advanced}$;
		\EndIf
		\State \Return $\texttt{Trapped}$;
		\EndFunction
	\end{algorithmic}
\end{algorithm}

\textbf{Step} 1 in the overview provided in Section \ref{section:algorithmoverview} corresponds to the function call $\mathcal{T}.\texttt{init}$ in line 1 of Algorithm \ref{algo:hybridRRT}. \textbf{Step} 2 is implemented in line 3. \textbf{Step} 3 is implemented by the function call $\texttt{random\_state}$ in lines 5 and 8. \textbf{Step} 4 corresponds to the function call $\texttt{nearest\_neighbor}$ in line 1 of the function call  $\texttt{extend}$. \textbf{Step} 5 is implemented by the function calls $\texttt{new\_state}$, $\mathcal{T}.\texttt{add\_vertex}$, and $\mathcal{T}.\texttt{add\_edge}$ in lines 3, 5, and 6 of the function call $\texttt{extend}$.

Each function in Algorithm \ref{algo:hybridRRT} is defined next.
\subsubsection{$\mathcal{T}.\texttt{init}(X_{0})$:}\label{section:init} 
	The function call $\mathcal{T}.\texttt{init}$ is used to initialize a search tree $\mathcal{T} = (V, E)$.  It randomly selects a finite number of points from $X_{0}$. For each sampling point $x_{0}\in X_{0}$, a vertex $v_{0}$ associated with $x_{0}$ is added to $V$. At this step, no edge is added to $E$.
	\subsubsection{$x_{rand}$$\leftarrow$$\texttt{random\_state}(S)$:} 
	The function call $\texttt{random\_state}$ randomly selects a point from the set $S\subset \mathbb{R}^{n}$. Rather than to select from $\overline{C'}\cup D'$, it is designed to select points from $\overline{C'}$ and $D'$ separately depending on the value of $r$ . The reason is that if $\overline{C'}$ (or $D'$) has zero measure while $D'$ (respectively, $\overline{C'}$) does not, the probability that the point selected from $\overline{C'}\cup D'$ lies in $\overline{C'}$ (respectively, $D'$) is zero, which would prevent establishing probabilistic completeness.
	
	\subsubsection{$v_{cur}$$\leftarrow$$\texttt{nearest\_neighbor}$$(x_{rand}, $$\mathcal{T}, $$\mathcal{H}, X_{*})$:} \label{section:nearestneighbor}
	The function call $\texttt{nearest\_neighbor}$ searches for a vertex $v_{cur}$ in the search tree $\mathcal{T} = (V, E)$ such that its associated state value has minimal distance to $x_{rand}$. This function is implemented as solving the following optimization problem over $X_{\star}$, where $\star$ is either $c$ or $d$.
	\begin{problem}
		\label{problem:nearestneighbor}
		Given a hybrid system $\mathcal{H} = (C, f, D, g)$, $x_{rand}\in \myreals[n]$, and a search tree $\mathcal{T} = (V, E)$, solve
		$$
		\begin{aligned}
		\argmin_{v\in V}& \quad |\overline{x}_{v} -  x_{rand}|\\
		\textrm{s.t.}& \quad\overline{x}_{v} \in X_{\star}.
		\end{aligned}
		$$
	\end{problem}
The data of Problem \ref{problem:nearestneighbor} comes from the arguments of the $\texttt{nearest\_neighbor}$ function call. This optimization problem can be solved by traversing all the vertices in $\mathcal{T} = (V, E)$.
	\subsubsection{$(\texttt{is\_a\_new\_vertex\_generated}, x_{new}, \psi_{new} ) \leftarrow \texttt{new\_state}(v_{cur}, (\mathcal{U}_{C}, \mathcal{U}_{D}) , \mathcal{H} = (C, f, D, g) , X_{u})$:} 	\label{section:newstate}
	Given $v_{cur}$, if $\overline{x}_{v_{cur}} \in \overline{C'}\backslash D'$, the function call $\texttt{new\_state}$ randomly selects an input signal $\tilde{\myinputt}$ from $\mathcal{U}_{C}$ such that $(\overline{x}_{v_{cur}}, \tilde{\myinputt}(0))\in \overline{C}$ and generates a new maximal solution pair, denoted $\psi_{new} = (\mystatet_{new}, \myinputt_{new})$, by
	\begin{equation}
	\label{newstate:flow}
		\psi_{new} \leftarrow \texttt{continuous\_simulator}(C, f, D, 2,  \overline{x}_{v_{cur}}, \tilde{\myinputt})
	\end{equation}
	where $\texttt{continuous\_simulator}$ is formulated as in (\ref{simulator:flow}).
	
	If $\overline{x}_{v_{cur}} \in D'\backslash \overline{C'}$, the function call $\texttt{new\_state}$ randomly selects an input signal $u_{D}$ from $\mathcal{U}_{D}$ such that $(\overline{x}_{v_{cur}}, u_{D})\in D$ and generates a new solution pair, denoted $\psi_{new} = (\mystatet_{new}, \myinputt_{new})$, by
	\begin{equation}\label{newstate:jump}
		\psi_{new} \leftarrow \texttt{discrete\_simulator}(D, g, \overline{x}_{v_{cur}}, u_{D}).
	\end{equation}
	where $\texttt{discrete\_simulator}$ is formulated as in (\ref{simulator:jump}).
	
	If $\overline{x}_{v_{cur}}$$\in\overline{C'}\cap D'$, then this function generates $\psi_{new}$ by randomly selecting flows or jump. This random selection is implemented by randomly selecting a real number $r_{D}$ from the interval $[0, 1]$ and comparing $r_{D}$ with a user-defined parameter $p_{D}\in (0, 1)$. If $r_{D} \leq p_{D}$, then the function call $\texttt{new\_state}$ generates $\psi_{new}$ by flow, otherwise, by jump. The final state of $\psi_{new}$ is denoted as $x_{new}$. 

After $\psi_{new}$ and $x_{new}$ are generated, the function $\texttt{new\_state}$ checks if $\psi_{new}$ is trivial. If so, then $\psi_{new}$ does not explore any unexplored space and is not necessary to be added into $\mathcal{T}$. Hence, $$\texttt{is\_a\_new\_vertex\_generated}\leftarrow \texttt{false}$$ and the function call $\texttt{new\_state}$ is returned. Else, the function $\texttt{new\_state}$ checks if there exists $(t, j)\in \dom \psi_{new}$ such that $\psi_{new}(t, j)\in X_{u}$. If so, then $\psi_{new}$ intersects with the unsafe set and $$\texttt{is\_a\_new\_vertex\_generated}\leftarrow \texttt{false}.$$ Otherwise, $$\texttt{is\_a\_new\_vertex\_generated}\leftarrow \texttt{true}.$$

	\subsubsection{$v_{new}\leftarrow\mathcal{T}.\texttt{add\_vertex}(x_{new})$ and  $\mathcal{T}.\texttt{add\_edge}\newline(v_{cur}, v_{new}, \psi_{new})$:} 
	The function call $\mathcal{T}.\texttt{add\_vertex}\newline(x_{new})$ adds a new vertex $v_{new}$ associated with $x_{new}$ to $\mathcal{T}$ and returns $v_{new}$. The function call $\mathcal{T}.\texttt{add\_edge}(v_{cur},\newline  v_{new}, \psi_{new})$ adds a new edge $e_{new} = (v_{cur}, v_{new})$ associated with $\psi_{new}$  to $\mathcal{T}$. 
	\subsubsection{Solution Checking during HyRRT Construction}
	\label{section:checksolution}
	When the function call $\texttt{extend}$ returns $\texttt{Advanced}$,  HyRRT checks if a path in $\mathcal{T}$ can be used to construct a motion plan solving the given motion planning problem. If this function finds a path
	$
	p = ((v_{0}, v_{1}), (v_{1}, v_{2}), ..., (v_{n - 1}, v_{n})) = : (e_{0}, e_{1}, ..., e_{n - 1})
	$
	in $\mathcal{T}$ such that 
	\begin{enumerate}[label=\arabic*)]
		\item $\overline{x}_{v_{0}} \in X_{0}$,
		\item $\overline{x}_{v_{n}} \in X_{f}$,
		\item for $i \in \{0, 1, ..., n - 2\}$, if $\overline{\psi}_{e_{i}}$ and $\overline{\psi}_{e_{i + 1}}$ are both purely continuous, then $\overline{\psi}_{e_{i + 1}}(0, 0)\in C$,
\end{enumerate}
then the solution pair $\tilde{\psi}_{p}$ associated with path $p$, defined in (\ref{equation:concatenationpath}), is a motion plan to the given motion planning problem. More specifically, the items listed above guarantee that $\psi$ satisfies all the conditions in Problem \ref{problem:motionplanning}. Items 1 and 2 guarantee that $\tilde{\psi}_{p}$ starts from $X_{0}$ and ends within $X_{f}$.  Then, we show that each condition in Proposition \ref{theorem:concatenationsolution} is satisfied such that Proposition \ref{theorem:concatenationsolution} guarantees that $\psi$ is a solution pair to $\hs$ as follows:
	\begin{enumerate}
		\item Because the input signal in $\mathcal{U}_{C}$ is compact, the solution pairs generated by $\texttt{new\_state}$ by flow are compact. The solution pairs generated by $\texttt{new\_state}$ by jump are compact for free. Therefore, the first condition in Proposition \ref{theorem:concatenationsolution} is satisfied. 
		\item Note that for $i \in\{0, 1, 2, ..., n - 2\}$, the final state of $\overline{\psi}_{e_{i}} = \overline{\psi}_{(v_{i}, v_{i + 1})}$ equals the initial state of $\overline{\psi}_{e_{i + 1}} = \overline{\psi}_{(v_{i + 1}, v_{i + 2})}$ because both of them are $\overline{x}_{v_{i + 1}}$. Then, the second condition in Proposition \ref{theorem:concatenationsolution} is satisfied.
		\item Then, item 3 above guarantees that the third condition in Proposition \ref{theorem:concatenationsolution} is satisfied. 
	\end{enumerate} Therefore, Proposition \ref{theorem:concatenationsolution} guarantees that $\psi_{p}$ is a solution pair to $\mathcal{H}$. Note that for each $e$ in $p$, $\overline{\psi}_{e}$ does not intersect with unsafe set because the solution pairs that intersect the unsafe set have been excluded by the function $\texttt{new\_state}$. Therefore, $\psi_{p}$ satisfies all the conditions in Problem \ref{problem:motionplanning}.
	\begin{remark}
		Note that the choices of inputs in the fucntion call $\texttt{new\_state}$ are random. Some RRT variants choose the optimal input that drives $x_{new}$ closest to $x_{rand}$. However, \cite{kunz2015kinodynamic} proves that such a choice makes the RRT algorithm probabilistically incomplete.
		\EndRemark
	\end{remark}
\begin{remark}
	In practice, item 2 above is too restrictive. Given $\epsilon > 0$ representing the tolerance associated with this condition, we implement item 2 as
	\begin{equation}
	\label{equation:tolerance}
	|\overline{x}_{v_{n}}|_{X_{f}} \leq	\epsilon.
	\end{equation}\EndRemark
\end{remark}

\section{Probabilistic Completeness Analysis}\label{section:pc}
This section establishes probabilistic completeness of the HyRRT algorithm. Probabilistic completeness means that the probability that the planner fails to return a motion plan, if one exists, approaches zero as the number of samples approaches infinity. 
\subsection{Clearance of Motion Plan and Inflation of a         Hybrid System}\label{section:inflation}
The clearance of a motion plan captures the distance between the motion plan and the boundary of the constraint sets, which in Problem \ref{problem:motionplanning}, includes the initial state set $X_{0}$, the final state set $X_{f}$, the unsafe set $X_{u}$, the flow set $C$, and the jump set $D$. We propose two different clearances, \emph{safety clearance} and \emph{dynamics clearance}, that capture the distance to the constraint sets $(X_{0}, X_{f}, X_{u})$ and $(C,  D)$, respectively. 
\begin{definition}[(Safety clearance of a motion plan)]
	\label{definition:safetyclearance}
	Given a motion plan $\psi = (\phi, \myinputt)$ to the motion planning problem $\mathcal{P} = (X_{0}, X_{f}, X_{u}, (C, f, D, g))$, the safety clearance of $\psi = (\phi, \myinputt)$ is given by $\delta_{s} > 0$ if, for each $\delta' \in [0, \delta_{s}]$, the following conditions are satisfied:
	\begin{enumerate}[label=\arabic*)]
		\item  $\phi(0, 0) + \delta'\mathbb{B}\subset X_{0}$;
		\item  $\phi(T, J) + \delta'\mathbb{B}\subset X_{f}$, where $(T, J) = \max \dom \psi$;
		\item  For all $(t, j)\in \dom \psi$, $(\phi(t, j) + \delta'\mathbb{B}, \myinputt(t, j) + \delta'\mathbb{B}) \cap X_{u} =\emptyset$.
	\end{enumerate} 
\end{definition}
\begin{definition}[(Dynamics clearance of a motion plan)]
	\label{definition:dynamicsclearance}
	Given a motion plan $\psi = (\phi, \myinputt)$ to the motion planning problem $\mathcal{P} = (X_{0}, X_{f}, X_{u}, (C, f, D, g))$, the dynamics clearance of $\psi = (\phi, \myinputt)$ is given by $\delta_{d}> 0$ if, for each $\delta' \in [0, \delta_{d}]$, the following conditions are satisfied:
	\begin{enumerate}[label=\arabic*)]
		\item For all $(t, j)\in \dom \psi $ such that $I^{j}$ has nonempty interior, $(\phi(t, j) + \delta'\mathbb{B}, \myinputt(t, j) + \delta'\mathbb{B}) \subset C$;
		\item For all $(t, j)\in \dom \psi $ such that $(t, j + 1)\in \dom \psi$, $(\phi(t, j) + \delta'\mathbb{B}, \myinputt(t, j) + \delta'\mathbb{B})\subset D$.
	\end{enumerate} 
\end{definition}
\begin{remark}
	The definition of the two types of clearance is analogous to the clearance in \cite{kleinbort2018probabilistic}, albeit with a specific consideration for the boundaries of different constraint sets. Safety clearance in Definition \ref{definition:safetyclearance} is defined as the minimal positive distance between the motion plan and the nearest boundaries of the initial state set, final state set, and the unsafe set. Dynamics clearance in Definition \ref{definition:dynamicsclearance}, on the other hand, represents the minimal positive distance between the motion plan and the closest boundaries of the flow set and the jump set, depending on whether the motion plan is during a flow or at a jump. 
\end{remark}
With both safety clearance and dynamics clearance defined, we are ready to define the clearance of a motion plan.
\begin{definition}[(Clearance of a motion plan)]
	\label{definition:clearance}
	Given a motion plan $\psi = (\mystatet, \myinputt)$ to the motion planning problem $\mathcal{P} = (X_{0}, X_{f} , X_{u}, (C, f, D, g))$, the clearance of $\psi$, denoted $\delta$, is defined as the minimum between its safety clearance $\delta_{s}$ and dynamics clearance $\delta_{d}$, i.e., $\delta := \min \{\delta_{s}, \delta_{d}\}$.
\end{definition}

In the probabilistic completeness result in \cite[Theorem 2]{kleinbort2018probabilistic}, a motion plan with positive clearance is assumed to exist. However, the assumption that there exists a positive dynamics clearance is restrictive for hybrid systems. Indeed, if the motion plan reaches the boundary of the flow set or of the jump set, then the motion plan has no (dynamics) clearance; see Definition \ref{definition:dynamicsclearance}. Figure \ref{fig:motionplannoclearance} shows a motion plan to the sample motion planning problem for the actuated bouncing ball system in Example ~\ref{example:bouncingball} without clearance.

To overcome this issue and to assure that HyRRT is probabilistically complete, we inflate the hybrid system $\mathcal{H} = (C, f, D, g)$ as follows. 
\begin{definition}[($\delta$-inflation of a hybrid system)]
	\label{definition:inflation}
	Given a hybrid system $\mathcal{H} = (C, f, D, g)$ and $\delta > 0$, the $\delta$-inflation of the hybrid system $\mathcal{H}$, denoted $\mathcal{H}_{\delta}$ with data $(C_{\delta}, f_{\delta}, D_{\delta}, g_{\delta})$, is given by\endnote{The flow set $C$ (and the jump set $D$) inflates the state $x$ and input $u$, respectively, yielding their independent ranges. This ensures that the sampling process is a Bernoulli process.}
	\begin{equation}
		\mathcal{H}_{\delta}: \left\{              
		\begin{aligned}               
		\dot{x} & = f_{\delta}(x, u)     &(x, u)\in C_{\delta}\\                
		x^{+} & =  g_{\delta}(x, u)      &(x, u)\in D_{\delta}\\                
		\end{aligned}   \right. 
		\label{model:inflatedhybridsystem}
		\end{equation}
	 where
	\begin{enumerate}[label=\arabic*)]
		\item The $\delta$-inflation of the flow set is constructed as
		\begin{equation}\label{equation:inflatedflowset}
		\begin{aligned}
		C_{\delta} := \{(x, u)\in \mathbb{R}^{n}\times \mathbb{R}^{m}: \exists (y,  v)\in C:  \\
		x\in y + \delta \mathbb{B}, 
		u\in v+ \delta \mathbb{B}\},
		\end{aligned}
		\end{equation}
			\item The $\delta$-inflation of the flow map is constructed as
			\begin{equation}\label{equation:inflatedflowmap}
						f_{\delta}(x, u) := f(x, u)\quad \forall (x, u)\in C_{\delta},
			\end{equation} 
		\item The $\delta$-inflation of the jump set is constructed as\begin{equation}\label{equation:inflatedjumpset}
		\begin{aligned}
		D_{\delta} := \{(x, u)\in \mathbb{R}^{n}\times \mathbb{R}^{m}: \exists (y,  v)\in D: \\
		x\in y + \delta \mathbb{B}, u\in v+ \delta \mathbb{B}\},
		\end{aligned}
		\end{equation}
		\item The $\delta$-inflation of the jump map is constructed as \begin{equation}\label{equation:inflatedjumpmap}
			\begin{aligned}
			g_{\delta}(x, u) &:= g(x, u)\quad \forall (x, u)\in D_{\delta}.\\
			\end{aligned}
			\end{equation}
	\end{enumerate}
\end{definition}
The above outlines a general method of constructing the $\delta$-inflation of the given hybrid system. Next, this method is exemplified in the actuated bouncing ball system.
\begin{example}[(Actuated bouncing ball system in Example \ref{example:bouncingball}, revisited)]\label{example:bouncingballinflate}
	Given $\delta > 0$, the $\delta$-inflation of hybrid system of the actuated bouncing ball system is constructed as follows.
	\begin{itemize}
		\item From (\ref{equation:inflatedflowset}), the $\delta$-inflation of flow set $C$, denoted $C_{\delta}$, is given by
		\begin{equation}
		\label{equation:bouncingballbwflowset}
		C_{\delta}= \{(x, u)\in \mathbb{R}^{2}\times \mathbb{R}: x_{1} \geq -\delta\}.
		\end{equation}
		\item From (\ref{equation:inflatedflowmap}), the $\delta$-inflation of flow map $f$, denoted $f_{\delta}$, is given by
					\begin{equation}
					\label{equation:bouncingballbwflowmap}
					f_{\delta}(x, u) =  f(x, u) = 
					\left[ \begin{matrix}
					x_{2} \\
					-\gamma
					\end{matrix}\right] \qquad \forall(x, u)\in C_{\delta}
					\end{equation}
		\item From (\ref{equation:inflatedjumpset}), the $\delta$-inflation of jump set $D$, denoted $D_{\delta}$, is given by
		\begin{equation}
		\label{equation:bouncingballbwjumpset}
		\begin{aligned}
		D_{\delta} := (\{(0, 0)\} + \delta\mathbb{B})
		\cup \{(x, u)\in \mathbb{R}^{2}\times \mathbb{R}: \\
		x_{2} \leq 0 \text{ and } -\delta\leq x_{1} \leq \delta\}. 
		\end{aligned}
		\end{equation}
				\item From (\ref{equation:inflatedjumpmap}), the $\delta$-inflation of jump map $g$, denoted $g_{\delta}$, is given by 
			\begin{equation}
			\label{equation:bouncingballbwjumpmap}
			g_{\delta}(x, u) :=  g(x, u) = 
			\left[ \begin{matrix}
			x_{1} \\
			-\lambda x_{2} + u
			\end{matrix}\right] 
			\  \forall(x, u)\in D_{\delta}.
			\end{equation}
	\end{itemize}
\end{example}
As is shown in Figure \ref{fig:motionplannoclearance_inflated}, the inflation of a hybrid system defined above serves to extend the boundary of both the flow and jump sets in $\mathcal{H}$ by $\delta$ and to capture the same continuous and discrete dynamics in those extended sets as in $\hs$. Consequently, under the assumption that a motion plan to $\mathcal{P}$ with positive safety clearance exists, this methodology facilitates the establishment of such a motion plan with positive dynamics clearance and, in turn, with positive clearance.
\begin{figure}[htbp]
	\centering
	\subfigure[\label{fig:motionplannoclearance}]{\includegraphics[width = 0.45\textwidth]{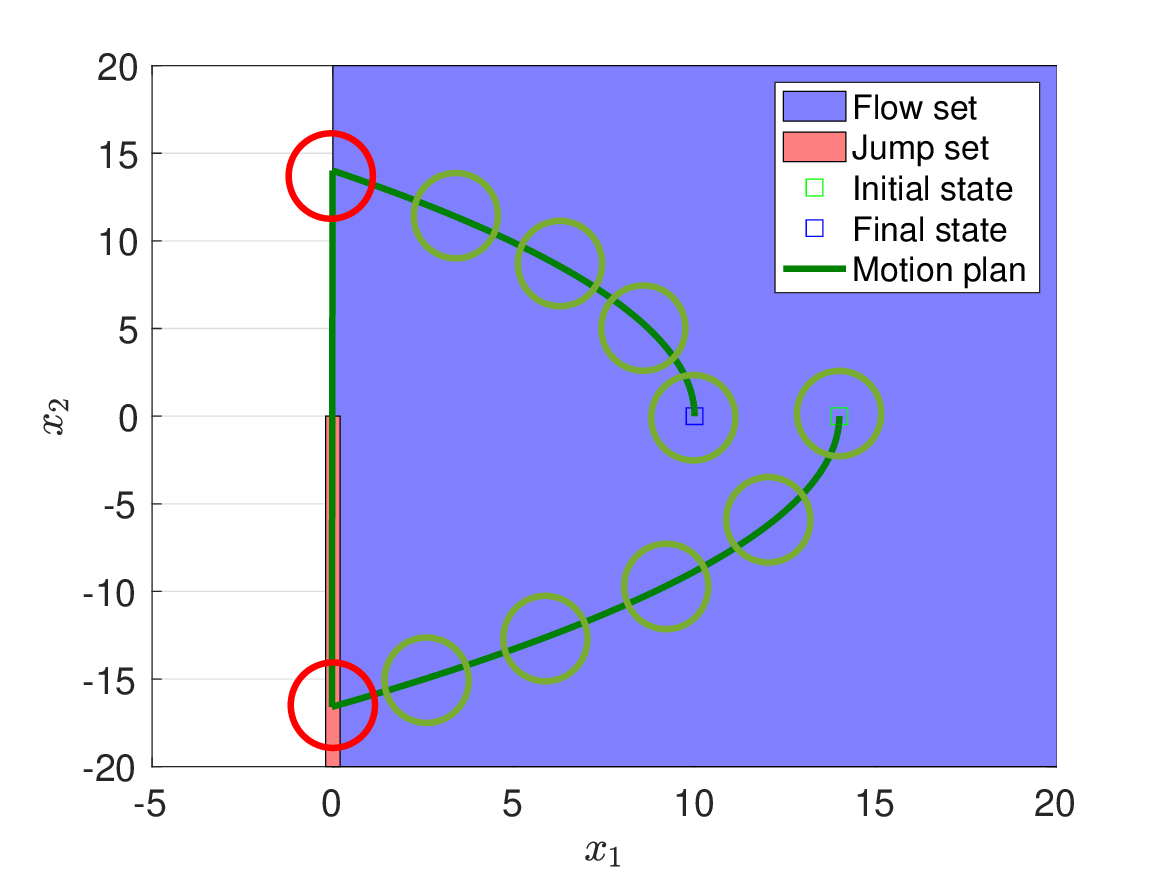}}
	\subfigure[\label{fig:motionplannoclearance_inflated}]{\includegraphics[width = 0.45\textwidth]{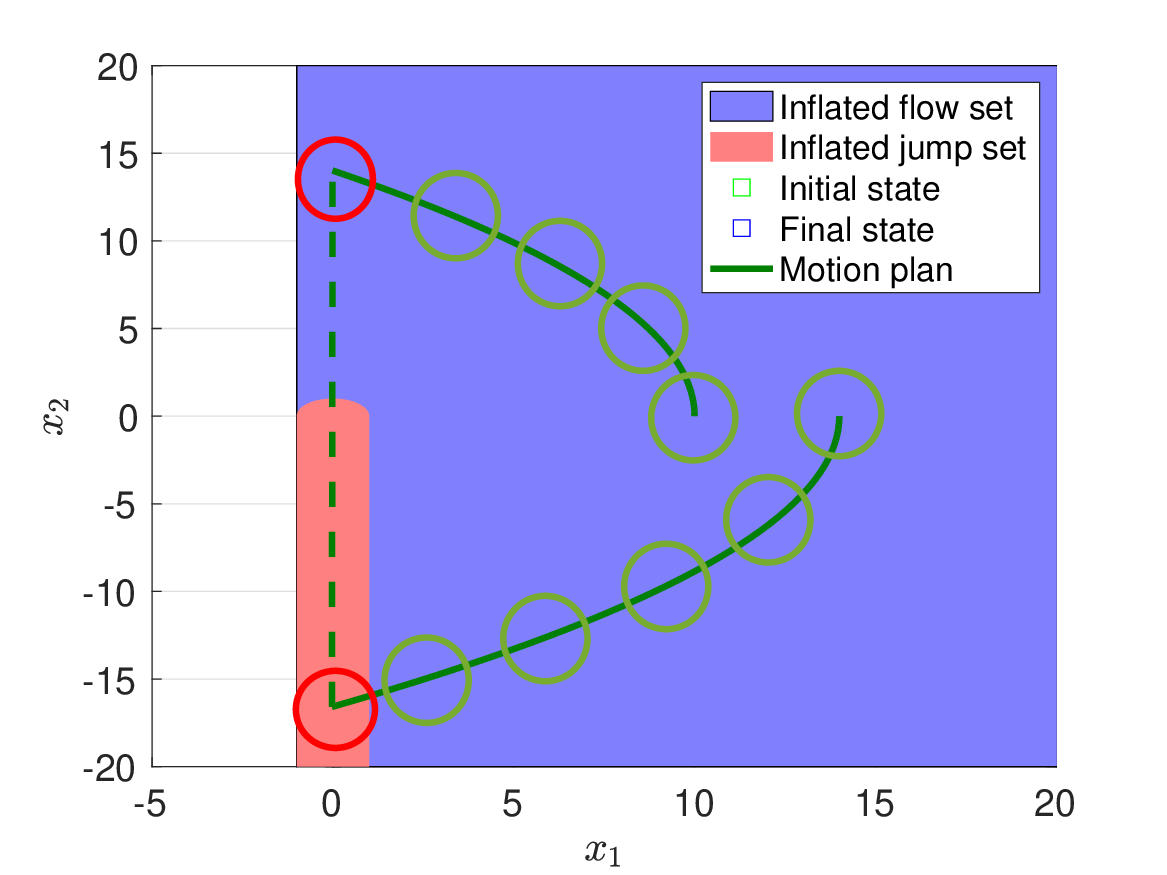}}
	\caption{Figure \ref{fig:motionplannoclearance} shows a sample motion plan for the bouncing ball system in Example \ref{example:bouncingball} without clearance. The dark green trajectory shows the motion plan. The blue region denotes the projection of flow set on the state space. The circles denote the boundaries of the balls along the motion plan at specific hybrid time instances. Note that the red balls depicted in Figure \ref{fig:motionplannoclearance}, which encircle the state of the motion plan at the boundary of the flow set, are not subsets of the flow set. In this case, the clearance $\delta$ is zero. However, for the inflated system in (\ref{model:inflatedhybridsystem}), the existence of the circles along the motion plan with radius $\delta$, as shown in Figure \ref{fig:motionplannoclearance_inflated}, implies the positive clearance $\delta$ of the motion plan.}
\end{figure}
Next, we show that a motion plan to the original motion planning problem is also a motion plan to the motion planning problem for its $\delta$-inflation.
\begin{proposition}
	\label{proposition:motioninflated}
	Given a motion planning problem $\mathcal{P} = (X_{0}, X_{f}, X_{u}, (C, f, D, g))$ in Problem \ref{problem:motionplanning} with positive safety clearance, if $\psi$ is a motion plan to $\mathcal{P}$, then for each $\delta > 0$, $\psi$ is also a motion plan to the motion planning problem $\mathcal{P}_{\delta} = (X_{0}, X_{f}, X_{u},  (C_{\delta}, f_{\delta}, D_{\delta}, g_{\delta}))$, where $(C_{\delta}, f_{\delta}, D_{\delta}, g_{\delta})$ is the $\delta$-inflation of the hybrid system defined by $(C, f, D, g)$.
\end{proposition}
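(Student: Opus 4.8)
The plan is to verify directly that $\psi = (\phi, \myinputt)$ satisfies each of the four requirements of a motion plan in Problem~\ref{problem:motionplanning} when the data $(C, f, D, g)$ is replaced by its $\delta$-inflation $(C_{\delta}, f_{\delta}, D_{\delta}, g_{\delta})$. The key structural observation driving the whole argument is that inflation only \emph{enlarges} the flow and jump sets while leaving the maps unchanged on the original sets. Concretely, from (\ref{equation:inflatedflowset}) one gets $C\subset C_{\delta}$ (for any $(x,u)\in C$, take $(y,v)=(x,u)$ so that $x\in y+\delta\mathbb{B}$ and $u\in v+\delta\mathbb{B}$ since $0\in\delta\mathbb{B}$), and likewise $D\subset D_{\delta}$ from (\ref{equation:inflatedjumpset}); taking closures yields $\overline{C}\subset\overline{C_{\delta}}$. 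Moreover, (\ref{equation:inflatedflowmap}) and (\ref{equation:inflatedjumpmap}) give $f_{\delta}=f$ on $C_{\delta}\supset C$ and $g_{\delta}=g$ on $D_{\delta}\supset D$.

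Using these facts, requirements 1), 3), and 4) of Problem~\ref{problem:motionplanning} are immediate, since the sets $X_{0}$, $X_{f}$, and $X_{u}$ are identical in $\mathcal{P}$ and $\mathcal{P}_{\delta}$: the conditions $\phi(0,0)\in X_{0}$, $\phi(T,J)\in X_{f}$ at the same $(T,J)\in\dom\psi$, and $(\phi(t,j),\myinputt(t,j))\notin X_{u}$ for all $(t,j)$ with $t+j\le T+J$ are inherited verbatim from the hypothesis that $\psi$ is a motion plan to $\mathcal{P}$. Hence the only substantive step is requirement 2), namely that $\psi$ is a solution pair to the inflated hybrid system $\mathcal{H}_{\delta}$ in the sense of Definition~\ref{definition:solution}.

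For requirement 2) I would check the three items of Definition~\ref{definition:solution} one by one, each reducing to the inclusions above. Item 1) follows because $(\phi(0,0),\myinputt(0,0))\in\overline{C}\cup D\subset\overline{C_{\delta}}\cup D_{\delta}$ while the domain condition $\dom\phi=\dom\myinputt$ is unchanged. Item 2): on each interval $I^{j}_{\phi}$ with nonempty interior, the original solution satisfies $(\phi(t,j),\myinputt(t,j))\in C\subset C_{\delta}$ for all $t\in\interior I^{j}_{\phi}$, and $\frac{\mathrm{d}}{\mathrm{d}t}\phi(t,j)=f(\phi(t,j),\myinputt(t,j))=f_{\delta}(\phi(t,j),\myinputt(t,j))$ for almost all $t$, where the last equality uses $f_{\delta}=f$ on $C_{\delta}$. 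Item 3): at every jump, $(\phi(t,j),\myinputt(t,j))\in D\subset D_{\delta}$ and $\phi(t,j+1)=g(\phi(t,j),\myinputt(t,j))=g_{\delta}(\phi(t,j),\myinputt(t,j))$, again using $g_{\delta}=g$ on $D_{\delta}$. This completes the verification that $\psi$ is a solution pair to $\mathcal{H}_{\delta}$, and therefore a motion plan to $\mathcal{P}_{\delta}$.

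I do not expect a genuine obstacle here: the result is essentially a monotonicity statement, since enlarging the constraint sets without altering the dynamics on the original sets can never invalidate an existing solution. The only points demanding care are the closure inclusion $\overline{C}\subset\overline{C_{\delta}}$ used in item 1) and the bookkeeping that each pointwise membership in $C$ (resp.\ in $D$) coming from the original solution lands in the larger set $C_{\delta}$ (resp.\ $D_{\delta}$), so that $f_{\delta}$, $g_{\delta}$ genuinely agree with $f$, $g$ at exactly those points. It is worth noting that this argument never invokes the positive safety clearance hypothesis; that assumption is presumably retained for consistency with, and use in, the subsequent results that establish a positive \emph{dynamics} clearance for the inflated problem.
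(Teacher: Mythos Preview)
Your proposal is correct and follows essentially the same approach as the paper: the paper factors the verification that $\psi$ is a solution pair to $\mathcal{H}_{\delta}$ into a separate lemma (Lemma~\ref{lemma:inflationsame}), but its content is exactly your itemized check of Definition~\ref{definition:solution} via the inclusions $C\subset C_{\delta}$, $D\subset D_{\delta}$ and the identities $f_{\delta}=f$, $g_{\delta}=g$, after which conditions 1), 3), 4) of Problem~\ref{problem:motionplanning} are declared immediate because $X_{0},X_{f},X_{u}$ are unchanged. Your observation that the positive safety clearance hypothesis is never invoked is also accurate.
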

\begin{proof}
	See Appendix \ref{section:proofinflated}.
\end{proof}
Next we show that the existing motion plan with positive safety clearance has positive clearance for the motion planning problem for the $\delta$-inflation of the original hybrid system $\hs$.
\begin{lemma}
	\label{lemma:motionplanintlated}
	Let $\psi$ be a motion plan to the motion planning problem $\mathcal{P} = (X_{0}, X_{f}, X_{u}, (C, f, D, g))$ formulated as Problem \ref{problem:motionplanning} with positive safety clearance $\delta_{s} > 0$. Then, for each $\delta_{f} > 0$, $\psi$ is a motion plan to the motion planning problem $\mathcal{P}_{\delta_{f}} = (X_{0}, X_{f}, X_{u}, (C_{\delta_{f}}, f_{\delta_{f}}, D_{\delta_{f}}, g_{\delta_{f}}))$ with clearance $\delta = \min \{\delta_{s}, \delta_{f}\}$, where $\mathcal{H}_{\delta_{f}} = (C_{\delta_{f}}, f_{\delta_{f}}, D_{\delta_{f}}, g_{\delta_{f}})$ is the $\delta_{f}$-inflation of $\mathcal{H} = (C, f, D, g)$.
\end{lemma}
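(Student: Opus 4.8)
The plan is to decompose the clearance, which by Definition \ref{definition:clearance} is the minimum of the safety clearance and the dynamics clearance, and to verify each separately for the inflated data. Proposition \ref{proposition:motioninflated} already guarantees that $\psi$ is a motion plan to $\mathcal{P}_{\delta_f}$, so clearance is well defined for it; it then remains only to show that $\delta_s$ is a valid safety clearance and $\delta_f$ a valid dynamics clearance of $\psi$ for $\mathcal{P}_{\delta_f}$. Once both are established, every $\delta' \in [0, \min\{\delta_s, \delta_f\}]$ simultaneously satisfies all conditions of Definition \ref{definition:safetyclearance} and Definition \ref{definition:dynamicsclearance}, which is exactly the assertion that $\delta = \min\{\delta_s, \delta_f\}$ is a clearance of $\psi$.

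The safety clearance requires no work: the triple $(X_0, X_f, X_u)$ is unchanged under inflation, and Definition \ref{definition:safetyclearance} refers only to $\psi$ and this triple. Hence the hypothesis that $\delta_s$ is a safety clearance of $\psi$ for $\mathcal{P}$ carries over verbatim to $\mathcal{P}_{\delta_f}$.

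For the dynamics clearance I would argue directly from the inflation construction. The elementary fact I would extract from (\ref{equation:inflatedflowset}) is that $(y, v) \in C$ implies $(y + \delta_f \mathbb{B}) \times (v + \delta_f \mathbb{B}) \subseteq C_{\delta_f}$, and symmetrically from (\ref{equation:inflatedjumpset}) that $(y, v) \in D$ implies $(y + \delta_f \mathbb{B}) \times (v + \delta_f \mathbb{B}) \subseteq D_{\delta_f}$. Fixing $\delta' \in [0, \delta_f]$, at each $(t, j) \in \dom \psi$ lying in the interior of a flow interval $I^j$, item 2 of Definition \ref{definition:solution} gives $(\phi(t, j), \myinputt(t, j)) \in C$, so choosing $(y, v) = (\phi(t, j), \myinputt(t, j))$ and using $\delta' \mathbb{B} \subseteq \delta_f \mathbb{B}$ yields $(\phi(t, j) + \delta' \mathbb{B}, \myinputt(t, j) + \delta' \mathbb{B}) \subseteq C_{\delta_f}$, which is condition 1 of Definition \ref{definition:dynamicsclearance}. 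The jump case is the mirror image: at each $(t, j)$ with $(t, j + 1) \in \dom \psi$, item 3 of Definition \ref{definition:solution} gives $(\phi(t, j), \myinputt(t, j)) \in D$, and the same inclusion into $D_{\delta_f}$ follows. This shows $\delta_f$ is a dynamics clearance and, combined with the previous paragraph, completes the proof.

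The step I expect to require the most care is matching the two conditions of Definition \ref{definition:dynamicsclearance} to the flow/jump structure of $\psi$ at the boundary times of each flow interval $I^j$: a right endpoint of $I^j$ is simultaneously a flow time and a jump time, while Definition \ref{definition:solution} guarantees membership in $C$ only at \emph{interior} flow times. I would therefore read condition 1 as ranging over interior flow times, where $C$-membership holds, and condition 2 over jump times, where $D$-membership holds, so that the witness argument applies on the correct set in each case. If instead the endpoints must also be covered by condition 1, I would supplement the argument with continuity of $t \mapsto \phi(t, j)$ on $I^j$ together with the observation that $C_{\delta_f}$ contains a neighborhood of $\overline{C}$, so that the required product balls around such endpoint states still lie in $C_{\delta_f}$.
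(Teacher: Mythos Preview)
Your proposal is correct and follows essentially the same route as the paper: invoke Proposition \ref{proposition:motioninflated} for the motion-plan part, observe the safety clearance carries over unchanged, and verify the dynamics clearance via the elementary inclusion $(y,v)\in C \Rightarrow (y+\delta_f\mathbb{B})\times(v+\delta_f\mathbb{B})\subset C_{\delta_f}$ (and likewise for $D$), which the paper isolates as Lemma \ref{lemma:inflationsubset}. Your closing paragraph on the endpoint-of-$I^j$ issue is in fact more careful than the paper, which silently verifies condition 1 of Definition \ref{definition:dynamicsclearance} only at interior flow times; your suggested patch via continuity of $\phi$ and $\overline{C}\subset C_{\delta_f}$ is the right way to close that gap if one reads the definition as ranging over all of $I^j$.
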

\begin{proof}
	See Appendix \ref{section:proofinflatedlemma}.
\end{proof}
\subsection{Assumptions}\label{section:assumption}
Similar to \cite[Definition 2]{kleinbort2018probabilistic}, we consider input functions that are piecewise constant in the following sense. 
\begin{definition}[(Piecewise-constant function)]
	\label{definition:piecewiseconstantfun}
	A function $\tilde{\myinputt}_{c}: [0, T]\to U_{C}$ is said to be a piecewise-constant function for probabilistic completeness if there exists $\Delta t \in \mathbb{R}_{>0}$, called resolution, such that 
	\begin{enumerate}[label=\arabic*)]
		\item $k:=\frac{T}{\Delta t}\in \mathbb{N}$;
		\item for each $i\in \{1, 2,..., k\}$, $t \mapsto \tilde{\myinputt}_{c}(t)$ is constant over $[(i - 1)\Delta t, i\Delta t)$.
	\end{enumerate}
\end{definition}
From Definition \ref{definition:piecewiseconstantfun}, we define a motion plan notion with piecewise-constant inputs as follows.
\begin{definition}[(Motion plan with piecewise-constant input)]\label{definition:mppiecewiseconstant}
	Given a motion planning problem $\mathcal{P}$, a motion plan $(\mystatet, \myinputt)$ to $\mathcal{P}$ is said to be a \emph{motion plan with piecewise-constant input with resolution $\Delta t$} if, for all $j\in \mathbb{N}$ such that $I^{j}_{\myinputt}$ has nonempty interior, $t\mapsto \myinputt(t, j)$ is a piecewise constant function with resolution $\Delta t$.
\end{definition}
Similar to \cite{li2016asymptotically}, this paper considers that HyRRT is executed uniformly in relation to the random selection involved in \textbf{Step} 1, \textbf{Step} 3, and \textbf{Step} 5 of HyRRT, in the following sense.
\begin{definition}[(Uniform HyRRT)]
	\label{assumption:uniformsample}
	HyRRT is said to be executed uniformly if the probability distribution of the random selection in the function calls $\mathcal{T}.\texttt{init}$, $\texttt{random\_state}$, and  $\texttt{new\_state}$ is the uniform distribution.
\end{definition}
\begin{remark}
	By Definition \ref{assumption:uniformsample}, the computation of the probability of randomly selecting a point that lies in a given set is simplified. When randomly selecting a point $s$ from a set $S$, according to \cite[Page 257 (20.9)]{billingsley2017probability}, the probability that $s$ belongs to a subset $R\subset S$ is 
	\begin{equation}
	\label{equation:probabilitylebesgue}
	\mbox{\rm Prob}[s\in R] = \int_{s\in R} \frac{1}{\mu(S)}\text{d}s = \frac{\mu(R)}{\mu(S)}
	\end{equation}
	where $\mu$ denotes the Lebesgue measure of a set.
\end{remark}
Subsequently, we define a complete input library that encompasses each possible input value that can be utilized either during the flow or at a jump.
\begin{definition}[(Complete input library)]
	\label{assumption:inputlibrary}
	The input library $(\mathcal{U}_{C}, \mathcal{U}_{D})$ is said to be a complete input library if
	\begin{enumerate}[label=\arabic*)]
		\item $\mathcal{U}_{C}$ is the set of constant input signals and includes all possible input signals such that, for some $T_{m} > 0$, their time domains are closed subintervals of the interval $[0, T_{m}]$ including zero and their images belong to $U_{C}$. In other words, there exists $T_m > 0$ such that $\mathcal{U}_{C} = \{ \tilde{\myinputt} : \dom \tilde{\myinputt} = [0,T] \subset [0,T_m], \tilde{\myinputt} \text{ is } \text{constant}, \tilde{\myinputt} \in U_C\}$;
		\item $\mathcal{U}_D = U_D$.
	\end{enumerate}
\end{definition}
\begin{remark}
	From Definition \ref{assumption:inputlibrary}, the input signals in $\mathcal{U}_{C}$ are all constant functions. This property of $\mathcal{U}_{C}$ allows for the inputs of a motion plan with piecewise-constant input to be constructed by concatenating constant input signals in $\mathcal{U}_{C}$.  The set  $\mathcal{U}_{C}$ collects each possible constant input signal taking values from $U_{C}$ and with maximal duration $[0, T_{m}]$. The upper bound $T_m$ on their duration ensures a positive lower bound on the probability of sampling an appropriate input duration by the function call $\texttt{new\_state}$ in Section~\ref{section:newstate}, where an input signal $\tilde{\myinputt}$ is randomly selected from $\mathcal{U}_{C}$.
	Without this upper bound $T_{m}$, according to (\ref{equation:probabilitylebesgue}), the probability of selecting any finite subintervals from $[0, \infty)$ is $0$ because the Lebesgue measure of $[0, \infty)$ is infinity. The selection process involves a random choice of $t_{m}$ from the interval $ [0, T_{m}]$ and $u_{C}$ from $U_{C}$, from where a constant input signal is constructed as $\tilde{\myinputt}:[0, t_{m}]\to \{u_{C}\}$.\EndRemark
\end{remark}
To ensure that each random process in HyRRT, specially within the function calls  $\mathcal{T}.\texttt{init}$, $\texttt{random\_state}$, and  $\texttt{new\_state}$, returns a suitable sample with positive probability, it is essential to assume that the Lebesgue measure of the sets being sampled is both nonzero and finite.
\begin{assumption}
	\label{assumption:finitesets}
	The sets $X_{0}$, $C'$, $D'$, $U_{C}$, $U_{D}$, which HyRRT makes random selection from, have finite and positive Lebesgue measure.
\end{assumption}
\begin{remark}
	 Under Assumption \ref{assumption:finitesets}, it is guaranteed that $\mu(S) < \infty$. Then, if HyRRT is executed uniformly and $\mu(R)$ is positive, from (\ref{equation:probabilitylebesgue}), it follows that $\mbox{\rm Prob}[s\in R] \in (0, 1]$.\EndRemark
\end{remark}
The following assumption is imposed on the flow map $f$ of the hybrid system $\mathcal{H}$ in (\ref{model:generalhybridsystem}).
\begin{assumption}
	\label{assumption:flowlipschitz}
	The flow map $f$ is Lipschitz continuous. In particular, there exist $K^{f}_{x}, K^{f}_{u}\in \mathbb{R}_{>0}$ such that, for each $(x_{0}, x_{1}, u_{0}, u_{1})$ such that $(x_{0}, u_{0}) \in C$, $(x_{0}, u_{1}) \in C$, and $(x_{1}, u_{0}) \in C$,
	$$
	\begin{aligned}
	|f(x_{0}, u_{0}) - f(x_{1}, u_{0})|&\leq K^{f}_{x}|x_{0} - x_{1}|\\
	|f(x_{0}, u_{0}) - f(x_{0}, u_{1})|&\leq K^{f}_{u}|u_{0} - u_{1}|.
	\end{aligned}
	$$
\end{assumption}
\begin{remark}
	Assumption \ref{assumption:flowlipschitz} adheres to the Lipschitz continuity assumption on differential constraints, as outlined in \cite{kleinbort2018probabilistic}. Assumption \ref{assumption:flowlipschitz} guarantees that the flow map is Lipschitz continuous for both state and input arguments. This assumption establishes an explicit upper bound, parameterized by both state and input, on the distance between the motion plan and the simulated purely continuous solution pair in the forthcoming Lemma~\ref{lemma:pccontinuouslowerbound}. Following the methodology in the proof of \cite[Lemma 3]{kleinbort2018probabilistic}, by ensuring that this upper bound is less than the motion plan's clearance, a range of the input signals in $\mathcal{U}_{C}$ with positive Lebesgue measure is determined that are capable of simulating a purely continuous solution pair that stays within the motion plan's clearance. As per~(\ref{equation:probabilitylebesgue}), there is a guaranteed positive probability of randomly sampling an input signal from $\mathcal{U}_{C}$ that falls within this specified range. This positive probability is instrumental in ensuring that in each Bernoulli trials in the proof of the upcoming Proposition \ref{theorem:pc}, there is a positive probability of achieving a successful outcome, which eventually leads to the probabilistic completeness guarantee.\EndRemark
\end{remark}
\begin{example}[(Actuated bouncing ball system in Example \ref{example:bouncingball}, revisited)]
	 In the bouncing ball system, the flow map is defined as $ f(x, u) = \left[ \begin{matrix}
	x_{2} \\
	-\gamma
	\end{matrix}\right]$. The flow set is defined as $C := \{(x, u)\in \mathbb{R}^{2}\times \mathbb{R}: x_{1}\geq 0\}.$ 
	For each $(x_{0}, x_{1}, u_{0})$ such that $(x_{0}, u_{0}) \in C$ and $(x_{1}, u_{0}) \in C$, we have 
	$$
	\begin{aligned}
	|f(x_{0}, u_{0}) - f(x_{1}, u_{0})| &=  \left|\left[ \begin{matrix}
	x_{0, 2} - x_{1, 2} \\
	0
	\end{matrix}\right]\right| = |x_{0, 2} - x_{1, 2}| \\
	&\leq |x_{0} - x_{1}|
	\end{aligned}
	$$ where $x_{0, 2}$ and $x_{1, 2} $ denote the second component of $x_{0}$ and $x_{1}$, respectively. Therefore, $K_{x}^{f} = 1 > 0$ is such that $|f(x_{0}, u_{0}) - f(x_{1}, u_{0})|\leq K^{f}_{x}|x_{0} - x_{1}|$ for each $(x_{0}, x_{1}, u_{0})$ such that $(x_{0}, u_{0}) \in C$ and $(x_{1}, u_{0}) \in C$. Now, for each  $(x_{0}, u_{0}, u_{1})$ such that $(x_{0}, u_{0}) \in C$ and $(x_{0}, u_{1}) \in C$, we have 
	$
		|f(x_{0}, u_{0}) - f(x_{0}, u_{1})| = 0.
	$
	Therefore,  any $K_{u}^{f} > 0$ is such that $|f(x_{0}, u_{0}) - f(x_{0}, u_{1})|\leq K^{f}_{u}|u_{0} - u_{1}|$ for each $(x_{0}, u_{0}, u_{1})$ such that $(x_{0}, u_{0}) \in C$ and $(x_{0}, u_{1}) \in C$. Therefore, the flow map and flow set in Example \ref{example:bouncingball} satisfy Assumption ~\ref{assumption:flowlipschitz}.
\end{example}
The following assumption is imposed on the jump map $g$ of the hybrid system $\mathcal{H}$ in (\ref{model:generalhybridsystem}).
\begin{assumption}
	\label{assumption:pcjumpmap}
	The jump map $g$ is such that there exist $K^{g}_{x}, K^{g}_{u}\in \mathbb{R}_{>0}$ such that, for each $(x_{0}, u_{0}) \in D$ and each $(x_{1}, u_{1}) \in D$,
	$$
	|g(x_{0}, u_{0}) - g(x_{1}, u_{1})|\leq K^{g}_{x}|x_{0} - x_{1}| + K^{g}_{u}|u_{0} - u_{1}|.
	$$
\end{assumption}
\begin{remark}
	Assumption \ref{assumption:pcjumpmap} enables to establish an explicit upper bound for the distance between the motion plan and the simulated purely discrete solution pair, as will be further detailed in the Lemma \ref{lemma:pcdiscretelowerbound}, which is forthcoming. Adopting the approach from the proof of \cite[Lemma 3]{kleinbort2018probabilistic}, by keeping this upper bound beneath the motion plan's clearance, a specific range of the input values in $\mathcal{U}_{D}$, possessing a positive Lebesgue measure, is identified that are capable of simulating a purely discrete solution pair that stays within the motion plan's clearance. As per  (\ref{equation:probabilitylebesgue}), there is a guaranteed positive probability of randomly sampling an input value from $\mathcal{U}_{D}$ that falls within this specified range. This positive probability ensures that, in the Bernoulli trials in the proof of the upcoming Proposition \ref{theorem:pc}, aside from those already covered by Assumption \ref{assumption:flowlipschitz}, each trial has a positive probability of yielding a successful outcome, eventually ensuring the probabilistic completeness of HyRRT.
	\EndRemark
\end{remark}
\begin{example}[(Actuated bouncing ball example in Example \ref{example:bouncingball}, revisited)] 
	In Example \ref{example:bouncingball}, the jump map is defined as $g(x, u) = \left[ \begin{matrix}
		x_{1} \\
		-\lambda x_{2}+u
	\end{matrix}\right]$. The jump set is defined as $D:= \{(x, u)\in \mathbb{R}^{2}\times \mathbb{R}: x_{1} = 0, x_{2} \leq 0, u\geq 0\}$. For each pair of $(x_{0}, u_{0}) \in D$ and $(x_{1}, u_{1}) \in D$, we have 
	$$
	\begin{aligned}
	|g(x_{0}, u_{0}) - &g(x_{1}, u_{1})| = \left|\left[ \begin{matrix}
	x_{0, 1} - x_{1, 1} \\
	-\lambda (x_{0, 2} - x_{1, 2}) + u_{0} - u_{1}
	\end{matrix}\right]\right| \\
	& = \left|\left[ \begin{matrix}
	x_{0, 1} - x_{1, 1} \\
	-\lambda (x_{0, 2} - x_{1, 2})
	\end{matrix}\right] + \left[ \begin{matrix}
	0\\
	u_{0} - u_{1}
	\end{matrix}\right]\right|\\
	&\leq \left|\left[ \begin{matrix}
	x_{0, 1} - x_{1, 1} \\
	-\lambda (x_{0, 2} - x_{1, 2})
	\end{matrix}\right]\right| + \left|\left[ \begin{matrix}
	0\\
	u_{0} - u_{1}
	\end{matrix}\right]\right|\\
	&\leq \sqrt{1 + \lambda^2} |x_{0} - x_{1}| + |u_{0} - u_{1}|.
	\end{aligned}
	$$
	Therefore, $K_{x}^{g} = \sqrt{1 + \lambda^2} $ and $K_{u}^{g} = 1$ are such that $|g(x_{0}, u_{0}) - g(x_{1}, u_{1})|\leq K^{g}_{x}|x_{0} - x_{1}| + K^{g}_{u}|u_{0} - u_{1}|$ for each pair of $(x_{0}, u_{0}) \in D$ and $(x_{1}, u_{1}) \in D$.
\end{example}
\subsection{Probabilistic Completeness Guarantee}\label{section:pc_relax}
Our main result shows that HyRRT is probabilistically complete without assuming positive clearance, achieved through properly exploiting the inflation~$\hs_{\delta_{f}}$.
\begin{theorem}
	\label{theorem:inflatedpc}
	Given a motion planning problem $\mathcal{P} = (X_{0}, X_{f}, X_{u}, (C, f, D, g))$, suppose that Assumptions \ref{assumption:finitesets}, \ref{assumption:flowlipschitz}, and \ref{assumption:pcjumpmap} are satisfied, and that there exists a compact motion plan $(\phi, \myinputt)$ to $\mathcal{P}$ with \emph{safety} clearance $\delta_{s} > 0$ and piecewise-constant input. Then, using a complete input library and when executed uniformly (as defined in Definition \ref{assumption:uniformsample}) to solve the problem $\mathcal{P}_{\delta_{f}} = (X_{0}, X_{f}, X_{u},(C_{\delta_{f}}, f_{\delta_{f}}, D_{\delta_{f}}, g_{\delta_{f}}))$, where, for some $\delta_{f} > 0$, $(C_{\delta_{f}}, f_{\delta_{f}}, D_{\delta_{f}}, g_{\delta_{f}})$ denotes the $\delta_{f}$-inflation of $(C, f, D, g)$ in (\ref{model:inflatedhybridsystem}), the probability that HyRRT fails to find a motion plan $\psi' = (\phi', \myinputt')$ to $\mathcal{P}_{\delta_{f}}$ such that $\phi'$ is $(\tilde{\tau}, \tilde{\delta})$-close to $\phi$ after $k$ iterations is at most $a\exp(-bk)$, where $a, b\in \mathbb{R}_{> 0}$, $\tilde{\tau} = \max\{T+ J, T' + J'\}$, $(T, J) = \max \dom \psi$, $(T', J') = \max \dom \psi'$,  and $\tilde{\delta} = \min \{\delta_{s}, \delta_{f}\}$.
\end{theorem}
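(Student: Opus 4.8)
The plan is to reduce the statement to the positive-clearance case and then run a blocked Bernoulli-trial argument along the reference motion plan. First I would invoke Lemma~\ref{lemma:motionplanintlated}: since $(\phi, \myinputt)$ is a motion plan to $\mathcal{P}$ with positive \emph{safety} clearance $\delta_s$, it is a motion plan to the inflated problem $\mathcal{P}_{\delta_f}$ with full clearance $\tilde\delta = \min\{\delta_s, \delta_f\} > 0$ (Proposition~\ref{proposition:motioninflated} guarantees it remains a motion plan, and the inflation supplies the otherwise-missing dynamics clearance). I would also record that the inflated data inherits the hypotheses: since $f_{\delta_f} = f$ and $g_{\delta_f} = g$ on $C_{\delta_f}$, $D_{\delta_f}$, the Lipschitz bounds of Assumptions~\ref{assumption:flowlipschitz} and \ref{assumption:pcjumpmap} hold on the inflated sets with the same constants, and Assumption~\ref{assumption:finitesets} transfers to the finitely enlarged sampling sets. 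Thus it suffices to prove exponential convergence for a problem admitting a motion plan of positive clearance $\tilde\delta$.

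Next I would discretize the reference motion plan. Because $(\phi, \myinputt)$ is compact with piecewise-constant input, its hybrid time domain is a finite concatenation of flow intervals and jumps. I would subdivide each flow interval into finitely many sub-intervals, each of constant input (respecting the resolution $\Delta t$) and of duration at most a step $\eta > 0$ to be fixed. This produces a finite sequence of $N$ waypoints $z_0, z_1, \dots, z_N$ with $z_0 \in X_0$ and $z_N \in X_f$, each consecutive pair joined by a primitive transition that is either a short constant-input flow or a single jump. Using the Lipschitz estimates of Lemmas~\ref{lemma:pccontinuouslowerbound} and \ref{lemma:pcdiscretelowerbound}, which bound the deviation between a reference segment and a simulated segment in terms of the perturbations of the initial state and the input, I would choose $\eta$ and a family of ball radii $r_0 \geq r_1 \geq \cdots$ small relative to $\tilde\delta$ so that, whenever the tree already owns a vertex whose state lies in $z_i + r_i \mathbb{B}$, there is a set of admissible inputs of positive Lebesgue measure whose application through \texttt{new\_state} drives the new state into $z_{i+1} + r_{i+1}\mathbb{B}$ while keeping the whole propagated segment inside the clearance tube, hence inside $C_{\delta_f}$ (resp. $D_{\delta_f}$ for a jump) and outside $X_u$.

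With this in place, I would bound below the per-step success probability. For the step from $z_i$ to $z_{i+1}$, one iteration succeeds if: (i) the correct regime is selected, a positive factor coming from $p_n$ (and, on $\overline{C'}\cap D'$, from the parameter $p_D$ in Section~\ref{section:newstate}); (ii) $x_{rand}$ falls in a region forcing \texttt{nearest\_neighbor} to return the intended vertex, positive by the uniform-sampling measure formula~(\ref{equation:probabilitylebesgue}) under Assumption~\ref{assumption:finitesets}; and (iii) the input drawn from $\mathcal{U}_C$ (or value from $\mathcal{U}_D$) lands in the positive-measure ``good'' set identified above, positive by~(\ref{equation:probabilitylebesgue}), with the complete input library (Definition~\ref{assumption:inputlibrary}) guaranteeing that this good set is actually available for sampling. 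Multiplying these factors gives a uniform lower bound $\rho \in (0, 1]$ valid for every step. The concatenated segments form a genuine solution pair by Proposition~\ref{theorem:concatenationsolution}, and the radii being below $\tilde\delta$ together with the controlled per-segment durations yield a $\phi'$ that is $(\tilde\tau, \tilde\delta)$-close to $\phi$ in the sense of Definition~\ref{definition:closeness} (the jump indices $j$ match because all $N$ jumps are reproduced, and the time coordinates stay within $\tilde\delta$).

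Finally I would convert this into the exponential tail. Grouping the $k$ iterations into $\lfloor k/N\rfloor$ disjoint blocks and using the independence of the draws guaranteed by uniform execution (Definition~\ref{assumption:uniformsample}), within each block there is probability at least $\rho^N > 0$ that the tree traces all $N$ waypoints and produces the desired $(\tilde\tau, \tilde\delta)$-close motion plan. Hence the failure probability after $k$ iterations is at most $(1 - \rho^N)^{\lfloor k/N \rfloor} \leq a\exp(-bk)$ with $a = (1-\rho^N)^{-1}$ and $b = -\tfrac{1}{N}\ln(1 - \rho^N) > 0$; this packaging is essentially the statement of Proposition~\ref{theorem:pc} applied to $\mathcal{P}_{\delta_f}$. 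I expect the main obstacle to be step (iii) combined with the nearest-neighbor subtlety in (ii): one must construct the positive-measure set of admissible inputs \emph{and} simultaneously guarantee that the sampled $x_{rand}$ selects the intended vertex rather than another, already-present vertex. This is precisely where the inflation is indispensable, since positive dynamics clearance is what keeps the perturbed flow segment inside $C_{\delta_f}$ and allows the Lipschitz estimates to close the loop.
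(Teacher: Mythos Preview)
Your overall architecture matches the paper's exactly: invoke Lemma~\ref{lemma:motionplanintlated} to trade safety clearance for full clearance on the inflated problem, then run the discretize / per-step lower bound / exponential-tail argument that constitutes Proposition~\ref{theorem:pc}. The reduction step and the identification of the three probabilistic ingredients (regime choice, \texttt{nearest\_neighbor} via Lemma~\ref{lemma:nearestvertex}, \texttt{new\_state} via Lemmas~\ref{lemma:pccontinuouslowerbound}--\ref{lemma:pcdiscretelowerbound}) are all correct. The block packaging $(1-\rho^N)^{\lfloor k/N\rfloor}$ is a legitimate alternative to the paper's binomial-sum bound; both deliver the $a\exp(-bk)$ conclusion.

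There is, however, a concrete error in your waypoint construction: you stipulate \emph{non-increasing} radii $r_0 \geq r_1 \geq \cdots$, and this cannot be made to work. Lemma~\ref{lemma:nearestvertex} only guarantees $\overline{x}_{v_{cur}} \in z_i + r\mathbb{B}$ from a vertex in $z_i + \tfrac{2}{5}r\mathbb{B}$, so the ball widens by a factor $5/2$ at the nearest-neighbor step. Then Lemma~\ref{lemma:pccontinuouslowerbound} requires $\kappa_2 > 2\kappa_1$ (see (\ref{equation:lowboundflow3c})) for the lower bound in (\ref{equation:lemmaflow}) to be positive, forcing the target ball to be strictly larger than twice the starting ball; Lemma~\ref{lemma:pcdiscretelowerbound} likewise needs $\kappa_2 > K_x^g \kappa_1$ by (\ref{equation:jumppossibilityinequality}), and $K_x^g$ may exceed $1$. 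Each step therefore \emph{enlarges} the ball, and a non-increasing sequence makes the per-step probability identically zero. The paper resolves this by taking $\{r_i\}$ to be a \emph{geometrically increasing} sequence with ratio $q > \max\{5/2, K_x^g\}$, and then choosing the initial term $r_0 = \delta/q^{m+1}$ small enough that $\sum_i r_i < \delta$; this last bound is what lets Proposition~\ref{proposition:concatenateclose} deliver the final $(\tilde\tau,\tilde\delta)$-closeness. Once you flip the monotonicity of the $r_i$ and add this summability control, your argument goes through.
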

The proof of Theorem \ref{theorem:inflatedpc} is established as follows. By the positive safety clearance assumption and exploiting the inflation of $\hs$ in (\ref{model:inflatedhybridsystem}), we establish that $(\mystatet, \myinputt)$ is a motion plan to $\mathcal{P}_{\delta}$ with positive clearance (see Lemma \ref{lemma:motionplanintlated} in Section \ref{section:inflation}). Then, given that $(\mystatet, \myinputt)$ is a motion plan to $\mathcal{P}_{\delta}$ with positive clearance, 
we demonstrate that the probability of HyRRT failing to find a motion plan \emph{with positive clearance} is converging to zero as the number of iterations approaches infinity. To demonstrate this result, we first establish that the probabilities of the following probabilistic events are positive:
\begin{enumerate}[label=E\arabic*)]
	\item The function call $\texttt{nearest\_neighbor}$ in Section ~\ref{section:nearestneighbor} returns a current vertex in the search tree within the clearance of $(\mystatet, \myinputt)$ (see Lemma \ref{lemma:nearestvertex} in the forthcoming Section \ref{section:probabilisticguarantees});
	\item If E1 occurs, the function call $\texttt{new\_state}$ in Section \ref{section:newstate} adds a new vertex and a new edge within the clearance of $(\mystatet, \myinputt)$ to the search tree (see Lemma \ref{lemma:pccontinuouslowerbound} and Lemma \ref{lemma:pcdiscretelowerbound} in the forthcoming Section \ref{section:probabilisticguarantees}).
\end{enumerate}
Therefore, the probability that both $E1$ and $E2$ occur, which is denoted as $E$,
resulting in adding a new vertex and a new edge within the clearance of $(\mystatet, \myinputt)$ to the search tree, is positive.
By the truncation operation in Definition \ref{definition: truncation}, the compact motion plan $(\mystatet, \myinputt)$ is discretized into a finite number, say $m\in\nnumbers$, of segments. Each of those segments can be approximated by a solution pair associated with an edge that was added when the event $E$ occurs. 
Therefore, if $E$ occurs less than $m$ times, then HyRRT will fail to generate a motion plan that is close to $(\mystatet, \myinputt)$. 
Proposition \ref{theorem:pc} in the forthcoming Section \ref{section:probabilisticproposition} demonstrates that the probability that $E$ occurs less than $m$ times, leading to the failure of HyRRT, approaches zero as the number of iterations increases, thereby establishing the property in Theorem \ref{theorem:inflatedpc}. A proof of Theorem \ref{theorem:inflatedpc} following these steps is given in the forthcoming Section \ref{section:mainresultproof}.

\subsection{Probabilistic Guarantees on the Function Calls $\texttt{nearest\_neighbor}$ and $\texttt{new\_state}$}\label{section:probabilisticguarantees}
We first characterize the probability that a vertex in the search tree that is close to an existing motion plan is selected as $v_{cur}$ by the function call $\texttt{nearest\_neighbor}$ in Algorithm \ref{algo:hybridRRT}. 
\begin{lemma}
	\label{lemma:nearestvertex}
	Given a hybrid system $\mathcal{H} = (C, f, D, g)$ with state $x\in \mathbb{R}^{n}$ and some $\delta \in \myreals_{> 0}$, let $x_{c}\in \mathbb{R}^{n}$ be such that $x_{c} + \delta\mathbb{B}\subset S$, where $S$ is either $C'$ or $D'$. Suppose that there exists a vertex $v$ in the search graph $\mathcal{T} = (V, E)$ such that $\overline{x}_{v}\in x_{c} + \pn{(}2\delta/5\pn{)}\mathbb{B}$.  Denote by $v_{cur}$ the return of the function call $\texttt{nearest\_neighbor}$ in Algorithm \ref{algo:extend}. When HyRRT is executed uniformly as defined in Definition \ref{assumption:uniformsample}, the probability that $\overline{x}_{v_{cur}}\in x_{c} + \delta\mathbb{B}$ is at least $\frac{\zeta_{n}(\delta/5)^{n}}{\mu(S)}$, where $\zeta_{n}$ is given in ~(\ref{equation:zetan}) and $\mu(S)$ denotes the Lebesgue measure of set $S$. 
\end{lemma}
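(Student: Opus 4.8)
The plan is to isolate a single sufficient event on the random sample $x_{rand}$ that forces the conclusion $\overline{x}_{v_{cur}}\in x_c+\delta\mathbb{B}$, and then to lower-bound the probability of the desired event by the (directly computable) probability of this sufficient event. The natural choice is the event $\{x_{rand}\in x_c+(\delta/5)\mathbb{B}\}$: its radius $\delta/5$ is calibrated so that, combined with the assumed bound $2\delta/5$ on $|\overline{x}_v-x_c|$, it leaves exactly enough slack to stay strictly below $\delta$ after two triangle inequalities. So the whole argument reduces to a geometric implication followed by one application of the uniform-sampling measure formula.

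First I would check that the hypothesized vertex $v$ is actually an admissible competitor in the optimization solved by $\texttt{nearest\_neighbor}$ (Problem \ref{problem:nearestneighbor}), whose feasible set is constrained to $X_{\star}$. Since $\overline{x}_v\in x_c+(2\delta/5)\mathbb{B}\subset x_c+\delta\mathbb{B}\subset S$ and $S$ (being $C'$ or $D'$) is contained in $X_{\star}$ (recall $X_c\supset\overline{C'}$ and $X_d\supset D'$), the vertex $v$ satisfies $\overline{x}_v\in X_{\star}$ and therefore participates in the $\argmin$. Then, conditioning on $x_{rand}\in x_c+(\delta/5)\mathbb{B}$, I would run the two key estimates. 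The first bounds the distance from $x_{rand}$ to this feasible vertex, $|x_{rand}-\overline{x}_v|\le|x_{rand}-x_c|+|x_c-\overline{x}_v|\le \delta/5+2\delta/5=3\delta/5$; since $v_{cur}$ minimizes the distance to $x_{rand}$ over all feasible vertices and $v$ is feasible, this gives $|x_{rand}-\overline{x}_{v_{cur}}|\le 3\delta/5$. The second transfers the bound back to $x_c$, $|\overline{x}_{v_{cur}}-x_c|\le|\overline{x}_{v_{cur}}-x_{rand}|+|x_{rand}-x_c|\le 3\delta/5+\delta/5=4\delta/5<\delta$, so $\overline{x}_{v_{cur}}\in x_c+\delta\mathbb{B}$. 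Hence the sufficient event implies the desired event.

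Finally I would compute the probability of the sufficient event. Because HyRRT is executed uniformly (Definition \ref{assumption:uniformsample}) and $x_c+(\delta/5)\mathbb{B}\subset x_c+\delta\mathbb{B}\subset S$, the Lebesgue-measure formula (\ref{equation:probabilitylebesgue}) yields $\mbox{\rm Prob}[x_{rand}\in x_c+(\delta/5)\mathbb{B}]=\mu(x_c+(\delta/5)\mathbb{B})/\mu(S)=\zeta_n(\delta/5)^n/\mu(S)$, using that a radius-$r$ ball in $\mathbb{R}^n$ has measure $\zeta_n r^n$ with $\zeta_n$ as in (\ref{equation:zetan}), and that $\mu(S)\in(0,\infty)$ by Assumption \ref{assumption:finitesets} so the ratio is well defined. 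Since the desired event contains the sufficient event, monotonicity of probability gives $\mbox{\rm Prob}[\overline{x}_{v_{cur}}\in x_c+\delta\mathbb{B}]\ge \zeta_n(\delta/5)^n/\mu(S)$, which is the claim.

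I expect the only genuinely delicate step to be the feasibility/minimality interplay in the second paragraph: one must confirm $v$ is admissible in the constrained $\argmin$ before invoking the minimality of $v_{cur}$, which is precisely where the inclusion $S\subset X_{\star}$ is used. The constants are otherwise forced by the single inequality $4\delta/5<\delta$, and the mild closure discrepancy between the actual sampling domain in the flow regime and $C'$ as written is harmless, since the argument uses only the inclusion $x_c+(\delta/5)\mathbb{B}\subset S$ and the measure of the enclosed ball.
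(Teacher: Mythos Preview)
Your proof is correct and follows essentially the same approach as the paper: both condition on the sufficient event $x_{rand}\in x_c+(\delta/5)\mathbb{B}$, use the triangle inequality to obtain $|x_{rand}-\overline{x}_v|\le 3\delta/5$, and then conclude $|\overline{x}_{v_{cur}}-x_c|\le 4\delta/5<\delta$ before invoking the uniform-sampling measure formula. Your direct use of the minimality of $v_{cur}$ (plus the explicit feasibility check $\overline{x}_v\in S\subset X_\star$) is a slightly cleaner variant of the paper's case analysis, but the core geometric argument is identical.
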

\begin{proof}
	See Appendix \ref{section:proof_near}.
\end{proof}
\begin{remark}
	Lemma \ref{lemma:nearestvertex} shows that given $x_{c}\in \mathbb{R}^{n}$, when there exists a vertex $v$  such that $\overline{x}_{v}\in x_{c} + \pn{(}2\delta/5\pn{)}\mathbb{B}$, then the probability that the function call $\texttt{nearest\_neighbor}$ selects a vertex that is close to $x_{c}$ is bounded from below by a positive constant. This lemma is used to provide a positive lower bound over the probability that a vertex that is close enough to the motion plan is returned by the function $\texttt{nearest\_neighbor}$ in Algorithm \ref{algo:hybridRRT}.\EndRemark
\end{remark}
The following lemma characterizes the probability that, given an initial state near a specific motion plan as input to the function call $\texttt{new\_state}$, the simulated solution pair in the flow regime is within the clearance of this motion plan. 
\begin{lemma}
	\label{lemma:pccontinuouslowerbound}
	Given a hybrid system $\mathcal{H}$ with state $x\in \myreals[n]$ and input $u\in\myreals[m]$ that satisfies Assumption \ref{assumption:flowlipschitz} and an input library that satisfies item 1 of Definition \ref{assumption:inputlibrary}, let $\psi = (\phi, \myinputt)$ be a purely continuous solution pair to $\mathcal{H}$ with clearance $\delta > 0$, $(\tau, 0) = \max \dom \psi$, and constant input function $\myinputt$. Suppose that $\tau\in (0, T_{m}]$, where $T_{m}$ comes from item 1 in Definition \ref{assumption:inputlibrary}. 
	Let $\psi_{new} = (\phi_{new}, \myinputt_{new})$ be the purely continuous solution pair generated by (\ref{newstate:flow}) in the function call $\texttt{new\_state}$ and initial state $\overline{x}_{v_{cur}} = \phi_{new}(0, 0) \in \phi(0, 0) + \kappa_{1}\delta\mathbb{B}$ for some $\kappa_{1}\in (0, 1/2)$.
	When HyRRT is executed uniformly as defined in Definition \ref{assumption:uniformsample}, for each $\kappa_{2}\in (2\kappa_{1}, 1)$ and each $\epsilon\in (0, \frac{\kappa_{2}\delta}{2})$, there exists $p_{t}\in (0, 1]$ such that 
	\begin{equation}
	\label{equation:lemmaflow}
	\begin{aligned}
	&\mbox{\rm Prob}[E_{1} \& E_{2}]\geq \\
	&p_{t}\frac{\zeta_{n} \left(\max \left\{\min \left\{\frac{\frac{\kappa_{2}\delta}{2} - \epsilon -\exp(K_{x}^{f}\tau)\kappa_{1}\delta}{K^{f}_{u}\tau \exp(K_{x}^{f}\tau)}, \delta\right\}, 0\right\}\right)^{m}}{\mu(U_{C})}.
	\end{aligned}
	\end{equation}
	where 
	\begin{enumerate}
		\item $E_{1}$ denotes the probabilistic event that $\phi$ and $\phi_{new}$ are $(\overline{\tau}, \kappa_{2}\delta)$-close\pn{,} where $(\tau', 0) = \max \dom \phi_{new}$ and $\overline{\tau} = \max \{\tau, \tau'\}$;
		\item $E_{2}$ denotes the probabilistic event that $x_{new} = \phi_{new}(\tau', 0)\in \phi(\tau, 0) + \kappa_{2}\delta \mathbb{B}$, where $x_{new}$ stores the final state of $\phi_{new}$ in the function call $\texttt{new\_state}$ as is introduced in Section \ref{section:newstate},
	\end{enumerate} $\zeta_{n}$ is given in (\ref{equation:zetan}), $\mu(U_{C})$ denotes the Lebesgue measure of $U_{C}$, and $K_{x}^{f}$ and $K_{u}^{f}$ come from Assumption \ref{assumption:flowlipschitz}.  
\end{lemma}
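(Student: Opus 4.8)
The plan is to follow the strategy of \cite[Lemma 3]{kleinbort2018probabilistic}: first derive a deterministic Gr\"onwall-type bound on $|\phi(t,0)-\phi_{new}(t,0)|$ in terms of the initial-state mismatch and the input-value mismatch, then convert the requirement ``this bound is small enough'' into a ball of admissible input values whose sampling probability is computed via \eqref{equation:probabilitylebesgue}, and finally account for the random choice of the input \emph{duration} through the factor $p_t$. Independence of the value draw and the duration draw (guaranteed by the input-library construction and uniform execution, Definition~\ref{assumption:uniformsample}) lets the two probabilities multiply into the product appearing in \eqref{equation:lemmaflow}.

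Concretely, I would fix the constant value $u^{\ast}$ of $\myinputt$ and the value $u_{new}$ of the sampled signal and set $e(t):=|\phi(t,0)-\phi_{new}(t,0)|$. Splitting $f(\phi,u^{\ast})-f(\phi_{new},u_{new})$ into a state part and an input part and invoking Assumption~\ref{assumption:flowlipschitz} gives $\dot e(t)\le K_x^f e(t)+K_u^f|u^{\ast}-u_{new}|$ for almost all $t$ in the common domain, so the integrating-factor/Gr\"onwall inequality yields
\begin{equation*}
e(t)\le e(0)\exp(K_x^f t)+K_u^f|u^{\ast}-u_{new}|\,t\exp(K_x^f t),
\end{equation*}
where $e(0)\le\kappa_1\delta$ by hypothesis. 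Since the right-hand side is increasing in $t$, it suffices to bound it at $t=\tau$; requiring that value to be at most $\kappa_2\delta/2-\epsilon$ and solving for $|u^{\ast}-u_{new}|$ produces exactly the threshold $\rho^{\ast}:=\frac{\kappa_2\delta/2-\epsilon-\exp(K_x^f\tau)\kappa_1\delta}{K_u^f\tau\exp(K_x^f\tau)}$ in the statement. Clipping to $[0,\delta]$ (the $\max\{\min\{\cdot,\delta\},0\}$ operation) keeps the input inside the clearance tube of $\psi$ and absorbs the degenerate case $\rho^{\ast}\le0$, where the asserted lower bound is vacuous.

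Next I need a continuation argument guaranteeing that $\phi_{new}$ actually flows for the full sampled duration $t_m$: as long as $e(t)\le\kappa_2\delta/2-\epsilon<\delta$ and $|u^{\ast}-u_{new}|\le\delta$, the dynamics clearance of $\psi$ forces $(\phi_{new}(t,0),u_{new})\in C$, so the simulator does not terminate early and $\tau'=t_m$. A standard sup-of-good-times bootstrap closes this, the strict inequality in the Gr\"onwall bound preventing $e$ from reaching $\delta$ before $t_m$. With $e(t)\le\kappa_2\delta/2-\epsilon$ on $[0,\min\{\tau,\tau'\}]$ established, I would verify both events: $E_2$ follows from $|\phi_{new}(\tau',0)-\phi(\tau,0)|\le e(\tau')+|\phi(\tau',0)-\phi(\tau,0)|$, bounding the second term by $M|\tau-\tau'|$ with $M:=\sup_{t\in[0,\tau]}|f(\phi(t,0),u^{\ast})|<\infty$ (finite by compactness of $\dom\psi$ and continuity of $f$); and $E_1$, i.e. $(\overline\tau,\kappa_2\delta)$-closeness in the sense of Definition~\ref{definition:closeness}, follows by taking $s=t$ on the overlap $[0,\tau']$ and $s=\tau'$ on the tail $(\tau',\tau]$, again controlling the tail by $M|\tau-\tau'|$. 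Restricting the duration draw to a window $[\tau-\eta,\tau]$ with $\eta$ small enough that $M\eta\le\epsilon$ and $\eta<\kappa_2\delta$ makes every estimate strictly below $\kappa_2\delta$; the probability of drawing $t_m$ in this window is the positive constant $p_t$.

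Finally I would assemble the pieces: conditioned on a good duration (probability $p_t$), the event $\{|u^{\ast}-u_{new}|\le\rho\}$ with $\rho=\max\{\min\{\rho^{\ast},\delta\},0\}$ forces both $E_1$ and $E_2$, and its probability under uniform sampling from $U_C\subset\mathbb{R}^m$ is $\mu(\rho\mathbb{B})/\mu(U_C)=\zeta_{m}\rho^{m}/\mu(U_C)$ by \eqref{equation:probabilitylebesgue} (the $\rho^{m}$ matching the statement); multiplying yields the claimed lower bound. I expect the main obstacle to be the careful bookkeeping needed to translate the pointwise state estimate into the formal $(\tau,\epsilon)$-closeness conditions across the possibly unequal horizons $\tau$ and $\tau'$, together with making the continuation argument rigorous so that the Gr\"onwall estimate may legitimately be applied on the entire interval it is meant to certify; the Gr\"onwall computation and the probability factorization are themselves routine.
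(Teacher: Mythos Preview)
Your proposal is correct and follows essentially the same route as the paper: both arguments pivot on the Gr\"onwall estimate of \cite[Lemma~2]{kleinbort2018probabilistic} to identify the input-value threshold $\rho^{\ast}$, both factor the probability as (good duration)\,$\times$\,(good input value), and both invoke the dynamics clearance to keep $\psi_{new}$ inside $C$ (hence avoiding early termination and unsafe intersections). The only noteworthy difference is in how the ``good duration'' set supporting $p_t$ is specified: the paper defines it geometrically as $T_k=\{t_l\in[\max\{\tau-\kappa_2\delta,0\},\tau]:\phi(t',0)+(\kappa_2\delta/2-\epsilon)\mathbb{B}\subset\phi(\tau,0)+(\kappa_2\delta/2)\mathbb{B}\ \forall t'\in[t_l,\tau]\}$ and shows $\mu(T_k)>0$ by continuity of $\phi$, whereas you use the more explicit window $[\tau-\eta,\tau]$ with $\eta$ chosen via $M\eta\le\epsilon$ and $\eta<\kappa_2\delta$; both constructions exploit the same continuity fact and yield a positive $p_t$. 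Your version is slightly more self-contained (and you are more explicit about the continuation argument ensuring $\tau'=t_m$, which the paper leaves implicit), while the paper's $T_k$ packages the closeness verification for $E_1$ and $E_2$ into two auxiliary lemmas rather than inlining it as you do.
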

\begin{proof}
	See Appendix \ref{section:proof_continuous}.
\end{proof}
\begin{remark}
	Since $\kappa_{1}\in (0, 1/2)$, $\kappa_{2}\in (2\kappa_{1}, 1)$ and $\delta > 0$, therefore, we have
	\begin{subequations}
		\label{equation:lowboundflow3}
		\begin{align}
		\kappa_{1}\delta < \frac{1}{2}\delta\label{equation:lowboundflow3b}\\
		\kappa_{2}\delta < \delta,\label{equation:lowboundflow3d}
		\end{align}
	\end{subequations}
	ensuring that there is no intersection between $\psi_{new}$ and $X_{u}$, which prevents the function call $\texttt{new\_state}$ from returning $\psi_{new}$, as stated in Section \ref{section:newstate}.

To ensure that Lemma \ref{lemma:pccontinuouslowerbound} provides a positive lower bound, the following additional requirement is imposed on $\kappa_{1}$ and $\kappa_{2}$:
\begin{equation}
\frac{\kappa_{2}}{2} > \kappa_{1}\label{equation:lowboundflow3c}
\end{equation}
Then, there exists $\epsilon\in (0, \frac{\kappa_{2}\delta}{2})$ satisfying
	\begin{equation}\label{equation:lowboundflow1}
		\epsilon < \frac{\kappa_{2}\delta}{2} - \exp(K_{x}^{f}\tau)\kappa_{1}\delta
	\end{equation}
	to guarantee that the first argument in the $\min$ operator in (\ref{equation:lemmaflow}) is positive. 
	From (\ref{equation:lowboundflow1}) and $\epsilon > 0$, any $\tau \in \myreals_{> 0}$ satisfying the following condition suffices for the existence of $\epsilon$ satisfying (\ref{equation:lowboundflow1}):
	\begin{equation}\label{equation:lowboundflow2}
		\tau < \frac{\ln{\frac{\kappa_{2}}{2\kappa_{1}}}}{K^{f}_{x}}\\
	\end{equation}
	where, by (\ref{equation:lowboundflow3c}), $\frac{\ln{\frac{\kappa_{2}}{2\kappa_{1}}}}{K^{f}_{x}} > \frac{\ln{\frac{2\kappa_{1}}{2\kappa_{1}}}}{K^{f}_{x}} = 0.$ 
	
	Therefore, if $\kappa_{1}$ and $\kappa_{2}$ adhere to the constraints in (\ref{equation:lowboundflow3b}), (\ref{equation:lowboundflow3d}) and (\ref{equation:lowboundflow3c})  and if $\kappa_{1}$, $\kappa_{2}$, and $\tau$ satisfy the conditions in (\ref{equation:lowboundflow2}), then the lower bound in (\ref{equation:lemmaflow}) is guaranteed to be positive.
\end{remark}
The following result characterizes the probability that, given an initial state near a specific motion plan as input to the function call $\texttt{new\_state}$, the simulated solution pair computed by the function call $\texttt{new\_state}$ in the jump regime is within the clearance of this motion plan. 
\begin{lemma}
	\label{lemma:pcdiscretelowerbound}
	Given a hybrid system $\mathcal{H}$ with state $x\in \myreals[n]$ and input $u\in\myreals[m]$ that satisfies Assumption \ref{assumption:pcjumpmap} and an input library that satisfies item 2 of Definition \ref{assumption:inputlibrary}, let $\psi = (\phi, \myinputt)$ be a purely discrete solution pair to $\mathcal{H}$ with a single jump, i.e., $\max \dom \psi = (0, 1)$ and clearance $\delta > 0$.
	Let $\psi_{new} = (\phi_{new}, \myinputt_{new})$ be the purely discrete solution pair generated by (\ref{newstate:jump}) in the function call $\texttt{new\_state}$ and initial state $\overline{x}_{v_{cur}} = \phi_{new}(0, 0)\in \phi(0, 0) + \kappa_{1}\delta\mathbb{B}$ for some positive $\kappa_{1}\in (0, 1)$.
	When HyRRT is executed uniformly as defined in Definition \ref{assumption:uniformsample}, for each $\kappa_{2}\in (0, 1)$,  we have that
	\begin{equation}\label{equation:lemmadiscrete}
	\mbox{\rm Prob}[E]\geq \frac{\zeta_{n} \left(\max \left\{\min \left\{\frac{(\kappa_{2} - K^{g}_{x}\kappa_{1})\delta}{K^{g}_{u}}, \delta\right\}, 0\right\}\right)^{m}}{\mu(U_{D})}
	\end{equation}
	where $E$ denotes the probabilistic event that $x_{new} = \phi_{new}(0, 1) \in \phi(0, 1) + \kappa_{2} \delta\mathbb{B}$, $x_{new}$ stores the final state of $\phi_{new}$ in the function call $\texttt{new\_state}$ as is introduced in Section \ref{section:newstate}, $\zeta_{n}$ is given in (\ref{equation:zetan}), $\mu(U_{D})$ denotes the Lebesgue measure of $U_{D}$, and $K^{g}_{x}$ and $K^{g}_{u}$ come from Assumption \ref{assumption:pcjumpmap}.  
\end{lemma}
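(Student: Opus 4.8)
The plan is to reduce the event $E$ to an explicit membership condition on the randomly sampled jump input value $u_{D}$, and then to apply the uniform-sampling probability formula (\ref{equation:probabilitylebesgue}). The discrete case is simpler than the flow case in Lemma \ref{lemma:pccontinuouslowerbound} because a single jump is instantaneous: there is no input duration to sample, so only the accuracy of the sampled input value $u_{D}$ matters and no factor analogous to $p_{t}$ appears.

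First I would write the two jump outcomes explicitly. The motion plan satisfies $\phi(0,1) = g(\phi(0,0), \myinputt(0,0))$, while the discrete simulator invoked in (\ref{newstate:jump}) (see (\ref{equation:jumpsegment})) produces $x_{new} = \phi_{new}(0,1) = g(\overline{x}_{v_{cur}}, u_{D})$, where $u_{D}$ is drawn uniformly from $\mathcal{U}_{D} = U_{D}$ (item 2 of Definition \ref{assumption:inputlibrary}). Applying Assumption \ref{assumption:pcjumpmap} together with the hypothesis $|\overline{x}_{v_{cur}} - \phi(0,0)| \leq \kappa_{1}\delta$ gives
$$
|x_{new} - \phi(0,1)| \leq K^{g}_{x}|\overline{x}_{v_{cur}} - \phi(0,0)| + K^{g}_{u}|u_{D} - \myinputt(0,0)| \leq K^{g}_{x}\kappa_{1}\delta + K^{g}_{u}|u_{D} - \myinputt(0,0)|.
$$
Hence, whenever $|u_{D} - \myinputt(0,0)| \leq r := \frac{(\kappa_{2} - K^{g}_{x}\kappa_{1})\delta}{K^{g}_{u}}$, the right-hand side is at most $\kappa_{2}\delta$, so the event $E$ occurs. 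This identifies a ball of candidate inputs around $\myinputt(0,0)$ that forces $E$.

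Next I would verify that these candidate inputs are admissible, i.e., that they keep the generated jump valid and lie in the set actually being sampled. Applying the dynamics clearance (item 2 of Definition \ref{definition:dynamicsclearance}) at the single jump of $\psi$ yields $(\phi(0,0) + \delta\mathbb{B}, \myinputt(0,0) + \delta\mathbb{B}) \subset D$; in particular $\myinputt(0,0) + \delta\mathbb{B} \subset U_{D}$, and since $\kappa_{1} < 1$, every $u_{D} \in \myinputt(0,0) + \delta\mathbb{B}$ satisfies $(\overline{x}_{v_{cur}}, u_{D}) \in D$, so the jump computed by $\texttt{new\_state}$ is well defined. Consequently, setting $\rho := \max\{\min\{r, \delta\}, 0\}$, the ball $\myinputt(0,0) + \rho\mathbb{B}$ is contained in $U_{D}$ and, for any $u_{D}$ in it, both the admissibility constraint ($\rho \leq \delta$) and the accuracy condition ($\rho \leq r$) hold. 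The safety clearance in Definition \ref{definition:safetyclearance}, combined with $\kappa_{2} < 1$, further ensures $x_{new}$ and its input avoid $X_{u}$, so the new vertex is genuinely added.

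Finally, since HyRRT is executed uniformly and $u_{D}$ is sampled uniformly from $U_{D}$, which has finite positive Lebesgue measure by Assumption \ref{assumption:finitesets}, the formula (\ref{equation:probabilitylebesgue}) gives
$$
\mbox{\rm Prob}[E] \geq \mbox{\rm Prob}\big[u_{D} \in \myinputt(0,0) + \rho\mathbb{B}\big] = \frac{\mu(\myinputt(0,0) + \rho\mathbb{B})}{\mu(U_{D})},
$$
and evaluating the numerator as the measure of a ball of radius $\rho$ in input space yields the claimed bound (\ref{equation:lemmadiscrete}). The main obstacle is not the probability computation, which is routine once the admissible input ball is pinned down, but rather rigorously justifying the containment $\myinputt(0,0) + \rho\mathbb{B} \subset U_{D}$ and the validity $(\overline{x}_{v_{cur}}, u_{D}) \in D$ of the generated jump; both hinge on invoking the \emph{dynamics} clearance (as opposed to the safety clearance) at the jump time. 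The truncations $\min\{\cdot, \delta\}$ and $\max\{\cdot, 0\}$ must be tracked carefully: the $\min$ handles the regime $r > \delta$, where admissibility rather than accuracy is binding, and the $\max$ handles the degenerate case $r \leq 0$, where the candidate set is empty and the bound is trivially nonnegative.
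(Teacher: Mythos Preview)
Your proposal is correct and follows essentially the same approach as the paper: both apply Assumption~\ref{assumption:pcjumpmap} to bound $|x_{new}-\phi(0,1)|$ by $K^{g}_{x}\kappa_{1}\delta + K^{g}_{u}\Delta u$, isolate the condition $\Delta u \leq (\kappa_{2}-K^{g}_{x}\kappa_{1})\delta/K^{g}_{u}$, cap it by $\delta$, and read off the probability from (\ref{equation:probabilitylebesgue}). The only minor difference is emphasis: the paper justifies the cap $\Delta u\leq\delta$ solely by the need to avoid $X_{u}$ in \texttt{new\_state}, whereas you invoke the dynamics clearance to secure $(\overline{x}_{v_{cur}},u_{D})\in D$ and $u_{D}\in U_{D}$ as well---a more complete justification, but not a different argument.
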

\begin{proof}
	See Appendix \ref{section:proof_discrete}.
\end{proof}
\begin{remark}
	Since $\kappa_{1}\in (0, 1]$, $\kappa_{2}\in (0, 1]$ and $\delta > 0$, therefore, we have
	\begin{subequations}
		\label{equation:lowboundjump3}
		\begin{align}
		\kappa_{1}\delta < \delta\label{equation:lowboundjump3b1}\\
		\kappa_{2}\delta < \delta,\label{equation:lowboundjump3d}
		\end{align}
	\end{subequations}
ensuring that there is no intersection between $\psi_{new}$ and $X_{u}$, which prevents the function call $\texttt{new\_state}$ from returning $\psi_{new}$. 

	To ensure that the lower bound in (\ref{equation:lemmadiscrete}) is positive, an additional requirement is imposed on $\kappa_{1}$ and $\kappa_{2}$:
	\begin{equation}
	\label{equation:jumppossibilityinequality}
	\frac{\kappa_{2}}{\kappa_{1}} > K^{g}_{x},
	\end{equation}
	Since this implies that $\frac{(\kappa_{2} - K^{g}_{x}\kappa_{1})\delta}{K^{g}_{u}} > 0$,
	the first argument of the $\min$ operator in (\ref{equation:lemmadiscrete}) is positive.
	Therefore, if $\kappa_{1}$ and $\kappa_{2}$ adhere to the constraints in (\ref{equation:lowboundjump3b1}), (\ref{equation:lowboundjump3d}), and (\ref{equation:jumppossibilityinequality}), then the lower bound in (\ref{equation:lemmadiscrete}) is guaranteed to be positive.
	
	Note that if $\phi_{new}(0, 0)\in \phi(0, 0) + \kappa_{1}\delta\mathbb{B}$ and $\phi_{new}(0, 1)\in \phi(0, 1) + \kappa_{2}\delta\mathbb{B}$ are satisfied, $\phi$ and $\phi_{new}$ are $(1, \max \{\kappa_{1}\delta, \kappa_{2}\delta\})$-close.
	\EndRemark
\end{remark}
\subsection{Probabilistic Completeness Guarantee of Finding a Motion Plan with Positive Clearance}\label{section:probabilisticproposition}
We then establish a preliminary result demonstrating that,  if there exists a motion plan to $\mathcal{P}$ with positive clearance and all assumptions outlined in Section \ref{section:assumption} are met, HyRRT is probabilistically complete. 
\begin{proposition}
	\label{theorem:pc}
	Given a motion planning problem $\mathcal{P} = (X_{0}, X_{f}, X_{u}, (C, f,  D, g))$, suppose that Assumptions \ref{assumption:finitesets}, \ref{assumption:flowlipschitz}, and \ref{assumption:pcjumpmap} are satisfied, and there exists a compact motion plan $(\mystatet, \myinputt)$ to $\mathcal{P}$ with clearance $\delta > 0$ and piecewise-constant input. Then, using a complete input library and when executed uniformly as defined in Definition \ref{assumption:uniformsample}, the probability that HyRRT  fails to find a motion plan $\psi' = (\phi', \myinputt')$ such that $\phi'$ is $(\tilde{\tau}, \delta)$-close to $\phi$ after $k$ iterations is at most $a e^{-bk}$, where $a, b\in \mathbb{R}_{> 0}$, $\tilde{\tau} = \max\{T+ J, T' + J'\}$, $(T, J) = \max \dom \psi$, and $(T', J') = \max \dom \psi'$.
\end{proposition}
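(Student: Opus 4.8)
The plan is to reduce the statement to a Bernoulli concentration argument along a fixed, finite discretization of the given motion plan. First I would apply the truncation operation of Definition~\ref{definition: truncation} to the compact motion plan $(\phi, \myinputt)$ to split it into finitely many \emph{elementary segments} $\psi^{1}, \dots, \psi^{m}$, where each $\psi^{i}$ is either a single jump or a purely continuous piece carrying a constant input. Because $(\phi,\myinputt)$ is compact and its input is piecewise constant, only finitely many segments arise; moreover, since the input is constant on each flow piece, a flow piece may be subdivided without changing the trajectory, so I would refine the truncation just enough that every flow segment has duration $\tau_i \le T_m$ and obeys the bound (\ref{equation:lowboundflow2}) demanded by Lemma~\ref{lemma:pccontinuouslowerbound}. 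Writing $w_i$ for the $m+1$ \emph{waypoints} at the segment endpoints, the clearance $\delta>0$ guarantees that each $w_i+\delta\mathbb{B}$ lies in the relevant projected constraint set ($C'$ or $D'$) and that a constant-input flow or a jump issued near $w_i$ stays collision-free; this is precisely the regime in which Lemmas~\ref{lemma:nearestvertex}, \ref{lemma:pccontinuouslowerbound}, and~\ref{lemma:pcdiscretelowerbound} apply.

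Next I would fix a \emph{tolerance budget}: a finite nondecreasing sequence of fractions $\kappa^{(0)}\le\kappa^{(1)}\le\dots\le\kappa^{(m)}<1$ of $\delta$ such that, at each stage, the start tolerance $\kappa^{(i)}$ and the end tolerance $\kappa^{(i+1)}$ are admissible for the relevant lemma (i.e.\ $\kappa^{(i+1)}>2\kappa^{(i)}$ and the duration bound (\ref{equation:lowboundflow2}) for flow, or $\kappa^{(i+1)}/\kappa^{(i)}>K^{g}_{x}$ as in (\ref{equation:jumppossibilityinequality}) for a jump), while the $\tfrac{2}{5}$-contraction input requirement of Lemma~\ref{lemma:nearestvertex} chains, so that a vertex produced near $w_{i}$ serves as a valid $v_{cur}$ for producing a vertex near $w_{i+1}$. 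Since the per-stage inflation factor is a fixed constant and $m$ is fixed, such a budget exists provided the base tolerance $\kappa^{(0)}$ is chosen small enough that the compounded inflation keeps $\kappa^{(m)}<1$. I would then define the \emph{stage-$i$ success event} as the conjunction of: selecting the correct regime (probability $p_{n}$ or $1-p_{n}$); drawing $x_{rand}$ so that $\texttt{nearest\_neighbor}$ returns a vertex near $w_i$ (Lemma~\ref{lemma:nearestvertex}); and drawing an input from $\mathcal{U}_{C}$ or $\mathcal{U}_{D}$ so that $\texttt{new\_state}$ places $x_{new}$ in the ball of radius $\kappa^{(i+1)}\delta$ about $w_{i+1}$ while staying within the tube of radius $\delta$ (Lemma~\ref{lemma:pccontinuouslowerbound} or~\ref{lemma:pcdiscretelowerbound}). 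Under uniform execution (Definition~\ref{assumption:uniformsample}), a complete input library (Definition~\ref{assumption:inputlibrary}), and Assumption~\ref{assumption:finitesets}, each factor admits a positive lower bound, so their product yields a single constant $\rho\in(0,1)$ with $\mbox{\rm Prob}[\text{stage-}i\text{ success}]\ge\rho$ for all $i$. The base case (a vertex near $w_0$) follows from $\mathcal{T}.\texttt{init}$ together with $w_0+\delta\mathbb{B}\subset X_0$, and Proposition~\ref{theorem:concatenationsolution} certifies that concatenating the $m$ generated edges gives a genuine solution pair to $\mathcal{H}$ whose state is $(\tilde{\tau},\delta)$-close to $\phi$ in the sense of Definition~\ref{definition:closeness}.

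Finally I would carry out the concentration estimate. Because the draws at distinct iterations are independent under uniform execution, whenever the tree has reached stage $i$ every subsequent iteration advances it to stage $i+1$ with conditional probability at least $\rho$; hence the number of completed stages in $k$ iterations stochastically dominates a $\mathrm{Binomial}(k,\rho)$ variable. HyRRT can fail to return the desired $(\tilde{\tau},\delta)$-close motion plan within $k$ iterations only if fewer than $m$ such advancements occur, i.e.\ only if that binomial variable is smaller than $m$. A standard Chernoff tail estimate bounds this probability by $a\exp(-bk)$ for suitable $a,b\in\mathbb{R}_{>0}$, where $a$ is taken large enough to cover the finitely many small-$k$ values for which the raw estimate would exceed one; this is exactly the asserted rate.

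I expect the second step to be the main obstacle. The per-segment lemmas only propagate a start ball of radius $\kappa^{(i)}\delta$ into a strictly larger end ball of radius $\kappa^{(i+1)}\delta$ (indeed $\kappa^{(i+1)}>2\exp(K^{f}_{x}\tau_i)\kappa^{(i)}$ for flows), while Lemma~\ref{lemma:nearestvertex} requires the incoming vertex to sit inside the smaller $\tfrac{2}{5}$-ball before it can be re-selected as $v_{cur}$; reconciling this inflation with the contraction so that the chain closes \emph{and} remains inside the clearance tube is the technical heart of the proof. This is precisely what makes the \textbf{fixed, finite} segment count $m$ essential — so that the compounded inflation factor stays bounded — and what forces $\kappa^{(0)}$ to be small, yielding a small but strictly positive $\rho$. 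Once a consistent budget is exhibited, the concatenation feasibility and the binomial tail bound are routine.
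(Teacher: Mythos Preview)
Your proposal is correct and follows essentially the same route as the paper: discretize the compact motion plan into finitely many elementary flow/jump segments, thread a geometrically growing sequence of tolerance radii through Lemmas~\ref{lemma:nearestvertex}, \ref{lemma:pccontinuouslowerbound}, and \ref{lemma:pcdiscretelowerbound} (the paper makes your ``tolerance budget'' explicit as a geometric sequence with ratio $q>\max\{5/2,K^{g}_{x}\}$ and initial term $\delta/q^{m+1}$, which is exactly how it reconciles the inflation you flag with the $2/5$-contraction), and finish with a binomial tail bound. The only cosmetic differences are that the paper uses an elementary $\binom{n}{m-1}p^{n}$ estimate in place of Chernoff, and the $(\tilde\tau,\delta)$-closeness of the concatenated output is supplied by Proposition~\ref{proposition:concatenateclose} (together with the summability condition $\sum r_{i}<\delta$) rather than by Proposition~\ref{theorem:concatenationsolution}, which only handles the solution-pair property.
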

\begin{proof}
	To show the claim, first, given a compact motion plan $(\mystatet, \myinputt)$ to $\mathcal{P}$ with clearance $\delta > 0$, we construct a sequence of hybrid time instances 
	\begin{equation}\label{equation:calT}
		\mathbb{T} := \{(T_{i}, J_{i})\in\dom (\mystatet, \myinputt)\}_{i = 0}^{m}
	\end{equation}
	for some $m \in \nnumbers\backslash\{0\}$ to be chosen later. Alongside, we construct a corresponding geometric sequence\endnote{A geometric sequence is a sequence of non-zero numbers where each term after the first is found by multiplying the previous one by a fixed, non-zero number called the common ratio.} of positive real numbers
	\begin{equation}
		\mathbb{D} := \{r_{i} \in \mathbb{R}_{> 0}\}_{i = 0}^{m},
	\end{equation} with positive common ratio $q\in \mathbb{R}_{> 0}$, in which case, 
	$$
	r_{i} := q^{i}r_{0} \quad \forall i \in \{0, 1, ..., m\}.
	$$

	Later in this proof, for each $i \in \{0, 1, ..., m - 1\}$, 	Lemmas \ref{lemma:nearestvertex}, \ref{lemma:pccontinuouslowerbound}, and \ref{lemma:pcdiscretelowerbound} are utilized to provide positive lower bounds for their respective probabilistic events for the balls $\mystatet(T_{i}, J_{i}) + r_{i}\myball$ and $\mystatet(T_{i + 1}, J_{i + 1}) + r_{i + 1}\myball$, where $(T_{i}, J_{i}), (T_{i + 1}, J_{i + 1}) \in \mathbb{T}$ and $r_{i}, r_{i + 1}\in \mathbb{D}$. This requires that $\mystatet(T_{i}, J_{i}) + r_{i}\myball$ and $\mystatet(T_{i + 1}, J_{i + 1}) + r_{i + 1}\myball$ meet each condition to establish these positive lower bounds.
	
	The construction of $\mathbb{D}$ starts with selecting a proper common ratio $q$ and the initial term $r_{0}$ in $\mathbb{D}$. To ensure that the positive lower bounds stated in Lemmas \ref{lemma:nearestvertex}, \ref{lemma:pccontinuouslowerbound}, and \ref{lemma:pcdiscretelowerbound} hold, the geometric sequence $\mathbb{D}$ must meet the following conditions:
	\begin{enumerate}[label=D\arabic*)]
		\item For each $i\in \{0, 1, ..., m - 1\}$, $\frac{5}{2}r_{i} < r_{i + 1}$ to meet the requirements for Lemma \ref{lemma:nearestvertex};
		\item For each $i\in \{0, 1, ..., m - 1\}$, $2r_{i} < \min \{\delta, r_{i + 1}\}$  to meet the conditions for Lemma \ref{lemma:pccontinuouslowerbound} in (\ref{equation:lowboundflow3b}) and (\ref{equation:lowboundflow3c}) and the condition for Lemma \ref{lemma:pcdiscretelowerbound} in (\ref{equation:lowboundjump3}). In addition, it is required that $r_{m} <  \delta$ as per  (\ref{equation:lowboundflow3d});
		\item For each $i\in \{0, 1, ..., m - 1\}$, the condition $K^{g}_{x}r_{i} < r_{i + 1}$ is essential to comply with the prerequisites for Lemma \ref{lemma:pcdiscretelowerbound} in (\ref{equation:jumppossibilityinequality}), where $K^{g}_{x}$ comes from Assumption \ref{assumption:pcjumpmap} and is fixed;
		\item $\sum_{i = 0}^{m} r_{i} < \delta$, thereby ensuring, as per Proposition \ref{proposition:concatenateclose}, that the concatenation of all the $\phi_{new}$'s, which are returned by the function call $\texttt{new\_state}$ in Section \ref{section:newstate} and satisfy the properties introduced in probabilistic events in Lemmas  \ref{lemma:pccontinuouslowerbound} and \ref{lemma:pcdiscretelowerbound}, remains within the clearance $\delta$.
	\end{enumerate} 
	To satisfy D1 - D3, the common ratio $q$ is selected as any positive number satisfying the following inequality:
	$$
	q > \max\left\{\frac{5}{2}, 2, K_{x}^{g}\right\} = \max\left\{\frac{5}{2}, K_{x}^{g}\right\}
	$$
	which inherently implies $q > \frac{5}{2}$. 
	Given that $q > \frac{5}{2} > 1$, the sequence $\mathbb{D}$ is, therefore, monotonically increasing.  With the selected $q$, the initial term in $\mathbb{D}$ is selected as
	$$
	r_{0} = \frac{\delta}{q^{m + 1}},
	$$ ensuring that, for each $i\in \{0, 1, ..., m -1\}$,
	$2r_{i} < 2r_{m - 1}  = \frac{2\delta}{q^2} < \frac{8\delta}{25} < \delta$ and $r_{m} = \frac{\delta}{q} < \frac{2\delta}{5} < \delta$.  Furthermore, the sum\endnote{The closed-form of the sum of a geometric sequence is $\sum_{i = 0}^{m} r_{i} = \frac{r_{0}(1 - q^{m + 1})}{1 - q}$} of $\mathbb{D}$, given by 
	$$
	\begin{aligned}
	\sum_{i = 0}^{m} r_{i} &= \frac{\delta}{q^{m + 1}(1 - q)} - \frac{\delta}{1 - q} = \frac{\delta}{q - 1}\left(1 - \frac{1}{q^{m + 1}}\right) \\
	&< \frac{\delta}{q - 1}
	\end{aligned}
	$$ where $q > \frac{5}{2}$, demonstrates that $\sum_{i = 0}^{m} r_{i} < \frac{\delta}{\frac{5}{2} - 1}  = \frac{2\delta}{3} < \delta$, thereby satisfying D4.
	Therefore, this selection method satisfies D1 - D4 for each $m \in \nnumbers\backslash\{0\}$, with the specific value of $m$ to be selected subsequently.
	
	The sequence $\mathbb{T}$ is constructed such that for each $i\in \{0, 1, ..., m - 1\}$, $(T_{i}, J_{i})$ and $(T_{i + 1}, J_{i + 1})$ satisfy one of the following two conditions:  
	\begin{enumerate}[label=T\arabic*)]
		\item $T_{i} = T_{i + 1}$ and $J_{i + 1} = J_{i}  + 1$, indicating a jump from $(T_{i}, J_{i})$ to $(T_{i + 1}, J_{i + 1})$;
		\item $T_{i + 1} - T_{i} = \Delta t'$ and $J_{i + 1} = J_{i}$ for some $\Delta t' = \frac{\Delta t}{k}$, implying a purely continuous evolution from $(T_{i}, J_{i})$ to $(T_{i + 1}, J_{i + 1})$, where $\Delta t$ is the resolution of the piecewise-constant input as in Definition \ref{definition:piecewiseconstantfun}, $T_{m}$ is the upper bound for the time duration of the input signals in the complete $\mathcal{U}_{C}$ as outlined in Definition \ref{assumption:inputlibrary}, and $k\in \nnumbers\backslash\{0\}$ is chosen large enough such that $\Delta t' = \frac{\Delta t}{k}$ is small enough to satisfy
			$
			\Delta t'\leq \frac{\ln{\frac{q}{2}}}{K^{f}_{x}},
			$ thereby satisfying (\ref{equation:lowboundflow2}), and
			$
			 \Delta t'\leq T_{m}
			$ to establish a positive lower bound in Lemma~\ref{lemma:pccontinuouslowerbound}.
	\end{enumerate} 
Since $\myinputt$ is piecewise-constant, as per Definition \ref{definition:piecewiseconstantfun}, for any $k\in\nnumbers\backslash\{0\}$, $\Delta t' = \frac{\Delta t}{k}$ is also a resolution of the piecewise-constant input $\myinputt$. Furthermore, Definition \ref{definition:mppiecewiseconstant} implies that if $(t, j)\in \dom \myinputt$, $(t, j + 1) \notin \dom \myinputt$ and $t + \Delta t' \leq \max_{t} \dom \myinputt$, it follows that $(t + \Delta t', j)\in \dom (\mystatet, \myinputt)$; see (\ref{equation:maxt}) for the definition of $\max_{t}$. Therefore, it is indeed feasible to construct $\mathbb{T}$ by sweeping through the hybrid time domain of $(\mystatet, \myinputt)$ as follows: Starting from the initial assignment $i \leftarrow 0$ and $(T_{i}, J_{i}) \leftarrow (0, 0)$,  \begin{enumerate}
	\item if $(T_{i}, J_{i} + 1) \in \dom (\mystatet, \myinputt)$, it follows that $(T_{i + 1}, J_{i + 1})\leftarrow (T_{i}, J_{i } + 1)$, which satisfies T1;
	\item if $(T_{i}, J_{i} + 1) \notin \dom (\mystatet, \myinputt)$, it follows that $(T_{i + 1}, J_{i + 1})\leftarrow (T_{i} + \Delta t', J_{i })$, which satisfies~T2. 
\end{enumerate} This process is continued by incrementing $i\leftarrow i + 1$ until $(T_{i}, J_{i}) = \max \dom (\mystatet, \myinputt)$. 
This construction of $\mathbb{T}$ results in
$$
m = \frac{T}{\Delta t'} + J + 1
$$
where $(T, J) = \max \dom (\mystatet, \myinputt)$.

With this construction, the motion plan between $(T_{i}, J_{i})$ and $(T_{i + 1}, J_{i + 1})$ is either purely continuous or purely discrete, which allows the use of Lemmas~\ref{lemma:pccontinuouslowerbound} and ~\ref{lemma:pcdiscretelowerbound} in each respective regime. Therefore, the construction of $\mathbb{T}$ and $\mathbb{D}$ adheres to all the conditions required to apply Lemmas~\ref{lemma:nearestvertex}, \ref{lemma:pccontinuouslowerbound}, and \ref{lemma:pcdiscretelowerbound}.

Next, we partition the compact motion plan into a finite number of segments. By demonstrating that the probability of HyRRT generating $\mystatet_{new}$ close to each segment is positive, we establish that the probability of HyRRT failing to find a motion plan converges to zero as the number of iterations approaches infinity. Using the truncation and translation operations defined in Definition ~\ref{definition: truncation}, for each $i\in \{0, 1, ..., m - 1\}$, let $(\mystatet_{i}, \myinputt_{i})$ represent the truncation of $(\mystatet, \myinputt)$ between $(T_{i}, J_{i})$ and $(T_{i + 1}, J_{i + 1})$ following the translation by $(T_{i}, J_{i})$. Given that $(T_{i}, J_{i})$ and $(T_{i + 1}, J_{i + 1})$ satisfy T1 or T2, it follows that each element in $\{\phi_{i}\}_{i = 0}^{m - 1}$ is either purely continuous with a constant input or purely discrete with a single jump.  
Then, with the search tree denoted $\mathcal{T} = (V, E)$, the proof proceeds by showing that the probability of each of the following probabilistic events is positive:
\begin{enumerate}[label=P\arabic*)]
	\item The function call $\mathcal{T}.\texttt{init}(X_{0})$ in Section \ref{section:init} adds a vertex associated with some $x_{0}\in \phi(T_{0}, J_{0}) + \frac{2}{5}r_{0}\mathbb{B}$ to $\mathcal{T}$. 
	\item For each $i\in \{0, 1, , m - 1\}$, given the existence of a vertex $v\in V$ such that $x_{v}\in \phi(T_{i}, J_{i}) + \frac{2}{5}r_{i}\mathbb{B}$, the function call $\texttt{extend}$ in Algorithm \ref{algo:extend} adds a new vertex to $\mathcal{T}$ associated with $x_{new}$, along with a corresponding new edge to $\mathcal{T}$ associated with $\psi_{new} = (\mystatet_{new}, \myinputt_{new})$, such that 
	\begin{enumerate}[label=PE\arabic*)]
		\item $x_{new}\in \phi(T_{i + 1}, J_{i + 1}) + \frac{2}{5}r_{i + 1}\mathbb{B}$;
		\item $\mystatet_{new}$ is $(\overline{\tau}, r_{i + 1})$-close to $\mystatet_{i}$, where\endnote{Given that $\phi_{i}$ is purely continuous, by T2 and Definition \ref{definition: truncation}, it follows that $\max_{t} \dom \mystatet_{i} = \Delta t'$} 
		$$
		\overline{\tau} = \left\{
		\begin{aligned}
		&\max \hspace{-0.2cm}&\{\text{max}_{t} \dom \mystatet_{new}, \text{max}_{t} \dom \mystatet_{i}\}\\
		&&\text{ if $\phi_{i}$ is purely continuous}\\
		&1&\text{ if $\phi_{i}$ is purely discrete}.\\
		\end{aligned}\right.
		$$
	\end{enumerate}
\end{enumerate}
The process of HyRRT in finding a  motion plan can be viewed as a Bernoulli trial\endnote{In the theory of probability and statistics, a Bernoulli trial is a random experiment with exactly two possible outcomes, success and failure.} (P1) following repeated Bernoulli trials (P2).
Denote the probability that the probabilistic event P1 occurs as $p_{init}$ and that P2 occurs as $p_{extend}$. Next, we show that $p_{init}$ and $p_{extend}$ are positive.
\begin{enumerate}
	\item 
	Given that HyRRT is executed uniformly (see Definition \ref{assumption:uniformsample}), by (\ref{equation:probabilitylebesgue}) and $\mu(\phi(T_{0}, J_{0}) + \frac{2}{5}r_{0}\mathbb{B}) > 0$, where $\mu(\phi(T_{0}, J_{0}) + \frac{2}{5}r_{0}\mathbb{B})$ represents the Lebesgue measure of the ball $\phi(T_{0}, J_{0}) + \frac{2}{5}r_{0}\mathbb{B}$, it follows that  $p_{init} > 0$.
	\item 
	The function calls $\texttt{nearest\_neighbor}$ and $\texttt{new\_state}$ are executed in the function call $\texttt{extend}$. We first show the probability that each of $\texttt{nearest\_neighbor}$ and $\texttt{nearest\_neighbor}$ contributes to an output of $\texttt{extend}$ satisfying PE1 and PE2 is positive:
	\begin{enumerate}
		\item Given the existence of a vertex $v\in V$ such that $x_{v}\in \phi(T_{i}, J_{i}) + \frac{2}{5}r_{i}\mathbb{B}$, Lemma ~\ref{lemma:nearestvertex} implies that the probability that the function call $\texttt{nearest\_neighbor}$ returns a vertex $v_{cur}$ such that $\overline{x}_{v_{cur}}\in \phi(T_{i}, J_{i}) + r_{i}\mathbb{B}$, denoted $p_{near}$, is positive.
		\item Under the condition that $\overline{x}_{v_{cur}}\in \phi(T_{i}, J_{i}) + r_{i}\mathbb{B}$ and $r_{i} < \frac{2}{5}r_{i + 1}$ ensured by D1, the probability that the function call $\texttt{new\_state}$ generates a new solution pair $\psi_{new} = (\phi_{new}, \myinputt_{new})$ satisfying PE1 and PE2, denoted $p_{new}$, is positive. This property is guaranteed by Lemma~\ref{lemma:pccontinuouslowerbound}, if $\phi_{i}$ is purely continuous and by Lemma~\ref{lemma:pcdiscretelowerbound} if $\phi_{i}$ is purely discrete.
	\end{enumerate}
	Therefore, since $p_{near}$ and $p_{new}$ are positive, it follows that $p_{extend} := p_{near}p_{new}$ is positive.
\end{enumerate}
Given that the probability of P2 occurring is positive, and denoting the total number of occurrences of P2 trials as $X_{n}$, where $n\in\mathbb{N}\backslash\{0\}$, we characterize the probability of HyRRT failing to identify a motion plan, which is equivalent to P2 occurring fewer than $m$ times, as follows:
\begin{equation}
\begin{aligned}
\lim_{n\to \infty} &\mbox{\rm Prob}[X_{n} < m] \\
&= \lim_{n\to \infty} \sum_{i =0}^{m - 1}\left(\begin{matrix}
n\\
i
\end{matrix}\right)p_{extend}^{i}(1 - p_{extend})^{n - i}\\
\end{aligned}
\end{equation}
Since $i\in \{0, 1, ..., m - 1\}$, we have $0\leq i \leq m - 1$. As  $n\to \infty$, there exists an $N \in \mathbb{N}$ such that for all $n \geq N$, we have $0\leq i \leq m - 1 \leq \frac{n}{2}$. Such an $N$ can be chosen as $2(m - 1)$.
Due to the increasing monotonicity\endnote{The increasing monotonicity of $\left(\begin{matrix}
	n\\ 
	i
	\end{matrix}\right)$ for $i$ over the interval $(0, \frac{n}{2})$ is exploited here. We can prove it by arbitrarily selecting $k_{1}, k_{2}\in (0, \frac{n}{2})\cap\naturals$ such that $k_{1} > k_{2}$ and then showing that 
\begin{equation}
\begin{aligned}
\frac{\left(\begin{matrix}
	n\\ 
	k_{1}
	\end{matrix}\right)}{\left(\begin{matrix}
	n\\ 
	k_{2}
	\end{matrix}\right)} &= \frac{\frac{n!}{k_{1}(n - k_{1})!} }{\frac{n!}{k_{2}(n - k_{2})!}}= \frac{\frac{(n - k_{2})!}{(n - k_{1})!}}{\frac{k_{1}!}{k_{2}!}}  \\
& = \frac{(n - k_{2})(n - k_{2} + 1)\cdots (n - k_{1} + 1)}{k_{1}(k_{1} - 1)\cdots(k_{2} + 1)} \\
 &= \frac{n - k_{2}}{k_{1}}\times\frac{n - k_{2} - 1}{k_{1} - 1}\times\cdots\times\frac{n - k_{1} + 1}{k_{2} + 1} > 1,
\end{aligned}
\end{equation} implying $\left(\begin{matrix}
n\\ 
k_{1}
\end{matrix}\right) > \left(\begin{matrix}
n\\ 
k_{2}
\end{matrix}\right)$} of $\left(\begin{matrix}
	n\\
	k
	\end{matrix}\right)$ for $k$ from $0$ to $\frac{n}{2}$ and $0\leq i \leq m - 1 \leq \frac{n}{2}$, we have $\left(\begin{matrix}
n\\
i
\end{matrix}\right) \leq \left(\begin{matrix}
n\\
m - 1
\end{matrix}\right)$. Hence,
\begin{equation}
\begin{aligned}
\lim_{n\to \infty} &\mbox{\rm Prob}[X_{n} < m] \\
&\leq \lim_{n\to \infty} \sum_{i = 0}^{m - 1} \left(\begin{matrix}
n\\
m - 1
\end{matrix}\right)p_{extend}^{i}(1 - p_{extend})^{n - i}\\
&\leq \lim_{n\to \infty} \left(\begin{matrix}
n\\
m - 1
\end{matrix}\right)\sum_{i = 0}^{m - 1}p_{extend}^{i}(1 - p_{extend})^{n - i}.\\
\end{aligned}
\end{equation}

Define $p := \max\{p_{extend}, 1 - p_{extend}\} \in (0, 1)$.
It follows that
\begin{equation}
\begin{aligned}
\lim_{n\to \infty} \mbox{\rm Prob}[X_{n} < m] &\leq \lim_{n\to \infty} \left(\begin{matrix}
n\\
m - 1
\end{matrix}\right)\sum_{i = 0}^{m - 1}p^{i} p^{n - i}\\
& \leq \lim_{n\to \infty} \left(\begin{matrix}
n\\
m - 1
\end{matrix}\right)\sum_{i = 0}^{m - 1}p^{n}.\\
\end{aligned}
\end{equation}
Since $p^{n}$ is not related to $i$, we have $\sum_{i = 0}^{m - 1}p^{n} = mp^{n}$. It follows that
\begin{equation}
\begin{aligned}
\lim_{n\to \infty} \mbox{\rm Prob}[X_{n} < m] &\leq\lim_{n\to \infty}\left(\begin{matrix}
n\\
m - 1
\end{matrix}\right)mp^{n}\\
&\leq \lim_{n\to \infty}\frac{n!}{(m - 1)!(n - m + 1)!}mp^{n} 
\end{aligned}
\end{equation}
Since $\frac{n!}{(n - m + 1)!}  = \Pi_{i = n - m + 2}^{n}i \leq \Pi_{i = n - m + 2}^{n} n = n^{m - 1}$, it follows that
\begin{equation}
\begin{aligned}
\lim_{n\to \infty} \mbox{\rm Prob}[X_{n} < m] &\leq\lim_{n\to \infty}\frac{n^{m - 1}}{(m - 1)!}mp^{n} \\
\end{aligned}
\end{equation}
Since $p\in (0, 1)$ and $m$ is finite, then $\lim_{n\to \infty}\frac{n^{m - 1}}{(m - 1)!}mp^{n} = 0$. Therefore, the probability that HyRRT fails to find a motion plan is converging to zero as the number of iterations approaches infinity.

In addition, Proposition \ref{proposition:concatenateclose} establishes that the concatenation of all the $\phi_{new}$'s satisfying PE2, which is returned by HyRRT as $\mystatet'$, is $(\tilde{\tau}, \sum_{i = 0}^{m} r_{i})$-close to $\phi = \mystatet_{0}|\mystatet_{1}|...|\mystatet_{m}$, where $\sum_{i = 0}^{m} r_{i} < \delta$ is guaranteed by D4.
%
%
	
\end{proof}

\subsection{Proof of Theorem \ref{theorem:inflatedpc}}\label{section:mainresultproof}
	Given that $\psi$ is a motion plan to $\mathcal{P}$ with safety clearance $\delta_{s} > 0$ and that $\delta_{f} > 0$, Lemma \ref{lemma:motionplanintlated} establishes that $\psi$ is a motion plan to $\mathcal{P}_{\delta}$ with clearance $\delta = \min
	\{\delta_{s}, \delta_{f}\} >0$. Furthermore, by Proposition \ref{theorem:pc}, it follows that the probability that HyRRT fails to find $\psi' = (\mystatet', \myinputt')$ is at most $ae^{-bk}$ and the generated $\phi'$ is $(\tilde{\tau}, \tilde{\delta})$-close to $\phi$ where $\tilde{\tau} = \max\{T+ J, T' + J'\}$, $(T, J) = \max \dom \psi$, $(T', J') = \max \dom \psi'$, and $\tilde{\delta} = \min \{\delta_{s}, \delta_{f}\}$.

\section{HyRRT Software Tool for Motion Planning for Hybrid Systems and Examples}
\label{section:illustration}
Algorithm \ref{algo:hybridRRT} leads to a software tool\endnote{Code at \href{https://github.com/HybridSystemsLab/hybridRRT}{https://github.com/HybridSystemsLab/hybridRRT}.} to solve the motion planning problems for hybrid systems. This software only requires the motion planning problem data $(X_{0}, X_{f}, X_{u}, (C, f, D, g))$, an input library $(\mathcal{U}_{C}, \mathcal{U}_{D})$, a tunable parameter $p_{n}\in (0, 1)$, an upper bound $K$ over the iteration number and two constraint sets $X_{c}$ and $X_{d}$. The tool is illustrated in Examples \ref{example:bouncingball} and \ref{example:biped}.

\begin{example}[(Actuated bouncing ball system in Example \ref{example:bouncingball}, revisited)]\label{example:illustrationbb}
	This example serves to demonstrate that the HyRRT algorithm is proficient in solving the specific instance of the motion planning problem as illustrated in Example \ref{example:bouncingball}.
	The coefficients of the hybrid model in Example \ref{example:bouncingball} are given as follows:
	\begin{enumerate}
		\item The gravity constant $\gamma$ is set to $9.81$.
		\item The coefficient of restitution is set to $0.8$.
	\end{enumerate} 
	
	The inputs fed to the proposed algorithm are given as follows:
	\begin{enumerate}
	\item The tune parameter $p_{n}$ is set as $0.5$.
	\item The upper bound $K$ is set to $1000$.
	\item The construction of the input library $(\mathcal{U}_{C}, \mathcal{U}_{D})$ adheres to the methodology delineated in {\bf Input Library Construction Procedure}, as referenced in Section \ref{section:samplingmotionprimitives}. Here, the upper bound $T_{m}$ is explicitly configured to $0.1$. From the setting of $X_{u}$,  It is imperative to note that each input falling outside the interval $(0, 5)$ is unsafe. As a result, both $U^{s}_{C}$ and $U_{D}$ are restricted to $(0, 5)$. Consequently, both $\mathcal{U}_{C}$ and $\mathcal{U}_{D}$ are constructed in compliance with the specifications outlined in the {\bf Input Library Construction Procedure}.
		\item The constraint sets $X_{c}$ and $X_{d}$ are set as $X_{c} = C'$ and $X_{d} = D'$.
		\item The tolerance $\epsilon$ in (\ref{equation:tolerance}) is set to $0.2$. 
	\end{enumerate}
	
	The simulation result is shown in Figure \ref{fig:hybridrrtresultbb}. The simulation is implemented in MATLAB software and processed by a $2.2$ GHz Intel Core i7 processor. On average, over the course of $20$ runs, the simulation is observed to require approximately $0.72$ seconds for completion and results in the generation of $34.2$ vertices during the propagation phase.
	\begin{figure}[htbp]
		\centering
		\includegraphics[width = 0.4\textwidth]{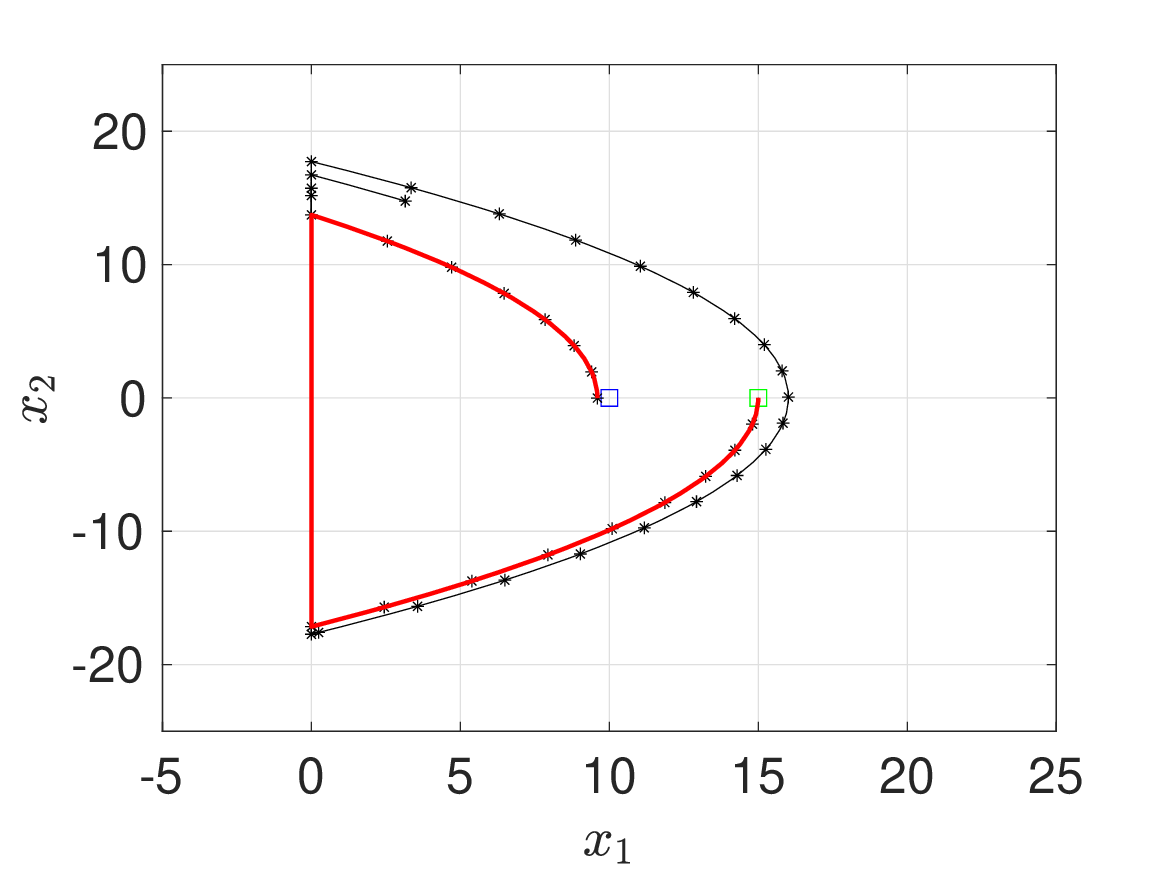}
		\caption{The above shows the search results of HyRRT algorithm to solve the sample motion planning problem in Example \ref{example:bouncingball}. Green square denotes $X_{0}$ and blue square denotes $X_{f}$. The states represented by vertices in the search tree are denoted by $\star$'s. The lines between $\star$'s denote the state trajectories of the solution pairs associated with edges in the search tree. The red trajectory denotes the state trajectory of a motion plan to the given motion planning problem.}\label{fig:hybridrrtresultbb}
	\end{figure}
\end{example}
\begin{example}[(Walking robot system in Example \ref{example:biped}, revisited)] \label{example:bipedillustration}
	The coefficients of the hybrid model in Example \ref{example:biped} are given as follows:
	\begin{enumerate}
		\item The step angle $\phi_{s}$ is set to $0.7$.
		\item $a_{1}^{\min} = a_{2}^{\min} = -3$, $a_{1}^{\max} = a_{2}^{\max} = 3$, $a_{3}^{\min} = - 0.2$, $a_{3}^{\max} =  0.2$.
	\end{enumerate}
	
	The settings for HyRRT planner are given as follows:
	\begin{enumerate}[label=\arabic*)]
		\item The tune parameter $p_{n}$ is set as $0.9$ to encourage the flow regime. 
		\item The upper bound $K$ is set as $2000$.
		\item The input library $(\mathcal{U}_{C}, \mathcal{U}_{D})$  is formulated in accordance with {\bf Input Library Construction Procedure}. Specifically, the upper bound $T_{m}$ is set to $0.4$. Given the constraints imposed by $X_{u}$,  each input deviating from the set $ (-3, 3)\times(-3, 3)\times(-0.2, 0.2)$ is classified as unsafe. Consequently, both $U^{s}_{C}$ and $U_{D}$ are restricted to $(-3, 3)\times(-3, 3)\times(-0.2, 0.2)$.  In light of these constraints, both $\mathcal{U}_{C}$ and $\mathcal{U}_{D}$ are constructed in compliance with the specifications outlined in the \textbf{Input Library Construction Procedure}.
		\item The constraint set $X_{c}$ is chosen as $\{(x, a)\in \mathbb{R}^{6}\times \mathbb{R}^{3}: h(x)\geq -s\}$ and $X_{d}$ as $\{(x, a)\in \mathbb{R}^{6}\times \mathbb{R}^{3}: h(x) = 0, \omega_{p} \geq -s\}$ with a tunable parameter $s$ set to $0$, $0.3$, $0.5$, $1$, and $2$, such that
		$C = X_{c}|_{s = 0} \subsetneq X_{c}|_{s = 0.3}\subsetneq X_{c}|_{s = 0.5}\subsetneq X_{c}|_{s = 1}\subsetneq X_{c}|_{s = 2}$ and $D = X_{d}|_{s = 0} \subsetneq X_{d}|_{s = 0.3}\subsetneq X_{d}|_{s = 0.5}\subsetneq X_{d}|_{s = 1}\subsetneq X_{d}|_{s = 2}$.
		
		\item The tolerance $\epsilon$ in (\ref{equation:tolerance}) is set to $0.3$
	\end{enumerate} 
The simulation result in Figure \ref{fig:illustrationbiped} shows that HyRRT is able to solve the instance of motion planning problem for the walking robot. The simulation is implemented in MATLAB and processed by a $3.5$ GHz Intel Core i5 processor. Over the course of $20$ runs, on average, the simulation takes $63.26/78.3/100.4/183.7/280.8$ seconds with $s$ set to $0/0.3/0.5/1.0/2.0$, respectively. Among all the runs conducted, the simulation takes a minimum of $31.2$ seconds to complete. In contrast, when employing the forward/backward propagation algorithm based on breadth-first search operating under identical settings, the motion planner takes $1608.2$ seconds to solve the same problem. The results demonstrate a noteworthy improvement provided by the rapid exploration technique, with average and fastest run computation time improvements of $96.1\%$ and $98.1\%$, respectively. It is also observed that as the sets $X_{c}$ and $X_{d}$ grow, HyRRT considers more vertices in solving Problem \ref{problem:nearestneighbor} leading to higher computation time.
\end{example}
\begin{figure}[htbp]
	\centering
	\subfigure[The trajectory of $x_{1}$ component of the generated motion plan.]{\includegraphics[width=0.23\textwidth]{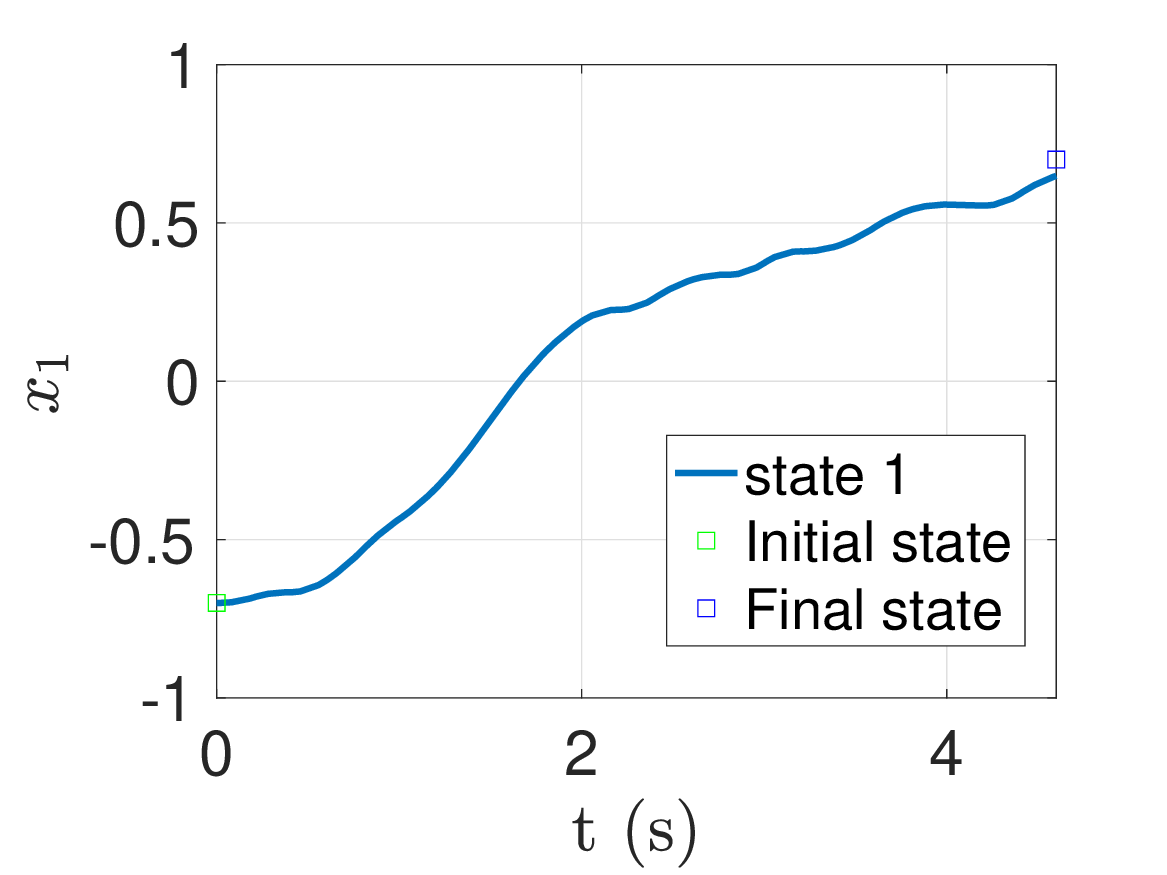}}
	\subfigure[The trajectory of $x_{2}$ component of the generated motion plan.]{\includegraphics[width=0.23\textwidth]{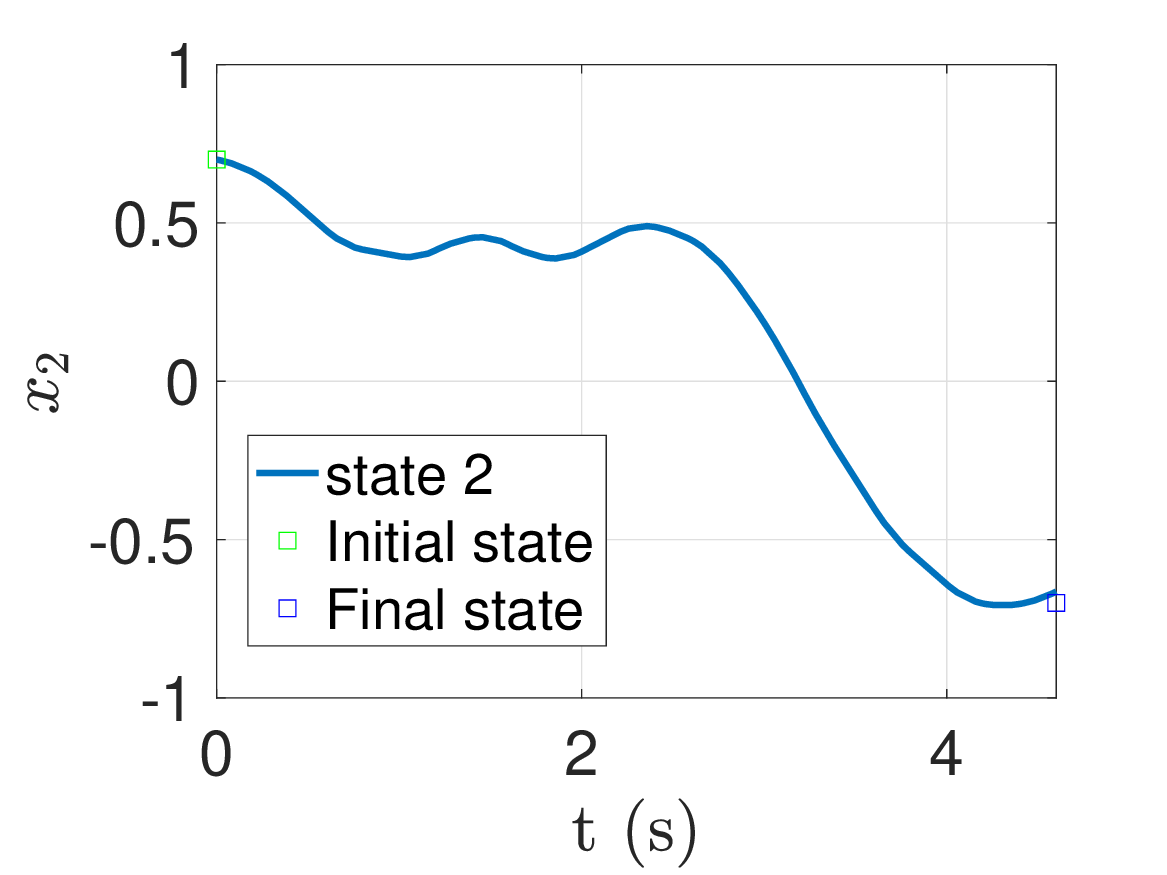}}
	\subfigure[The trajectory of $x_{3}$ component of the generated motion plan.]{\includegraphics[width=0.23\textwidth]{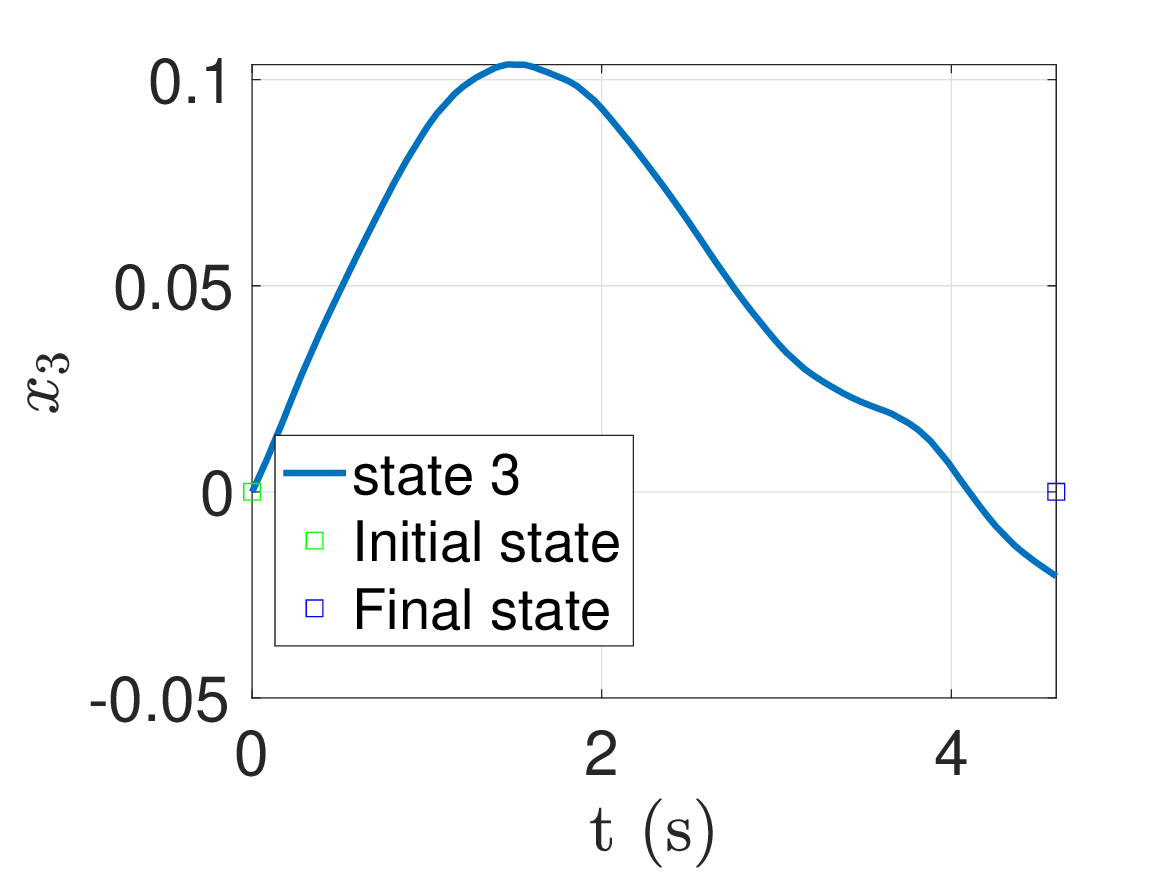}}
	\subfigure[The trajectory of $x_{4}$ component of the generated motion plan.]{\includegraphics[width=0.23\textwidth]{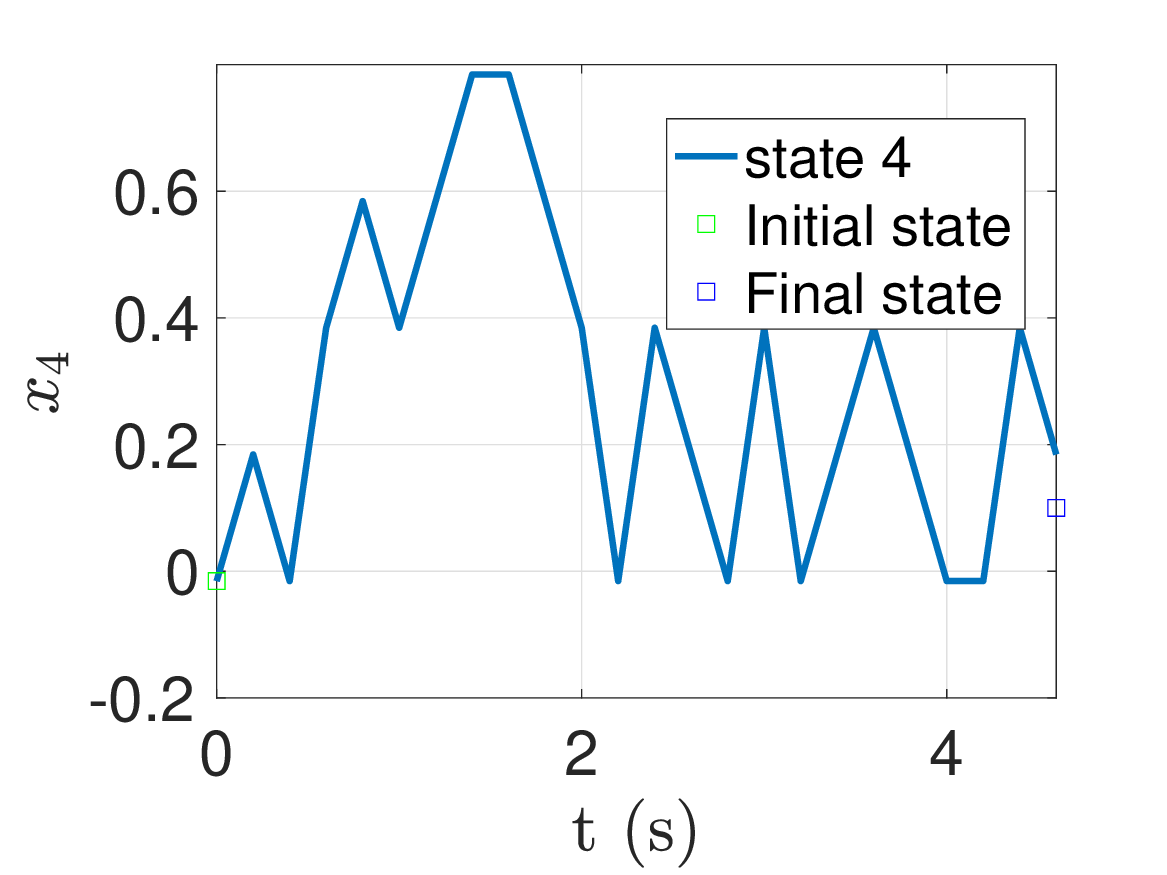}}
	\subfigure[The trajectory of $x_{5}$ component of the generated motion plan.]{\includegraphics[width=0.23\textwidth]{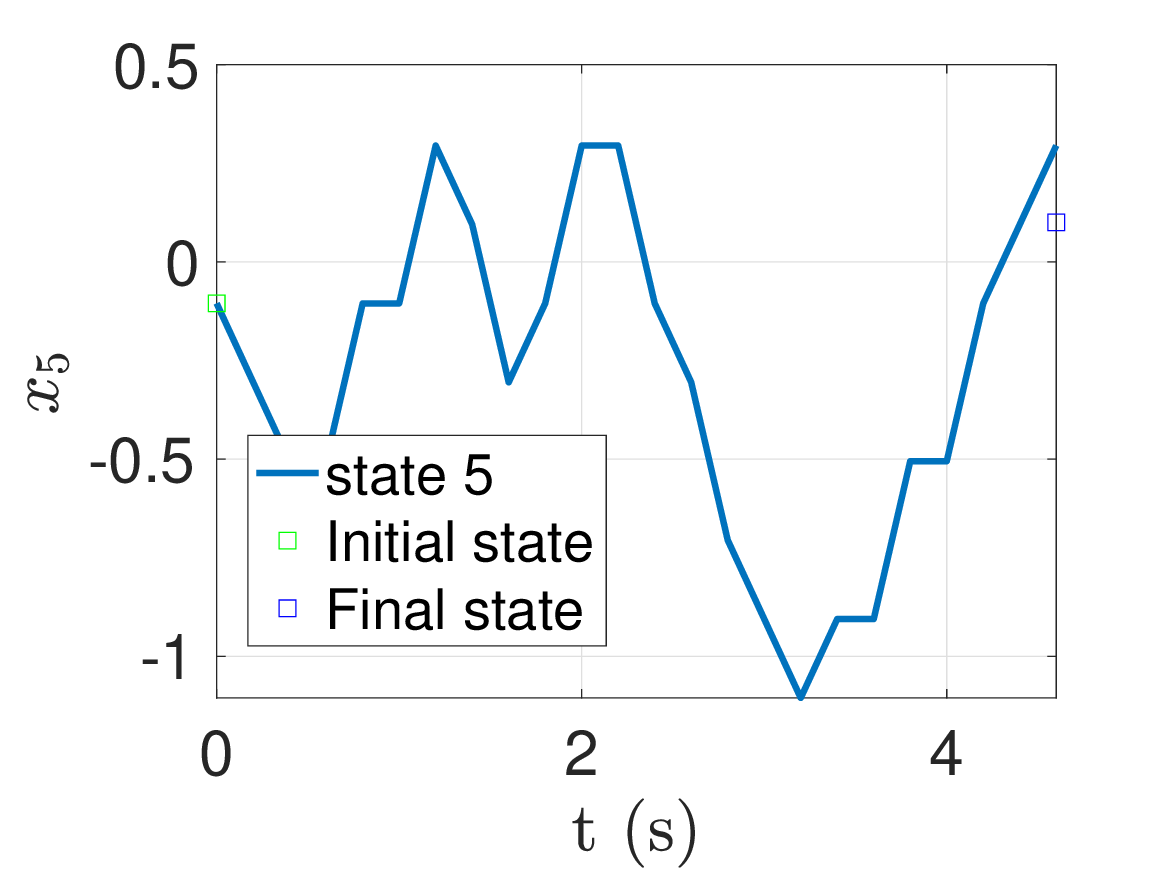}}
	\subfigure[The trajectory of $x_{6}$ component of the generated motion plan.]{\includegraphics[width=0.23\textwidth]{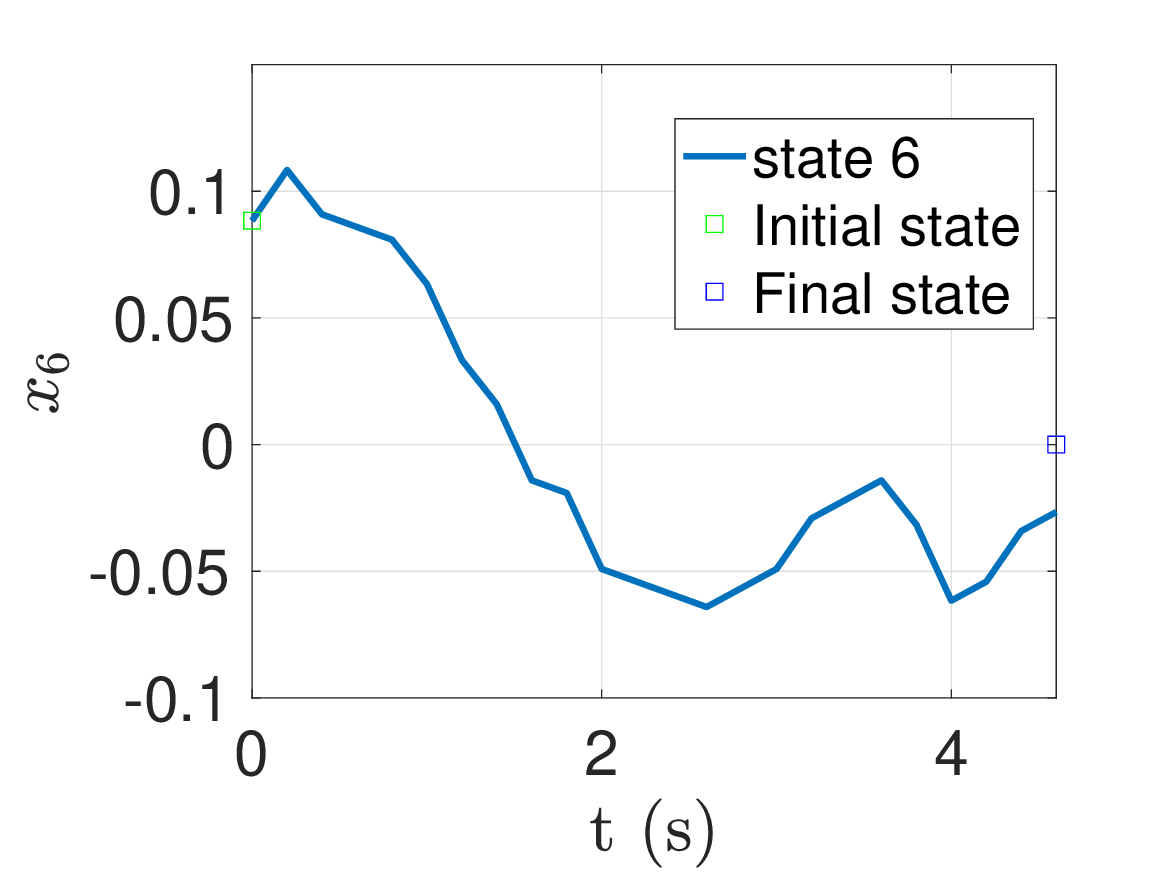}}
	\caption{The above shows the trajectories of each state components of the generated motion plan to the sample motion planning problem for biped system. In each figure in Figure \ref{fig:illustrationbiped}, green square denotes the corresponding component of $X_{0}$ and blue square denotes the corresponding component of $X_{f}$.}
	\label{fig:illustrationbiped}
\end{figure}

\ifbool{report}{}{
\begin{dci}
	The authors declared no potential conflicts of interest with respect to the research, authorship, and/or publication of this article.
\end{dci}

\begin{funding}
	The authors disclosed receipt of the following financial support for the research, authorship, and/or publication of this article: This research is supported by NSF [grants numbers CNS-2039054 and
	CNS-2111688], by AFOSR [grants numbers FA9550-19-1-0169, FA9550-20-
	1-0238, FA9550-23-1-0145, and FA9550-23-1-0313], by AFRL [grant
	numbers FA8651-22-1-0017 and FA8651-23-1-0004], by ARO [grant number.
	W911NF-20-1-0253], and by DoD [grant number W911NF-23-1-0158].
\end{funding}}

\theendnotes
\bibliographystyle{SageH}
\bibliography{references.bib}

\appendix
\section{Proof of Proposition \ref{theorem:concatenationsolution}}\label{section:concatenationsolution}
\subsection{Theoretical Tools to Prove Proposition \ref{theorem:concatenationsolution}}
\begin{lemma}\label{lemma:concatenationHTD}
	Given two functions, denoted $\psi_{1}$ and $\psi_{2}$ and defined on hybrid time domains, the domain of their concatenation $\psi = \psi_{1}|\psi_{2}$, is also a hybrid time domain.
\end{lemma}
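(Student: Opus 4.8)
The plan is to verify directly that the set $E := \dom\psi = E_1 \cup (E_2 + \{(T,J)\})$, where $E_1 := \dom\psi_1$, $E_2 := \dom\psi_2$, and $(T,J) = \max E_1$, satisfies the defining property of a hybrid time domain in Definition~\ref{definition:hybridtimedomain}. Since $\psi_2$ is concatenated to $\psi_1$, Definition~\ref{definition:concatenation} forces $\psi_1$ to be compact, so $E_1$ is a compact hybrid time domain and can be written as $E_1 = \bigcup_{j=0}^{J}([s_j, s_{j+1}]\times\{j\})$ with $0 = s_0 \le \cdots \le s_{J+1} = T$; moreover $\max E_1 = (T,J)$, so every $(t,j)\in E_1$ has $t \le T$ and $j \le J$. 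I would also record the elementary fact that any nonempty hybrid time domain contains $(0,0)$ (take any $(T,J)\in E_2$ and apply the definition, noting $t_0 = 0$), which guarantees that the translate $E_2 + \{(T,J)\}$ contains the point $(T,J)$ and thus abuts $E_1$ exactly at its maximal point.

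Then I would fix an arbitrary $(T', J')\in E$ and compute $E \cap ([0,T']\times\{0,\dots,J'\})$ in two cases. If $(T',J')\in E_1$, then $T'\le T$ and $J'\le J$, so any point of the translate (whose coordinates are $\ge T$ and $\ge J$) that could lie in $[0,T']\times\{0,\dots,J'\}$ must equal $(T,J)$, which already belongs to $E_1$; hence the intersection reduces to $E_1 \cap ([0,T']\times\{0,\dots,J'\})$, which has the required form because $E_1$ is a hybrid time domain. If instead $(T',J') = (t'+T, j'+J)$ for some $(t',j')\in E_2$, then $T'\ge T$ and $J'\ge J$, so all of $E_1$ falls inside $[0,T']\times\{0,\dots,J'\}$, while the contribution of the translate is $\big(E_2\cap([0,t']\times\{0,\dots,j'\})\big)+\{(T,J)\}$, which, since $E_2$ is a hybrid time domain, equals $\bigcup_{i=0}^{j'}([\tau_i+T, \tau_{i+1}+T]\times\{i+J\})$ for suitable $0=\tau_0\le\cdots\le\tau_{j'+1}=t'$.

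The crux is to glue these two pieces into one canonical decomposition. I would observe that they overlap only at level $j = J$: there $E_1$ contributes $[s_J, T]\times\{J\}$ and the translate contributes $[\tau_0+T,\tau_1+T]\times\{J\} = [T, \tau_1+T]\times\{J\}$, and since they share the endpoint $T$ their union is the single interval $[s_J, \tau_1+T]\times\{J\}$. Setting $r_\ell := s_\ell$ for $\ell = 0,\dots,J$ and $r_{J+i} := \tau_i + T$ for $i = 1,\dots,j'+1$ then yields a monotone sequence $0 = r_0 \le \cdots \le r_{J'+1} = T'$ with $J' = J + j'$ realizing $E\cap([0,T']\times\{0,\dots,J'\}) = \bigcup_{\ell=0}^{J'}([r_\ell, r_{\ell+1}]\times\{\ell\})$, as required. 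I expect the only real obstacle to be the bookkeeping at this interface: confirming that the intervals at intermediate levels are untouched, that the endpoint $s_{J+1}=T$ is correctly absorbed into the merged interval (monotonicity $s_J \le T \le \tau_1+T$ follows from $\tau_0 = 0$), and that the degenerate cases where $E_1$ or $E_2$ is a single point are covered by the same formulas.
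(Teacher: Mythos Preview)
Your proposal is correct and follows essentially the same route as the paper: fix an arbitrary $(T',J')\in\dom\psi$, split into the two cases $(T',J')\in E_1$ versus $(T',J')\in E_2+\{(T,J)\}$, and in the second case glue the two finite time sequences at the index $J$. Your treatment is in fact slightly more careful than the paper's at the interface (you explicitly note the shared point $(T,J)$ in case~1 and the merging of the two level-$J$ intervals in case~2), but the argument is the same.
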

\begin{proof}
	Given that $\dom \psi_{1}$ and $\dom \psi_{2}$ are hybrid time domains as per Definition \ref{definition:hybridtimedomain}, for each $(T_{1}, J_{1})\in \dom \psi_{1}$ and each $(T_{2}, J_{2})\in \dom \psi_{2}$, it follows that
	$$
	\dom \psi_{1} \cap ([0, T_{1}]\times \{0,1,..., J_{1}\}) = \cup_{j = 0}^{J_{1} }([t^{1}_{j}, t^{1}_{j + 1}], j)
	$$
	for some finite sequence of times $0=t^{1}_{0}\leq t^{1}_{1}\leq t^{1}_{2}\leq ...\leq t^{1}_{J_{1} + 1}  = T_{1}$ and 
	$$
	\dom \psi_{2} \cap ([0, T_{2}]\times \{0,1,..., J_{2}\}) = \cup_{j = 0}^{J_{2} }([t^{2}_{j}, t^{2}_{j + 1}], j)
	$$
	for some finite sequence of times $0=t^{2}_{0}\leq t^{2}_{1}\leq t^{2}_{2}\leq ...\leq t^{2}_{J_{2} + 1} = T_{2}$.
	
	Given that $\dom \psi = \dom \psi_{1} \cup (\dom \psi_{2} + \{(T, J)\})$ where $(T, J) = \max \dom \psi_{1}$, it follows that for each $(T', J')\in \dom \psi$,
	\begin{equation}\label{equation:concatenation_proof1}
		\begin{aligned}
		&\dom \psi \cap ([0, T']\times \{0,1,..., J'\}) \\
		&= \left(\dom \psi_{1} \cup (\dom \psi_{2} + \{(T, J)\})\right)\cap ([0, T']\times \{0,1,..., J'\}) \\
		&= ( \dom \psi_{1}  \cap ([0, T']\times \{0,1,..., J'\})) \\ &\cup
		(\dom \psi_{2} + \{(T, J)\})\cap ([0, T']\times \{0,1,..., J'\}) \\
		\end{aligned}
	\end{equation}
	
	Since $(T', J')\in \dom \psi$ and $\dom \psi = \dom \psi_{1} \cup (\dom \psi_{2} + \{(T, J)\})$, $(T', J')$ satisfies one of the following two conditions:
	\begin{enumerate}
		\item $(T', J') \in \dom \psi_{1}$ such that $T'\leq T$ and $J'\leq J$;
		\item $(T', J') \in  (\dom \psi_{2} + \{(T, J)\})$ such that $T'\geq T$ and $J'\geq J$.
	\end{enumerate}
	
	For the first item, (\ref{equation:concatenation_proof1}) can be written as:
	$$
	\begin{aligned}
	&\dom \psi \cap ([0, T']\times \{0,1,..., J'\})\\
	&= ( \dom \psi_{1}  \cap ([0, T']\times \{0,1,..., J'\}))\\& \cup (\dom \psi_{2} + \{(T, J)\})\cap ([0, T']\times \{0,1,..., J'\}) \\
	& = \cup_{j = 0}^{J'}([t^{1}_{j}, t^{1}_{j + 1}], j) \cup \emptyset = \cup_{j = 0}^{J'}([t^{1}_{j}, t^{1}_{j + 1}], j)
	\end{aligned}
	$$
	for some finite sequence of times $0=t^{1}_{0}\leq t^{1}_{1}\leq t^{1}_{2}\leq ...\leq t^{1}_{J'+ 1} = T'$. 
	
	For the second item, (\ref{equation:concatenation_proof1}) can be written as
	\begin{equation}
	\begin{aligned}
	&\dom \psi \cap ([0, T']\times \{0,1,..., J'\})\\
	&= ( \dom \psi_{1}  \cap ([0, T']\times \{0,1,..., J'\})) \\
	&\cup (\dom \psi_{2} + \{(T, J)\})\cap ([0, T']\times \{0,1,..., J'\}) \\
	& = ( \dom \psi_{1}  \cap ([0, T]\times \{0,1,..., J\})) \\
	&\cup (\dom \psi_{2} + \{(T, J)\})\cap ([T, T']\times \{J,J + 1,..., J'\}) \\
	& = ( \dom \psi_{1}  \cap ([0, T]\times \{0,1,..., J\})) \\
	&\cup (\dom \psi_{2}\cap ([0, T' - T]\times \{0,1,..., J' - J\}) + \{(T, J)\})\\
	& = \cup_{j = 0}^{J}([t^{1}_{j}, t^{1}_{j + 1}], j) \cup (\cup_{j = 0}^{J' - J}([t^{2}_{j}, t^{2}_{j + 1}], j) + \{(T, J)\})\\
	& = \cup_{j = 0}^{J}([t^{1}_{j}, t^{1}_{j + 1}], j) \cup (\cup_{j = 0}^{J' - J}([t^{2}_{j} + T, t^{2}_{j + 1} + T], j + J))\\
	\end{aligned}
	\end{equation}
	for some finite sequence of times $0=t^{1}_{0}\leq t^{1}_{1}\leq t^{1}_{2}\leq ...\leq t^{1}_{J + 1} = T$ and $T=t^{2}_{0} + T\leq t^{2}_{1} + T\leq t^{2}_{2}+ T\leq ...\leq t^{2}_{J' - J + 1} + T= T' $. Hence, a finite sequence $\{t_{i}\}_{i = 0}^{J'}$ can be constructed as follows such that $0=t_{0}\leq t_{1}\leq t_{2}\leq ...\leq t_{J'} = T'$.
	$$
	t_{i} = \left\{\begin{aligned}
	&t^{1}_{i} &i\leq J\\
	&t^{2}_{i - J} + T &J + 1\leq i \leq J'
	\end{aligned}\right.
	$$
	
	Therefore, for each $(T', J')\in \dom \psi$, there always exists a finite sequence of $\{t_{i}\}_{i = 0}^{J'}$ such that 
	$$
	\dom \psi \cap ([0, T']\times \{0,1,..., J'\}) =  \cup_{j = 0}^{J'}([t_{j}, t_{j + 1}], j). 
	$$ Consequently, this establishes  $\dom \psi$ as a hybrid time domain.  \ifbool{report}{}{\qed}
\end{proof}
\begin{definition}[Lebesgue measurable set and Lebesgue measurable function]\label{definition:lebesguemeasurable}
	A set $S\subset \myreals$ is said to be \emph{Lebesgue measurable}  if it has positive Lebesgue measure $\mu(S)$, where $\mu(S) = \sum_{k} (b_{k} - a_{k})$ and $(a_{k}, b_{k})$'s are all of the disjoint open intervals in $S$. Then $s: S\to \myreals[n_{s}]$ is said to be  a \emph{Lebesgue measurable function} if for every open set $\mathcal{U}\subset \myreals[n_{s}]$ then set $\{r\in S: s(r) \in \mathcal{U}\}$ is Lebesgue measurable.
\end{definition}
\begin{lemma}\label{lemma:concatenation_input_Lebesguemeasurable}
	Given two hybrid inputs, denoted $\myinputt_{1}$ and $\myinputt_{2}$, their concatenation $\myinputt = \myinputt_{1}|\myinputt_{2}$, is such that for each $j\in \nnumbers$, $t\mapsto \myinputt(t,j)$ is Lebesgue measurable on the interval $I^{j}_{\myinputt}:=\{t:(t, j)\in \dom \myinputt\}$.
\end{lemma}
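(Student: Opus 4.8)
The plan is to fix an arbitrary $j \in \mathbb{N}$ and read off the slice $t \mapsto \myinputt(t,j)$ directly from the defining formula of the concatenation in Definition \ref{definition:concatenation}, splitting the analysis into the three regimes $j < J$, $j = J$, and $j > J$, where $(T,J) = \max \dom \myinputt_{1}$ (recall $\dom \myinputt$ is a hybrid time domain by Lemma \ref{lemma:concatenationHTD}). Two elementary facts, both immediate from Definition \ref{definition:lebesguemeasurable}, will carry the argument: (i) Lebesgue measurability of a function is preserved under a translation of its domain, since the preimage of an open set is merely translated; and (ii) if $E = E_{1}\cup E_{2}$ with $E_{1},E_{2}$ Lebesgue measurable and the restrictions $f|_{E_{1}}$ and $f|_{E_{2}}$ are both Lebesgue measurable, then $f$ is Lebesgue measurable on $E$, because for every open set $\mathcal{U}$ the preimage $\{t\in E: f(t)\in \mathcal{U}\}$ is the union of the two measurable preimages. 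I will also use the standard fact that altering a measurable function on a set of Lebesgue measure zero (here, a single point) leaves it measurable.

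First I would treat the two easy regimes. For $j < J$, item 2 of Definition \ref{definition:concatenation} gives $I^{j}_{\myinputt} = I^{j}_{\myinputt_{1}}$ and $\myinputt(\cdot,j) = \myinputt_{1}(\cdot,j)$, which is Lebesgue measurable because $\myinputt_{1}$ is a hybrid input (Definition \ref{definition:hybridinput}). For $j > J$, item 2 gives $I^{j}_{\myinputt} = I^{j-J}_{\myinputt_{2}} + T$ and $\myinputt(t,j) = \myinputt_{2}(t-T, j-J)$; since $\myinputt_{2}(\cdot, j-J)$ is Lebesgue measurable, fact (i) shows $\myinputt(\cdot,j)$ is Lebesgue measurable on $I^{j}_{\myinputt}$.

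The only regime requiring care is the seam $j = J$, where both pieces contribute: $I^{J}_{\myinputt} = I^{J}_{\myinputt_{1}} \cup (I^{0}_{\myinputt_{2}} + T)$, with the two sets meeting only at the single point $T$. On $I^{J}_{\myinputt_{1}}$ the slice agrees with $\myinputt_{1}(\cdot,J)$ except possibly at the point $T$ (where Definition \ref{definition:concatenation} prescribes the value $\myinputt_{2}(0,0)$), so it is Lebesgue measurable by the measure-zero modification fact; on $I^{0}_{\myinputt_{2}} + T$ it equals $\myinputt_{2}(\cdot - T, 0)$, which is Lebesgue measurable by fact (i). Applying the gluing fact (ii) to these two measurable pieces yields measurability of $\myinputt(\cdot,J)$ on $I^{J}_{\myinputt}$. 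Combining all three regimes establishes the claim for every $j$.

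The main (and only mild) obstacle is the bookkeeping at the seam $(T,J)$: one must confirm that the value assigned at the single point $t = T$ does not spoil measurability. This is harmless, since a singleton has Lebesgue measure zero and, more fundamentally, the gluing fact (ii) requires no agreement of the two pieces on their overlap—measurability on each piece separately already suffices.
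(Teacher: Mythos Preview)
Your proposal is correct and follows essentially the same approach as the paper: the same three-regime split $j<J$, $j>J$, $j=J$ around $(T,J)=\max\dom\myinputt_{1}$, with the first two handled by identifying the slice with (a translate of) a slice of $\myinputt_{1}$ or $\myinputt_{2}$, and the seam $j=J$ handled by gluing two measurable pieces. Your treatment of the seam---invoking the standard facts that measurability is unaffected by a measure-zero modification and that a function measurable on each of two measurable pieces is measurable on their union---is in fact cleaner than the paper's corresponding argument, which reaches the same conclusion via a somewhat awkward case analysis on which of the intersected range sets $\widetilde{\mathcal{U}}_{1},\widetilde{\mathcal{U}}_{2}$ remains open.
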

\begin{proof}
	We show that for each $j\in \mathbb{N}$, the function $t\mapsto \myinputt(t, j)$ satisfies conditions outlined in Definition \ref{definition:lebesguemeasurable} over the interval $I^{j}_{\myinputt}:=\{t:(t, j)\in \dom \myinputt\}$. 
	
	In line with Definition \ref{definition:concatenation}, we have $\dom \psi = \dom \psi_{1} \cup (\dom \psi_{2} + \{(T, J)\})$ where $(T, J) = \max \dom \psi_{1}$. For each $j\in \nnumbers$, $j$ and $J$ satisfy one of the following three conditions:
	\begin{enumerate}[label=\Roman*]
		\item $j < J$;
		\item $j > J$;
		\item $j = J$.
	\end{enumerate}
	In each case, we proceed to demonstrate that the function $t\mapsto \myinputt(t, j)$ is Lebesgue measurable.
	\begin{enumerate}[label=\Roman*]
		\item In the case where $j < J$, given that $\myinputt_{1}$ is a hybrid input, the function $t\mapsto \myinputt_{1}(t, j)$ is Lebesgue measurable over the interval $I^{j}_{\myinputt_{1}}:=\{t:(t, j)\in \dom \myinputt_{1}\}$.  According to Definition \ref{definition:concatenation}, it follows that $\myinputt(t, j) = \myinputt_{1}(t, j)$ holds for all $(t, j)\in \dom \myinputt_{1}$ where $j < J$. Consequently, the function $t\mapsto \myinputt(t, j) = \myinputt_{1}(t, j) $ is Lebesgue measurable on the interval $I^{j}_{\psi} = I^{j}_{\psi_{1}}$ when $j < J$. 
		\item In the case where  $j > J$, given that $\myinputt_{2}$ is a hybrid input, the function $t\mapsto \myinputt_{2}(t, j)$ is Lebesgue measurable on the interval $I^{j}_{\psi_{2}}$. 
		According to Definition \ref{definition:concatenation}, it is established that $\myinputt(t, j) = \myinputt_{2}(t - T, j - J)$ for each $(t, j)\in \dom \myinputt_{2} + \{(T, J)\}$ where $j > J$. Therefore, the function $t\mapsto \myinputt(t, j) = \myinputt_{2}(t - T, j - J)$ is Lebesgue measurable on the interval $I^{j}_{\psi} = I^{j - J}_{\psi_{2}} + \{T\}$ when $j > J$.
		\item In the case where $j = J$, according to Definition \ref{definition:concatenation}, it follows that
		$$
		\myinputt(t, J) = \left\{
		\begin{aligned}
		&\myinputt_{1}(t, J) &t\in I^{J}_{\psi_{1}}\backslash \{T\}\\
		&\myinputt_{2}(t - T,0) &t \in I^{0}_{\psi_{2}} + \{T\}.\\
		\end{aligned}
		\right.
		$$
		Note that $t\mapsto \myinputt_{1}(t, J)$ is Lebesgue measurable over the interval $I_{\myinputt_{1}}^{J}$ and $t\mapsto \myinputt_{2}(t, 0)$ is Lebesgue measurable on the interval $I_{\myinputt_{2}}^{0}$. As per Definition \ref{definition:lebesguemeasurable}, it is established that
		\begin{enumerate}
			\item for each open set $\mathcal{U}_{1}\subset \{\myinputt_{1}(t, J)\in \myreals[m]: t \in I_{\myinputt_{1}}^{J}\}$, the set $\{t\in I_{\myinputt_{1}}^{J}: \myinputt_{1}(t, J) \in \mathcal{U}_{1}\}$ is Lebesgue measurable;
			\item for each open set $\mathcal{U}_{2}\subset \{\myinputt_{2}(t, 0)\in \myreals[m]: t \in I_{\myinputt_{2}}^{0}\}$, the set $\{t\in I_{\myinputt_{2}}^{0}: \myinputt_{2}(t, 0) \in \mathcal{U}_{2}\}$ is Lebesgue measurable. 
		\end{enumerate} 
		Consequently, for each open set $\mathcal{U}\subset \{\myinputt(t, J)\in \myreals[m]: t \in I_{\myinputt}^{J}\}$, given that
		$$
		\begin{aligned}
		&\mathcal{U}\subset\{\myinputt(t, J)\in \myreals[m]: t \in I_{\myinputt}^{J}\} =  \{\myinputt_{1}(t, J)\in \myreals[m]: \\
		&t \in I_{\myinputt_{1}}^{J}\}\cup  \{\myinputt_{2}(t, 0)\in \myreals[m]: t \in I_{\myinputt_{2}}^{0}\},
		\end{aligned}$$ therefore, at least one of $\widetilde{\mathcal{U}}_{1} := \mathcal{U}\cap \{\myinputt_{1}(t, J)\in \myreals[m]: t \in I_{\myinputt_{1}}^{J}\}$ and $\widetilde{\mathcal{U}}_{2} := \mathcal{U}\cap \{\myinputt_{2}(t, 0)\in \myreals[m]: t \in I_{\myinputt_{2}}^{0}\}$ is an open set. According to (a) and (b) above, it follows that at least one of the sets $\{t\in I_{\myinputt_{1}}^{J}: \myinputt_{1}(t, J) \in \widetilde{\mathcal{U}}_{1}\}$ and $\{t\in I_{\myinputt_{2}}^{0}: \myinputt_{2}(t, 0) \in \widetilde{\mathcal{U}}_{2}\}$ is Lebesgue measurable.
		Consequently, the set $\{t\in I_{\myinputt}^{J}: \myinputt(t, J) \in \mathcal{U}\} = \{t\in I_{\myinputt_{1}}^{J}: \myinputt_{1}(t, J) \in \widetilde{\mathcal{U}}_{1}\} \cup (\{t\in I_{\myinputt_{2}}^{0}: \myinputt_{2}(t, 0) \in \widetilde{\mathcal{U}}_{2}\} + \{T\})$ is Lebesgue measurable and, hence, the function $t\mapsto \myinputt(t, J)$ is Lebesgue measurable.
	\end{enumerate}
	Therefore, $t\mapsto \myinputt(t, j)$ is Lebesgue measurable on the interval $I_{\myinputt}^{j}$ for all $j\in \mathbb{N}$. \ifbool{report}{}{\qed}
\end{proof}
\begin{definition}[Locally essentially bounded function]\label{definition:locallybounded}
	A function $s: S\to \myreals[n_{s}]$ is said to be \emph{locally essentially bounded} if for each $r\in S$ there exists an open neighborhood $\mathcal{U}$ of $r$ such that $s$ is bounded almost everywhere\footnote{A property is said to hold almost everywhere or for almost all points in a set $S$ if the set of elements of $S$ at which the property does not hold has zero Lebesgue measure.} on $\mathcal{U}$; i.e., there exists $c\geq 0$ such that $|s(r)|\leq c$ for almost all $r\in \mathcal{U}\cap S$.
\end{definition}
\begin{lemma}\label{lemma:concatenation_input_locallyessentiallybounded}
	Given two hybrid inputs, denoted $\myinputt_{1}$ and $\myinputt_{2}$, their concatenation $\myinputt = \myinputt_{1}|\myinputt_{2}$, is such that for each $j\in \nnumbers$, $t\mapsto \myinputt(t,j)$ is locally essentially bounded on the interval $I^{j}_{\myinputt}:=\{t:(t, j)\in \dom \myinputt\}$.
\end{lemma}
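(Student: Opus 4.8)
The plan is to mirror the case analysis used in the proof of Lemma \ref{lemma:concatenation_input_Lebesguemeasurable}. Writing $(T, J) = \max \dom \myinputt_{1}$, Definition \ref{definition:concatenation} gives $\dom \myinputt = \dom \myinputt_{1} \cup (\dom \myinputt_{2} + \{(T, J)\})$, so for each $j \in \nnumbers$ exactly one of $j < J$, $j > J$, $j = J$ holds, and I treat each case separately, verifying the condition in Definition \ref{definition:locallybounded} on the interval $I^{j}_{\myinputt}$.

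For $j < J$, the definition of concatenation yields $\myinputt(t, j) = \myinputt_{1}(t, j)$ for all $t \in I^{j}_{\myinputt} = I^{j}_{\myinputt_{1}}$, so local essential boundedness of $t \mapsto \myinputt(t,j)$ is inherited directly from the fact that $\myinputt_{1}$ is a hybrid input. For $j > J$, we have $\myinputt(t, j) = \myinputt_{2}(t - T, j - J)$ on $I^{j}_{\myinputt} = I^{j - J}_{\myinputt_{2}} + \{T\}$; since $t \mapsto \myinputt_{2}(t, j - J)$ is locally essentially bounded on $I^{j - J}_{\myinputt_{2}}$ and a rigid translation by $T$ maps open neighborhoods to open neighborhoods and preserves Lebesgue measure (hence null sets), the conclusion transfers to $t \mapsto \myinputt(t, j)$.

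The only delicate case is $j = J$, where, as recorded in the proof of Lemma \ref{lemma:concatenation_input_Lebesguemeasurable},
$$
\myinputt(t, J) = \left\{
\begin{aligned}
&\myinputt_{1}(t, J) &&t \in I^{J}_{\myinputt_{1}} \setminus \{T\}\\
&\myinputt_{2}(t - T, 0) &&t \in I^{0}_{\myinputt_{2}} + \{T\},
\end{aligned}
\right.
$$
and $I^{J}_{\myinputt} = I^{J}_{\myinputt_{1}} \cup (I^{0}_{\myinputt_{2}} + \{T\})$, the two pieces overlapping only at $t = T$. For any $t \in I^{J}_{\myinputt}$ with $t \neq T$ I find a neighborhood contained entirely in one piece and borrow the local essential bound from $\myinputt_{1}$ or $\myinputt_{2}$. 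At the junction point $t = T$ I would combine the two bounds: since $T \in I^{J}_{\myinputt_{1}}$, there are an open neighborhood $\mathcal{U}_{1}$ of $T$ and $c_{1} \geq 0$ with $|\myinputt_{1}(t, J)| \leq c_{1}$ for almost all $t \in \mathcal{U}_{1} \cap I^{J}_{\myinputt_{1}}$, and since $0 \in I^{0}_{\myinputt_{2}}$, there are an open neighborhood $\mathcal{U}_{2}$ of $0$ and $c_{2} \geq 0$ with $|\myinputt_{2}(t, 0)| \leq c_{2}$ for almost all $t \in \mathcal{U}_{2} \cap I^{0}_{\myinputt_{2}}$. Taking $\mathcal{U} := \mathcal{U}_{1} \cap (\mathcal{U}_{2} + \{T\})$, which is an open neighborhood of $T$, and noting that the union of the two exceptional null sets is again null, I obtain $|\myinputt(t, J)| \leq \max\{c_{1}, c_{2}\}$ for almost all $t \in \mathcal{U} \cap I^{J}_{\myinputt}$, which completes this case.

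The main obstacle is exactly this junction point $t = T$ in the case $j = J$: away from it the property is inherited piecewise from $\myinputt_{1}$ and $\myinputt_{2}$, but at $T$ one must simultaneously use the local essential boundedness of $\myinputt_{1}$ at the right end of $I^{J}_{\myinputt_{1}}$ and of $\myinputt_{2}$ at the left end of $I^{0}_{\myinputt_{2}}$, intersecting the two neighborhoods and passing to the larger of the two constants. Everything else is routine bookkeeping analogous to the measurability proof, after which I conclude that $t \mapsto \myinputt(t, j)$ is locally essentially bounded on $I^{j}_{\myinputt}$ for every $j \in \nnumbers$.
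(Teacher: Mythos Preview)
Your proof is correct and follows essentially the same approach as the paper's own proof: the same trichotomy on $j < J$, $j > J$, $j = J$, with the first two cases inherited directly from $\myinputt_{1}$ and $\myinputt_{2}$ and the last case handled by combining local bounds from the two pieces on $I^{J}_{\myinputt_{1}}$ and $I^{0}_{\myinputt_{2}} + \{T\}$. Your explicit treatment of the junction point $t = T$ (intersecting the two neighborhoods and taking $\max\{c_{1}, c_{2}\}$) is in fact more careful than the paper's version, which simply handles each $r$ according to which piece it lies in without separately addressing the overlap.
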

\begin{proof}  
	We show that for all $j\in \mathbb{N}$, the function $t\mapsto \myinputt(t, j)$ satisfies conditions in Definition \ref{definition:locallybounded} on the interval $I^{j}_{\myinputt}:=\{t:(t, j)\in \dom \myinputt\}$. 
	
	According to Definition \ref{definition:concatenation}, $\dom \psi = \dom \myinputt_{1} \cup (\dom \myinputt_{2} + \{(T, J)\})$ where $(T, J) = \max \dom \myinputt_{1}$. For each $j\in \nnumbers$, $j$ and $J$ satisfy one of the following three conditions:
	\begin{enumerate}[label=\Roman*]
		\item $j < J$;
		\item $j > J$;
		\item $j = J$.
	\end{enumerate}
	In each case, we proceed to demonstrate that the function $t\mapsto \myinputt(t,j)$ is locally essentially bounded.
	\begin{enumerate}[label=\Roman*]
		\item In the case where $j < J$, given that $\myinputt_{1}$ is a hybrid input, the function $t\mapsto \myinputt_{1}(t, j)$ is locally essentially bounded on the interval $I^{j}_{\myinputt_{1}}:=\{t:(t, j)\in \dom \myinputt_{1}\}$.  As per Definition \ref{definition:concatenation}, it follows that $\myinputt(t, j) = \myinputt_{1}(t, j)$ for all $(t, j)\in \dom \myinputt_{1}$ where $j < J$. Consequently, the function $t\mapsto \myinputt(t, j) = \myinputt_{1}(t, j) $ is locally essentially bounded on the interval $I^{j}_{\psi} = I^{j}_{\psi_{1}}$ where $j < J$. 
		\item In the case where $j > J$, given that $\myinputt_{2}$ is a hybrid input, the function $t\mapsto \myinputt_{2}(t, j)$ is locally essentially bounded on the interval $I^{j}_{\myinputt_{2}}$. 
		As per Definition \ref{definition:concatenation}, it follows that $\myinputt(t, j) = \myinputt_{2}(t - T, j - J)$ for all $(t, j)\in (\dom \myinputt_{2} + \{(T, J)\})$ where $j > J$. Consequently, the function $t\mapsto \myinputt(t, j) = \myinputt_{2}(t - T, j - J)$ is locally essentially bounded on the interval $I^{j}_{\myinputt} = I^{j - J}_{\myinputt_{2}} + \{T\}$ where $j > J$.
		\item In the case where $j = J$, according to Definition \ref{definition:concatenation}, it follows that
		$$
		\myinputt(t, J) = \left\{
		\begin{aligned}
		&\myinputt_{1}(t, J) &t\in I^{J}_{\myinputt_{1}}\backslash \{T\}\\
		&\myinputt_{2}(t - T,0) &t \in I^{0}_{\myinputt_{2}} + \{T\}.\\
		\end{aligned}
		\right.
		$$
		
		Given that $t\mapsto \myinputt_{1}(t, j)$ is locally bounded on the interval $I^{J}_{\myinputt_{1}}$ and $t\mapsto \myinputt_{2}(t, 0)$ is locally bounded on the interval $I_{\myinputt_{2}}^{0}$, according to Definition \ref{definition:locallybounded}, it follows that
		\begin{enumerate}
			\item for each $r\in I_{\myinputt_{1}}^{J}$, there exists a neighborhood $\mathcal{U}_{1}$ of $r$ such that there exists $c_{1}\geq 0$ such that $|\myinputt_{1}(r)| \leq c_{1}$ for almost all $r\in \mathcal{U}_{1}\cap I_{\myinputt_{1}}^{J}$;
			\item for each $r\in I_{\myinputt_{2}}^{0}$, there exists a neighborhood $\mathcal{U}_{2}$ of $r$ such that there exists $c_{2}\geq 0$ such that $|\myinputt_{2}(r)| \leq c_{2}$ for almost all $r\in \mathcal{U}_{1}\cap I_{\myinputt_{2}}^{0}$.
		\end{enumerate} 
		Then, for each $r\in I_{\myinputt}^{J} = I_{\myinputt_{1}}^{J} \cup (I_{\myinputt_{2}}^{0} + \{T\})$,  the following two cases are considered:
		\begin{enumerate}
			\item If $r \in I_{\myinputt_{1}}^{J}$, then there exists a neighborhood $\mathcal{U}_{1}$ of $r$ such that there exists $c_{1}\geq 0$ such that $|\myinputt(r)| \leq c_{1}$ for almost all $r\in \mathcal{U}_{1}\cap I_{\myinputt}^{J}$.
			\item If $r\in I_{\myinputt_{2}}^{0} + \{T\}$, then there exists a neighborhood $\mathcal{U}_{2}$ of $r$ such that there exists $c_{2}\geq 0$ such that $|\myinputt_{2}(r)| \leq c_{2}$ for almost all $r\in \mathcal{U}_{1}\cap (I_{\myinputt_{2}}^{0} + \{T\})$.
		\end{enumerate}
		Therefore, the function $t\mapsto \myinputt(t, J)$ is locally essentially bounded on the interval $I_{\myinputt}^{J}$.
	\end{enumerate}
	Therefore, $t\mapsto \myinputt(t, j)$ is locally bounded on the interval $I_{\myinputt}^{j}$ for all $j\in \mathbb{N}$. \ifbool{report}{}{\qed}
\end{proof}
\begin{lemma}\label{lemma:concatenation_hybridinput}
	Given two hybrid inputs, denoted $\myinputt_{1}$ and $\myinputt_{2}$, then their concatenation $\myinputt = \myinputt_{1}|\myinputt_{2}$ is also a hybrid input.
\end{lemma}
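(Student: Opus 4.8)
The plan is to recognize that this lemma is a direct assembly of the three preceding results, each of which was tailored to verify one of the defining properties of a hybrid input. Recall that Definition~\ref{definition:hybridinput} stipulates that $\myinputt$ is a hybrid input precisely when (i) $\dom \myinputt$ is a hybrid time domain, and (ii) for each $j\in\nnumbers$, the map $t\mapsto \myinputt(t,j)$ is Lebesgue measurable and (iii) locally essentially bounded on $I^{j}_{\myinputt}:=\{t:(t,j)\in \dom \myinputt\}$. So the entire proof reduces to checking these three bullet points against the corresponding lemmas already established.

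First I would note that, since $\myinputt_{1}$ and $\myinputt_{2}$ are hybrid inputs and $\myinputt_{2}$ is being concatenated to $\myinputt_{1}$, the concatenation $\myinputt = \myinputt_{1}|\myinputt_{2}$ is well-defined per Definition~\ref{definition:concatenation} (in particular $(T,J)=\max\dom\myinputt_{1}$ exists). Then I would invoke Lemma~\ref{lemma:concatenationHTD} to conclude that $\dom \myinputt$ is a hybrid time domain, which dispatches condition (i). Next I would apply Lemma~\ref{lemma:concatenation_input_Lebesguemeasurable} to obtain that, for each $j\in\nnumbers$, the function $t\mapsto \myinputt(t,j)$ is Lebesgue measurable on $I^{j}_{\myinputt}$, giving condition (ii). Finally, Lemma~\ref{lemma:concatenation_input_locallyessentiallybounded} yields that $t\mapsto \myinputt(t,j)$ is locally essentially bounded on $I^{j}_{\myinputt}$ for each $j\in\nnumbers$, settling condition (iii).

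Having verified all three requirements of Definition~\ref{definition:hybridinput}, I would conclude that $\myinputt=\myinputt_{1}|\myinputt_{2}$ is itself a hybrid input, completing the argument. The only bookkeeping point worth stating explicitly is that the three preceding lemmas already handle the nontrivial structural case where the concatenation ``seam'' occurs at $j=J$ (where the tail of $\myinputt_{1}$ at level $J$ is glued to the head of $\myinputt_{2}$ at level $0$); the measurability and boundedness of the glued function at that index were precisely the delicate parts treated in Lemmas~\ref{lemma:concatenation_input_Lebesguemeasurable} and~\ref{lemma:concatenation_input_locallyessentiallybounded}.

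Consequently, I do not expect any genuine obstacle here: the substantive analytic work—in particular managing the index-matching at the concatenation point and the union-of-preimages argument for measurability—has already been carried out in the three prior lemmas, so this statement follows by a clean three-step invocation. The main thing to be careful about is simply citing each lemma against the correct clause of Definition~\ref{definition:hybridinput} and noting that the quantification ``for each $j\in\nnumbers$'' is uniform across conditions (ii) and (iii), so no additional case analysis beyond what the cited lemmas provide is needed.
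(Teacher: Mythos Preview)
Your proposal is correct and matches the paper's proof essentially verbatim: the paper also lists the three conditions of Definition~\ref{definition:hybridinput} and dispatches them by invoking Lemma~\ref{lemma:concatenationHTD}, Lemma~\ref{lemma:concatenation_input_Lebesguemeasurable}, and Lemma~\ref{lemma:concatenation_input_locallyessentiallybounded} in turn.
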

\begin{proof}
	In this proof, we show that $\myinputt = \myinputt_{1}|\myinputt_{2}$ satisfies the conditions in Definition \ref{definition:hybridinput} as follows:
	\begin{enumerate}
		\item $\dom \myinputt$ is hybrid time domain;
		\item for each $j\in \nnumbers$, $t\to \myinputt(t, j)$ is Lebesgue measurable on the interval $I^{j}_{\myinputt}:=\{t:(t, j)\in \dom \myinputt\}$;
		\item for each $j\in \nnumbers$, $t\to \myinputt(t, j)$ is locally essentially bounded on the interval $I^{j}_{\myinputt}:=\{t:(t, j)\in \dom \myinputt\}$. 
	\end{enumerate}

	Given that $\myinputt_{1}$ and $\myinputt_{2}$ are hybrid inputs, therefore, Lemma \ref{lemma:concatenationHTD} ensures that $\dom \myinputt$ is hybrid time domain. Lemma \ref{lemma:concatenation_input_Lebesguemeasurable} guarantees that $\myinputt$ is Lebesgue measurable on the interval $I^{j}_{\myinputt}$. Lemma \ref{lemma:concatenation_input_locallyessentiallybounded} establishes that $\myinputt$ is locally essentially bounded on the interval $I^{j}_{\myinputt}$. Consequently, it follows that $\myinputt$ is a hybrid input.\ifbool{report}{}{\qed}
\end{proof}
\begin{definition}[Absolutely continuous function and locally absolutely continuous function]\label{definition:locallycontinuous}\cite[Definition A.20]{goebel2009hybrid}
	A function $s: [a, b]\to \myreals[n]$ is said to be \emph{absolutely continuous} if for each $\varepsilon > 0$ there exists $\delta > 0$ such that for each countable collection of disjoint subintervals $[a_{k}, b_{k}]$ of $[a, b]$ such that $\sum_{k}(b_{k} - a_{k}) \leq \delta$, it follows that $\sum_{k}|s(b_{k}) - s(a_{k})| \leq \varepsilon$. A function is said to be locally absolutely continuous if $r\mapsto s(r)$ is absolutely continuous on each compact subinterval of $\mypreals$.
\end{definition}
\begin{lemma}\label{lemma:concatenation_state_locallyabsolutelycontinuous}
	Given two hybrid arcs, denoted $\mystatet_{1}$ and $\mystatet_{2}$, such that $\mystatet_{1}(T, J) = \mystatet_{2}(0,0)$, where $(T, J) = \max \dom \mystatet_{1}$, their concatenation $\mystatet = \mystatet_{1}|\mystatet_{2}$, is such that for each $j\in \nnumbers$, $t\mapsto \mystatet(t,j)$ is locally absolutely continuous on the interval $I^{j}_{\mystatet}:=\{t:(t, j)\in \dom \mystatet\}$.
\end{lemma}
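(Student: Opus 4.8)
The plan is to follow exactly the case analysis used in the proofs of Lemmas~\ref{lemma:concatenation_input_Lebesguemeasurable} and \ref{lemma:concatenation_input_locallyessentiallybounded}, now verifying the local absolute continuity condition of Definition \ref{definition:locallycontinuous} instead of measurability or boundedness. By Definition \ref{definition:concatenation}, $\dom \mystatet = \dom \mystatet_{1} \cup (\dom \mystatet_{2} + \{(T, J)\})$ with $(T, J) = \max \dom \mystatet_{1}$, and the hypothesis $\mystatet_{1}(T, J) = \mystatet_{2}(0,0)$ ensures the two pieces agree at the gluing time. Fixing $j \in \nnumbers$, I would split into the three cases $j < J$, $j > J$, and $j = J$, and in each show that $t \mapsto \mystatet(t, j)$ is locally absolutely continuous on $I^{j}_{\mystatet}$.

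For $j < J$, Definition \ref{definition:concatenation} gives $\mystatet(t, j) = \mystatet_{1}(t, j)$ on all of $I^{j}_{\mystatet} = I^{j}_{\mystatet_{1}}$, so local absolute continuity is inherited directly from the hybrid arc $\mystatet_{1}$. For $j > J$, we have $\mystatet(t, j) = \mystatet_{2}(t - T, j - J)$ on $I^{j}_{\mystatet} = I^{j-J}_{\mystatet_{2}} + \{T\}$; since a horizontal translation of the argument preserves the $\varepsilon$--$\delta$ condition of Definition \ref{definition:locallycontinuous} (shifting each test subinterval $[a_k,b_k]\mapsto[a_k-T,b_k-T]$ preserves both lengths and the increments $|s(b_k)-s(a_k)|$), local absolute continuity follows from that of $\mystatet_{2}$.

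The main obstacle is the boundary case $j = J$, where $I^{J}_{\mystatet} = I^{J}_{\mystatet_{1}} \cup (I^{0}_{\mystatet_{2}} + \{T\})$ with $I^{J}_{\mystatet_{1}}$ having right endpoint $T$ and $I^{0}_{\mystatet_{2}} + \{T\}$ having left endpoint $T$, and
$$
\mystatet(t, J) = \left\{
\begin{aligned}
&\mystatet_{1}(t, J) &t\in I^{J}_{\mystatet_{1}}\backslash \{T\}\\
&\mystatet_{2}(t - T,0) &t \in I^{0}_{\mystatet_{2}} + \{T\}.\\
\end{aligned}
\right.
$$
Here the matching hypothesis is essential: because $\mystatet_{1}(T, J) = \mystatet_{2}(0,0)$, the left and right limits of $t \mapsto \mystatet(t, J)$ at $T$ coincide, so this section is continuous at $T$, which is precisely the condition needed to glue two absolutely continuous pieces into one.

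To verify local absolute continuity in this case, I would take an arbitrary compact subinterval $[\alpha, \beta] \subset I^{J}_{\mystatet}$. If $\beta \le T$ or $\alpha \ge T$, the restriction coincides with $\mystatet_{1}(\cdot, J)$ or with the shift $\mystatet_{2}(\cdot - T, 0)$ and is absolutely continuous by the two preceding cases. If $\alpha < T < \beta$, I would invoke the standard gluing argument: given $\varepsilon > 0$, pick $\delta_{1}$ and $\delta_{2}$ witnessing the absolute continuity of the two pieces on $[\alpha, T]$ and $[T, \beta]$ for tolerance $\varepsilon/2$ each, set $\delta = \min\{\delta_{1}, \delta_{2}\}$, and for any disjoint collection of subintervals of total length at most $\delta$, split each subinterval straddling $T$ into its parts in $[\alpha,T]$ and $[T,\beta]$, bounding its increment by the triangle inequality through $\mystatet(T,J)$. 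The refined subcollections lie in the two pieces and have total length at most $\delta$, hence contribute at most $\varepsilon/2$ each, for a total of at most $\varepsilon$. This shows $t \mapsto \mystatet(t, J)$ is absolutely continuous on every compact subinterval of $I^{J}_{\mystatet}$, hence locally absolutely continuous, completing the case $j = J$ and therefore the lemma.
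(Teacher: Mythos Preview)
Your proposal is correct and follows essentially the same three-case structure as the paper's proof, using the matching condition $\mystatet_{1}(T,J)=\mystatet_{2}(0,0)$ in the boundary case $j=J$ exactly as the paper does. Your gluing argument in that case, with $\delta=\min\{\delta_{1},\delta_{2}\}$ and the $\varepsilon/2$ split, is in fact the standard and cleaner version; the paper instead takes $\delta=\delta_{1}+\delta_{2}$, which is less careful, so your write-up is a mild improvement in rigor while remaining the same approach.
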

\begin{proof}
	We show that for all $j\in \mathbb{N}$, the function $t\mapsto \mystatet(t, j)$ satisfies the conditions in Definition \ref{definition:locallycontinuous}.
	
	According to Definition \ref{definition:concatenation}, we have $\dom \mystatet = \dom \mystatet_{1} \cup (\dom \mystatet_{2} +\{(T, J)\})$ where $(T, J) = \max \dom \mystatet_{1}$.
	It follows that
	$$
		I^{j}_{\mystatet} = \left\{ \begin{aligned}
		&I^{j}_{\mystatet_{1}} &\text{ if }j< J\\
		&I^{J}_{\mystatet_{1}} \cup (I^{0}_{\mystatet_{2}} +  \{T\}) &\text{ if }j = J\\
		&I^{j - J}_{\mystatet_{2}} + T&\text{ if }j > J.
		\end{aligned}\right.
	$$
	
	For each $j\in \nnumbers$, $j$ and $J$ satisfy one of the following three conditions:
	\begin{enumerate}[label=\Roman*]
		\item $j < J$;
		\item $j > J$;
		\item $j = J$.
	\end{enumerate}
	In each case, we proceed to demonstrate that the function $t\mapsto \mystatet(t,j)$ is absolutely continuous.
	\begin{enumerate}[label=\Roman*]
		\item In the case where $j < J$, given that $\mystatet_{1}$ is a hybrid arc, according to Definition \ref{definition:hybridarc}, the function $t\mapsto \mystatet_{1}(t, j)$ is locally absolutely continuous on the interval $I_{\mystatet_{1}}^{j} = I_{\mystatet}^{j}$ where $j < J$. According to Definition \ref{definition:concatenation}, it follows that $\mystatet(t, j) = \mystatet_{1}(t, j)$ for all $t\in I_{\mystatet_{1}}^{j} = I^{j}_{\mystatet}$ where $j < J$. Therefore, the function $t\mapsto \mystatet(t, j) =\mystatet_{1}(t, j)$ is locally absolutely continuous on the interval $I_{\mystatet}^{j}$ where $j < J$.
		\item In the case where $j> J$, given that $\mystatet_{2}$ is a hybrid arc, according to Definition \ref{definition:hybridarc}, the function $t\mapsto \mystatet_{2}(t - T, j - J)$ is locally absolutely continuous on the interval $I_{\mystatet_{2}}^{j - J} + \{T\}= I^{j}_{\mystatet}$ where $j > J$. According to Definition \ref{definition:concatenation}, it follows that $\mystatet(t, j) = \mystatet_{2}(t - T, j - J)$ for all $t\in I_{\mystatet_{2}}^{j - J} + \{T\} = I^{j}_{\mystatet}$ where $j > J$. Therefore, the function $t\mapsto  \mystatet_{2}(t - T, j - J) = \mystatet(t, j)$ is locally absolutely continuous on the interval $I_{\mystatet}^{j} = I_{\mystatet_{2}}^{j - J} + \{T\}$ where $j > J$.
		\item In the case where $j = J$, given that $I_{\mystatet}^{j} = I_{\mystatet_{1}}^{J}\cup (I_{\mystatet_{2}}^{0} + \{T\})$ and that the function $t\mapsto \mystatet(t, J)$ is locally absolutely continuous on the intervals $I_{\mystatet_{1}}^{J}$ and $(I_{\mystatet_{2}}^{0} + \{T\})$, respectively, therefore, it follows the following:
		\begin{enumerate}
			\item for each $\varepsilon_{1} > 0$ there exists $\delta_{1} > 0$ such that for each countable collection of disjoint subintervals $[a^{1}_{k_{1}}, b^{1}_{k_{1}}]$ of $I_{\mystatet_{1}}^{J}$ such that $\sum_{k_{1}}(b^{1}_{k_{1}} - a^{1}_{k_{1}}) \leq \delta_{1}$, it follows that $\sum_{k_{1}}|\mystatet(b^{1}_{k_{1}}) - \mystatet(a^{1}_{k_{1}})| \leq \varepsilon_{1}$;
			\item for each $\varepsilon_{2} > 0$ there exists $\delta_{2} > 0$ such that for each countable collection of disjoint subintervals $[a^{2}_{k_{2}}, b^{2}_{k_{2}}]$ of $I_{\mystatet_{2}}^{0} + \{T\}$ such that $\sum_{k_{2}}(b^{2}_{k_{2}} - a^{2}_{k_{2}}) \leq \delta_{2}$, it follows that $\sum_{k_{2}}|\mystatet(b^{2}_{k_{2}}) - \mystatet(a^{2}_{k_{2}})| \leq \varepsilon_{2}$.
		\end{enumerate}
		Then for each $\varepsilon > 0$, assume that $\delta > 0$ exists such that for each countable collection of disjoint subintervals $[a_{k}, b_{k}]$ of $I_{\mystatet_{1}}^{J}\cup (I_{\mystatet_{2}}^{0} + \{T\})$ such that $\sum_{k}(b_{k} - a_{k}) \leq \delta$, it follows that $\sum_{k}|\mystatet(b^{2}_{k}) - \mystatet(a^{2}_{k})| \leq \varepsilon$. 
		
		The collections of intervals $\{[a_{k}, b_{k}]\}_{k}$ intersecting with the intervals $I_{\mystatet_{1}}^{J}$ and $(I_{\mystatet_{2}}^{0} + {T})$ are defined as follows:
		\begin{enumerate}[label=\arabic*]
			\item $\{[a^{1}_{k_{1}}, b^{1}_{k_{1}}]\}_{k_{1}} := \{[a_{k}, b_{k}]\}_{k}\cap I_{\mystatet_{1}}^{J}$;
			\item $\{[a^{2}_{k_{2}}, b^{2}_{k_{2}}]\}_{k_{2}} := \{[a_{k}, b_{k}]\}_{k}\cap (I_{\mystatet_{2}}^{0} + \{T\})$.
		\end{enumerate}
	Subsequently, define $\varepsilon_{1} := \sum_{k_{1}}|\mystatet(b^{1}_{k_{1}}) - \mystatet(a^{1}_{k_{1}})|$ and $\varepsilon_{2} := \sum_{k_{2}}|\mystatet(b^{2}_{k_{2}}) - \mystatet(a^{2}_{k_{2}})|$. Given that 
	$$\begin{aligned}
	\sum_{k}|\mystatet(b^{2}_{k}) - \mystatet(a^{2}_{k})| 
	&=  \sum_{k_{1}}|\mystatet(b^{1}_{k_{1}}) - \mystatet(a^{1}_{k_{1}})| 
	\\
	&+ \sum_{k_{2}}|\mystatet(b^{2}_{k_{2}}) - \mystatet(a^{2}_{k_{2}})|\\
	 &+ |\phi_{2}(0, 0) - \phi_{1}(T, J)|, 
	\end{aligned}$$
	and that $\phi_{2}(0, 0) = \phi_{1}(T, J)$, it follows that
	$$
	\sum_{k}|\mystatet(b^{2}_{k}) - \mystatet(a^{2}_{k})| = \varepsilon_{1} + \varepsilon_{2} + 0.
	$$
	According to (a) and (b) above, for each $\varepsilon_{1} > 0$ and each $\varepsilon_{2} > 0$, there exists $\delta_{1} > 0$ and $\delta_{2} > 0$ such that $\sum_{k_{1}}(b^{1}_{k_{1}} - a^{1}_{k_{1}}) \leq \delta_{1}$ and $\sum_{k_{2}}(b^{2}_{k_{2}} - a^{2}_{k_{2}}) \leq \delta_{2}$ hold, respectively. Therefore, given that 
	$\sum_{k}(b_{k} - a_{k}) =  \sum_{k_{1}}(b^{1}_{k_{1}} - a^{1}_{k_{1}}) + \sum_{k_{2}}(b^{2}_{k_{2}} - a^{2}_{k_{2}}) \leq \delta_{1} + \delta_{2} = \delta$ and the existence of $\delta_{1}$ and $\delta_{2}$, the existence of $\delta = \delta_{1} + \delta_{2}$ is established.  
	\end{enumerate}
	Therefore, the function $t\mapsto \mystatet(t, j)$ is locally absolutely continuous on the interval $I_{\mystatet}^{j}$ for all $j\in \mathbb{N}$.\ifbool{report}{}{\qed}
\end{proof}
\begin{lemma}\label{lemma:concatenation_hybridarc}
	Given two hybrid arcs, denoted $\mystatet_{1}$ and $\mystatet_{2}$, such that $\mystatet_{1}(T, J) = \mystatet_{2}(0,0)$, where $(T, J) = \max \dom \mystatet_{1}$, then their concatenation $\mystatet = \mystatet_{1}|\mystatet_{2}$ is also a hybrid arc.
\end{lemma}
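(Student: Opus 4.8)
The plan is to verify the two defining conditions of a hybrid arc listed in Definition \ref{definition:hybridarc} directly, assembling them from the component lemmas already proved, exactly as Lemma \ref{lemma:concatenation_hybridinput} was obtained from its three precursors. Recall that $\mystatet = \mystatet_{1}|\mystatet_{2}$ is a hybrid arc if and only if (i) $\dom \mystatet$ is a hybrid time domain, and (ii) for each $j\in \nnumbers$, the map $t\mapsto \mystatet(t,j)$ is locally absolutely continuous on $I^{j}_{\mystatet}:=\{t:(t, j)\in \dom \mystatet\}$. So the whole task reduces to supplying one fact for each bullet.

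First I would establish (i). Since $\mystatet_{1}$ and $\mystatet_{2}$ are hybrid arcs, their domains $\dom \mystatet_{1}$ and $\dom \mystatet_{2}$ are hybrid time domains by Definition \ref{definition:hybridarc}. They are therefore functions defined on hybrid time domains, so Lemma \ref{lemma:concatenationHTD} applies verbatim and yields that $\dom \mystatet = \dom \mystatet_{1}\cup(\dom\mystatet_{2}+\{(T,J)\})$, with $(T,J)=\max\dom\mystatet_{1}$, is a hybrid time domain. Next I would establish (ii) by invoking Lemma \ref{lemma:concatenation_state_locallyabsolutelycontinuous}, whose hypotheses are met precisely because $\mystatet_{1}$ and $\mystatet_{2}$ are hybrid arcs and, crucially, because the lemma's standing assumption $\mystatet_{1}(T, J) = \mystatet_{2}(0,0)$ is exactly the matching condition imposed in the statement of this lemma. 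That lemma then gives local absolute continuity of $t\mapsto \mystatet(t,j)$ on $I^{j}_{\mystatet}$ for every $j\in\nnumbers$. With (i) and (ii) in hand, $\mystatet$ satisfies Definition \ref{definition:hybridarc} and is a hybrid arc, completing the argument.

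There is no genuine obstacle at this level: the lemma is an assembly result, and all the real work is carried by Lemma \ref{lemma:concatenation_state_locallyabsolutelycontinuous}. The one point worth flagging in the write-up is \emph{why} the hypothesis $\mystatet_{1}(T, J) = \mystatet_{2}(0,0)$ cannot be dropped. In the case $j=J$ treated inside Lemma \ref{lemma:concatenation_state_locallyabsolutelycontinuous}, the interval $I^{J}_{\mystatet}=I^{J}_{\mystatet_{1}}\cup(I^{0}_{\mystatet_{2}}+\{T\})$ glues the tail of $\mystatet_{1}$ to the head of $\mystatet_{2}$, and the matching condition forces the jump term $|\mystatet_{2}(0,0)-\mystatet_{1}(T,J)|$ appearing in the absolute-continuity estimate to vanish; without it the concatenation would generally have a discontinuity at $(T,J)$ and fail to be a hybrid arc, which is the phenomenon already noted in the main text following Definition \ref{definition:concatenation}. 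I would therefore keep the proof short, cite Lemmas \ref{lemma:concatenationHTD} and \ref{lemma:concatenation_state_locallyabsolutelycontinuous} for conditions (i) and (ii) respectively, and conclude via Definition \ref{definition:hybridarc}.
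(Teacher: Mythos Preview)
Your proposal is correct and follows essentially the same approach as the paper: both establish the two defining conditions of a hybrid arc by invoking Lemma~\ref{lemma:concatenationHTD} for the hybrid time domain condition and Lemma~\ref{lemma:concatenation_state_locallyabsolutelycontinuous} for local absolute continuity, then conclude via Definition~\ref{definition:hybridarc}. Your added remark on why the matching condition $\mystatet_{1}(T,J)=\mystatet_{2}(0,0)$ is essential is a nice explanatory touch that the paper's proof omits.
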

\begin{proof}
	We show that $\mystatet = \mystatet_{1}|\mystatet_{2}$ satisfies the conditions in Definition \ref{definition:hybridarc} as follows:
	\begin{enumerate}
		\item We show that $\dom \mystatet$ is hybrid time domain;
		\item We show that, for each $j\in \nnumbers$, $t\mapsto \mystatet(t,j)$ is locally absolutely continuous on the interval $I^{j}_{\mystatet}:=\{t:(t, j)\in \dom \mystatet\}$. 
	\end{enumerate}
	
	Note that $\mystatet_{1}$ and $\mystatet_{2}$ are functions defined on hybrid time domains. Therefore, Lemma \ref{lemma:concatenationHTD} guarantees that $\dom \mystatet$ is hybrid time domain. Lemma \ref{lemma:concatenation_state_locallyabsolutelycontinuous} ensures that for each $j\in \nnumbers$, $t\mapsto \mystatet(t,j)$ is locally absolutely continuous on the interval $I^{j}_{\mystatet}:=\{t:(t, j)\in \dom \mystatet\}$. Therefore, it is established that $\mystatet$ is a hybrid arc. \ifbool{report}{}{\qed}
\end{proof}

\begin{lemma}\label{lemma:concatenation_solution_c}
	Given two solution pairs, denoted $\psi_{1}$ and $\psi_{2}$ such that $\psi_{2}(0, 0)\in C$ if both $I_{\psi_{1}}^{J}$ and $I_{\psi_{2}}^{0}$ have nonempty interior, where, for each $j\in \{0, J\}$, $I_{\psi}^{j} = \{t: (t, j)\in \dom \psi\}$ and $(T, J) = \max \dom \psi_{1}$, then their concatenation $\psi = \psi_{1}|\psi_{2}$ satisfies that for each $j\in \mathbb{N}$ such that $I^{j}_{\psi}$ has nonempty interior $\interior(I^{j}_{\psi})$, $\psi = (\mystatet, \myinputt)$ satisfies
	$$
	(\mystatet(t, j),\myinputt(t, j))\in C
	$$ for all $t\in \interior I^{j}_{\mystatet}$.
\end{lemma}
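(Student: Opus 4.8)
The plan is to verify that the concatenation $\psi = (\mystatet, \myinputt) = \psi_{1}|\psi_{2}$ satisfies the flow-membership part of item 2 of Definition \ref{definition:solution}, reusing the decomposition of the time slices $I^{j}_{\psi}$ already computed in the proof of Lemma \ref{lemma:concatenation_state_locallyabsolutelycontinuous}. Writing $(T, J) = \max \dom \psi_{1}$, the identity $\dom \psi = \dom \psi_{1}\cup(\dom\psi_{2}+\{(T,J)\})$ yields
\begin{equation*}
	I^{j}_{\psi} = \left\{\begin{aligned}
		&I^{j}_{\psi_{1}} && j < J\\
		&I^{J}_{\psi_{1}} \cup (I^{0}_{\psi_{2}} + \{T\}) && j = J\\
		&I^{j-J}_{\psi_{2}} + \{T\} && j > J.
	\end{aligned}\right.
\end{equation*}
First I would fix an arbitrary $j\in\nnumbers$ for which $I^{j}_{\psi}$ has nonempty interior and split the argument according to whether $j<J$, $j>J$, or $j=J$.

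The two outer cases are immediate and inherited directly from the hypotheses that $\psi_{1}$ and $\psi_{2}$ are solution pairs. For $j<J$, Definition \ref{definition:concatenation} gives $\psi(t,j) = \psi_{1}(t,j)$ on all of $I^{j}_{\psi} = I^{j}_{\psi_{1}}$, so $\interior I^{j}_{\psi} = \interior I^{j}_{\psi_{1}}$ and the membership $(\mystatet(t,j),\myinputt(t,j))\in C$ is exactly item 2 of Definition \ref{definition:solution} applied to $\psi_{1}$. For $j>J$, likewise $\psi(t,j) = \psi_{2}(t-T, j-J)$ on $I^{j}_{\psi}$, a translate of $I^{j-J}_{\psi_{2}}$, and the membership follows from the solution property of $\psi_{2}$ after the change of variable $s = t-T$.

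The main case, and the only delicate one, is $j=J$. Writing $I^{J}_{\psi_{1}} = [t^{1}_{J}, T]$ and $I^{0}_{\psi_{2}} = [0, t^{2}_{1}]$, we get $I^{J}_{\psi} = [t^{1}_{J}, t^{2}_{1}+T]$ with interior $(t^{1}_{J}, t^{2}_{1}+T)$. I would split $t\in\interior I^{J}_{\psi}$ into $t<T$, $t=T$, and $t>T$. For $t<T$ we have $t\in(t^{1}_{J}, T) = \interior I^{J}_{\psi_{1}}$ and $\psi(t,J)=\psi_{1}(t,J)$, so the membership comes from $\psi_{1}$; for $t>T$ we have $t-T\in(0, t^{2}_{1}) = \interior I^{0}_{\psi_{2}}$ and $\psi(t,J)=\psi_{2}(t-T,0)$, so it comes from $\psi_{2}$. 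The crux is the join time $t=T$: by Definition \ref{definition:concatenation} the value there is $\psi(T,J) = \psi_{2}(0,0)$ (since $(T,J)\in\dom\psi_{2}+\{(T,J)\}$), so membership in $C$ at $t=T$ is precisely the statement $\psi_{2}(0,0)\in C$.

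The step I expect to require the most care is matching this join-point requirement to the lemma's hypothesis. The key observation is that $T$ lies in $\interior I^{J}_{\psi} = (t^{1}_{J}, t^{2}_{1}+T)$ if and only if $t^{1}_{J}<T$ and $t^{2}_{1}>0$, i.e.\ if and only if both $I^{J}_{\psi_{1}}$ and $I^{0}_{\psi_{2}}$ have nonempty interior---which is exactly the condition under which the lemma assumes $\psi_{2}(0,0)\in C$. Thus whenever the condition must be checked at $t=T$ it is guaranteed, and whenever either of the two intervals has empty interior, $T$ is an endpoint of $I^{J}_{\psi}$ and never enters its interior, so no verification at $T$ is needed (and the corresponding one-sided subcase becomes vacuous). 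Assembling the three subcases covers all of $\interior I^{J}_{\psi}$ and completes the case $j=J$, establishing the claim for every admissible $j$.
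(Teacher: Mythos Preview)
Your proposal is correct and follows essentially the same approach as the paper: the three-way split on $j<J$, $j>J$, $j=J$, with the first two cases inherited directly from the solution properties of $\psi_1$ and $\psi_2$, and the join case $j=J$ handled by combining the interiors of $I^{J}_{\psi_1}$ and $I^{0}_{\psi_2}+\{T\}$ together with the hypothesis $\psi_2(0,0)\in C$ at the gluing time. Your treatment of the $j=J$ case is in fact a bit more explicit than the paper's---you argue directly that $T\in\interior I^{J}_{\psi}$ exactly when both subintervals have nonempty interior, which neatly explains why the hypothesis is stated conditionally---whereas the paper simply asserts that the degenerate subcases reduce to the earlier ones.
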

\begin{proof}
	We show that for all $j\in \mathbb{N}$ such that $I^{j}_{\psi}$ has  nonempty interior, $\psi(t, j)\in C$ for all $t\in \interior I^{j}_{\psi}$.
	
	According to Definition \ref{definition:concatenation}, $\dom \psi = \dom \psi_{1} \cup (\dom \psi_{2} +\{(T, J)\})$ where $(T, J) = \max \dom \psi_{1}$.
	Therefore, 
	$$
	I^{j}_{\psi} = \left\{ \begin{aligned}
	&I^{j}_{\psi_{1}} &\text{ if }j< J\\
	&I^{J}_{\psi_{1}} \cup (I^{0}_{\psi_{2}} +  \{T\}) &\text{ if }j = J\\
	&I^{j - J}_{\psi_{2}} + T&\text{ if }j > J
	\end{aligned}\right.
	$$
	
	\begin{enumerate}[label=\Roman*]
		\item In the case of all $j< J$ such that $I^{j}_{\psi_{1}}$ has nonempty interior, given that $\psi_{1}$ is a solution pair to $\mathcal{H}$, according to Definition \ref{definition:solution}, we have
		$
		\psi_{1}(t, j)\in C
		$ for all 
		$t\in \interior I^{j}_{\psi_{1}}.
		$ As per Definition \ref{definition:concatenation}, it follows that $\psi(t, j) = \psi_{1}(t, j)$ for all $t \in \interior I^{j}_{\psi_{1}}$. Therefore, we have $\psi(t, j) = \psi_{1}(t, j)\in C$ for all $t\in \interior I^{j}_{\psi} = \interior I^{j}_{\psi_{1}}$.
		\item In the case of all $j>J$ such that $I^{j - J}_{\psi_{2}} + \{T\}$ has nonempty interior, given that $\psi_{2}$ is a solution pair  to $\mathcal{H}$, according to Definition \ref{definition:solution}, we have 
		$
		\psi_{2}(t, j)\in C
		$ for all 
		$
		t\in \interior I^{j}_{\psi_{2}}.
		$ 
		As per Definition \ref{definition:concatenation}, it follows that $\psi(t, j) = \psi_{2}(t - T, j - J)$ for all $t\in I_{\psi_{2}}^{j - J} + \{T\}$. Therefore, we have  
		$
		\psi(t, j) = \psi_{2}(t - T, j - J)\in C
		$
		for all $t\in \interior I_{\psi}^{j} = I^{j - J}_{\psi_{2}} + \{T\}$.
		\item In the case where $j = J$ and $I_{\psi}^{J} = I_{\psi_{1}}^{J}\cup (I_{\psi_{2}}^{0} + \{T\})$ has nonempty interior. If one of the intervals $I_{\psi_{1}}^{J}$ and $I_{\psi_{2}}^{0} + \{T\}$ has nonempty interior while the other does not, the validation process aligns with the previously discussed cases \uppercase\expandafter{\romannumeral 1} and \uppercase\expandafter{\romannumeral 2}.
		
		If both $I_{\psi_{1}}^{J}$ and $(I_{\psi_{2}}^{0} + \{T\})$ has nonempty interior, given that
		$$
		\psi(t, j) = \psi_{1}(t, j)\in C \quad \forall t\in \interior I_{\psi_{1}}^{J}
		$$ and that
		$$
		\begin{aligned}
		\psi(t, j) = \psi_{2}(t - T, j - J)\in C \forall t\in \interior I_{\psi_{2}}^{0} + \{T\},
		\end{aligned}
		$$ it can be concluded that $\psi(t, j)\in C$ for all $t\in \interior I^{j}_{\psi}\cup (\interior I_{\psi_{2}}^{0} + \{T\})$.
		
		In addition,  since the point $\psi_{2}(0, 0)$ is assumed to belong to $C$ in this case, then $\psi(t, J) \in C$ holds  for all $t\in \interior I_{\psi}^{J}  = \interior (I_{\psi_{1}}^{J} \cup (I_{\psi_{2}}^{0} + \{T\}))$.
		
	\end{enumerate}
	Hence, it is established that for all $j\in \mathbb{N}$ such that $I_{\psi}^{j}$ has nonempty interior, $\psi(t, j)\in C$ for each $t\in \interior I^{j}_{\psi}$.\ifbool{report}{}{\qed}
\end{proof}
\begin{lemma}\label{lemma:concatenation_solution_f}
	Given two solution pairs, denoted $\psi_{1}$ and $\psi_{2}$, then their concatenation $\psi = \psi_{1}|\psi_{2}$ satisfies that for each $j\in \mathbb{N}$ such that $I^{j}_{\psi}$ has nonempty interior $\interior(I^{j}_{\psi})$, $\psi = (\mystatet, \myinputt)$ satisfies
	$$
	\frac{\text{d}}{\text{d} t} {\mystatet}(t,j) = f(\mystatet(t,j), \myinputt(t,j))
	$$ for almost all $t\in I^{j}_{\mystatet}$.
\end{lemma}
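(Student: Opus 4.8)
The plan is to carry out the same three-way case analysis on the hybrid time slices that was used in the proof of Lemma~\ref{lemma:concatenation_solution_c}, now exploiting the crucial feature that the flow equation in item~2 of Definition~\ref{definition:solution} is an \emph{almost everywhere} pointwise condition. First I would invoke Definition~\ref{definition:concatenation} to record that $\dom \psi = \dom \psi_{1} \cup (\dom \psi_{2} + \{(T,J)\})$ with $(T,J) = \max \dom \psi_{1}$, so that, exactly as in the proof of Lemma~\ref{lemma:concatenation_solution_c},
$$
I^{j}_{\psi} = \left\{ \begin{aligned}
&I^{j}_{\psi_{1}} &\text{ if }j< J\\
&I^{J}_{\psi_{1}} \cup (I^{0}_{\psi_{2}} +  \{T\}) &\text{ if }j = J\\
&I^{j - J}_{\psi_{2}} + T&\text{ if }j > J,
\end{aligned}\right.
$$
together with the pointwise identities $\psi(t,j) = \psi_{1}(t,j)$ for $(t,j)\in \dom \psi_{1}\setminus\{(T,J)\}$ and $\psi(t,j) = \psi_{2}(t-T,j-J)$ for $(t,j)\in \dom\psi_{2}+\{(T,J)\}$.

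For $j < J$ with $I^{j}_{\psi}$ of nonempty interior, the slice $I^{j}_{\psi}$ equals $I^{j}_{\psi_{1}}$ and $\psi$ coincides with $\psi_{1}$ there, so since $\psi_{1}$ is a solution pair the flow equation $\frac{\text{d}}{\text{d} t}\mystatet(t,j) = f(\mystatet(t,j),\myinputt(t,j))$ holds for almost all $t\in I^{j}_{\psi}$ by item~2 of Definition~\ref{definition:solution}. For $j > J$ the slice is the time-translate $I^{j-J}_{\psi_{2}}+\{T\}$ and $\psi(t,j)=\psi_{2}(t-T,j-J)$; applying the change of variables $s = t-T$ (which leaves the time derivative unchanged) transfers the flow equation for $\psi_{2}$ verbatim to $\psi$, again for almost all $t$.

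The only delicate case is $j = J$, which I expect to be the main obstacle, and where I would argue as follows. Here $I^{J}_{\psi} = I^{J}_{\psi_{1}} \cup (I^{0}_{\psi_{2}}+\{T\})$, and these two intervals overlap only in the single point $T$, since $T = \max I^{J}_{\psi_{1}}$ and $T = \min (I^{0}_{\psi_{2}}+\{T\})$. On $\interior I^{J}_{\psi_{1}}$, which excludes $T$, the function $\mystatet(\cdot,J)$ agrees with $\mystatet_{1}(\cdot,J)$ on a full neighborhood, so the flow equation holds there for almost all $t$; symmetrically, on $\interior(I^{0}_{\psi_{2}}+\{T\})$ it agrees with the shift of $\mystatet_{2}(\cdot,0)$ and the equation again holds for almost all $t$ (a degenerate singleton piece, if present, contributes measure zero). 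The set on which the equation may fail is therefore contained in the union of the two measure-zero exceptional sets inherited from $\psi_{1}$ and $\psi_{2}$ together with the single junction point $\{T\}$, which still has Lebesgue measure zero. Hence the flow equation holds for almost all $t\in I^{J}_{\psi}$. I would stress that, unlike the companion Lemma~\ref{lemma:concatenation_state_locallyabsolutelycontinuous}, no matching condition $\mystatet_{1}(T,J)=\mystatet_{2}(0,0)$ is needed here, precisely because the flow equation is an almost-everywhere requirement that is blind to the behaviour at the isolated time $T$. Collecting the three cases establishes the claim for every $j\in\nnumbers$ with $I^{j}_{\psi}$ of nonempty interior.
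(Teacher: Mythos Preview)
Your proposal is correct and follows essentially the same approach as the paper: the identical three-way case split on $j<J$, $j>J$, $j=J$, with the $j=J$ case handled by combining the almost-everywhere flow equations on $I^{J}_{\psi_{1}}$ and $I^{0}_{\psi_{2}}+\{T\}$ and observing that their union covers $I^{J}_{\psi}$ up to a null set. Your treatment of the junction point $\{T\}$ is in fact slightly more explicit than the paper's, and your closing remark that no matching condition $\mystatet_{1}(T,J)=\mystatet_{2}(0,0)$ is required here (in contrast to Lemma~\ref{lemma:concatenation_state_locallyabsolutelycontinuous}) is a correct and useful observation that the paper does not make.
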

\begin{proof}
	We show that for all $j\in \mathbb{N}$ such that $I^{j}_{\psi}$ has  nonempty interior and almost all $t\in I_{\psi}^{j}$,
	$
	\frac{\text{d}}{\text{d} t} {\mystatet} = f(\mystatet(t, j), \myinputt(t, j)).
	$
	
	According to Definition \ref{definition:concatenation}, it follows that $\dom \psi = \dom \psi_{1} \cup (\dom \psi_{2} +\{(T, J)\})$ where $(T, J) = \max \dom \psi_{1}$.
	Therefore, we have
	$$
	I^{j}_{\psi} = \left\{ \begin{aligned}
	&I^{j}_{\psi_{1}} &\text{ if }j< J\\
	&I^{J}_{\psi_{1}} \cup (I^{0}_{\psi_{2}} +  \{T\}) &\text{ if }j = J\\
	&I^{j - J}_{\psi_{2}} + T&\text{ if }j > J.
	\end{aligned}\right.
	$$
	\begin{enumerate}[label=\Roman*]
		\item In the case of all $j < J$ such that $I^{j}_{\psi_{1}}$ has  nonempty interior, given that $\psi_{1}$ is a solution pair, it follows that $\frac{\text{d}}{\text{d} t} {\mystatet}_{1} (t, j)= f(\mystatet_{1}(t, j), \myinputt_{1}(t, j))$ for almost all $t\in \interior I^{j}_{\psi_{1}}$. According to Definition \ref{definition:concatenation}, we have
		$$
		(\mystatet(t, j), \myinputt(t, j)) = (\mystatet_{1}(t, j), \myinputt_{1}(t, j))
		$$ for all $(t, j)\in \dom \psi_{1}\backslash (T, J)$. Therefore, it is established that
		$$
		\begin{aligned}
		\frac{\text{d}}{\text{d} t} {\mystatet}(t, j) &= \frac{\text{d}}{\text{d} t} {\mystatet}_{1}(t, j) = f(\mystatet_{1}(t, j), \myinputt_{1}(t, j)) \\
		&= f(\mystatet(t, j), \myinputt(t, j))
		\end{aligned}
		$$ for almost all $t\in \interior I^{j}_{\psi}$.
		\item In the case of all $j > J$ such that $I^{j- J}_{\psi_{2}}$ has  nonempty interior, given that $\psi_{2}$ is a solution pair and $I^{j- J}_{\psi_{2}}$ has nonempty interior, it follows that $\frac{\text{d}}{\text{d} t} {\mystatet}_{2}(t, j) = f(\mystatet_{2}(t- T, j - J), \myinputt_{2}(t- T, j- J))$ for almost all $t\in \interior I^{j - J}_{\psi_{2}} +\{T\}$. According to Definition \ref{definition:concatenation}, we have 
		$$
		\begin{aligned}
		&(\mystatet(t, j), \myinputt(t, j)) = (\mystatet_{2}(t - T, j - J), \myinputt_{2}(t - T, j - J))
		\end{aligned}
		$$
		for all $t\in I_{\psi_{2}}^{j - J} + \{T\}$. Therefore, it is established that
		$$
		\begin{aligned}
		\frac{\text{d}}{\text{d} t} {\mystatet}(t, j) &=\frac{\text{d}}{\text{d} t} {\mystatet}_{2}(t - T, j - J) \\
		&= f(\mystatet_{2}(t - T, j - J), \myinputt_{2}(t - T, j - J)) \\
		&= f(\mystatet(t, j), \myinputt(t, j))
		\end{aligned}
		$$
		for almost all $t\in \interior I^{j}_{\mystatet}$.
		\item In the case where $j = J$ and $I_{\psi}^{J} = I_{\psi_{1}}^{J}\cup (I_{\psi_{2}}^{0} + \{T\})$ has nonempty interior, If one of the intervals $I_{\psi_{1}}^{J}$ and $I_{\psi_{2}}^{0} + \{T\}$ has nonempty interior while the other does not, the validation process aligns with the previously discussed cases \uppercase\expandafter{\romannumeral 1} and \uppercase\expandafter{\romannumeral 2}.
		If both $I_{\psi_{1}}^{J}$ and $(I_{\psi_{2}}^{0} + \{T\})$ have nonempty interior, it follows that $I_{\psi}^{J} =  I_{\psi_{1}}^{J} \cup (I_{\psi_{2}}^{0} + \{T\})$. Given that $\frac{\text{d}}{\text{d} t} {\mystatet}(t, J) = f(\mystatet(t, J), \myinputt(t, J))$ for almost all $t\in I_{\psi_{1}}^{J}$ and $t\in I_{\psi_{2}}^{0} + \{T\}$, it is established that  $\frac{\text{d}}{\text{d} t} {\mystatet}(t, j)= f(\mystatet(t, j), \myinputt(t, j))$ for $t\in I_{\psi}^{J} =  I_{\psi_{1}}^{J} \cup (I_{\psi_{2}}^{0} + \{T\})$.
	\end{enumerate}
	Therefore, $ \frac{\text{d}}{\text{d} t} {\mystatet}(t, j) = f(\mystatet(t, j), \myinputt(t, j))$ holds for almost all $t\in I_{\psi}^{j}$.\ifbool{report}{}{\qed}
\end{proof}
\begin{lemma}\label{lemma:concatenation_solution_dg}
	Given a hybrid system $\hs = (C, f, D, g)$ and two solution pairs to $\hs$, denoted $\psi_{1}$ and $\psi_{2}$, such that $\mystatet_{1}(T, J) = \mystatet_{2}(0,0)$, where $(T, J) = \max \dom \mystatet_{1}$, then their concatenation $\psi = \psi_{1}|\psi_{2} = (\mystatet, \myinputt)$ satisfies that for all $(t,j)\in \dom (\mystatet, \myinputt)$ such that $(t,j + 1)\in \dom (\mystatet, \myinputt)$, 
$$
	\begin{aligned}
	(\mystatet(t, j), \myinputt(t, j))&\in D\\
	\mystatet(t,j+ 1) &= g(\mystatet(t,j), \myinputt(t, j)).
	\end{aligned}
$$
\end{lemma}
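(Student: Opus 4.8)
The plan is to verify that the concatenation $\psi = \psi_{1}|\psi_{2} = (\mystatet, \myinputt)$ satisfies item 3) of Definition \ref{definition:solution} by reducing every jump of $\psi$ to a jump of either $\psi_{1}$ or $\psi_{2}$. As in the proofs of Lemmas \ref{lemma:concatenation_solution_c} and \ref{lemma:concatenation_solution_f}, I would begin from the decomposition $\dom \psi = \dom \psi_{1} \cup (\dom \psi_{2} + \{(T, J)\})$ with $(T, J) = \max \dom \psi_{1}$ provided by Definition \ref{definition:concatenation}, which yields the index description $I^{j}_{\psi} = I^{j}_{\psi_{1}}$ for $j < J$, $I^{J}_{\psi} = I^{J}_{\psi_{1}} \cup (I^{0}_{\psi_{2}} + \{T\})$ for $j = J$, and $I^{j}_{\psi} = I^{j-J}_{\psi_{2}} + \{T\}$ for $j > J$. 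Fixing an arbitrary $(t, j) \in \dom \psi$ with $(t, j+1) \in \dom \psi$, I would split into the three cases $j < J$, $j > J$, and $j = J$.

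For $j < J$, both $(t, j)$ and $(t, j+1)$ lie in $\dom \psi_{1}$, so the jump is internal to $\psi_{1}$; since $\psi_{1}$ is a solution pair, $(\mystatet_{1}(t, j), \myinputt_{1}(t, j)) \in D$ and $\mystatet_{1}(t, j+1) = g(\mystatet_{1}(t, j), \myinputt_{1}(t, j))$. Because $(t, j) \neq (T, J)$, Definition \ref{definition:concatenation} gives $(\mystatet(t, j), \myinputt(t, j)) = (\mystatet_{1}(t, j), \myinputt_{1}(t, j))$, and $\mystatet(t, j+1)$ equals $\mystatet_{1}(t, j+1)$ unless $(t, j+1) = (T, J)$, in which case it equals $\mystatet_{2}(0, 0)$; here the hypothesis $\mystatet_{1}(T, J) = \mystatet_{2}(0, 0)$ makes the two values coincide, so the jump conditions transfer verbatim. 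For $j > J$, the whole jump sits in the translated copy of $\psi_{2}$: writing the two points as $(t - T, j - J) \to (t - T, j - J + 1)$ in $\dom \psi_{2}$ and using that $\psi_{2}$ is a solution pair, the conditions $(\mystatet(t,j),\myinputt(t,j)) = (\mystatet_{2}(t-T, j-J), \myinputt_{2}(t-T, j-J)) \in D$ and $\mystatet(t, j+1) = \mystatet_{2}(t - T, j - J + 1) = g(\mystatet(t,j),\myinputt(t,j))$ follow directly.

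The only genuinely delicate case is $j = J$, the junction. Here $(t, J+1) \in \dom \psi$ forces $(t - T, 1) \in \dom \psi_{2}$ (it cannot lie in $\dom \psi_{1}$, whose maximal index is $J$), which in turn forces $t \geq T$; thus the jump is the image under translation by $(T, J)$ of the index-$0$ jump $(t - T, 0) \to (t - T, 1)$ of $\psi_{2}$. Since $t \geq T$, the point $(t, J)$ belongs to $\dom \psi_{2} + \{(T,J)\}$, so by Definition \ref{definition:concatenation} its value is read off from $\psi_{2}$, i.e. $(\mystatet(t, J), \myinputt(t, J)) = (\mystatet_{2}(t-T, 0), \myinputt_{2}(t-T, 0))$, even at the shared point $t = T$ where this gives $\mystatet_{2}(0,0)$, consistent with the hypothesis. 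The solution property of $\psi_{2}$ then yields $(\mystatet(t, J), \myinputt(t, J)) \in D$ and $\mystatet(t, J+1) = \mystatet_{2}(t - T, 1) = g(\mystatet(t, J), \myinputt(t, J))$. I expect this junction case to be the main obstacle, precisely because the concatenation assigns the post-junction value $\mystatet_{2}(0,0)$ rather than $\mystatet_{1}(T,J)$ at $(T, J)$; the assumption $\mystatet_{1}(T, J) = \mystatet_{2}(0, 0)$ is exactly what guarantees that the value used at the jump is well defined and that no inconsistency arises with a possible last jump of $\psi_{1}$ landing at $(T, J)$. Collecting the three cases establishes the jump conditions for every jump of $\psi$, completing the proof.
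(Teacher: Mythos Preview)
Your proof is correct and follows essentially the same strategy as the paper: reduce every jump of $\psi$ to a jump of $\psi_{1}$ or of $\psi_{2}$ and invoke the hypothesis $\mystatet_{1}(T,J)=\mystatet_{2}(0,0)$ at the junction. The only cosmetic difference is the case split---you partition by the value of $j$ relative to $J$, whereas the paper partitions by which of $\dom\psi_{1}$ or $\dom\psi_{2}+\{(T,J)\}$ the endpoints $(t,j)$ and $(t,j+1)$ lie in---so your $j=J$ case corresponds to the paper's case 2 restricted to index $J$, and the paper's special case $(t,j)=(T,J-1)$ is absorbed into your $j<J$ subcase where $(t,j+1)=(T,J)$.
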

\begin{proof}
	We show that for all $(t, j)\in \dom \psi$ such that $(t, j + 1)\in \dom \psi  = \dom \psi_{1} \cup (\dom \psi_{2} +\{(T, J)\})$ where $(T, J) = \max \dom \psi_{1}$,
	$$
	\begin{aligned}
	\psi(t, j) &\in D\\
	\mystatet(t, j + 1) &= g(\mystatet(t, j), \myinputt(t, j)).
	\end{aligned}
	$$
	\begin{enumerate}
		\item In the case where $(t, j)$ is such that $(t, j) \in \dom \psi_{1}$ and $(t, j + 1)\in \dom \psi_{1}$, since $\psi_{1}$ is a solution pair to $\hs$, it follows that 
		$$
		\begin{aligned}
		\psi_{1}(t, j) &\in D\\
		\mystatet_{1}(t, j + 1) &= g(\mystatet_{1}(t, j), \myinputt_{1}(t, j)).\\
		\end{aligned}
		$$ Given that $(t, j + 1)\in \dom \psi_{1}$, consequently, we have $(t, j)\neq (T, J)$. Since $\psi(t, j) = \psi_{1}(t, j)$ for all $(t, j)\in \dom \psi_{1}\backslash (T, J)$, it is established that			
		$$
		\begin{aligned}
		\psi(t, j)  &= 	\psi_{1}(t, j)\in D\\
		\mystatet(t, j + 1) &= \mystatet_{1}(t, j + 1)\\
		&= g(\mystatet_{1}(t, j), \myinputt_{1}(t, j))\\
		& = g(\mystatet(t, j), \myinputt(t, j)) .
		\end{aligned}
		$$
		\item In the case where $(t, j)$ is such that $(t, j) \in \dom \psi_{2} + \{(T, J)\}$ and $(t, j + 1)\in \dom \psi_{2} + \{(T, J)\}$, according to Definition \ref{definition:concatenation}, it follows that $(t - T, j - J)\in \dom \psi_{2}$ and $(t - T, j - J + 1)\in \dom \psi_{2}$. Therefore, since $\psi_{2}$ is a solution pair, it is established that 
		$$
		\begin{aligned}
		\psi_{2}&(t - T, j - J) \in D\\
		\mystatet_{2}&(t - T, j - J + 1) \\
		&= g(\mystatet_{2}(t - T, j - J), \myinputt_{2}(t - T, j - J)).
		\end{aligned}
		$$ Note that $\psi(t, j) = \psi_{2}(t - T, j - J)$ for all $(t - T, j - J)\in \dom \psi_{2}$. Therefore, the following is satisfied:
		$$
		\begin{aligned}
		\psi(t, j)  &= 	\psi_{2}(t - T, j - J) \in D\\
		\mystatet(t, j + 1) &= \mystatet_{2}(t - T, j - J + 1)\\
		&= g(\mystatet_{2}(t - T, j - J), \myinputt_{2}(t - T, j - J))\\
		& = g(\mystatet(t, j), \myinputt(t, j)). 
		\end{aligned}
		$$
					\item In the special case where $(t, j)$ is such that $(t, j) \in \dom \psi_{1}$ and $(t, j + 1) \in \dom \psi_{2} + \{(T, J)\}$, this scenario corresponds to the situation where $(t, j) = (T, J - 1)\in \dom \psi_{1}$ and $(t, j + 1) = (T, J)\in \dom \psi_{2} + {(T, J)}$. Given that $\mystatet_{1}(T, J) = \mystatet_{2}(0,0)$, the following relationships can be deduced:
					$$
					\begin{aligned}
						\mystatet_{1}(T, J ) = \mystatet_{2}(0,0) = \mystatet(T, J)\\
					\mystatet_{1}(T, J - 1) = \mystatet(T, J - 1)\\
					\myinputt_{1}(T, J - 1) = \myinputt(T, J - 1)
					\end{aligned}
					$$
					Given that $\psi_{1}$  is a solution pair to $\mathcal{H}$ and considering that
					$$
						(T,J - 1)\in \dom \psi_{1} \quad (T, J)\in \dom \psi_{1},
					$$
					according to Definition \ref{definition:solution}, it follows that
					$$
					\begin{aligned}
					\mystatet_{1}(T, J) = g(\mystatet_{1}(T, J - 1), \myinputt_{1}(T, J - 1))\\
					\psi_{1}(T, J - 1) = (\mystatet_{1}(T, J - 1), \myinputt_{1}(T, J - 1)) \in D.
					\end{aligned}
				$$
					Therefore, we establish that
					$$
					\begin{aligned}
					 \mystatet(T, J) &= \mystatet_{2}(0,0)\\
					 & = \mystatet_{1}(T, J) \\
					 &= g(\mystatet_{1}(T, J - 1), \myinputt_{1}(T, J - 1)) \\
					 & = g(\mystatet(T, J - 1), \myinputt(T, J - 1)) \\
					 \end{aligned}
					 $$
					 and
					 $$
					 \begin{aligned}
					 &(\mystatet(T, J - 1), \myinputt(T, J - 1)) \\
					 &= (\mystatet_{1}(T, J - 1), \myinputt_{1}(T, J - 1))\in D.
					\end{aligned}
					$$
	\end{enumerate}\ifbool{report}{}{\qed}
\end{proof}

\subsection{Proof of Proposition \ref{theorem:concatenationsolution}}
\begin{proof}
		We show that the concatenation result $\psi = (\mystatet,  \myinputt) = (\mystatet_{1}|\mystatet_{2}, \myinputt_{1}|\myinputt_{2})$ satisfies the conditions in Definition \ref{definition:solution}. Namely, we show in this proof that each of the following items holds:
		\begin{enumerate}
			\item $\myinputt$ is a hybrid input;
			\item $\mystatet$ is a hybrid arc;
			\item $(\mystatet(0,0), \myinputt(0,0))\in \overline{C} \cup D$;
			\item $\dom \mystatet = \dom \myinputt (= \dom (\mystatet, \myinputt))$;
			\item for each $j\in \mathbb{N}$ such that $I^{j}_{\psi}$ has nonempty interior $\interior(I^{j}_{\psi})$, $(\mystatet, \myinputt)$ satisfies
			$
			(\mystatet(t, j),\myinputt(t, j))\in C
			$ for all $t\in \interior I^{j}_{\psi}$;
			\item for each $j\in \mathbb{N}$ such that $I^{j}_{\psi}$ has nonempty interior $\interior(I^{j}_{\psi})$, $(\mystatet, \myinputt)$ satisfies 
			$
			\frac{\text{d}}{\text{d} t} {\mystatet}(t,j) = f(\mystatet(t,j), \myinputt(t,j))
			$ for almost all $t\in I^{j}_{\psi}$;
			\item for all $(t,j)\in \dom (\mystatet, \myinputt)$ such that $(t,j + 1)\in \dom (\mystatet, \myinputt)$, 
			$
			(\mystatet(t, j), \myinputt(t, j))\in D$ and
			$\mystatet(t,j+ 1) = g(\mystatet(t,j), \myinputt(t, j))$.
		\end{enumerate}
		
		Given that $\psi_{1}$ is compact, therefore, it is feasible to concatenate $\psi_{2}$ to $\psi_{1}$. Next, we show that each of the items listed above holds as follows:
		\begin{enumerate}
			\item Lemma \ref{lemma:concatenation_hybridinput} guarantees that $\myinputt$ is a hybrid input.
			\item Given that $\mystatet_{1}(T, J) = \mystatet_{2}(0,0)$, therefore, Lemma \ref{lemma:concatenation_hybridarc} guarantees that $\mystatet$ is a hybrid arc.
			\item Given that $\psi_{1}$ is a solution pair to $\mathcal{H}$, it follows that $(\mystatet_{1}(0,0), \myinputt_{1}(0, 0)) \in\overline{C} \cup D$. According to Definition \ref{definition:concatenation}, we have $$(\mystatet(t, j), \myinputt(t,  j)) = (\mystatet_{1}(t, j), \myinputt_{1}(t, j))$$ for all $(t, j)\in \dom \psi_{1}$. Since $(0,0)\in \dom \psi_{1}$, it is established that
			$$\begin{aligned}
			\psi(0,0) &= (\mystatet(0,0), \myinputt(0, 0)) \\
			&= (\mystatet_{1}(0,0), \myinputt_{1}(0, 0))\in\overline{C} \cup D.
			\end{aligned}$$
			\item Given that $\psi_{1}$ and $\psi_{2}$ are solution pairs to $\mathcal{H}$, according to Definition \ref{definition:solution}, it follows that
			$$
			\dom \psi_{1} =\dom \mystatet_{1} = \dom \myinputt_{1},
			$$
			$$
			\dom \psi_{2} = \dom \mystatet_{2} = \dom \myinputt_{2}.
			$$
			
			According to Definition \ref{definition:concatenation}, it follows that
			$$\dom \mystatet = \dom \mystatet_{1} \cup (\dom \mystatet_{2} +\{(T, J)\})$$ and $$\dom \myinputt = \dom \myinputt_{1} \cup (\dom \myinputt_{2} +\{(T, J)\}).$$
			Therefore, the following is established:
			$$
			\begin{aligned}
			\dom \mystatet &= \dom \mystatet_{1} \cup (\dom \mystatet_{2} +\{(T, J)\}) \\
			&= \dom \myinputt_{1} \cup (\dom \myinputt_{2} +\{(T, J)\}) = \dom \myinputt.
			\end{aligned}
			$$
			\item Given that $\psi_{2}(0, 0)\in C$ if both $I_{\psi_{1}}^{J}$ and $I_{\psi_{2}}^{0}$ have nonempty interior, therefore, Lemma \ref{lemma:concatenation_solution_c} ensures that for each $j\in \mathbb{N}$ such that $I^{j}_{\psi}$ has nonempty interior $\interior(I^{j}_{\psi})$, $(\mystatet, \myinputt)$ satisfies
			$$
			(\mystatet(t, j),\myinputt(t, j))\in C
			$$ for all $t\in \interior I^{j}_{\psi}$.
			\item Lemma \ref{lemma:concatenation_solution_f} guarantees that for each $j\in \mathbb{N}$ such that $I^{j}_{\psi}$ has nonempty interior $\interior(I^{j}_{\psi})$, $(\mystatet, \myinputt)$ satisfies
			$$
			\frac{\text{d}}{\text{d} t} {\mystatet}(t,j) = f(\mystatet(t,j), \myinputt(t,j))
			$$ for almost all $t\in I^{j}_{\psi}$.
			\item Given that $\mystatet_{1}(T, J) = \mystatet_{2}(0,0)$, therefore, Lemma \ref{lemma:concatenation_solution_dg} guarantees that for all $(t,j)\in \dom (\mystatet, \myinputt)$ such that $(t,j + 1)\in \dom (\mystatet, \myinputt)$, 
			$$
			\begin{aligned}
			(\mystatet(t, j), \myinputt(t, j))&\in D\\
			\mystatet(t,j+ 1) &= g(\mystatet(t,j), \myinputt(t, j)).
			\end{aligned}
			$$
		\end{enumerate}
	In conclusion, the concatenation result $\psi$ satisfies all the conditions in Definition \ref{definition:solution} and, hence, is a solution pair to $\hs$.
	\end{proof}
\section{A Computational Framework to Simulate Continuous Dynamics}\label{section:computationsimulation}
\subsection{Numerical integration scheme model}\label{section:numericalintegration}
A general numerical integration scheme can be modeled as
\begin{equation}
\label{equation:numericalintegrationimplicit}
	F_{s}(x_{k}, x_{k+ 1}, \tilde{\myinputt}, f, t) = 0
\end{equation} where $s$ denotes the step size,  $x_{k}$ denotes the state at the $k$th step, $x_{k+ 1}$ denotes the approximation of the state at the next step, $\tilde{\myinputt}$ denotes the applied input signal, $f$ denotes the flow map, and $t$ denotes the time at current step, namely, $t = ks$. Two examples on how (\ref{equation:numericalintegrationimplicit}) models explicit and implicit numerical integration scheme are given as follows.
\begin{example}
	(Forward Euler method) Given the differential equation
	$
	\label{equation:governequationexp}
		\dot{x} = f(x, \tilde{\myinputt})
$ with initial state $x_{0}$ and input signal $\tilde{\myinputt}\in \mathcal{U}_{C}$,
	the integration scheme using the forward Euler method is
	\begin{equation}
	\label{equation:forwardeulerintegrationscheme}
		x_{k+ 1}= 	x_{k} + sf(x, \tilde{\myinputt}(t))		
	\end{equation}
	 Following (\ref{equation:numericalintegrationimplicit}), (\ref{equation:forwardeulerintegrationscheme}) can be modeled as 
	\begin{equation}
		F_{s}(x_{k}, x_{k+ 1}, \tilde{\myinputt}, f, t) := x_{k} + sf(x_{k}, \tilde{\myinputt}(t)) - x_{k+ 1}
	\end{equation}
	and $x_{k+ 1}$ can be obtained by solving (\ref{equation:numericalintegrationimplicit}).
\end{example}

\begin{example}
	(Backward Euler method) Given the differential equation
	$
	\label{equation:governequationimp}
	\dot{x} = f(x, \tilde{\myinputt})
	$ with initial state $x_{0}$  and input signal $\tilde{\myinputt}\in \mathcal{U}_{C}$,
	the integration scheme using backward Euler method is
	\begin{equation}
	\label{equation:backwardeulerintegrationscheme}
	x_{k + 1} = x_{k} + sf(x_{k + 1}, \tilde{\myinputt}(t + s))		
	\end{equation}
	 Following (\ref{equation:numericalintegrationimplicit}), (\ref{equation:backwardeulerintegrationscheme}) can be modeled as 
	$$
	F_{s}(x_{k}, x_{k + 1}, \tilde{\myinputt}, f, t) := x_{k} + sf(x_{k + 1}, \tilde{\myinputt}(t + s)) 
	- x_{k + 1}
	$$
	and $x_{k + 1}$ can be obtained by solving (\ref{equation:numericalintegrationimplicit}).
\end{example}

\subsection{Zero-crossing detection model to approximate $\hat{t}$}\label{section:zerocrossing}
When the priority option $\texttt{rule} = 1$, following Step 2 above, $\hat{t}$ in (\ref{equation:integrationduration}) is the first time when $(\hat{\phi}, \tilde{\myinputt})$ leaves $C$. When the priority option $\texttt{rule} = 2$, $\hat{t}$ is the first time when $(\hat{\phi}, \tilde{\myinputt})$ leaves $C\backslash D$.

 Given a set $S\subset \myreals[n]\times\myreals[m]$, a zero-crossing function $h: \mathbb{R}^{n}\times \mathbb{R}^{m}\to \mathbb{R}$ to detect whether the pair $(\mystatet, \myinputt)$ is in set $S$ is given by
	\begin{enumerate}
		\item $h(x, u) > 0$ for all $(x, u)\in \interior S$,
		\item $h(x, u)< 0$ for all $(x, u)\in \interior ((\mathbb{R}^{n}\times \mathbb{R}^{m})\backslash S)$,
		\item $h(x, u) = 0$ for all $(x, u)\in \partial  S$,
		\item $h$ is continuous over $\mathbb{R}^{n}\times \mathbb{R}^{m}$.
\end{enumerate}
When $\texttt{rule} = 1$, this function can be used with $S = C\backslash D$ to determine $\hat{t}$ as the first time when $(\hat{\phi}, \tilde{\myinputt})$ leaves $C\backslash D$ by checking zero crossings of $h_{f}(\hat{\phi}, \tilde{\myinputt})$. When $\texttt{rule} = 2$, this function can be used with $S = C$ to determine $\hat{t}$ as the first time when $(\hat{\phi}, \tilde{\myinputt})$ leaves $C$ by checking zero crossings of $h(\hat{\phi}, \tilde{\myinputt})$. Next, we illustrate this approach by constructing a zero-crossing function for the bouncing ball system in Example \ref{example:bouncingball}.
\begin{example}[(Example \ref{example:bouncingball}, revisited)] In the bouncing ball system in Example \ref{example:bouncingball}, the flow set is $C = \{(x, u)\in \mathbb{R}^{2}\times \mathbb{R}: x_{1}\geq 0\}$ and the jump set is $D = \{(x, u) \in \mathbb{R}^{2}\times \mathbb{R}: x_{1} = 0, x_{2} \leq 0, u\geq 0\}$. Then,
	$$
		h(x, u): = x_{1}\quad \forall (x, u)\in \mathbb{R}^{n}\times \mathbb{R}^{m}.
	$$ is a zero-crossing function for the detection of solutions leaving $C$. The same function can be used to detect solutions leaving $C\backslash D$ since $D\subset \partial C$ which implies that $\interior (C\backslash D) = \interior C$ and $\interior ((\mathbb{R}^{n}\times \mathbb{R}^{m})\backslash (C\backslash D)) = \interior ((\mathbb{R}^{n}\times \mathbb{R}^{m})\backslash C)$. 
\end{example}

The zero-crossing detection algorithm that approximates $\hat{t}$ can be modeled as 
\begin{equation}
	\label{equation:zerocrossing}
	\hat{t} = t_{zcd} (\hat{\mystatet}, \tilde{\myinputt}, \texttt{rule}, C, D)
\end{equation}
where the function $t_{zcd}$ employs a zero-crossing detection function for $C\backslash D$ or $C$, depending on the value of $\texttt{rule}$. We set $\hat{t}$ as $-1$ when no zero-crossing is detected.

\subsection{A computational framework to simulate continuous dynamics}
A computational approach to simulate the continuous dynamics of $\hs$ is given in Algorithm \ref{algo:flowsystemsimulator}. 
\begin{algorithm}[htbp]
	\caption{A computational framework to simulate the continuous dynamics $\texttt{continuous\_simulator}$}
	\begin{algorithmic}[1]
		\Function{$\texttt{continuous\_simulator}$}{$C, f, D, \texttt{rule}, s, x_{0}, \tilde{\myinputt}$}
		\raggedright
		\State Set $\mystatet_{0} \leftarrow x_{0}$, $\hat{t}\leftarrow 0$. 
		\While{$t\leq\overline{t}(\tilde{\myinputt}) \&   \hat{t} = -1$}
			\State Compute $\hat{\mystatet}$ by solving $F_{s}(\mystatet_{0}, \hat{\mystatet}, \tilde{\myinputt}, f, t) = 0$.
			\State $\hat{t} \leftarrow t_{zcd} (\hat{\mystatet}, \tilde{\myinputt}, \texttt{rule}, C, D)$.
			\If{$\hat{t} = -1$}
				\State $\mystatet\leftarrow \mystatet| \hat{\mystatet}$, $\myinputt \leftarrow \myinputt| \hat{\myinputt}$.
				\State $\mystatet_{0} \leftarrow \hat{\mystatet}(\hat{T}, 0)$ where $(\hat{T}, 0) = \max \dom \hat{\mystatet}$.
				\Else
				\State $\mystatet\leftarrow \mystatet| \hat{\mystatet}([0, \hat{t}], 0)$, $\myinputt \leftarrow \myinputt| \hat{\myinputt}([0, \hat{t}], 0)$.
				\State $\mystatet_{0} \leftarrow \hat{\mystatet}(\hat{t}, 0)$.
			\EndIf
		\EndWhile
		\State \Return $(\mystatet, \myinputt)$.
		\EndFunction
	\end{algorithmic}
\label{algo:flowsystemsimulator}
\end{algorithm}


\section{Proof of Proposition \ref{proposition:motioninflated}}\label{section:proofinflated}
\subsection{Theoretical Tools to Prove Proposition \ref{proposition:motioninflated}}
We show that $\psi = (\mystatet,  \myinputt)$ in Proposition \ref{proposition:motioninflated} satisfies the conditions in Definition \ref{definition:solution} for $(C_{\delta}, f_{\delta}, D_{\delta}, g_{\delta})$ where $(C_{\delta}, f_{\delta}, D_{\delta}, g_{\delta})$ is the $\delta$-inflation of hybrid system of $(C, f, D, g)$ for each $\delta > 0$, as defined in Definition \ref{definition:inflation}.

\begin{lemma}\label{lemma:inflationsame}
	Given a hybrid system $\hs = (C, f, D, g)$ and its $\delta$-inflation $\hs_{\delta} = (C_{\delta}, f_{\delta}, D_{\delta}, g_{\delta})$ for each $\delta > 0$, if $\psi = (\mystatet, \myinputt)$ is a solution pair to $\mathcal{H}$, then $\psi$ is also a solution pair to $\hs_{\delta}$.
\end{lemma}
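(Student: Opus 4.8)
The plan is to verify each of the three conditions in Definition \ref{definition:solution} for the inflated data $(C_{\delta}, f_{\delta}, D_{\delta}, g_{\delta})$, exploiting that $\psi = (\mystatet, \myinputt)$ already satisfies those conditions for $(C, f, D, g)$. The crucial preliminary observation is a pair of set inclusions, $C \subset C_{\delta}$ and $D \subset D_{\delta}$, together with the pointwise identities $f_{\delta}(x,u) = f(x,u)$ for all $(x,u)\in C$ and $g_{\delta}(x,u) = g(x,u)$ for all $(x,u)\in D$. Once these are in place, every requirement imposed on $\psi$ by $\hs_{\delta}$ reduces verbatim to a requirement already known to hold under $\hs$.

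First I would establish $C \subset C_{\delta}$: given any $(x,u)\in C$, taking $(y,v) = (x,u)\in C$ in the definition (\ref{equation:inflatedflowset}) gives $x\in y + \delta\myball$ and $u\in v + \delta\myball$ trivially, since $0\in \delta\myball$ for every $\delta > 0$; hence $(x,u)\in C_{\delta}$. The identical argument applied to (\ref{equation:inflatedjumpset}) yields $D\subset D_{\delta}$, and taking closures gives $\overline{C}\subset \overline{C_{\delta}}$. The map identities are immediate from (\ref{equation:inflatedflowmap}) and (\ref{equation:inflatedjumpmap}), which \emph{define} $f_{\delta} = f$ on all of $C_{\delta}\supset C$ and $g_{\delta} = g$ on all of $D_{\delta}\supset D$; in particular the restrictions agree on the smaller sets $C$ and $D$.

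With these facts, each condition of Definition \ref{definition:solution} transfers directly. For condition~1, $(\mystatet(0,0),\myinputt(0,0))\in \overline{C}\cup D \subset \overline{C_{\delta}}\cup D_{\delta}$, while the domain equality $\dom \mystatet = \dom \myinputt$ is intrinsic to $\psi$ and unaffected by the change of data. For condition~2, on each interval $I^{j}_{\mystatet}$ with nonempty interior, the membership $(\mystatet(t,j),\myinputt(t,j))\in C \subset C_{\delta}$ persists for all $t\in \interior I^{j}_{\mystatet}$, and for almost all $t\in I^{j}_{\mystatet}$ one has $\tfrac{\mathrm{d}}{\mathrm{d}t}\mystatet(t,j) = f(\mystatet(t,j),\myinputt(t,j)) = f_{\delta}(\mystatet(t,j),\myinputt(t,j))$, using $f_{\delta}|_{C} = f$. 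Condition~3 follows identically: for all $(t,j)$ with $(t,j+1)\in \dom \psi$, $(\mystatet(t,j),\myinputt(t,j))\in D\subset D_{\delta}$ and $\mystatet(t,j+1) = g(\mystatet(t,j),\myinputt(t,j)) = g_{\delta}(\mystatet(t,j),\myinputt(t,j))$ by $g_{\delta}|_{D} = g$.

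The main obstacle, such as it is, is purely bookkeeping rather than analysis: one must confirm that the inflated maps genuinely coincide with the originals on the un-inflated sets, so that the flow and jump equations satisfied by $\psi$ under $\hs$ are \emph{literally} the equations demanded under $\hs_{\delta}$. Because (\ref{equation:inflatedflowmap}) and (\ref{equation:inflatedjumpmap}) set the inflated maps equal to $f$ and $g$ on the enlarged domains, no approximation, limiting, or regularity argument is required, and the verification is entirely direct.
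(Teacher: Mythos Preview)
Your proposal is correct and follows essentially the same approach as the paper's own proof: establish the inclusions $C\subset C_{\delta}$, $D\subset D_{\delta}$ (hence $\overline{C}\subset\overline{C_{\delta}}$), invoke the definitions $f_{\delta}=f$ on $C_{\delta}$ and $g_{\delta}=g$ on $D_{\delta}$, and then read off each item of Definition~\ref{definition:solution} directly. If anything, you are slightly more explicit than the paper in justifying $C\subset C_{\delta}$ via the witness $(y,v)=(x,u)$ and $0\in\delta\myball$; the paper simply asserts the inclusion.
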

\begin{proof}
	We show that $\psi$ satisfies all the conditions in Definition \ref{definition:solution} for $(C_{\delta}, f_{\delta}, D_{\delta}, g_{\delta})$, namely,
	\begin{enumerate}
		\item $\mystatet$ is a hybrid arc and $\myinputt$ is a hybrid input;
		\item $(\phi(0,0), \myinputt(0,0))\in \overline{C_{\delta}}\cup D_{\delta}$;
		\item For each $j\in \mathbb{N}$ such that $I^{j}_{\mystatet}$ has nonempty interior $\interior(I^{j}_{\mystatet})$, $(\mystatet, \myinputt)$ satisfies
		$$
		(\mystatet(t, j),\myinputt(t, j))\in C_{\delta}
		$$ for all $t\in \interior I^{j}_{\mystatet}$, and 
		$$
		\frac{\text{d}}{\text{d} t} {\mystatet}(t,j) = f_{\delta}(\mystatet(t,j), \myinputt(t,j))
		$$ for almost all $t\in I^{j}_{\mystatet}$.
		\item For all $(t,j)\in \dom (\mystatet, \myinputt)$ such that $(t,j + 1)\in \dom (\mystatet, \myinputt)$, 
		\begin{equation}
		\begin{aligned}
		(\mystatet(t, j), \myinputt(t, j))&\in D_{\delta}\\
		\mystatet(t,j+ 1) &= g_{\delta}(\mystatet(t,j), \myinputt(t, j)).
		\end{aligned}
		\end{equation}
	\end{enumerate}
	Next, we show that each item above is satisfied.
	\begin{enumerate}
		\item Given that $\psi = (\mystatet, \myinputt)$ is a solution pair to $\hs$, therefore, $\mystatet$ is a hybrid arc and $\myinputt$ is a hybrid input. 
		\item Since $\psi = (\mystatet, \myinputt)$ is a solution pair to $\hs$, then $(\phi(0,0), \myinputt(0,0))\in \overline{C}\cup D$. Note that $C\subset C_{\delta}$, $D\subset D_{\delta}$. Therefore, 
		$$
		(\phi(0,0), \myinputt(0,0))\in \overline{C}\cup D \subset \overline{C_{\delta}}\cup D_{\delta}.
		$$
		\item For all $j\in \mathbb{N}$ such that $I^{j}= \{t: (t, j)\in \dom (\phi, \myinputt)\}$ has nonempty interior, 
		\begin{enumerate}
			\item $(\phi(t, j),\myinputt(t, j))\in C \subset C_{\delta}$ for all $t\in \interior I^j$.
			\item for almost all $t\in I^j$,
			$$
			\dot{\phi}(t,j) = f(\phi(t,j), \myinputt(t,j)) = f_{\delta}(\phi(t,j), \myinputt(t,j)).
			$$
		\end{enumerate}
		\item For all $(t,j)\in \dom (\phi, \myinputt)$ such that $(t,j + 1)\in \dom (\phi, \myinputt)$,
		$$
		\begin{aligned}
		(\phi(t, j), \myinputt(t, j))&\in D\subset D_{\delta}\\
		\phi(t,j+ 1) &= g(\phi(t,j), \myinputt(t, j)) = g_{\delta}(\phi(t,j), \myinputt(t, j))
		\end{aligned}
		$$
	\end{enumerate}
	Since $\psi$ satisfies all the items in Definition \ref{definition:solution} for data $(C_{\delta}, f_{\delta}, D_{\delta}, g_{\delta})$, it is established that $\psi$ is a solution pair to $(C_{\delta}, f_{\delta}, D_{\delta}, g_{\delta})$.\ifbool{report}{}{\qed}
\end{proof}
\subsection{Proof of Proposition \ref{proposition:motioninflated}}
\begin{proof}
	We show that the motion plan $\psi = (\mystatet, \myinputt)$ to $\mathcal{P}$ satisfies each condition in Problem \ref{problem:motionplanning} for the data $\mathcal{P}_{\delta} = (X_{0}, X_{f}, X_{u},  (C_{\delta}, f_{\delta}, D_{\delta}, g_{\delta}))$, namely,
	\begin{enumerate}
		\item $\phi(0, 0) \in X_{0}$;
		\item $(\mystatet, \myinputt)$ is a solution pair to $\mathcal{H}_{\delta}$;
		\item $(T,J)$ is such that $\mystatet(T,J)\in X_{f}$, namely, the solution belongs to the final state set at hybrid time $(T, J)$;
		\item $(\mystatet(t,j), \myinputt(t, j))\notin X_{u}$ for each $(t,j)\in \dom (\mystatet, \myinputt)$ such that $t + j \leq T+ J$.
\end{enumerate}
Given that the initial state set $X_{0}$, the final state set $X_{f}$, and the unsafe set $X_{u}$ are the same in both $\mathcal{P}$ and $\mathcal{P}_{\delta}$ and that $\psi$ is a motion plan to $\mathcal{P}$, items 1, 3, and 4 are satisfied for free. Note that, by construction, every solution pair to $\hs$ is a solution pair to $\hs_{\delta}$. In fact, by Lemma \ref{lemma:inflationsame}, $(\mystatet, \myinputt)$ is a solution pair to $\mathcal{H}_{\delta}$, namely, item 2 is satisfied. Therefore, all the items are satisfied and $(\mystatet, \myinputt)$ is a motion plan to $\mathcal{P}_{\delta}$. \ifbool{report}{}{\qed}
\end{proof}
\section{Proof of Lemma \ref{lemma:motionplanintlated}}\label{section:proofinflatedlemma}
To prove Lemma \ref{lemma:motionplanintlated}, we first establish the following result.
\begin{lemma}\label{lemma:inflationsubset}
	Given a hybrid system defined as $\hs = (C, f, D, g)$ and its $\delta_{f}$-inflation, denoted $\hs_{\delta_{f}} = (C_{\delta_{f}}, f_{\delta_{f}}, D_{\delta_{f}}, g_{\delta_{f}})$ and defined in (\ref{model:inflatedhybridsystem}), if a state input pair $(y, v)\in \myreals[n]\times\myreals[m]$ is such that $(y, v)\in X$, where $X$ can be either $C$ or $D$, then for each $\delta'\in [0, \delta_{f}]$, we have $(y + \delta'\myball, v + \delta'\myball) \subset X_{\delta_{f}}$.
\end{lemma}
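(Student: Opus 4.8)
The plan is to establish the claimed set inclusion by a direct element-chasing argument that unwinds the definition of the $\delta_f$-inflation. Fix $(y,v)\in X$, where $X$ is either $C$ or $D$, and fix $\delta'\in[0,\delta_f]$. I would take an arbitrary pair $(x,u)$ with $x\in y+\delta'\mathbb{B}$ and $u\in v+\delta'\mathbb{B}$, and show that $(x,u)$ satisfies the membership condition for $X_{\delta_f}$ as stated in (\ref{equation:inflatedflowset}) when $X=C$ and in (\ref{equation:inflatedjumpset}) when $X=D$. Since both inflation definitions have exactly the same form (an existential over a witness pair in the un-inflated set together with the two closed-ball constraints), the two cases $X=C$ and $X=D$ can be handled by a single uniform argument, with $X_{\delta_f}$ standing for $C_{\delta_f}$ or $D_{\delta_f}$ accordingly.

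The central observation that makes the witness work is the monotonicity of the closed balls: because $\delta'\leq\delta_f$ and $\mathbb{B}$ is the closed unit ball, scaling gives $\delta'\mathbb{B}\subseteq\delta_f\mathbb{B}$, and hence the Minkowski translates satisfy $y+\delta'\mathbb{B}\subseteq y+\delta_f\mathbb{B}$ and $v+\delta'\mathbb{B}\subseteq v+\delta_f\mathbb{B}$. I would then use the pair $(y,v)\in X$ itself as the existential witness required by the inflation definition: for the chosen $(x,u)$ we have $x\in y+\delta'\mathbb{B}\subseteq y+\delta_f\mathbb{B}$ and $u\in v+\delta'\mathbb{B}\subseteq v+\delta_f\mathbb{B}$, which is precisely the condition placing $(x,u)$ in $X_{\delta_f}$. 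Since $(x,u)$ was arbitrary, this yields the desired inclusion $(y+\delta'\mathbb{B},\,v+\delta'\mathbb{B})\subset X_{\delta_f}$.

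There is no serious obstacle here; the statement is essentially a definitional consequence of the inflation construction. The only points requiring care are, first, to read the notation $(y+\delta'\mathbb{B},\,v+\delta'\mathbb{B})$ correctly as the Cartesian product of the two state- and input-balls (so that the inclusion is a claim about all such pairs simultaneously), and second, to note that the boundary value $\delta'=0$ causes no difficulty, since then $\delta'\mathbb{B}=\{0\}$ and the argument degenerates to $(y,v)\in X\subset X_{\delta_f}$. I would therefore present the proof as a short, symmetric case-free derivation, emphasizing only the nested-ball inclusion $\delta'\mathbb{B}\subseteq\delta_f\mathbb{B}$ and the choice of $(y,v)$ as witness, and this is the lemma that Lemma~\ref{lemma:motionplanintlated} will subsequently invoke to transfer the positive safety clearance of a motion plan into positive dynamics clearance for the inflated system.
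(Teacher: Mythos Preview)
Your proof is correct and follows essentially the same approach as the paper: an element-chasing argument that uses the ball monotonicity $\delta'\mathbb{B}\subseteq\delta_f\mathbb{B}$ together with $(y,v)\in X$ as the existential witness required by the inflation definition. The paper takes a minor detour by first placing the arbitrary point in $X_{\delta'}$ and then arguing $X_{\delta'}\subset X_{\delta_f}$, whereas you go directly to $X_{\delta_f}$; this is a cosmetic difference only.
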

\begin{proof}
	Note that, from (\ref{model:inflatedhybridsystem}), $C_{\delta_{f}}$ and $D_{\delta_{f}}$ are captured by the following set by choosing $X = C$ or $X = D$, respectively:
	\begin{equation}\label{equation:generalizedinflationset}
	\begin{aligned}
	X_{\delta_{f}} := &\{(x, u)\in \mathbb{R}^{n}\times \mathbb{R}^{m}: \exists (y,  v)\in X:  x\in y + \delta_{f}\myball, \\
	&u\in v+ \delta_{f} \myball\}.
	\end{aligned}
	\end{equation}
	Then, to prove $(y + \delta'\myball, v + \delta'\myball) \subset X_{\delta_{f}}$ where $(y, v)\in X$, we show that for each $(y', v')\in (y + \delta'\myball, v + \delta'\myball)$, we have $(y', v') \in X_{\delta_{f}}$.
	
	Given that $(y', v')\in (y + \delta'\myball, v + \delta'\myball)$ where $(y, v)\in X$, by (\ref{equation:generalizedinflationset}), it follows that 
	$$
	(y', v')\in X_{\delta'}.
	$$
	Since $\delta' \in [0, \delta_{f}]$, it follows that $y + \delta'\myball \subset y + \delta_{f}\myball$ and $v + \delta'\myball \subset  v + \delta_{f}\myball$. Then, by (\ref{equation:generalizedinflationset}), we have that
	$$
		X_{\delta'} \subset X_{\delta_{f}}.
	$$
	Therefore, for each $(y', v')\in (y + \delta'\myball, v + \delta'\myball)$ and each $\delta' \in [0, \delta_{f}]$ where $(y, v)\in X$, we have 
	$$
	(y', v')\in X_{\delta'}\subset X_{\delta_{f}}.
	$$
	  Hence, we have 
	  $$
	  	(y + \delta'\myball, v + \delta'\myball) \subset X_{\delta_{f}}.
	  $$
\end{proof}
To establish Lemma \ref{lemma:motionplanintlated}, we proceed as follows.
Proposition \ref{proposition:motioninflated} establishes that $\psi$, which is a motion to problem $\mathcal{P}$, is also a motion plan to problem $\mathcal{P}_{\delta_{f}}$. We need to show that $\psi$ has clearance $\min \{\delta_{s}, \delta_{f}\}$.
	
	Since $\psi = (\mystatet, \myinputt)$ is a solution pair to $\mathcal{H} = (C, f, D, g)$, then for all $j\in \mathbb{N}$ such that $I^{j}= \{t: (t, j)\in \dom (\mystatet, \myinputt)\}$ has nonempty interior, we have
	$$
	(\phi(t, j),\myinputt(t, j))\in C
	$$ for all $t\in \interior I^j.$
	Therefore, by Lemma \ref{lemma:inflationsubset}, for each $\delta'\in [0, \delta_{f}]$ and each $j\in \mathbb{N}$ such that $I^{j}$ has nonempty interior, it follows that
	$$
	(\phi(t, j) +\delta' \mathbb{B}, \myinputt(t, j) + \delta' \mathbb{B} )\subset C_{\delta_{f}}.
	$$
	Hence, item 1 in Definition \ref{definition:dynamicsclearance} is satisfied. 
	
	Similarly, since $\psi$ is a solution pair to $\mathcal{H} = (C, f, D, g)$, for each $(t, j)\in \dom \psi $ such that $(t, j + 1)\in \dom \psi$, we have
	$$(\phi(t, j), \myinputt(t, j))\in D.$$ 
	Therefore, by Lemma \ref{lemma:inflationsubset}, for each $\delta'\in [0, \delta_{f}]$ and each $(t, j)\in \dom \psi $ such that $(t, j + 1)\in \dom \psi$, it follows that
	$$
	(\phi(t, j) +\delta_{f} \mathbb{B}, \myinputt(t, j) + \delta_{f} \mathbb{B} )\subset D_{\delta_{f}}.
	$$
	Hence, item 2 in Definition \ref{definition:dynamicsclearance} is also satisfied. 
	
	Therefore, $\delta_{f}$ satisfies all the conditions in Definition \ref{definition:dynamicsclearance} and, hence, is the dynamics clearance. Then, by Definition \ref{definition:clearance}, the clearance of $\psi$ is $\min \{\delta_{s}, \delta_{f}\}$. \ifbool{report}{}{\qed}
\section{Proof of Lemma \ref{lemma:nearestvertex}}\label{section:proof_near}
The proof closely parallels that of Lemma 4 as presented in \cite{kleinbort2018probabilistic}. Suppose there exists a vertex, denoted $z$, in the search graph $\mathcal{T} = (V, E)$ such that $\overline{x}_{z} \in S$ and $\overline{x}_{z}\notin x_{c} +\delta\mathbb{B}$, as otherwise it is immediate that $\overline{x}_{v_{cur}}\in x_{c} + \delta\mathbb{B}$ because all the vertices in $\mathcal{T}$ belong to $x_{c} +\delta\mathbb{B}$. We show that, under the conditions in Lemma  \ref{lemma:nearestvertex}, if the sampling point $x_{rand}$ returned by the function call $\texttt{random\_state}$ is such that $x_{rand}\in x_{c} + \pn{(}\delta/5\pn{)}\mathbb{B}$, it follows that $\overline{x}_{v_{cur}}\in x_{c} + \delta \mathbb{B}$.
	
Given that $x_{rand}\in x_{c} + \pn{(}\delta/5\pn{)}\mathbb{B}$, it follows that
\begin{equation}\label{equation:nearestrandom1}
	|x_{rand} - x_{c}| \leq \frac{\delta}{5}.
\end{equation}
	Since $\overline{x}_{v}\in x_{c}+ \pn{(}2\delta/5\pn{)}\mathbb{B}$, then we have
$$
	|\overline{x}_{v} - x_{c}| \leq \frac{2\delta}{5}.
$$
	Therefore, by the triangle inequality, it follows that
	
$$
\begin{aligned}
	&|x_{rand} - \overline{x}_{v}| \leq |x_{rand} - x_{c}| +  |x_{c} - \overline{x}_{v} | \leq \frac{3\delta}{5}.\\
\end{aligned}
$$
	
Since $\overline{x}_{z}\notin x_{c} +\delta\mathbb{B}$, then 
$
	|\overline{x}_{z} - x_{c}| > \delta.
$
Using (\ref{equation:nearestrandom1}), by the triangle inequality, it follows that
$$
\begin{aligned}
	\delta &< |\overline{x}_{z} - x_{c}| \leq |\overline{x}_{z} - x_{rand}| + | x_{rand} - x_{c}|  \\
	& \leq|\overline{x}_{z} - x_{rand}| + \frac{\delta}{5}.
\end{aligned}
$$
namely,
$
	|\overline{x}_{z} - x_{rand}| > 4\delta/5.
$
	
	Since $|\overline{x}_{v} - x_{rand}| \leq 3\delta/5$ and $|\overline{x}_{z} - x_{rand}| > 4\delta/5$, we have that $x_{rand}$ is closer to $v$ than to $z$. This implies that $z$, which can be any vertex that is not in $x_{c} +\delta\mathbb{B}$, will not be reported as $v_{cur}$ by the function call $\texttt{nearest\_neighbor}$. 
	
	If $v$ is reported as $v_{cur}$, then, since $\overline{x}_{v} \in x_{c}+ 2\delta/5$, we have that $\overline{x}_{v} \in x_{c}+ 2\delta/5 \subset x_{c} + \delta\mathbb{B}$.
	If $v$ is not reported as $v_{cur}$, then there must exists another vertex $y\in V$ such that 
	$y$ is either closer to or equidistant from $x_{rand}$ as compared to $v$, i.e.,
	$$
	|\overline{x}_{y} - x_{rand}| \leq |\overline{x}_{v} - x_{rand}|\leq \frac{3\delta}{5}.
	$$ Then, $|\overline{x}_{y} - x_{c}|\leq |\overline{x}_{y} - x_{rand}| + |x_{rand} - x_{c}|\leq  4\delta/5$, which implies that $\overline{x}_{y} \in x_{c} + \delta \mathbb{B}$. This implies that if the sampling point $x_{rand}$ is such that $x_{rand}\in x_{c} + \pn{(}\delta/5\pn{)}\mathbb{B}$, no matter $v$ or $y$ is reported as $x_{v_{cur}}$, we have $x_{v_{cur}}\in x_{c} + \delta\mathbb{B}$.
	
	Note that $x_{rand}\in x_{c} + \pn{(}\delta/5\pn{)}\mathbb{B}$ implies $x_{rand}$ is sampled from the ball centered at $x_{c}$ with radius $\delta/5$. Therefore, by Definition \ref{assumption:uniformsample} and (\ref{equation:probabilitylebesgue}), the probability of $x_{rand}\in x_{c} + \pn{(}\delta/5\pn{)}\mathbb{B}$ is $\zeta_{n}(\delta/5)^{n}/\mu(S)$, where $\zeta_{n}$ denotes the Lebesgue measure of the unit ball in $\mathbb{R}^{n}$.
\section{Proof of Lemma \ref{lemma:pccontinuouslowerbound}}\label{section:proof_continuous}
\subsection{Supporting Lemmas to Prove Lemma \ref{lemma:pccontinuouslowerbound}}
\begin{lemma}(Lemma 2 in \cite{kleinbort2018probabilistic})\label{lemma:kleinbort}
	Given a hybrid system $\mathcal{H}$ that satisfies Assumption \ref{assumption:flowlipschitz},  let $\psi = (\mystatet, \myinputt)$ and $\psi' = (\mystatet', \myinputt')$ be two purely continuous solution pairs to $\mathcal{H}$ with $(T, 0) = \max \dom \psi$, $(T', 0) = \max \dom \psi'$ and $T' \leq T$.  The input functions $\myinputt$ and $\myinputt'$ are assumed to be constant, denoted $\myinputt: [0, T]\times\{0\} \to u_{C}\in U_{C}$ and $\myinputt': [0, T']\times\{0\} \to u'_{C}\in U_{C}$, respectively. Suppose initial state $\mystatet(0, 0) \in \mystatet'(0, 0) + \delta\mathbb{B}$ for some $\delta > 0$. Then $|\phi(T') - \phi'(T')| \leq e^{K^{f}_{x}T'} \delta  + K^{f}_{u}T'e^{K^{f}_{x}T'} \Delta u$, where $\Delta u = |u_{C} - u'_{C}|$, $K^{f}_{x}$ and $K^{f}_{u}$ are from Assumption \ref{assumption:flowlipschitz}.
\end{lemma}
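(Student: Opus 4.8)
The plan is to establish the bound by a standard Grönwall argument applied to the integral form of the two flows. First I would write both purely continuous solution pairs in integrated form on $[0, T']$: since $\phi$ and $\phi'$ are solutions to $\mathcal{H}$ with constant inputs $u_{C}$ and $u'_{C}$ respectively, and $T' \leq T$, both are absolutely continuous on $[0, T']$ and satisfy
\begin{equation*}
\phi(t) = \phi(0) + \int_{0}^{t} f(\phi(\tau), u_{C})\, d\tau, \qquad \phi'(t) = \phi'(0) + \int_{0}^{t} f(\phi'(\tau), u'_{C})\, d\tau
\end{equation*}
for all $t \in [0, T']$. Subtracting and setting $w(t) := |\phi(t) - \phi'(t)|$, the triangle inequality gives
\begin{equation*}
w(t) \leq |\phi(0) - \phi'(0)| + \int_{0}^{t} |f(\phi(\tau), u_{C}) - f(\phi'(\tau), u'_{C})|\, d\tau.
\end{equation*}

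Next I would split the integrand through the intermediate term $f(\phi'(\tau), u_{C})$ and invoke Assumption \ref{assumption:flowlipschitz} on each piece, obtaining $|f(\phi(\tau), u_{C}) - f(\phi'(\tau), u_{C})| \leq K^{f}_{x} w(\tau)$ for the state difference and $|f(\phi'(\tau), u_{C}) - f(\phi'(\tau), u'_{C})| \leq K^{f}_{u} \Delta u$ for the input difference, where $\Delta u = |u_{C} - u'_{C}|$. Combined with the hypothesis $|\phi(0) - \phi'(0)| \leq \delta$, this yields the affine-in-$t$ integral inequality
\begin{equation*}
w(t) \leq \delta + K^{f}_{u} \Delta u\, t + K^{f}_{x} \int_{0}^{t} w(\tau)\, d\tau, \qquad t \in [0, T'].
\end{equation*}

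To finish, I would note that the forcing term $\delta + K^{f}_{u} \Delta u\, t$ is nondecreasing in $t$, hence bounded above by its value $\delta + K^{f}_{u} \Delta u\, T'$ on $[0, T']$; replacing it by this constant and applying the integral form of Grönwall's inequality with constant rate $K^{f}_{x}$ gives $w(t) \leq (\delta + K^{f}_{u} \Delta u\, T')\, e^{K^{f}_{x} t}$. Evaluating at $t = T'$ and distributing the exponential produces exactly $|\phi(T') - \phi'(T')| \leq e^{K^{f}_{x} T'}\delta + K^{f}_{u} T' e^{K^{f}_{x} T'} \Delta u$, as claimed. The one point requiring care, and the main obstacle, is the application of the state-Lipschitz estimate at the intermediate argument: Assumption \ref{assumption:flowlipschitz} guarantees $|f(\phi(\tau), u_{C}) - f(\phi'(\tau), u_{C})| \leq K^{f}_{x} |\phi(\tau) - \phi'(\tau)|$ only when the off-trajectory pair $(\phi'(\tau), u_{C})$ lies in $C$, which is not automatic since $\phi'$ is generated with input $u'_{C}$. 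In the setting used here this is handled by the clearance hypotheses, which keep the relevant states in the interior of the flow set with room to spare so that the intermediate points remain admissible; alternatively the estimate is inherited directly from Lemma 2 of \cite{kleinbort2018probabilistic}, whose proof this argument mirrors.
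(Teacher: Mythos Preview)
Your argument is correct and is the standard Gr\"onwall derivation that underlies this bound. Note, however, that the paper does not supply its own proof of this lemma: it is stated as Lemma~2 of \cite{kleinbort2018probabilistic} and invoked by citation, so there is no in-paper argument to compare against---your write-up effectively reconstructs the proof that the cited reference provides. Your flagging of the admissibility of the intermediate pair $(\phi'(\tau), u_{C})$ vis-\`a-vis the hypotheses of Assumption~\ref{assumption:flowlipschitz} is apt and is precisely the kind of detail the clearance/inflation machinery in Section~\ref{section:inflation} is there to absorb when the lemma is actually applied.
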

\begin{lemma}\label{lemma:flowevent1}
	Given a hybrid system $\mathcal{H}$ that satisfies Assumption \ref{assumption:flowlipschitz}, let $\psi = (\mystatet, \myinputt)$ and $\psi' = (\mystatet', \myinputt')$ be two purely continuous solution pairs to $\mathcal{H}$ with $(T, 0) = \max \dom \psi$, $(T', 0) = \max \dom \psi'$ and $T' \leq T$. The input functions $\myinputt$ and $\myinputt'$ are assumed to be constant, denoted $\myinputt: [0, T]\times\{0\} \to u_{C}\in U_{C}$ and $\myinputt': [0, T']\times\{0\} \to u'_{C}\in U_{C}$, respectively.
	Suppose initial state $\mystatet(0, 0) \in \mystatet'(0, 0) + \kappa_{1}\delta\mathbb{B}$ for some $\delta > 0$ and $\kappa_{1}\in (0, 1/2)$.
	Then, for each $\kappa_{2}\in (2\kappa_{1}, 1)$ and each $\epsilon\in (0, \frac{\kappa_{2}\delta}{2})$, $\phi$ and $\phi'$ are $(\overline{T}, \kappa_{2}\delta)$-close where $\overline{T} = \max \{T, T'\}$ if the following hold:
	\begin{enumerate}
		\item $T$ and $T'$ are such that\begin{equation}
		\label{equation:tm1}
		\begin{aligned}
		&T' \in T_{k} = \{t_{l}\in [\max\{T - \kappa_{2}\delta, 0\}, T]: \\
		&\forall t'\in [t_{l}, T], \phi(t', 0) + (\frac{\kappa_{2}\delta}{2} - \epsilon)\mathbb{B} \\
		&\subset \phi(T, 0) + \frac{\kappa_{2}\delta}{2}\mathbb{B}\},
		\end{aligned}
		\end{equation}
		\item $u_{C}$ and $u'_{C}$ are such that \begin{equation}
		\label{equation:inequalityflowproof31}
		|u_{C} - u'_{C}| < \frac{\frac{\kappa_{2} \delta}{2} - \epsilon -\exp(K_{x}^{f}T)\kappa_{1}\delta}{K^{f}_{u}T \exp(K_{x}^{f}T)}.
		\end{equation}
	\end{enumerate}
\end{lemma}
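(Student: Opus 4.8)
The plan is to verify directly that $\phi$ and $\phi'$ satisfy the two matching conditions of Definition~\ref{definition:closeness} with $\tau = \overline{T} = T$ (recall $T' \le T$) and closeness parameter $\kappa_2\delta$. Since both arcs are purely continuous, every point carries jump index $j=0$, so the index matching required in Definition~\ref{definition:closeness} is automatic and only the time coordinate and the state values must be compared. The whole argument rests on two ingredients: a pointwise state estimate taken from Lemma~\ref{lemma:kleinbort}, and the geometric control of $\phi$ near its terminal time furnished by the membership $T' \in T_k$ in (\ref{equation:tm1}).

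First I would establish the pointwise bound. Applying Lemma~\ref{lemma:kleinbort} to the truncations of $\psi$ and $\psi'$ to $[0,t]\times\{0\}$ for each $t \in [0,T']$ (truncations of solution pairs are again solution pairs) and using the hypothesis $\phi(0,0) \in \phi'(0,0) + \kappa_1\delta\mathbb{B}$, I obtain
\[
|\phi(t,0) - \phi'(t,0)| \le e^{K^f_x t}\kappa_1\delta + K^f_u\,t\, e^{K^f_x t}|u_C - u'_C| \le e^{K^f_x T}\kappa_1\delta + K^f_u\,T\, e^{K^f_x T}|u_C - u'_C|,
\]
where I used monotonicity of $t \mapsto e^{K^f_x t}$ and $t \le T$. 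The bound (\ref{equation:inequalityflowproof31}) on $|u_C - u'_C|$ rearranges exactly so that the right-hand side is strictly less than $\tfrac{\kappa_2\delta}{2} - \epsilon$; hence $|\phi(t,0) - \phi'(t,0)| < \tfrac{\kappa_2\delta}{2} - \epsilon < \kappa_2\delta$ for all $t \in [0,T']$.

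With this in hand, the matching of $\phi'$ into $\phi$ (the second condition of Definition~\ref{definition:closeness}) and the matching of the initial portion $t \in [0,T']$ of $\phi$ into $\phi'$ both follow by taking $s = t$: the time gap is zero and the state gap is below $\kappa_2\delta$ by the previous step. The only genuine work is the tail $t \in (T',T]$ of the longer arc $\phi$, where no same-time counterpart in $\dom\phi'$ exists and I must match to the endpoint $s = T'$. For the time coordinate, $|t - T'| \le T - T' \le \kappa_2\delta$ because $T' \in T_k \subset [\max\{T-\kappa_2\delta,0\},T]$. For the state coordinate I split
\[
|\phi(t,0) - \phi'(T',0)| \le |\phi(t,0) - \phi(T',0)| + |\phi(T',0) - \phi'(T',0)|.
\]
The ball-containment condition defining $T_k$ in (\ref{equation:tm1}) is equivalent to $|\phi(t',0) - \phi(T,0)| \le \epsilon$ for every $t' \in [T',T]$; passing through $\phi(T,0)$ (with both $t$ and $T'$ in $[T',T]$) bounds the first term by $2\epsilon$, while the pointwise estimate at $T'$ bounds the second term by $\tfrac{\kappa_2\delta}{2} - \epsilon$, so the sum is $< \tfrac{\kappa_2\delta}{2} + \epsilon < \kappa_2\delta$ using $\epsilon < \tfrac{\kappa_2\delta}{2}$.

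I expect the tail interval $(T',T]$ to be the main obstacle: it is exactly where the two solutions have mismatched domains, and it is the reason the auxiliary parameter $\epsilon$ and the carefully engineered set $T_k$ appear in the statement. Everything else is a routine combination of the Gr\"onwall-type estimate of Lemma~\ref{lemma:kleinbort} with the closeness bookkeeping of Definition~\ref{definition:closeness}. The one point to keep honest is the strictness of the time-gap inequality at the extreme $t=T$, which is controlled by the lower endpoint of the interval in (\ref{equation:tm1}).
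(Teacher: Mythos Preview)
Your proposal is correct and follows essentially the same approach as the paper: both split the verification of Definition~\ref{definition:closeness} into the overlap region $t\in[0,T']$ (handled with $s=t$ via the Gr\"onwall-type bound of Lemma~\ref{lemma:kleinbort} combined with \eqref{equation:inequalityflowproof31}) and the tail $t\in[T',T]$ (handled with $s=T'$ using the geometry encoded in $T_k$).

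The only noticeable difference is in the tail state estimate. You translate the ball-containment in \eqref{equation:tm1} into the equivalent inequality $|\phi(t',0)-\phi(T,0)|\le\epsilon$ and run a triangle inequality through $\phi(T',0)$ and $\phi(T,0)$, obtaining the strict bound $|\phi(t,0)-\phi'(T',0)|<\tfrac{\kappa_2\delta}{2}+\epsilon<\kappa_2\delta$. The paper instead observes that both $\phi(t,0)$ and $\phi'(T',0)$ lie in the single ball $\phi(T,0)+\tfrac{\kappa_2\delta}{2}\mathbb{B}$ and invokes the diameter bound. Your route is slightly sharper and cleanly delivers the strict inequality required by Definition~\ref{definition:closeness}; the paper's diameter argument gives only $\le\kappa_2\delta$ for the state gap. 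Both proofs share the same non-strict time-gap issue at $t=T$ when $T'=T-\kappa_2\delta$, which you correctly flag.
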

\begin{proof}
	We show that $\psi$ and $\psi'$ satisfy each item in Definition \ref{definition:closeness}.
	\begin{enumerate}
		\item This item shows that $\psi$ and $\psi'$ satisfy the first condition in Definition \ref{definition:closeness}, namely, for all $(t, j)\in \dom \mystatet$ with $t + j \leq \overline{T}$, there exists $s$ such that $(s, j)\in \dom \mystatet'$, $|t - s|< \kappa_{2}\delta$, and $|\mystatet(t, j) - \mystatet'(s, j)| < \kappa_{2}\delta$.
		
		Because of Assumption \ref{assumption:flowlipschitz}, according to Lemma \ref{lemma:kleinbort}, it follows
		\begin{equation}\label{equation:flowlemmainequality}
			\begin{aligned}
			&|\mystatet(t, 0) - \mystatet'(t, 0)| \\
			&\leq \exp(K_{x}^{f}t)\kappa_{1}\delta + K^{f}_{u}t\exp(K_{x}^{f}t) |u_{C} - u'_{C}| 
			\end{aligned}
		\end{equation}
		
		Note that $T' \in T_{k}\subset [\max\{T - \kappa_{2}\delta, 0\}, T]$. Therefore, we have $T'\leq T$. Because $T' \leq T$, for all $(t, 0)\in \dom \mystatet'$ with $t + 0 \leq T' + 0 \leq T + 0 =T = \overline{T}$, there exists $s = t$ such that $(s, 0)\in \dom \phi$, $|t - s|= 0 < \kappa_{2}\delta$. Then, by applying (\ref{equation:inequalityflowproof31}) to (\ref{equation:flowlemmainequality}), we have
		$$
		\begin{aligned}
			&|\mystatet(t, 0) - \mystatet'(s, 0)|  = |\mystatet(t, 0) - \mystatet'(t, 0)| \\
			&\leq \exp(K_{x}^{f}t)\kappa_{1}\delta + K^{f}_{u}t\exp(K_{x}^{f}t)\Delta u \\
			&\leq \exp(K_{x}^{f}T)\kappa_{1}\delta + K^{f}_{u}T\exp(K_{x}^{f}T)\Delta u\\
			&\leq \frac{\kappa_{2}\delta}{2} - \epsilon < \kappa_{2} \delta.
		\end{aligned}
		$$
		Hence, item 1 in Definition \ref{definition:closeness} is established. 
		\item This item shows that $\psi$ and $\psi'$ satisfy the second condition in Definition \ref{definition:closeness}, namely, for all $(t, j)\in \dom \mystatet'$ with $t + j \leq \overline{T} = T$, there exists $s$ such that $(s, j)\in \dom \mystatet$, $|t - s|< \kappa_{2}\delta$, and $|\mystatet'(t, j) - \mystatet(s, j)| < \kappa_{2}\delta$.
		
		We consider the following two cases.
		\begin{enumerate}
			\item For all $(t, 0)\in \dom \phi$ with $0 \leq t + 0 \leq T' + 0 = T'$, there exists $s = t$ such that $(s, 0)\in \dom \phi'$, $|t - s| = 0 < \kappa_{2} \delta$ and 
			$$
			|\phi(t, 0) - \phi'(s, 0)| \leq \frac{\kappa_{2} \delta}{2} - \epsilon < \kappa_{2} \delta - \epsilon < \kappa_{2} \delta
			$$ because of (\ref{equation:flowlemmainequality}).
			\item This item considers the case of $(t, 0)\in \dom \phi$ with $T'\leq t + 0 \leq T + 0 = T$. 
%
			
			For all $(t, 0)\in \dom \phi$ with $T'\leq t + 0 \leq T + 0 = T$, let $s = T'$. Because of (\ref{equation:tm1}), then $s = T'\in [\max\{T - \kappa_{2}\delta, 0\}, T]$. Since $s \in [\max\{T - \kappa_{2}\delta, 0\}, T]$ and $t\in [T', T]$, therefore, we have 
			$$
			|t - s| \leq \kappa_{2}\delta.
			$$ Also, because of the definition of $T_{k}$ in (\ref{equation:tm1}), for all $(t, 0)\in \dom \phi$ with $T' \leq t \leq T$, we have
			$$
			\begin{aligned}
			\phi(t, 0) &\in \phi(t, 0) + (\frac{\kappa_{2}\delta}{2} - \epsilon)\mathbb{B} \\
			&\subset \phi(T, 0) + \frac{\kappa_{2}\delta}{2}\mathbb{B}.
			\end{aligned}
			$$
			Because of (\ref{equation:flowlemmainequality}) and (\ref{equation:tm1}), we also have
			$$
			\begin{aligned}
			&\phi'(s, 0) \in \phi(s, 0) + (\frac{\kappa_{2}\delta}{2} - \epsilon)\mathbb{B} \\
			&\subset \phi(T, 0) + \frac{\kappa_{2}\delta}{2}\mathbb{B}.
			\end{aligned}
			$$
			
			Therefore, $\phi(t, 0)$ and $\phi'(s, 0)$ are two points in a ball centered at $\phi(T, 0)$ with radius $\frac{\kappa_{2}\delta}{2}$. Note that the maximum distance between two points within a ball is its diameter. 
			Hence, we have
			$$|\phi(t, 0) - \phi'(s, 0)| < \kappa_{2} \delta.$$ 
		\end{enumerate}
	\end{enumerate}
	Therefore, when both (\ref{equation:tm1}) and (\ref{equation:inequalityflowproof31}) hold, $\phi$ and $\phi'$ are $(\overline{T}, \kappa_{2}\delta)$-close.\ifbool{report}{}{\qed}
\end{proof}
\begin{lemma}\label{lemma:flowevent2}
	Given a hybrid system $\mathcal{H}$ that satisfies Assumption \ref{assumption:flowlipschitz}, let $\psi = (\mystatet, \myinputt)$ and $\psi' = (\mystatet', \myinputt')$ be two purely continuous solution pairs to $\mathcal{H}$ with $(T, 0) = \max \dom \psi$, $(T', 0) = \max \dom \psi'$ and $T' \leq T$. The input functions $\myinputt$ and $\myinputt'$ are assumed to be constant, denoted $\myinputt: [0, T]\times\{0\} \to u_{C}\in U_{C}$ and $\myinputt': [0, T']\times\{0\} \to u'_{C}\in U_{C}$, respectively.
	Suppose initial state $\mystatet(0, 0) \in \mystatet'(0, 0) + \kappa_{1}\delta\mathbb{B}$ for some $\delta > 0$ and $\kappa_{1}\in (0, 1/2)$.
	Then, for each $\kappa_{2}\in (2\kappa_{1}, 1)$ and each $\epsilon\in (0, \frac{\kappa_{2}\delta}{2})$, $\mystatet'(T', 0)\in \mystatet(T, 0) + \kappa_{2}\delta \mathbb{B}$ if the following hold:
	\begin{enumerate}
		\item $T$ and $T'$ are such that\begin{equation}
		\label{equation:tm2}
		\begin{aligned}
		&T' \in T_{k} = \{t_{l}\in [\max\{T - \kappa_{2}\delta, 0\}, T]: \forall t'\in [t_{l}, T],\\
		& \phi(t', 0) + (\frac{\kappa_{2}\delta}{2} - \epsilon)\mathbb{B} \subset \phi(T, 0) + \frac{\kappa_{2}\delta}{2}\mathbb{B}\},
		\end{aligned}
		\end{equation}
		\item $u_{C}$ and $u'_{C}$ are such that \begin{equation}
		\label{equation:inequalityflowproof32}
		|u_{C} - u'_{C}| < \frac{\frac{\kappa_{2} \delta}{2} - \epsilon -\exp(K_{x}^{f}T)\kappa_{1}\delta}{K^{f}_{u}T \exp(K_{x}^{f}T)}.
		\end{equation}
	\end{enumerate}
\end{lemma}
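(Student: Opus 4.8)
The plan is to mirror the argument just used for Lemma~\ref{lemma:flowevent1}, but now tracking only the terminal separation $|\mystatet'(T',0) - \mystatet(T,0)|$ rather than the full $(\overline{T},\kappa_{2}\delta)$-closeness. The strategy is to first bound the separation of the two arcs at the common time $T'$ via the Gronwall-type estimate of Lemma~\ref{lemma:kleinbort}, and then to transport this bound to the terminal time $T$ of $\mystatet$ using the containment encoded in the set $T_{k}$ from~(\ref{equation:tm2}). The Lipschitz hypothesis (Assumption~\ref{assumption:flowlipschitz}) enters the argument only through Lemma~\ref{lemma:kleinbort}.

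First I would invoke Lemma~\ref{lemma:kleinbort} with initial separation $\kappa_{1}\delta$ (from the hypothesis $\mystatet(0,0)\in\mystatet'(0,0)+\kappa_{1}\delta\mathbb{B}$), evaluated at $t=T'$, to obtain
\[
|\mystatet(T',0) - \mystatet'(T',0)| \le \exp(K^{f}_{x}T')\kappa_{1}\delta + K^{f}_{u}T'\exp(K^{f}_{x}T')|u_{C} - u'_{C}|.
\]
Since $T'\le T$ and the right-hand side is nondecreasing in its time argument, I would replace $T'$ by $T$ in the exponential and linear factors to obtain the same bound with $T$ in place of $T'$. Substituting hypothesis~(\ref{equation:inequalityflowproof32}) on $|u_{C}-u'_{C}|$ then cancels the $\exp(K^{f}_{x}T)\kappa_{1}\delta$ term and yields $|\mystatet(T',0) - \mystatet'(T',0)| < \frac{\kappa_{2}\delta}{2} - \epsilon$, i.e.\ $\mystatet'(T',0)\in \mystatet(T',0) + (\frac{\kappa_{2}\delta}{2}-\epsilon)\mathbb{B}$.

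The final step uses condition~(\ref{equation:tm2}): because $T'\in T_{k}$, taking $t'=T'$ in the defining property of $T_{k}$ gives $\mystatet(T',0)+(\frac{\kappa_{2}\delta}{2}-\epsilon)\mathbb{B}\subset \mystatet(T,0)+\frac{\kappa_{2}\delta}{2}\mathbb{B}$. Chaining this with the previous inclusion gives $\mystatet'(T',0)\in\mystatet(T,0)+\frac{\kappa_{2}\delta}{2}\mathbb{B}$, and since $\frac{\kappa_{2}\delta}{2}\le\kappa_{2}\delta$ this is contained in $\mystatet(T,0)+\kappa_{2}\delta\mathbb{B}$, which is the claim. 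This is essentially a one-line corollary of the estimate already established for Lemma~\ref{lemma:flowevent1}; the only points requiring care are the bookkeeping of the two radii $\frac{\kappa_{2}\delta}{2}-\epsilon$ and $\frac{\kappa_{2}\delta}{2}$ and the correct reading of the quantifier ``$\forall t'\in[t_{l},T]$'' in the definition of $T_{k}$ --- the instantiation $t'=T'$ is exactly what licenses the terminal containment. I do not expect any genuine obstacle beyond this bookkeeping.
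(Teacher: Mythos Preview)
Your proposal is correct and follows essentially the same argument as the paper's proof: apply Lemma~\ref{lemma:kleinbort} at $t=T'$, upgrade the time parameter to $T$ by monotonicity, substitute the input bound~(\ref{equation:inequalityflowproof32}) to obtain $\mystatet'(T',0)\in\mystatet(T',0)+(\tfrac{\kappa_{2}\delta}{2}-\epsilon)\mathbb{B}$, and then use the $T_{k}$ containment at $t'=T'$ to land in $\mystatet(T,0)+\kappa_{2}\delta\mathbb{B}$. The paper's version is slightly terser but identical in structure and content.
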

\begin{proof}
	Because of Assumption \ref{assumption:flowlipschitz}, according to Lemma \ref{lemma:kleinbort}, we have
	\begin{equation}\label{equation:flowlemmainequality2}
	\begin{aligned}
	&|\mystatet(t, 0) - \mystatet'(t, 0)| \leq\\
	& \exp(K_{x}^{f}t)\kappa_{1}\delta + K^{f}_{u}t\exp(K_{x}^{f}t) |u_{C} - u'_{C}| 
	\end{aligned}
	\end{equation}
	
	 Note that $T' \in T_{k}\subset [\max\{T - \kappa_{2}\delta, 0\}, T]$, therefore, we have $T'\leq T$.  Then, by applying $T'\leq T$ and (\ref{equation:inequalityflowproof32}) to (\ref{equation:flowlemmainequality2}), we have
	$$
	\begin{aligned}
	&|\phi(T', 0) - \phi'(T', 0)| \\
	&\leq \exp(K_{x}^{f}T')\kappa_{1}\delta + K^{f}_{u}T'\exp(K_{x}^{f}T')|u_{C} - u'_{C}|  \\
	&\leq \exp(K_{x}^{f}T)\kappa_{1}\delta + K^{f}_{u}T\exp(K_{x}^{f}T)|u_{C} - u'_{C}| \\
	&< (\frac{\kappa_{2}\delta}{2} - \epsilon).\\
	\end{aligned} 
	$$
	Therefore, we have $\phi'(T', 0) \in \phi(T', 0) + (\frac{\kappa_{2}\delta}{2} - \epsilon)\mathbb{B}$.
	
	Since (\ref{equation:tm2}) holds, then we have
	$$
	\begin{aligned}
	\phi(T', 0) + (\frac{\kappa_{2}\delta}{2} - \epsilon)\mathbb{B}&\subset \phi(T, 0) + \frac{\kappa_{2}\delta}{2}\mathbb{B}\\
	&\subset \phi(T, 0) + \kappa_{2}\delta\mathbb{B}.
	\end{aligned}
	$$
	Therefore, we have $\phi'(T', 0)\in	\phi(T', 0) + (\frac{\kappa_{2}\delta}{2} - \epsilon)\mathbb{B}\subset \phi(T, 0) + \kappa_{2}\delta\mathbb{B}$.\ifbool{report}{}{\qed}
\end{proof}
\subsection{Proof of Lemma \ref{lemma:pccontinuouslowerbound}}
This proof proceeds as follows. From item 1 in Definition \ref{assumption:inputlibrary}, we denote the constant input signal  that is randomly selected from $\mathcal{U}_{C}$ as 
$
\tilde{\myinputt}': [0, t'_{m}]\to u'_{C}\in U_{C},
$ where $t'_{m}$ denotes the time duration of $\tilde{\myinputt}'$. Since in the statement of Lemma \ref{lemma:pccontinuouslowerbound}, the input function $\myinputt$ is also constant, denote $\myinputt: [0, \tau]\times\{0\}\to u_{C}\in U_{C}$. Under Assumption \ref{assumption:flowlipschitz}, Lemma~\ref{lemma:flowevent1} and Lemma~\ref{lemma:flowevent2} guarantee that both $E_{1}$ and $E_{2}$ occur if $t'_{m}$ and $u'_{C}$ satisfy the following conditions:
	\begin{enumerate}
		\item $t'_{m}$ is such that \begin{equation}
		\label{equation:sss1}
		\begin{aligned}
		&t'_{m} \in T_{k} :=  \{t_{l}\in [\max\{\tau - \kappa_{2}\delta, 0\}, \tau]: \\
		&\phi(t', 0) + r'\mathbb{B} \subset \phi(\tau, 0) + \frac{\kappa_{2}\delta}{2}\mathbb{B}\quad\forall t'\in [t_{l}, \tau]\}
		\end{aligned}
		\end{equation} where $r' = \frac{\kappa_{2} \delta}{2} - \epsilon$ and the elements in (\ref{equation:sss1}) come from (\ref{equation:tm1}) and (\ref{equation:tm2}).
		\item $u'_{C}$ is such that \begin{equation}
		\label{equation:inequalityflowproof11}
		0\leq\Delta u \leq \frac{\frac{\kappa_{2} \delta}{2} - \epsilon -\exp(K_{x}^{f}\tau)\kappa_{1}\delta}{K^{f}_{u}\tau \exp(K_{x}^{f}\tau)}.
		\end{equation} where $\Delta u := |u_{C} - u'_{C}|$ and the elements in (\ref{equation:inequalityflowproof11}) come from (\ref{equation:inequalityflowproof31}) and~(\ref{equation:inequalityflowproof32}).
\end{enumerate}
Then, by the uniform execution of HyRRT as defined in Definition \ref{assumption:uniformsample}, we proceed to characterize the probability of selecting $\tilde{\myinputt}'$, namely, selecting $t'_{m}$ and $u'_{C}$ satisfying (\ref{equation:sss1}) and (\ref{equation:inequalityflowproof11}), respectively, and provide a lower bound as in (\ref{equation:lemmaflow}).

	
	We first show that set $T_{k}$ has positive Lebesgue measure, which will be used to characterize $p_{t}$ in (\ref{equation:lemmaflow}). Since $\phi$ is purely continuous, for arbitrary small $\epsilon > 0$, there exists a lower bound $t_{l}' \in (0, \tau)$ such that 
	\begin{equation}\label{eq:ssss1}
		|\phi(t', 0) - \phi(\tau, 0)|< \epsilon
	\end{equation}
	for each $t'\in [t'_{l}, \tau]$. For each $t'\in [t'_{l}, \tau]$ and each point $x_{p}\in \phi(t', 0) + r'\mathbb{B}$, by the triangle inequality, it follows that:
	$$
	\begin{aligned}
	|x_{p}- \phi(\tau, 0)| &= |x_{p} - \phi(t', 0) +  \phi(t', 0) - \phi(\tau, 0)|\\
	&\leq  |x_{p} - \phi(t', 0)| +  |\phi(t', 0) - \phi(\tau, 0)|.\\
	\end{aligned}
	$$
	From $|x_{p} - \phi(t', 0)| \leq r' = \frac{\kappa_{2} \delta}{2} - \epsilon$ and (\ref{eq:ssss1}), it follows that
	\begin{equation}\label{eq:ssss2}
		\begin{aligned}
		|x_{p}- \phi(\tau, 0)| &\leq |x_{p} - \phi(t', 0)| +  |\phi(t', 0) - \phi(\tau, 0)|\\
		&\leq r' + \epsilon = \frac{\kappa_{2}\delta}{2}.
		\end{aligned}
	\end{equation}
	From (\ref{eq:ssss2}), for each point $x_{p}\in \phi(t', 0) + r'\mathbb{B}$, it follows $x_{p}\in \phi(\tau, 0) + \frac{\kappa_{2}\delta}{2}\mathbb{B}$. Therefore, we have
	$$
	\phi(t', 0) + r'\mathbb{B}\subset \phi(\tau, 0) + \frac{\kappa_{2}\delta}{2}\mathbb{B}
	$$ for each $t'\in [t'_{l}, \tau]$. This leads to the existence of $t'_{l} < \tau$ such that

	$$
	\begin{aligned}
	t'_{l} \in \overline{T}_{k} := &\{t_{l}\in [0, \tau]: \forall t'\in [t_{l}, \tau], \phi(t', 0) + r'\mathbb{B}\\
	&\subset \phi(\tau, 0) + \frac{\kappa_{2}\delta}{2}\mathbb{B}\},
	\end{aligned}
	$$ implying that $\mu(\overline{T}_{k}) \geq \tau - t'_{l} > 0$, where $\mu(\overline{T}_{k})$ denotes the Lebesgue measure of $\overline{T}_{k}$. Since the interval $[\max\{\tau - \kappa_{2}\delta, 0\}, \tau]$ has positive Lebesgue measure and the intervals $\overline{T}_{k}$ and $[\max\{\tau - \kappa_{2}\delta, 0\}, \tau]$ are both upper bounded by $\tau$, then $T_{k} = \overline{T}_{k}\cap [\max\{\tau - \kappa_{2}\delta, 0\}, \tau]$ has positive Lebesgue measure. By (\ref{equation:probabilitylebesgue}), the probability of selecting the $t'_{m}$ such that (\ref{equation:sss1}) is satisfied, denoted $p_{t}$, is computed as follows:
	$$
	p_{t} = \frac{\mu(T_{k})}{T_{m}} \in (0, 1]
	$$  where $T_{m}$ is from Definition \ref{assumption:inputlibrary}.

	Next, we discuss the conditions on $u'_{C}$ such that both $E_{1}$ and $E_{2}$ occur. In addition to (\ref{equation:inequalityflowproof11}), to ensure that no intersection between $\psi_{new}$ and the unsafe set $X_{u}$ prevents the return of $\psi_{new}$ in the function call $\texttt{new\_state}$, the following condition is also required:
	\begin{equation}
	\label{equation:inequalityflowproof2}
	\Delta u \leq \delta.
	\end{equation} Therefore, to ensure that both $E_{1}$ and $E_{2}$ occur, $u'_{C}$ need to satisfy both (\ref{equation:inequalityflowproof11}) and (\ref{equation:inequalityflowproof2}), namely,
	\begin{equation}
	\label{equation:inequalityflowproof3}
	\Delta u \leq \min \left\{\frac{\frac{\kappa_{2} \delta}{2} - \epsilon -\exp(K_{x}^{f}\tau)\kappa_{1}\delta}{K^{f}_{u}\tau \exp(K_{x}^{f}\tau)}, \delta\right\}.
	\end{equation}

	Note that the choice of $u'_{C}$ that satisfies (\ref{equation:inequalityflowproof3}) is a ball in $\mathbb{R}^{m}$ centered at $u_{C}$ with radius $\max \left\{\min \left\{\frac{\frac{\kappa_{2}\delta}{2} - \epsilon -\exp(K_{x}^{f}\tau)\kappa_{1}\delta}{K^{f}_{u}\tau \exp(K_{x}^{f}\tau)}, \delta\right\}, 0\right\}$, where the operator $\max\{\cdot, 0\}$ prevents the negative values for the radius. Therefore, according to (\ref{equation:probabilitylebesgue}), the probability of selecting $u'_{C}$ satisfying  (\ref{equation:inequalityflowproof11}), denoted $p_{u}$, is
	$$
	p_{u} = \frac{\zeta_{n} \left(\max \left\{\min \left\{\frac{\frac{\kappa_{2}\delta}{2} - \epsilon -\exp(K_{x}^{f}\tau)\kappa_{1}\delta}{K^{f}_{u}\tau \exp(K_{x}^{f}\tau)}, \delta\right\}, 0\right\}\right)^{m}}{\mu(U_{C})}.
	$$
	
	Hence, we have
	$$
	\begin{aligned}
	&\mbox{\rm Prob}[E_{1}\&E_{2}]\geq p_{t} p_{u} \\
	&= p_{t}\frac{\zeta_{n} \left(\max \left\{\min \left\{\frac{\frac{\kappa_{2}\delta}{2} - \epsilon -\exp(K_{x}^{f}\tau)\kappa_{1}\delta}{K^{f}_{u}\tau \exp(K_{x}^{f}\tau)}, \delta\right\}, 0\right\}\right)^{m}}{\mu(U_{C})}.
	\end{aligned}
	$$\ifbool{report}{}{\qed}
%
%
%

\section{Proof of Lemma \ref{lemma:pcdiscretelowerbound}}\label{section:proof_discrete}
	Under Assumption \ref{assumption:pcjumpmap}, we have
	$$
	\begin{aligned}
	|\phi(0, 1) - \phi_{new}(0, 1)| &\leq K^{g}_{x}|\phi(0, 0) - \phi_{new}(0, 0)| + K^{g}_{u}\Delta u\\
	\end{aligned}
	$$
	where $\Delta u = |\myinputt(0, 0) - \myinputt_{new}(0, 0)|$. From $\phi_{new}(0, 0)\in \phi(0, 0) + \kappa_{1}\delta\mathbb{B}$, we have
	\begin{equation}\label{eq:ssss3}
		\begin{aligned}
		|\phi(0, 1) - \phi_{new}(0, 1)| \leq K^{g}_{x}\kappa_{1}\delta + K^{g}_{u}\Delta u
		\end{aligned}
	\end{equation}
	with $K^{g}_{x}, K^{g}_{u} > 0$. We denote the input value that is randomly selected from $\mathcal{U}_{D}$ in the function call $\texttt{new\_state}$ as $u_{D}\in \myreals[m]$. Given that $u_{D}$ is input to the discrete dynamics simulator (\ref{simulator:jump}) to simulate $(\mystatet_{new}, \myinputt_{new})$, it follows that $\myinputt_{new}(0, 0) = u_{D}$ and, hence, $\Delta u = |\myinputt(0, 0) - u_{D}|$.
	
	Then, we show that if
	\begin{equation}
	\label{equation:inputclose}
	\Delta u\leq \frac{(\kappa_{2} - K^{g}_{x}\kappa_{1})\delta}{K^{g}_{u}},
	\end{equation}
	then, the probabilistic event $E$ occurs. From (\ref{equation:inputclose}), it follows from (\ref{eq:ssss3}) that
	$$
	|\phi(0, 1) - \phi_{new}(0, 1)| \leq K^{g}_{x}\kappa_{1}\delta + K^{g}_{u}\Delta u \leq \kappa_{2}\delta,
	$$
	namely, $ \phi_{new}(0, 1) \in \phi(0, 1) + \kappa_{2}\delta\mathbb{B}$, implying that $E$ occurs.
	
	To ensure that no intersection between $\psi_{new}$ and the unsafe set $X_{u}$ prevents the return of $\psi_{new}$ in the function call $\texttt{new\_state}$, the following condition on the sampling result of $u_{D}$ is also required:
	\begin{equation}
	\label{equation:jumpsafety}
	0\leq \Delta u = |\myinputt(0, 0) - u_{D}| \leq \delta.
	\end{equation}
	Therefore, to satisfy both (\ref{equation:inputclose}) and (\ref{equation:jumpsafety}) such that $E$ occurs, we have
	\begin{equation}
	\label{equation:lemmajump}
	0\leq\Delta u \leq \min \left\{\frac{(\kappa_{2} - K^{g}_{x}\kappa_{1})\delta}{K^{g}_{u}}, \delta\right\}.
	\end{equation}
	By (\ref{equation:probabilitylebesgue}), the probability to randomly select an input value from $\mathcal{U}_{D}$ that satisfies (\ref{equation:lemmajump}) is
	\begin{equation}
	\label{equation:jumpsamplelowerbound}
	p_{u} = \frac{\zeta_{n} \left(\max \left\{\min \left\{\frac{(\kappa_{2} - K^{g}_{x}\kappa_{1})\delta}{K^{g}_{u}}, \delta\right\}, 0\right\}\right)^{m}}{\mu(U_{D})}
	\end{equation}
	where $\zeta_{n}$ is the Lebesgue measure of the unit ball in $\mathbb{R}^{m}$ and $\mu(U_{D})$ denotes the Lebesgue measure of $U_{D}$.
	Therefore, we have
	\begin{equation}
	\mbox{\rm Prob}[E] \geq p_{u} = \frac{\zeta_{n} \left(\max \left\{\min \left\{\frac{(\kappa_{2} - K^{g}_{x}\kappa_{1})\delta}{K^{g}_{u}}, \delta\right\}, 0\right\}\right)^{m}}{\mu(U_{D})}.
	\end{equation}\ifbool{report}{}{\qed}

\section{Closeness Guarantee between the Concatenation Results of Hybrid Arcs}
\subsection{Supporting Lemma}
\begin{lemma}\label{lemma:closeequaltj}
	Given two compact hybrid arcs $\phi$ and $\phi'$ that are $(\tau, \epsilon)$-close, then we have $|T - T'| < \epsilon$ and $J  = J'$\pn{,} where $(T, J) = \max \dom \phi$, $(T', J') = \max \dom \phi'$, and $\tau = \max\{T + J, T' + J'\}$.
\end{lemma}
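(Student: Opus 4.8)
The plan is to extract the conclusion directly from the two symmetric matching conditions in Definition~\ref{definition:closeness}, applied to the terminal points of the two hybrid time domains. First I would record the structure of the maximal points: since $\phi$ and $\phi'$ are compact, Definition~\ref{definition:hybridtimedomain} says their domains end in intervals $[t_{J},T]\times\{J\}$ and $[t'_{J'},T']\times\{J'\}$, so that $J=\max_{j}\dom\phi$ and $T=\max\{t:(t,J)\in\dom\phi\}$ (cf.~(\ref{equation:maxt}) and (\ref{equation:maxj})), and likewise $J'=\max_{j}\dom\phi'$ and $T'=\max\{t:(t,J')\in\dom\phi'\}$. These two observations are what let me convert ``there exists $s$ with $(s,j)\in\dom$'' into inequalities on $J,J',T,T'$.

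The first step is to show $J=J'$. Because $\tau=\max\{T+J,T'+J'\}$, the point $(T,J)\in\dom\phi$ satisfies $T+J\leq\tau$, so item~1 of Definition~\ref{definition:closeness} provides $s$ with $(s,J)\in\dom\phi'$ and $|T-s|<\epsilon$. The mere membership $(s,J)\in\dom\phi'$ forces $J\leq\max_{j}\dom\phi'=J'$. Applying item~2 of Definition~\ref{definition:closeness} to $(T',J')\in\dom\phi'$ (which likewise has $T'+J'\leq\tau$) produces $s'$ with $(s',J')\in\dom\phi$, giving $J'\leq J$. Hence $J=J'$.

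The second step is to show $|T-T'|<\epsilon$, now using $J=J'$. Re-using the witness $s$ above, we have $(s,J')\in\dom\phi'$, so $s\leq T'$ by maximality of $T'$ at level $J'$; combined with $|T-s|<\epsilon$ this yields $T-T'\leq T-s<\epsilon$. The symmetric witness $s'$ gives $(s',J)\in\dom\phi$, hence $s'\leq T$ and $T'-T\leq T'-s'<\epsilon$. Together these give $|T-T'|<\epsilon$.

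The argument is short and I expect no genuine obstacle; the one point requiring care is the bookkeeping that ties the matched second coordinate in Definition~\ref{definition:closeness} to the \emph{maximal} index $J$ (resp.\ $J'$) and that turns a domain-membership statement $(s,J)\in\dom\phi'$ into the inequality $s\leq T'$. This rests entirely on the terminal-interval structure of a compact hybrid time domain noted in the first paragraph, so once that is stated the rest is a direct application of the triangle-type bounds coming from $|T-s|<\epsilon$ and $|T'-s'|<\epsilon$.
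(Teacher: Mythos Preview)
Your proposal is correct and follows essentially the same approach as the paper: apply the two items of Definition~\ref{definition:closeness} to the terminal points $(T,J)$ and $(T',J')$ and read off the constraints on $J,J',T,T'$. The only cosmetic differences are that the paper argues by contradiction and handles $|T-T'|<\epsilon$ before $J=J'$, whereas you give a direct argument and reverse the order; neither change affects the substance.
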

\begin{proof}
	We prove
	$$
		|T - T'| < \epsilon
	$$by contradiction. Suppose 
	\begin{equation}\label{equation:flowclose1}
		T - T' \geq \epsilon.
	\end{equation} Since $\phi$ and $\phi'$ are $(\tau, \epsilon)$-close, from Definition \ref{definition:closeness},	for $(T, J)\in \dom \phi$ satisfying $T + J \leq \tau = \max\{T + J, T' + J'\}$, there exists $s$ such that $(s, J)\in \dom \phi'$ and 
	\begin{equation}\label{equation:flowclose2}
	|T - s|< \epsilon.
	\end{equation} By $(s, J)\in \dom \phi'$ and (\ref{equation:flowclose1}), it follows that $s\leq T' < T$. Then, by (\ref{equation:flowclose1}), we have 
	$$
	|T - s| \geq T - T' \geq \epsilon,
	$$ which contradicts (\ref{equation:flowclose2}). Therefore, (\ref{equation:flowclose1}) cannot hold. A similar contradiction can be derived if we suppose $T' - T \geq \epsilon$. Therefore, we have $|T - T'| < \epsilon$.
	
	Similarly, we prove $J  = J'$ by contradiction. Suppose 
	\begin{equation}\label{equation:jumpclose1}
	 J > J'.
	\end{equation}
	Then\pn{,} from Definition \ref{definition:closeness}, for $(T, J)\in \dom \phi$ satisfying $T + J \leq \tau = \max\{T + J, T' + J'\}$, there exists $s$ such that $(s, J)\in \dom \phi'$. However, since $(T', J') = \max \dom \phi'$ and (\ref{equation:jumpclose1}), such $s$ does not exist. Therefore, (\ref{equation:jumpclose1}) cannot hold. A similary contradiction can also be derived if we suppose $J < J'$. Therefore, we have $J = J'$.
\end{proof}
\subsection{Closeness Guarantee}
Next, we demonstrate that the operation of concatenation preserves the closeness between the hybrid arcs.
\begin{proposition}
	\label{proposition:concatenateclose}
	Given compact hybrid arcs $\phi_{1}$,  $\phi_{2}$,  $\phi'_{1}$, and $\phi'_{2}$ such that $\phi_{1}$ and $\phi'_{1}$ are $(\tau_{1}, \epsilon_{1})$-close and $\phi_{2}$ and $\phi'_{2}$ are $(\tau_{2}, \epsilon_{2})$-close, where $(T_{1}, J_{1}) = \max \dom \phi_{1}$, $(T'_{1}, J'_{1}) = \max \dom \phi'_{1}$ and $\tau_{1} = \max \{T_{1} + J_{1}, T'_{1} + J'_{1}\}$, then $\phi_{1}|\phi_{2}$ and $\phi'_{1}|\phi'_{2}$ are $(\tau_{1}+ \tau_{2}, \epsilon_{1} + \epsilon_{2})$-close.
\end{proposition}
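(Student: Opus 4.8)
The plan is to verify the two conditions in Definition~\ref{definition:closeness} directly for the concatenated arcs $\phi := \phi_{1}|\phi_{2}$ and $\phi' := \phi'_{1}|\phi'_{2}$, using the structure of the concatenated domain from Definition~\ref{definition:concatenation} to split each point $(t,j)$ in $\dom\phi$ into a ``first-piece'' case and a ``second-piece'' case. By symmetry it suffices to establish condition~1 (every point of $\dom\phi$ with $t+j\le\tau_{1}+\tau_{2}$ has a matching point in $\dom\phi'$); condition~2 follows by interchanging the roles of the primed and unprimed arcs.

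First I would record the offsets. Let $(T_{1},J_{1})=\max\dom\phi_{1}$ and $(T'_{1},J'_{1})=\max\dom\phi'_{1}$. By Definition~\ref{definition:concatenation}, $\dom\phi=\dom\phi_{1}\cup(\dom\phi_{2}+\{(T_{1},J_{1})\})$ and likewise for $\phi'$. The key preliminary observation, which I would state and use at the outset, is Lemma~\ref{lemma:closeequaltj}: since $\phi_{1}$ and $\phi'_{1}$ are $(\tau_{1},\epsilon_{1})$-close and compact, we have $|T_{1}-T'_{1}|<\epsilon_{1}$ and $J_{1}=J'_{1}$. The equality $J_{1}=J'_{1}$ is what makes the jump-index bookkeeping on the two concatenations consistent, so the translated second pieces sit at the same $j$-level in both arcs.

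For the case analysis on a point $(t,j)\in\dom\phi$: \emph{(i)} if $(t,j)\in\dom\phi_{1}$, I would apply $(\tau_{1},\epsilon_{1})$-closeness of $\phi_{1},\phi'_{1}$ to obtain $s$ with $(s,j)\in\dom\phi'_{1}\subset\dom\phi'$, $|t-s|<\epsilon_{1}\le\epsilon_{1}+\epsilon_{2}$, and the corresponding state bound, noting that $\phi$ and $\phi_{1}$ agree on $\dom\phi_{1}$ away from the splice point (and at the splice point the value is still $\phi_{1}(T_{1},J_{1})$ by continuity of the arc there). \emph{(ii)} if $(t,j)\in\dom\phi_{2}+\{(T_{1},J_{1})\}$, write $(t,j)=(\hat t+T_{1},\hat j+J_{1})$ with $(\hat t,\hat j)\in\dom\phi_{2}$; then I would check $\hat t+\hat j\le\tau_{2}$ so that $(\tau_{2},\epsilon_{2})$-closeness of $\phi_{2},\phi'_{2}$ yields $\hat s$ with $(\hat s,\hat j)\in\dom\phi'_{2}$, $|\hat t-\hat s|<\epsilon_{2}$, and the state bound; the matching point in $\dom\phi'$ is $(s,j):=(\hat s+T'_{1},\hat j+J'_{1})$. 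Here $|t-s|=|(\hat t-\hat s)+(T_{1}-T'_{1})|\le|\hat t-\hat s|+|T_{1}-T'_{1}|<\epsilon_{2}+\epsilon_{1}$ by the triangle inequality and Lemma~\ref{lemma:closeequaltj}, and the $j$-index matches because $J_{1}=J'_{1}$. The state value $\phi(t,j)=\phi_{2}(\hat t,\hat j)$ and $\phi'(s,j)=\phi'_{2}(\hat s,\hat j)$ differ by less than $\epsilon_{2}\le\epsilon_{1}+\epsilon_{2}$.

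The main obstacle I anticipate is the arithmetic on the time budget: I must confirm that a point admissible for $\phi$ (i.e.\ $t+j\le\tau_{1}+\tau_{2}$) really lands within the admissible range of the sub-arc I invoke. In case~(ii) this requires deriving $\hat t+\hat j\le\tau_{2}$ from $t+j\le\tau_{1}+\tau_{2}$, which needs care because $\tau_{1}+\tau_{2}$ is not simply additive over the splice — I would handle it by bounding $\hat t+\hat j=(t-T_{1})+(j-J_{1})\le(t+j)-(T_{1}+J_{1})$ and using $T_{1}+J_{1}\ge\tau_{1}-$\,(something controlled by compactness), or, more cleanly, by noting that when $\hat t+\hat j>\tau_{2}$ the closeness hypothesis on the second pieces imposes no requirement there, so such points can be excluded or treated via the available matching at the boundary. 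Squaring away this budget inequality, together with the splice-point consistency in case~(i), is the only delicate part; the remainder is a routine triangle-inequality assembly and a symmetric repetition for condition~2.
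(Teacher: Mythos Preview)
Your approach is essentially the same as the paper's: split $\dom\phi$ into the first-piece and second-piece cases, invoke Lemma~\ref{lemma:closeequaltj} to align the jump offsets ($J_{1}=J'_{1}$) and bound $|T_{1}-T'_{1}|<\epsilon_{1}$, and then assemble via the triangle inequality, with condition~2 by symmetry. The time-budget issue you flag in case~(ii) is in fact glossed over in the paper's own proof as well---the paper simply applies $(\tau_{2},\epsilon_{2})$-closeness of $\phi_{2},\phi'_{2}$ without verifying $\hat t+\hat j\le\tau_{2}$; this is harmless in the intended application (Proposition~\ref{theorem:pc}), where each $\tau_{i}$ is taken to dominate the hybrid-time lengths of both arcs in the pair, so your concern is legitimate but does not reflect a divergence from the paper's argument.
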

\begin{proof}
%
	We show that $\phi = \phi_{1}|\phi_{2}$ and $\phi' = \phi'_{1}|\phi'_{2}$ satisfy the following, as introduced in Definition \ref{definition:closeness}:
	\begin{enumerate}[label=C\arabic*.]
		\item for all $(t, j)\in \dom \phi$ with $t + j \leq \tau_{1}+ \tau_{2}$, there exists $s$ such that $(s, j)\in \dom \phi'$, $|t - s|< \epsilon_{1} + \epsilon_{2}$, and $|\phi(t, j) - \phi'(s, j)| < \epsilon_{1} + \epsilon_{2}$;
		\item for all $(t, j)\in \dom \phi'$ with $t + j \leq \tau_{1}+ \tau_{2}$, there exists $s$ such that $(s, j)\in \dom \phi$, $|t - s|< \epsilon_{1} + \epsilon_{2}$, and $|\phi'(t, j) - \phi(s, j)| < \epsilon_{1} + \epsilon_{2}$.
	\end{enumerate}
	First, we show that C1 holds. From Definition \ref{definition:concatenation}, it follows that $\dom \phi = \dom \phi_{1} \cup (\dom \phi_{2} + \{(T_{1}, J_{1})\})$, where $(T_{1}, J_{1}) = \max \dom \phi_{1}$. Then, for all $(t, j)\in \dom \phi = \dom \phi_{1} \cup (\dom \phi_{2} + \{(T_{1}, J_{1})\} )$ with $t + j \leq \tau_{1}+ \tau_{2}$, we show that C1 is satisfied when 1) $(t, j)\in \dom \phi_{1}$ and when 2) $(t, j)\in  \dom \phi_{2} + \{(T_{1}, J_{1})\}$, respectively.
	\begin{enumerate}
		\item For all $(t, j)\in \dom \phi$ such that $(t, j)\in \dom \phi_{1}$, namely, $t + j \leq T_{1} + J_{1}$, given that $\tau_{1} = \max \{T_{1} + J_{1}, T'_{1} + J'_{1}\}$, it follows that 
		$$
		t + j \leq \tau_{1} = \max \{T_{1} + J_{1}, T'_{1} + J'_{1}\}.
		$$ Since $\phi_{1}$ and $\phi'_{1}$ are $(\tau_{1}, \epsilon_{1})$-close, there exists $s'$ such that $(s', j)\in \dom \phi'_{1}$, $|t - s'|< \epsilon_{1}$, and $|\phi_{1}(t, j) - \phi'_{1}(s', j)| < \epsilon_{1}$.
		Therefore, there exists $s = s'$ such that $(s, j)\in \dom \phi'_{1}\subset \dom \phi'$, $|t - s| < \epsilon_{1} < \epsilon_{1} + \epsilon_{2}$, and $|\phi(t, j) - \phi'(s, j)| = |\phi_{1}(t, j) - \phi'_{1}(s, j)| < \epsilon_{1} < \epsilon_{1} + \epsilon_{2}$. Hence, C1 is established.
		\item For all $(t, j)\in \dom \phi$ such that $(t, j)\in (\dom \phi_{2} + \{(T_{1}, J_{1})\} )$, namely, $t + j \geq T_{1} + J_{1}$, since $\phi_{2}$ and $\phi'_{2}$ are $(\tau_{2}, \epsilon_{2})$-close, there exists $s'$ such that 
		\begin{equation}\label{equation:closeinequality0}
			(s', j - J_{1})\in \dom \phi'_{2}
		\end{equation}
		\begin{equation}\label{equation:closeinequality1}
			|t - T_{1} - s'|< \epsilon_{2},
		\end{equation}
		\begin{equation}\label{equation:closeinequality2}
			|\phi_{2}(t - T_{1}, j - J_{1}) - \phi'_{2}(s', j - J_{1})| < \epsilon_{2}.
		\end{equation}
		
		Since $\phi_{1}$ and $\phi'_{1}$ are $(\tau_{1}, \epsilon_{1})$-close where $\tau_{1} = \max \{T_{1} + J_{1}, T'_{1} + J'_{1}\}$, by Lemma \ref{lemma:closeequaltj}, we have
		\begin{equation}\label{equation:closeinequality3}
			|T_{1} - T'_{1}| \leq \epsilon_{1}
		\end{equation}
		and
		\begin{equation}\label{equation:closeinequality4}
		J_{1} = J'_{1}.
		\end{equation}
		By (\ref{equation:closeinequality0}) and (\ref{equation:closeinequality4}), it follows that $s'$ is such that 
		$$\begin{aligned}
		&(s' + T'_{1}, j)\in \dom \phi'_{2} + \{(T'_{1}, J_{1})\}  \\
		&=  \dom \phi'_{2} + \{(T'_{1}, J'_{1})\}\subset \dom \phi'.
		\end{aligned}
		$$ Furthermore,  because of (\ref{equation:closeinequality1}) and (\ref{equation:closeinequality3}), by the triangle inequality, it follows that 
		$$
		|t - (s' + T'_{1})|\leq |t - T_{1} - s'| + |T_{1} - T'_{1}| < \epsilon_{2} + \epsilon_{1}.
		$$ 
		By (\ref{equation:closeinequality2}), it follows that 
		$$
		\begin{aligned}
		|\phi(t, j) - &\phi'(s' + T'_{1}, j)|  \\
		&= |\phi_{2}(t - T_{1}, j - J_{1}) - \phi'_{2}(s', j - J_{1})| \\
		& <  \epsilon_{2} < \epsilon_{1} + \epsilon_{2}.
		\end{aligned}
		$$ 
		Therefore, we can find $s = s' + T'_{1}$ such that $(s, j)\in \dom \phi'$, $|t - s'|< \epsilon_{1} + \epsilon_{2}$, and $|\phi(t, j) - \phi'(s', j)| < \epsilon_{1} + \epsilon_{2}$.
	\end{enumerate}
	The proof for C2 follows a similar logic to the aforementioned arguments, achieved by swapping $\mystatet$ and $\mystatet'$.  Therefore, $\phi_{1}|\phi_{2}$ and $\phi'_{1}|\phi'_{2}$ are $(\tau_{1}+ \tau_{2}, \epsilon_{1} + \epsilon_{2})$-close.\ifbool{report}{}{\qed}
\end{proof}

\section{Definition of Truncation and Translation Operation}\label{section:pc_positiveclearance}
\begin{definition}[(Truncation and translation operation)]
	\label{definition: truncation}
	Given a function $\phi: \dom \phi \to \mathbb{R}^{n}$, where $\dom \phi$ is hybrid time domain, and pairs of hybrid time $(T_{1}, J_{1})\in \dom \phi$ and $(T_{2}, J_{2})\in \dom \phi$ such that $T_{1}\leq T_{2}$ and $J_{1} \leq J_{2}$, the function $\widetilde\phi: \dom \widetilde\phi \to \mathbb{R}^{n}$ is the truncation of $\phi$ between $(T_{1}, J_{1})$ and $(T_{2}, J_{2})$ and translation by $(T_{1}, J_{1})$  if
	\begin{enumerate}
		\item $\dom \widetilde\phi =  \dom \phi \cap ([T_{1}, T_{2}]\times \{J_{1}, J_{1} + 1, ..., J_{2}\}) - \{(T_{1}, J_{1})\}$, where the minus sign denotes Minkowski difference;
		\item $\widetilde\phi(t, j) = \phi(t + T_{1}, j + J_{1})$ for all $(t, j)\in \dom \widetilde\phi$.
	\end{enumerate}
\end{definition}
\end{document}